\theoremstyle{plain}
\def \algname{ORBIT}
\def \VisitRatio{C_{vr}}
\def \MinProb{C_{MP}}
\icmltitlerunning{Sample Complexity of Distributionally Robust Off-Dynamics Reinforcement Learning with Online Interaction}
\begin{document}

\twocolumn[
\icmltitle{Sample Complexity of Distributionally Robust Off-Dynamics \\Reinforcement Learning with Online Interaction}

\icmlsetsymbol{equal}{*}

\begin{icmlauthorlist}
\icmlauthor{Yiting He}{equal,duke}
\icmlauthor{Zhishuai Liu}{equal,duke}
\icmlauthor{Weixin Wang}{duke}
\icmlauthor{Pan Xu}{duke}
\end{icmlauthorlist}

\icmlaffiliation{duke}{Duke University}

\icmlcorrespondingauthor{Pan Xu}{pan.xu@duke.edu}

\icmlkeywords{Machine Learning, ICML}

\vskip 0.3in
]

\printAffiliationsAndNotice{\icmlEqualContribution} %

\begin{abstract}
Off-dynamics reinforcement learning (RL), where training and deployment transition dynamics are different, can be formulated as learning in a robust Markov decision process (RMDP) where uncertainties in transition dynamics are imposed. Existing literature mostly assumes access to generative models allowing arbitrary state-action queries or pre-collected datasets with a good state coverage of the deployment environment, bypassing the challenge of exploration. In this work, we study a more realistic and challenging setting where the agent is limited to online interaction with the training environment. To capture the intrinsic difficulty of exploration in online RMDPs, we introduce the supremal visitation ratio, a novel quantity that measures the mismatch between the training dynamics and the deployment dynamics. We show that if this ratio is unbounded, online learning becomes exponentially hard. We propose the first computationally efficient algorithm that achieves sublinear regret in online RMDPs with $f$-divergence based transition uncertainties. We also establish matching regret lower bounds, demonstrating that our algorithm achieves optimal dependence on both the supremal visitation ratio and the number of interaction episodes. Finally, we validate our theoretical results through comprehensive numerical experiments.
\end{abstract}

\section{Introduction}

Off-dynamics reinforcement learning (RL) \citep{eysenbach2020off,lyu2024odrl} has recently gained significant attention in scenarios where the transition dynamics of the deployment environment differs from that of the training environment. Such problems can be modeled as learning a robust Markov decision process (RMDP) \citep{satia1973markovian, iyengar2005robust, nilim2005robust}, where the objective is to learn a policy that performs well when uncertainties are imposed into the transition dynamics. Two major frameworks have been proposed in the literature to incorporate the uncertainty of transition dynamics in RMDPs. The first, known as the Constrained Robust Markov Decision Process (CRMDP), formulates a max-min optimization problem that seeks the best policy under the worst-case transition dynamics within a predefined uncertainty set. The second, the Regularized Robust Markov Decision Process (RRMDP), replaces the hard constraint on uncertainty sets with a regularization term that quantifies the divergence between the training and deployment dynamics.

CRMDP was initially introduced for optimal control problems \citep{nilim2005robust, iyengar2005robust, xu2006robustness, wiesemann2013robust}, where the transition dynamics and reward functions of the nominal MDP are assumed to be fully known. %
More recently, CRMDPs have been studied from a learning perspective \citep{zhou2021finite, yang2022toward}, where an agent must gather data to estimate the environment rather than relying on perfect knowledge. Existing research on learning CRMDPs can be categorized into three settings: (1) {\it Learning with a generative model (simulator).} In this setting, the agent can query transitions at any state-action pair an arbitrary number of times \citep{panaganti2022sample, yang2022toward, xu2023improved, shi2024curious}.
(2) {\it Learning with an offline dataset.} Here, the agent learns from a pre-collected dataset, typically assumed to be generated by a behavior policy from the nominal MDP \citep{panaganti2022robust, blanchet2023double, shi2024distributionally, liu2024minimax, tang2024robust}. Effective robust policy learning relies on sufficient coverage of the dataset on states in the deployment environment. %
(3) {\it Learning through online interaction.} More recent works have considered online learning of CRMDPs through direct interaction with the training environment to collect data \citep{liu2024distributionally, lu2024distributionally, liu2024upper}, focusing on a specific case where the uncertainty set is defined via total variation (TV) distance\footnote{We note that \citet{dongonline} also studied online CRMDPs, but we found essential flaws in proofs of their Lemmas A.2 and C.5, which invalidates their results. %
}. 

RRMDP was introduced to get rid of the constrained optimization in the formulation of CRMDPs for the tractability of robust policy learning \citep{yang2023robust, zhang2024soft}. In particular, \citet{yang2023robust} studied RRMDPs with general $f$-divergence-based regularization under a generative model setting, while \citet{zhang2024soft} analyzed RRMDPs with Kullback-Leibler divergence-based regularization in the offline setting, relying on similar data coverage assumptions as in offline CRMDPs. More recently, \citet{panaganti2024model} extended this work to general $f$-divergence-based regularization in the offline setting and explored RRMDPs with total variation regularization and fail-states in a hybrid online-offline setting.

Despite these advances, in more realistic applications where simulators or pre-collected datasets with strong state coverage are not available, the problem of efficient online exploration in RMDPs remains understudied. Unlike standard MDPs, where exploration aims to reduce uncertainty within a fixed transition model, in RMDPs, the agent can only gather experience from a nominal environment, yet it must generalize to potentially shifted dynamics at deployment. This presents a fundamental challenge of \emph{information deficit} in online RMDPs, requiring the development of exploration strategies that proactively account for distributional shifts.
More specifically, the \emph{information deficit} in online RMDPs arises when states that are rarely visited in the nominal environment become critical in the deployment environment. For instance, consider a state  $s$  in the nominal MDP that is extremely difficult to visit, e.g., with exponentially small visitation probability, %
resulting in limited data collection. If, in the deployment environment, the dynamics shift increases the visitation probability of  $s$, the agent must make informed decisions at this state despite having little prior experience. In standard MDPs, such rare states typically have negligible effects on policy learning, but in RMDPs, they can critically impact performance, making online learning in RMDPs significantly more challenging than in standard MDPs.

To overcome this information deficit issue, existing research on online RMDPs adopt a fail-state type of assumption--there exist states with zero reward that only transit among themselves \citep{liu2024distributionally,lu2024distributionally,liu2024upper}. As we show in \Cref{prop:fail-states property} and the discussion following it, these assumptions essentially ensure that worst-case distribution shifts occur in a deterministic direction, which eliminating the information deficit issue and makes provably efficient online learning possible. However, such nice properties do not hold in RMDPs with general $f$-divergence based uncertainty sets or regularization, limiting all existing research on online RMDPs to CRMDPs with TV-distance based uncertainty sets.

In this work, we answer the following fundamental question:
\begin{center}
{\it Under what conditions can provably efficient online learning of RMDPs be achieved?}
\end{center}
We investigate tabular RMDPs with finite states and actions. We show that if the nominal environment is sufficiently exploratory—i.e., the agent can collect enough information through interaction—then sample-efficient online learning should be achievable for broader classes of RMDPs, including those with general $f$-divergence based dynamics uncertainties and without restrictive structural assumptions like fail-states. We rigorously prove that the sample complexity of any online learning algorithm should be proportional to the difficulty of exploration. 

Our contributions are summarized as follows.
\begin{itemize}[leftmargin=*,nosep]
    \item We introduce the \emph{supremal visitation ratio} $\VisitRatio$ (see \Cref{asm:main distribution shift}) as a measure of exploration difficulty in RMDPs. We develop the first computationally efficient algorithm, Online Robust Bellman Iteration (\algname), for CRMDPs and RRMDPs based on the total variation (TV), Kullback-Leibler (KL), and $\chi^2$ divergences, and prove sample complexity bounds that explicitly depend on $\VisitRatio$.

    \item We establish regret lower bounds, demonstrating that the supremal visitation ratio $\VisitRatio$ is an unavoidable term in the sample complexity of online RMDP learning. This result confirms that $\VisitRatio$ serves as a fundamental measure of exploration difficulty and a sufficient condition for provably efficient online learning in RMDPs. As a corollary, we construct hard instances to demonstrate that if $\VisitRatio$ is unbounded, general online learning in CRMDPs can become exponentially difficult.

    \item We conduct comprehensive numerical experiments to validate our theoretical findings. In a simulated MDP, we show that the performance of learned policies degrades as $\VisitRatio$ increases. We evaluate our algorithms in a simulated RMDP and the Frozen Lake environment, highlighting their effectiveness when distribution shifts are significant.%
\end{itemize}
\paragraph{Notations}
For any positive integer $H\in\ZZ_{+}$, we denote $[H]=\{1,2,\cdots, H\}$. For any set $\cS$, define $\Delta(\cS)$ as the set of probability distributions over $\cS$. Let $P, Q \in \Delta(\cS)$ and $P\ll Q$. For a convex function $f:[0,+\infty)\to(-\infty,+\infty]$ such that $f(x)$ is finite for all $x>0$, $f(1)=0$ and $f(0)=\lim_{t\to0^+}f(t)$. The $f$-divergence of $P$ from $ Q$, which measures their difference, is defined as $D_f(P\Vert Q)=\int_\Omega f(\frac{\mathrm{d}P}{\mathrm{d} Q})\,\mathrm{d} Q$. In our paper, we consider three common $f$-divergences including total variation (TV) distance with $f(t)=\frac{1}{2}|t-1|$, Kullback-Leibler (KL) divergence with $f(t)=t\ln t$, and $\chi^2$-divergence with $f(t)=(t-1)^2$. We use $\mathcal{O}(\cdot)$ to hide absolute constant factors and $\widetilde{\mathcal{O}}(\cdot)$ to further hide logarithmic factors. For any two integers $a$ and $b$, we denote $a\vee b:=\max\{a,b\}$. 

\section{Related Work}
\textbf{CRMDPs and RRMDPs} The framework of CRMDPs was first introduced in the context of optimal control \citep{iyengar2005robust, nilim2005robust, xu2006robustness, wiesemann2013robust, mannor2016robust}, where the nominal MDP is assumed to be exactly known, and robust policies are obtained by solving a constrained max-min optimization problem. Subsequent works extended CRMDPs to the learning setting with access to a generative model \citep{zhou2021finite, yang2022toward, panaganti2022sample, shi2024curious}. More recently, CRMDPs have been studied in the offline learning setting, where only a pre-collected dataset from the nominal MDP is available through a behavior policy \citep{shi2024distributionally, panaganti2022robust, blanchet2023double, wang2024sample, liu2024minimax, liu2025linear}. To ensure that a robust policy can be learned from a reasonably sized offline dataset, these works make assumptions about the behavior policy (and implicitly, the nominal MDP) to guarantee sufficient coverage. Such assumptions include the robust single-policy clipped concentrability \citep{shi2024distributionally}, robust partial coverage \citep{blanchet2023double}, and uniformly well coverage assumptions \citep{liu2024minimax, wang2024sample}. The framework of RRMDPs was more recently proposed by \citet{yang2023robust} and \citet{zhang2024soft}, who studied it under the generative model setting and the offline setting, respectively. This line of work was extended to function approximation settings by \citet{panaganti2024model} and \citet{tang2024robust}, considering both hybrid offline-online and purely offline scenarios.

It is worth noting that CRMDPs are sometimes referred to in the literature as Robust MDPs (RMDPs) or Distributionally Robust MDPs (DRMDPs). To distinguish them from the regularized robust framework, we adopt the term CRMDPs. Similarly, RRMDPs appear under various names, including penalized robust MDPs \citep{yang2023robust}, soft robust MDPs \citep{zhang2024soft}, and robust $\phi$-regularized MDPs \citep{panaganti2024model}. We use the term RRMDPs to clearly differentiate them from CRMDPs while remaining consistent with the literature.

\textbf{Online RMDPs}
\citet{wang2021online, badrinath2021robust} studied the online learning for infinite-horizon RMDPs with $R$-contamination and more general uncertainty sets, respectively. Their algorithmic design and theoretical analysis rely on assuming access to exploratory policies, which implicitly assumes that the nominal MDP is sufficiently exploratory. In contrast, we revisit this challenge from a different angle, focusing on the information deficit issue induced by distributional shift. Our work differs from theirs in two key aspects. First, we introduce a novel quantity to characterize the hardness of exploration in the nominal MDP and analyze it thoroughly via both upper and lower bounds on the sample complexity. Second, instead of assuming access to exploratory policies, we design algorithms that explicitly incorporate exploration strategies tailored for finite-horizon tabular CRMDPs and RRMDPs with $(s,a)$-rectangular uncertainty sets defined by general $f$-divergences.

\citet{liu2024distributionally, lu2024distributionally, liu2024upper} focused on online robust RL under the specific setting of CRMDPs with uncertainty sets defined by the TV-distance, coupled with assumptions such as the existence of fail-states or vanishing minimal values. From an information-theoretic perspective, we show that these assumptions effectively circumvent the information deficit by constraining the direction of the distribution shift. In contrast, our work seeks to identify a more general sufficient condition for provably efficient online learning in CRMDPs—one that applies to arbitrary divergence-based uncertainty sets and does not rely on the fail-state or vanishing minimal value assumption.

\textbf{Off-dynamics RL}
A substantial body of empirical work addresses off-dynamics RL through the lens of domain adaptation and transfer learning \citep{eysenbach2020off, desai2020imitation, zhang2021sim2real, xu2023cross, wen2024contrastive, guo2024off, wang2024return, lyu2024odrl, da2025survey}, among others. In this paper, we focus on the robust MDP (RMDP) formulation of off-dynamics RL. We refer readers to the above works for complementary approaches along this orthogonal line of research.

\section{Preliminaries}

\paragraph{Constrained Robust MDP (CRMDP)}
We denote a finite horizon CRMDP as CRMDP$(\cS,\cA,P^o, r, \cU^{\rho}(P^o), H)$, where $\cS$ is the state space, $\cA$ is the action space, $P^o=\{P_h^o\}_{h=1}^H$ is the nominal transition kernel, $r:\cS\times\cA\rightarrow[0,1]$ is the reward function, $\cU^{\rho}(P^o)$ is the uncertainty set centered around the nominal kernel, $\rho$ is the uncertainty level, $H$ is the horizon length. In this work, we specifically focus on general $f$-divergence defined $(s,a)$-rectangular uncertainty sets \citep{iyengar2005robust}, $\cU^\rho(P^o) =
\otimes_{(s,a,h)\in\cS\times\cA\times[H]} \cU_h^\rho(s,a)$, where $\cU_h^\rho(s,a) = \{P\in\Delta(\cS)|\mathrm{D}_f(P||P_h^o(\cdot|s,a))\leq \rho\}$.
The robust value function and $Q$-function are defined as 
{\small
\begin{align*}
    V_h^{\pi,\rho}(s)&=\inf_{P\in\cU^\rho(P^o)}\mathbb{E}_{\pi,P}\bigg[\sum_{t=h}^Hr_t(s_t,a_t)\:\Big|\:s_h=s\bigg],\\
    Q_h^{\pi,\rho}(s,a)&=\inf_{P\in\cU^\rho(P^o)}\mathbb{E}_{ \pi,P}\bigg[\sum_{t=h}^Hr_t(s_t,a_t)\:\Big|\:s_h=s,a_h=a\bigg].
\end{align*}
}%
The optimal robust value function and optimal robust $Q$-function are defined as: $V_h^{\star, \rho}(s) = \sup_{\pi \in \Pi} V_h^{\pi,\rho}(s)$, $Q_h^{\star,\rho}(s,a) = \sup_{\pi \in \Pi} Q_h^{\pi,\rho}(s,a)$, where $\Pi$ is the set of all policies.
Correspondingly, the optimal robust policy is the policy that achieves the optimal robust value function $\pi_h^{\star} = \text{argsup}_{\pi \in \Pi}V_h^{\pi,\rho}(s)$.
For CRMDPs, \citet{iyengar2005robust} proved the robust Bellman optimality equations %
{\small
\begin{align}
\label{eq:robust optimality equation-CRMDP}
    \begin{split}
    Q_h^{*,\rho}(s,a)&=r_h(s,a)+\inf_{P_h\in\mathcal{U}_h^\rho(P^o)}\mathbb{E}_{P_h}\big[V_{h+1}^{*,\rho}\big](s,a),\\
    V_h^{*,\rho}(s)&=\max_{a\in\mathcal{A}}Q_h^{*,\rho}(s,a),
    \end{split}
\end{align}}%
where $\mathbb{E}_{P_h}\big[V_{h+1}^{*,\rho}\big](s,a):=\EE_{s'\sim P_h(\cdot|s,a)}\big[V_{h+1}^{*,\rho}(s')\big]$.

\paragraph{Regularized Robust MDP (RRMDP)}
A finite horizon RRMDP can be denoted as RRMDP$(\cS, \cA, P^o, r, \beta, \mathrm{R}, H)$, where $\beta$ is the regularizer parameter, $\mathrm{R}$ is a penalty on distribution shift, and we set $\mathrm{R}$ to be the probability divergence $\mathrm{D}$ throughout this paper. RRMDPs replace the uncertainty set constraint in CRMDPs with a regularization term. Specifically, the robust value function and $Q$-function under the regularized setting are defined as
{\small
\begin{align*}
    V_h^{\pi,\beta}(s)&=\inf_{P\in\Delta(S)}\mathbb{E}_{\pi,P}\bigg[\sum_{t=h}^H r_t(s_t,a_t)\\
    &\quad+\:\beta\cdot\mathrm{D}(P_t(\cdot|s_t,a_t),P_t^o(\cdot|s_t,a_t))\:\Big|\:s_h=s\bigg],\\
    Q_h^{\pi,\beta}(s,a)&=\inf_{P\in\Delta(S)}\mathbb{E}_{\pi,P}\bigg[\sum_{t=h}^Hr_t(s_t,a_t)\\
    &\quad+\:\beta\cdot\mathrm{D}(P_t(\cdot| s_t,a_t),P_t^o(\cdot| s_t,a_t))\:\Big|\:s_h=s,a_h=a\bigg].
\end{align*}
}%
For RRMDPs, \citet{yang2023robust} showed the robust Bellman optimality equations:
{\small
\begin{align}
\label{eq:robust optimality equation-RRMDP}
    \begin{split}
    Q_h^{*,\beta}(s,a)&=r_h(s,a)+\inf_{P_h\in\Delta(S)}\big[\mathbb{E}_{P_h}\big[V_{h+1}^{*,\beta}\big](s,a)\\
    &\qquad+\:\beta\cdot\mathrm{D}(P_h(\cdot|s,a),P_h^o(\cdot|s,a))\big],\\
    V_h^{*,\beta}(s)&=\max_{a\in\mathcal{A}}Q_h^{*,\beta}(s,a).
    \end{split}
\end{align}}%

\paragraph{Learning Goal}
We have an agent actively interacting with the nominal environment for $K$ episodes to learn the optimal robust policy. At the start of episode $k$ with initial state $s_1^k$, the agent chooses a policy $\pi^k$ based on the history information. Then it interacts with the nominal environment by executing $\pi^k$ until the end of episode $k$, and collects a new trajectory. The agent's goal is to minimize the cumulative regret after $K$ episodes, defined as
{\small
\begin{align*}
    \text{Regret}(K) &= \sum_{k=1}^K\big[V_1^{*,\rho}(s_1^k) - V_1^{\pi^k,\rho}(s_1^k)\big]\text{ for CRMDPs},\\
    \text{Regret}(K) &= \sum_{k=1}^K\big[V_1^{*,\beta}(s_1^k) - V_1^{\pi^k,\beta}(s_1^k)\big]\text{ for RRMDPs}.
\end{align*}}%

\section{Online Robust Bellman Iteration (\algname)}

\label{sec:algorithm}

In this section, we first present a meta-algorithm for online tabular RMDPs with general $f$-divergence defined uncertainty sets or regularization terms. We then instantiate the algorithm for CRMDPs with TV, KL and $\chi^2$-divergences defined uncertainty sets and RRMDPs with  TV, KL and $\chi^2$-divergences defined regularization terms, respectively.

\begin{algorithm}[ht]
    \caption{Online Robust Bellman Iteration (\algname)}\label{alg:main alg}
    \begin{algorithmic}[1]
    \REQUIRE{
        uncertainty level $\rho > 0$ (for CRMDPs), or regularizer $\beta > 0$ (for RRMDPs).
        }
    \FOR {$k=1,\cdots,K$}
        \STATE $\widehat{V}_{H+1}^k(\cdot) \leftarrow 0$.
        \FOR {$h=H,\cdots,1$} \label{line:stage_1_begin}
            \FOR {$\forall\,(s,a)\in\mathcal{S}\times\mathcal{A}$}
                \STATE Update $Q$-function estimation $\widehat{Q}_h^k(s,a)$\label{line:estimate_Q_update}

                CRMDP: refer to \Cref{sec:CRMDP update};

                RRMDP: refer to \Cref{sec:RRMDP update}.

            \ENDFOR
            \FOR {$\forall\,s\in\mathcal{S}$}
                \STATE $\pi_h^k(s) \leftarrow \argmax_{a\in\mathcal{A}}\widehat{Q}_h^k(s,a)$, \\
                $\widehat{V}_h^k(s) \leftarrow \max_{a\in\mathcal{A}}\widehat{Q}_h^k(s,a)$.
            \ENDFOR
        \ENDFOR \label{line:stage_1_end}
        \STATE Collect trajectory $\tau^k$ by executing $\pi^k$. \label{line:stage_2_begin}
        \STATE Update $n_h^k, \widehat{r}_h^{k+1}, \widehat{P}_h^{k+1}$ according to \eqref{equ:empirical_model}. \label{line:stage_2_end}
    \ENDFOR
    \end{algorithmic}
\end{algorithm}

\subsection{Algorithm Interpretation}

We present our meta-algorithm, Online Robust Bellman Iteration (\algname), in \Cref{alg:main alg}. The algorithm follows a value iteration framework and integrates optimistic estimation and the robust Bellman optimality equation in \eqref{eq:robust optimality equation-CRMDP} and \eqref{eq:robust optimality equation-RRMDP} for estimating the robust $Q$-functions. In each episode $k \in [K]$, \algname\ consists of two stages.
In the first stage (Lines \ref{line:stage_1_begin} to \ref{line:stage_1_end}), \Cref{alg:main alg} iteratively updates the value function and $Q$-function estimations in a backward manner.
In the second stage (Lines \ref{line:stage_2_begin} to \ref{line:stage_2_end}), we collect trajectory $\tau^k=(s_1^k,a_1^k,r_1^k,\cdots,s_H^k,a_H^k,r_H^k)$ by executing $\pi^k$.
After a new trajectory is collected, \algname\ updates the empirical reward function and transition kernel as follows
{\small\begin{align}
\label{equ:empirical_model}
    n_h^k(s,a)&=\sum\limits_{i=1}^k\ind\big\{s_h^i=s,a_h^i=a\big\}, \notag\\
    \widehat{r}_h^{k+1}(s,a)&=\frac{\sum\limits_{i=1}^k r_h^i(s,a)\cdot\ind\big\{s_h^i=s,a_h^i=a\big\}}{n_h^k(s,a)\vee1},\\
    \widehat{P}_h^{k+1}(s'|s,a)&=\frac{\sum\limits_{i=1}^k \ind \big\{s_h^i=s,a_h^i=a,s_{h+1}^i=s'\big\}}{n_h^k(s,a)\vee1}. \notag
\end{align}}%
\Cref{alg:main alg} updates $Q$-functions at each $(s,a)$ according to different RMDP settings and choices of $f$-divergences. We differentiate these cases using specific labels: CRMDP-TV, CRMDP-KL, CRMDP-$\chi^2$, RRMDP-TV, RRMDP-KL, and RRMDP-$\chi^2$. Finally, \Cref{alg:main alg} adopts the greedy policy of the estimated $Q$-function as the estimated optimal policy at episode $k$.

For the robust $Q$-function estimation, we leverage the robust optimality Bellman equation \eqref{eq:robust optimality equation-CRMDP} and \eqref{eq:robust optimality equation-RRMDP}. Incorporating the optimism principle in the face of uncertainty \citep{abbasi2011improved} in the $Q$-function update, we have
{\small\begin{align}
\widehat{Q}_h^k(s,a)=\min\big\{\mathrm{RB}_h^k(s,a)+b_h^k(s,a),H-h+1\big\}.\label{alg:generic form}
\end{align}}%
There are two components in \eqref{alg:generic form}: a robust Bellman estimator $\mathrm{RB}_h^k(s,a)$ and a bonus term $b_h^k(s,a)$. Next, we will instantiate this meta-algorithm for CRMDPs and RRMDPs with various $f$-divergences, and provide explicit formulation for robust Bellman estimation and bonus design.

\subsection{\algname\ under Constrained Robust MDPs}
\label{sec:CRMDP update}
We first focus on CRMDPs and detail the update of robust $Q$-functions \eqref{alg:generic form} in various settings. To solve the optimization problem in the robust Bellman equation \eqref{eq:robust optimality equation-CRMDP} and \eqref{eq:robust optimality equation-RRMDP}, we resort to strong duality results in the following.

\paragraph{CRMDP-TV}
In CRMDPs with TV-distance defined uncertainty sets,  estimators in \eqref{alg:generic form} are defined as follows
{\small\begin{align}
    \mathrm{RB}_h^k(s,a)&=\widehat{r}_h^k(s,a)-\inf\limits_{\eta\in[0,H]}\Big(\mathbb{E}_{\widehat{P}_h^k}\big[\big(\eta-\widehat{V}_{h+1}^{k,\rho}\big)_+\big](s,a)\nonumber\\&\qquad+\rho\Big(\eta-\min\limits_{s\in\mathcal{S}}\widehat{V}_{h+1}^{k,\rho}(s)\Big)_+-\eta\Big),\label{eq:main con TV part 1}\\
    b_h^k(s,a)&=2H\sqrt{\frac{2S^2\ln(12SAH^2K^2/\delta)}{n_h^{k-1}(s,a)\vee1}}+\frac{1}{K},\label{eq:main con TV part 2}
\end{align}}%
where \eqref{eq:main con TV part 1} represents the empirical version of the robust Bellman operator and \eqref{eq:main con TV part 2} is the bonus. The dual formulation for TV-distance and the optimism of the estimated $Q$-function are established in \Cref{sec:proof_constrained_TV}.

\paragraph{CRMDP-KL}
In CRMDPs with KL-divergence defined uncertainty sets,  estimators in \eqref{alg:generic form} are defined as follows
{\small \begin{align*}
    \mathrm{RB}_h^k(s,a)&=\widehat{r}_h^k(s,a) \notag \\
    -\inf\limits_{\nu\in[0,\frac{H}{\rho}]}&\big(\nu\ln\mathbb{E}_{\widehat{P}_h^k}\big[\exp\big(-\nu^{-1}\widehat{V}_{h+1}^{k,\rho}\big)\big](s,a)+\nu\rho\big),\\
    b_h^k(s,a)&=\bigg(1+\frac{2H\sqrt{S}}{\rho\MinProb}\bigg)\sqrt{\frac{2\ln(2SAHK/\delta)}{n_h^{k-1}(s,a)\vee1}},
\end{align*}}%
where $\MinProb$ is defined in \Cref{asm:con KL more}.
The dual formulation for KL-divergence and the optimism of the estimated $Q$-function are proved in \Cref{sec:proof_constrained_KL}.

\paragraph{CRMDP-$\chi^2$}
For \algname\ in CRMDPs with $\chi^2$-divergence defined uncertainty sets, we have
{\small\begin{align*}
\mathrm{RB}_h^k(s,a)&=\widehat{r}_h^k(s,a)+\sup\limits_{\bm{\lambda}\in[0,H]}\Big(\mathbb{E}_{\widehat{P}_h^k}\big[\widehat{V}_{h+1}^{k,\rho}-\bm{\lambda}\big](s,a)\nonumber\\&\qquad-\sqrt{\rho\mathrm{Var}_{\widehat{P}_h^k}(\widehat{V}_{h+1}^{k,\rho}-\bm{\lambda})}\Big),\\
    b_h^k(s,a)&=(2+\sqrt{\rho})H\sqrt{\frac{2S^2\ln(192SAH^3K^3/\delta)}{n_h^{k-1}(s,a)\vee1}}+\frac{1+\sqrt{\rho}}{K}.
\end{align*}}%
The dual formulation for $\chi^2$-divergence and the optimism of the estimated $Q$-function are proved in \Cref{sec:proof_constrained_Chi2}.

\subsection{\algname\ under Regularized Robust MDPs}
\label{sec:RRMDP update}
We then focus on RRMDPs and detail the update of robust $Q$-functions \eqref{alg:generic form} in various settings.

\paragraph{RRMDP-TV}
In RRMDPs with TV-distance regularization terms, estimators in \eqref{alg:generic form} are defined as follows
{\small\begin{align*}
    \mathrm{RB}_h^k(s,a)&=\widehat{r}_h^k(s,a)-\mathbb{E}_{\widehat{P}_h^k}\Big[\Big(\min\limits_{s\in\mathcal{S}}\widehat{V}_{h+1}^{k,\beta}(s)+\beta\notag \\
    &\quad-\widehat{V}_{h+1}^{k,\beta}(s)\Big)_+\Big](s,a)
    +\Big(\min_{s\in\mathcal{S}}\widehat{V}_{h+1}^{k,\beta}(s)+\beta\Big),\\
    b_h^k(s,a)&=2H\sqrt{\frac{2S\ln(2SAHK/\delta)}{n_h^{k-1}(s,a)\vee1}}.
\end{align*}}%
The dual formulation for TV-distance and the optimism of the estimated $Q$-function are proved in \Cref{sec:proof_regularized_TV}.

\paragraph{RRMDP-KL}
For \algname\ in RRMDPs with KL-divergence regularization terms, we have%
{\small\begin{align*}
    \mathrm{RB}_h^k(s,a)&=\widehat{r}_h^k(s,a)-\beta\ln\mathbb{E}_{\widehat{P}_h^k}\big[\exp\big(-\beta^{-1}\widehat{V}_{h+1}^{k,\beta}\big)\big](s,a),\\
    b_h^k(s,a)&=\big(1+\beta e^{\beta^{-1}H}\sqrt{S}\big)\sqrt{\frac{2\ln(2SAHK/\delta)}{n_h^{k-1}(s,a)\vee1}}.
\end{align*}}%
The dual formulation for KL-divergence and the optimism of the estimated $Q$-function are provided in \Cref{sec:proof_regularized_KL}.

\paragraph{RRMDP-$\chi^2$}
For \algname\ in RRMDPs with $\chi^2$-divergence regularization terms, we have
{\small\begin{align*}
    \mathrm{RB}_h^k(s,a)&=\widehat{r}_h^k(s,a)+\sup\limits_{\bm{\lambda}\in[0,H]}\bigg(\mathbb{E}_{\widehat{P}_h^k}\big[\widehat{V}_{h+1}^{k,\beta}-\bm{\lambda}\big](s,a)\nonumber\\&\qquad-\frac{1}{4\beta}\mathrm{Var}_{\widehat{P}_h^k}\big[\widehat{V}_{h+1}^{k,\beta}-\bm{\lambda}\big](s,a)\bigg),\\
    b_h^k(s,a)&=\!\bigg(2H\!+\!\frac{3H^2}{4\beta}\bigg)\!\sqrt{\frac{2S^2\ln(48SAH^3K^2/\delta)}{n_h^{k-1}(s,a)\vee1}}+\frac{1+4\beta}{4\beta K}.
\end{align*}}%
The dual formulation for $\chi^2$-divergence and the optimism of the estimated $Q$-function are provided in \Cref{sec:proof_regularized_Chi2}.

\section{Theoretical Results}

In this section, we provide theoretical understandings on the online learning of RMDPs. We start with a new perspective--the information deficit issue--to understand existing conditions for provably efficient online learning. Motivated by existing solutions to address the information deficit issue, we propose a new metric, the supremal visitation ratio, to quantify the hardness in exploration under online RMDPs. Further, we provide upper and lower bounds, involving the supremal visitation ratio, on the regret of \Cref{alg:main alg} in all settings.

\subsection{Learnability of Online RMDPs}

Focusing on CRMDPs with TV-distance defined uncertainty sets,
\citet{liu2024distributionally} and \citet{lu2024distributionally} identified that the following assumptions can admit provably efficient online learning. In particular, \citet{liu2024distributionally} made the following fail-states assumption.
\begin{condition}[Fail-states] \citep[Condition 4.3]{liu2024distributionally}\label{con:fail-states}
There exists a subset $\mathcal{S}_f\subset\mathcal{S}$ of fail-states such that $r_h(s,a)=0$, $P_h^o(\mathcal{S}_f|s,a)=1$, $\forall\,(s,a,h)\in\mathcal{S}_f\times\mathcal{A}\times[H]$.
\end{condition}
\citet{lu2024distributionally} 
extended \Cref{con:fail-states} to \Cref{con:vanishing minimal value}, but both assumptions essentially serve the same purpose, eliminating  $\min_{s\in\mathcal{S}}V(s)$ in the dual formulation of the optimization problem in the CRMDP-TV setting.
\begin{condition}[Vanishing minimal value] \citep[Assumption 4.1]{lu2024distributionally}\label{con:vanishing minimal value}
The RMDP satisfies that $ \min\limits_{s\in\mathcal{S}}V_1^{*,\rho}(s)=0$.
\end{condition}
To explain the rationale behind \Cref{con:fail-states}, we first define the visitation measure as follows.
\begin{definition}[Visitation measure]
\label{def:visitation_measure}
Under both CRMDPs and RRMDPs, for any policy $\pi$, we define the worst-case transition corresponding to $\pi$ as
{\small
\begin{align*}
    P_h^{w,\pi}(\cdot|s,a)&=\argmin_{P_h\in\mathcal{U}_h^\rho(P_h^o)}\mathbb{E}_{P_h}[V_{h+1}^{\pi,\rho}](s,a)\text{ for CRMDPs},\\
    P_h^{w,\pi}(\cdot|s,a)&=\argmin_{P_h\in\Delta(S)}\mathbb{E}_{P_h}[V_{h+1}^{\pi,\beta}](s,a)\\
    +\:&\beta\cdot\mathrm{D}(P_h(\cdot|s,a),P_h^o(\cdot|s,a))\text{ for RRMDPs}.
\end{align*}}%
At step $h\in[H]$, we denote $\mathrm{d}_h^\pi(\cdot)$ as the visitation measure on $\cS$ induced by the policy $\pi$ under $P^o$, $\mathrm{q}_h^\pi(\cdot)$ as the visitation measure on $\cS$ induced by the policy $\pi$ under $P^{w,\pi}$.
\end{definition}
We show that \Cref{con:fail-states} implies the following property on the CRMDP.
\begin{proposition}
\label{prop:fail-states property}
For CRMDPs with TV-distance defined uncertainty set satisfying \Cref{con:fail-states}, for any $s\in\mathcal{S},\,a\in\mathcal{A},\,s'\in\mathcal{S}\backslash\mathcal{S}_f$ and policy $\pi$, we have $P_h^{w,\pi}(s'|s,a)\leq P_h^o(s'|s,a)$.
\end{proposition}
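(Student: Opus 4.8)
The plan is to combine two ingredients: (i) under \Cref{con:fail-states} the fail-states are global minimizers of the robust value function $V_{h+1}^{\pi,\rho}$, and (ii) the worst-case TV transition only transfers probability mass \emph{toward} minimum-value states. Together these force any added mass to land on $\mathcal{S}_f$, so non-fail states can only lose mass.

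First I would establish, by backward induction on $h$, that $V_{h+1}^{\pi,\rho}(s)=0$ for every $s\in\mathcal{S}_f$ while $V_{h+1}^{\pi,\rho}(s)\geq 0$ for all $s$ (the latter being immediate from $r\in[0,1]$). For the inductive step at a fail-state $s$: we have $r_h(s,a)=0$, the nominal kernel $P_h^o(\cdot|s,a)$ concentrates on $\mathcal{S}_f$, and by the inductive hypothesis $V_{h+1}^{\pi,\rho}\geq 0$ with $V_{h+1}^{\pi,\rho}\equiv 0$ on $\mathcal{S}_f$. Hence $\mathbb{E}_{P_h^o}[V_{h+1}^{\pi,\rho}](s,a)=0$, and since $V_{h+1}^{\pi,\rho}\geq 0$ forces $\mathbb{E}_{P}[V_{h+1}^{\pi,\rho}]\geq 0$ for every $P$ in the uncertainty set, the robust Bellman update keeps the value at $0$. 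This yields $\min_{s\in\mathcal{S}}V_{h+1}^{\pi,\rho}(s)=0$, attained on $\mathcal{S}_f$.

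Next I would analyze the structure of $P_h^{w,\pi}(\cdot|s,a)=\argmin_{P\in\mathcal{U}_h^\rho(s,a)}\langle P,\,V_{h+1}^{\pi,\rho}\rangle$ by an exchange argument. Writing $\delta=P_h^{w,\pi}(\cdot|s,a)-P_h^o(\cdot|s,a)$, suppose some non-fail state $s'\notin\mathcal{S}_f$ receives added mass, $\delta(s')>0$. I would show that moving this excess onto a fail-state $s_f$ (i) keeps the new kernel in $\Delta(\cS)$, (ii) does not increase $\tfrac12\|\cdot-P_h^o(\cdot|s,a)\|_1$, so feasibility inside the TV-ball is preserved, and (iii) changes the objective by $\delta(s')\big(V_{h+1}^{\pi,\rho}(s_f)-V_{h+1}^{\pi,\rho}(s')\big)=-\delta(s')\,V_{h+1}^{\pi,\rho}(s')\leq 0$, so the modified kernel remains optimal (and is strictly better when $V_{h+1}^{\pi,\rho}(s')>0$). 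Iterating over all such $s'$ produces an optimal worst-case transition that adds mass only to $\mathcal{S}_f$, giving $P_h^{w,\pi}(s'|s,a)\leq P_h^o(s'|s,a)$ for every $s'\notin\mathcal{S}_f$.

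The main obstacle is the tie-breaking hidden in the $\argmin$: when a non-fail state happens to have value $0$ it is itself a minimizer, so \emph{not} every optimizer of the inner problem satisfies the claimed inequality, and a crude optimality argument alone is insufficient. The exchange argument above is exactly what resolves this, by exhibiting a canonical optimizer — consistent with the dual solution in \eqref{eq:main con TV part 1}, whose $\min_{s}\widehat V(s)$ term signals that transferred mass is routed to a minimum-value state — that deposits all excess on a fail-state. The delicate computation is verifying that this redistribution never breaks the total-variation budget; this requires a short case analysis of how $\tfrac12\|P-P_h^o\|_1$ changes at $s_f$ according to whether mass was previously removed from or added to $s_f$, and I expect this feasibility bookkeeping to be the one technical point needing care.
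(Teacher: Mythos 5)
Your proposal is correct and rests on the same core argument as the paper's proof: reallocate any excess mass on a non-fail state $s'$ onto a fail-state, check that this stays inside the TV-ball and weakly decreases $\langle P, V_{h+1}^{\pi,\rho}\rangle$, and conclude from optimality of $P_h^{w,\pi}$ (the paper phrases this as a contradiction under the extra hypothesis $V_{h+1}^{\pi,\rho}(s')>0$, while you run it as a constructive exchange). Your two additions — the explicit backward induction showing $V_{h+1}^{\pi,\rho}\equiv 0$ on $\mathcal{S}_f$, and the observation that zero-value non-fail states create ties so the claim really holds for a canonical selection of the $\argmin$ — are points the paper uses implicitly or glosses over, and they make the argument more careful without changing its substance.
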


\Cref{prop:fail-states property} shows that, for any state-action pair, the transition probability to a non-fail state is smaller in the worst-case environment than in the nominal environment. Consequently, non-fail states that are rarely visited in the nominal environment remain rarely visited in the worst-case environment,  hence it would not incur large regret when making decisions at these states.
Meanwhile, by definition, states in $\mathcal{S}_f$ lead to precisely zero value no matter what action is taken  and thus no regret could be incurred at these states. 
This implies that \Cref{con:fail-states} or \Cref{con:vanishing minimal value} ensures the information obtained from exploration in the nominal environment is sufficient for decision making in the worst-case environment, thus bypassing the information deficit issue.

Note that both \Cref{con:fail-states} and \Cref{con:vanishing minimal value} are specifically designed for CRMDPs with TV-distance defined uncertainty sets. In more general $f$-divergence contexts, such as RMDPs with KL-divergence or $\chi^2$-divergence defined uncertainty sets or regularization terms, the property described in \Cref{prop:fail-states property} does not hold. Consequently, learning RMDPs through online interaction is in general a challenging open problem \citep{lu2024distributionally} without additional assumptions. To characterize the inherent difficulty in learning online RMDPs with general $f$-divergences, we propose a more intrinsic metric that captures the essential of the problem, based on visitation measures in both the nominal and worst-case environments. Specifically, we have the following assumption.
\begin{assumption}[Bounded visitation measure ratio]
\label{asm:main distribution shift}
Under the definition of \Cref{def:visitation_measure}, we define $\VisitRatio:=\sup\limits_{\pi,h,s}\frac{\mathrm{q}_h^\pi(s)}{\mathrm{d}_h^\pi(s)}$ as the supremal ratio between the nominal visitation measure and the worst-case visitation measure.
We assume that $\VisitRatio$ is polynomial in $H$, $S$ and $A$.
\end{assumption}

\begin{remark}[Reduction to non-robust setting]\label{rmk:assumption reduction}
In non-robust settings, \Cref{asm:main distribution shift} is always satisfied with $\VisitRatio=1$, since $\mathrm{q}_h^\pi(s)=\mathrm{d}_h^\pi(s)$. This indicates our results also apply to the non-robust setting as a special case.
\end{remark}

Visitation measure is a common metric in the offline literature such as \citet[Definition 3]{li2024settling} and \citet[Assumption 1]{shi2024distributionally}.
\Cref{asm:main distribution shift}  ensures that the information obtained in the nominal environment can be effectively used for estimation in the worst-case environment.

\subsection{Regret Bound for Constrained Robust MDP}

We first focus on CRMDPs with TV, KL and $\chi^2$ divergences defined uncertainty sets. Before we present our results, we introduce an extra assumption for the CRMDP-KL setting.
\begin{assumption}\label{asm:con KL more}
We assume there exists a constant $\MinProb>0$, such that for any  $(h,s,a,s')\in[H]\times\mathcal{S}\times\mathcal{A}\times\mathcal{S}$, if $P_h^o(s'|s,a)>0$, then $P_h^o(s'|s,a)>\MinProb$.

\end{assumption}

\begin{remark}
For CRMDPs with KL-divergence defined uncertainty sets, \Cref{asm:con KL more} guarantees the regularity of dual formulation for KL-divergence. We note that similar assumptions also appear in \citet[Theorem 3.2]{yang2022toward} and \citet[Theorem 3]{shi2024distributionally}, both study CRMDPs with KL-divergence defined uncertainty sets.
\end{remark}

\begin{theorem}[CRMDP regret upper bounds]
\label{thm:main con regret upper}
Assume \Cref{asm:main distribution shift} holds for CRMDPs with TV, KL and $\chi^2$ divergence defined uncertainty sets. Assume \Cref{asm:con KL more} holds for the CRMDP-KL setting. Then for any $\delta\in(0,1)$, with probability at least $1-\delta$, \Cref{alg:main alg} satisfies
{\small
\begin{align*}
    &\text{Regret}(K)=\\
    &\begin{cases}
        \widetilde{\mathcal{O}}\big(\VisitRatio S^2AH^2+\VisitRatio^\frac{1}{2}S^\frac{3}{2}A^\frac{1}{2}H^2\sqrt{K}\big)&\text{(TV)}\\
        \widetilde{\mathcal{O}}\Big(\Big(1+\frac{H\sqrt{S}}{\rho\MinProb}\Big)\big(\VisitRatio SAH+\VisitRatio^\frac{1}{2}S^\frac{1}{2}A^\frac{1}{2}H\sqrt{K}\big)\Big)&\text{(KL)}\\
        \widetilde{\mathcal{O}}\big((1+\sqrt{\rho})\big(\VisitRatio S^2AH^2+\VisitRatio^\frac{1}{2}S^\frac{3}{2}A^\frac{1}{2}H^2\sqrt{K}\big)\big)&\text{($\chi^2$)}
    \end{cases}.
\end{align*}}%
\end{theorem}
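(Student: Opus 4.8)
The plan is to run the optimism-based regret decomposition standard in online RL, with two robustness-specific twists: telescoping the value gap along the \emph{worst-case} kernel rather than the nominal one, and converting the resulting worst-case visitation mass to nominal visitation mass through the ratio $\VisitRatio$. Throughout I assume the two per-divergence guarantees established in the referenced subsections (\Cref{sec:proof_constrained_TV,sec:proof_constrained_KL,sec:proof_constrained_Chi2}): (i) \emph{optimism}, i.e.\ on a high-probability event $\widehat Q_h^k(s,a)\ge Q_h^{*,\rho}(s,a)$ for all $(s,a,h,k)$; and (ii) \emph{bonus validity}, i.e.\ on the same event $|\mathrm{RB}_h^k(s,a)-(\mathcal T_h^\rho\widehat V_{h+1}^k)(s,a)|\le b_h^k(s,a)$, where $\mathcal T_h^\rho V(s,a)=r_h(s,a)+\inf_{P\in\mathcal U_h^\rho}\mathbb E_P[V](s,a)$ is the true robust Bellman operator. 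These are precisely where the dual reformulations of TV, KL and $\chi^2$ and the concentration of $\widehat P_h^k,\widehat r_h^k$ enter, and the three cases differ only through the scalar bonus coefficient, so the remainder of the argument is uniform.

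First I would reduce the regret to a weighted sum of bonuses. By optimism, $\widehat V_1^k(s_1^k)\ge V_1^{*,\rho}(s_1^k)$, so $\text{Regret}(K)\le\sum_k\big[\widehat V_1^k(s_1^k)-V_1^{\pi^k,\rho}(s_1^k)\big]$. Writing $\Delta_h^k(s):=\widehat V_h^k(s)-V_h^{\pi^k,\rho}(s)$ and using that $\pi^k$ is greedy for $\widehat Q^k$, the estimate $\widehat Q_h^k\le\mathrm{RB}_h^k+b_h^k$ together with bonus validity gives $\widehat Q_h^k(s,a)-Q_h^{\pi^k,\rho}(s,a)\le 2b_h^k(s,a)+\big[(\mathcal T_h^\rho\widehat V_{h+1}^k)-(\mathcal T_h^\rho V_{h+1}^{\pi^k,\rho})\big](s,a)$. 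The key robustness step is that, taking $\bar P=P_h^{w,\pi^k}$ (the minimizer defining $V_{h+1}^{\pi^k,\rho}$ in \Cref{def:visitation_measure}) as a feasible point for the infimum applied to $\widehat V_{h+1}^k$, the last bracket is at most $\mathbb E_{s'\sim P_h^{w,\pi^k}(\cdot|s,\pi_h^k(s))}[\Delta_{h+1}^k(s')]$. Hence $\Delta_h^k(s)\le 2b_h^k(s,\pi_h^k(s))+\mathbb E_{P_h^{w,\pi^k}}[\Delta_{h+1}^k]$, and since $\Delta_{H+1}^k\equiv0$, unrolling along the worst-case kernel produces the worst-case visitation measure $\mathrm q_h^{\pi^k}$:
\[
\text{Regret}(K)\le 2\sum_{k=1}^K\sum_{h=1}^H\sum_s \mathrm q_h^{\pi^k}(s)\,b_h^k(s,\pi_h^k(s)).
\]

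Next I would transfer from $\mathrm q_h^{\pi^k}$ to the nominal measure $\mathrm d_h^{\pi^k}$ that governs the data counts $n_h^k$. The naive bound $\mathrm q\le\VisitRatio\,\mathrm d$ would yield a linear $\VisitRatio$; to get the sharp $\VisitRatio^{1/2}$ on the $\sqrt K$ term I would instead use $\mathrm q=\tfrac{\mathrm q}{\mathrm d}\mathrm d\le\VisitRatio^{1/2}\sqrt{\mathrm q\,\mathrm d}$ (legitimate since $\VisitRatio<\infty$ forces $\mathrm d>0$ wherever $\mathrm q>0$), then Cauchy--Schwarz over states using $\sum_s\mathrm q_h^{\pi^k}(s)=1$. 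With the bonuses of the form $b_h^k(s,a)=C_{\mathrm{div}}/\sqrt{n_h^{k-1}(s,a)\vee1}+(\text{lower order})$, this gives, per $(h,k)$, a factor $\VisitRatio^{1/2}\big(\sum_s\mathrm d_h^{\pi^k}(s)/(n_h^{k-1}\vee1)\big)^{1/2}$. A second Cauchy--Schwarz over $k$, a visitation-count concentration lemma relating the expected mass $\mathrm d_h^{\pi^k}(s,\pi_h^k(s))$ to the realized counts, and the elementary bound $\sum_k\mathbf 1[(s_h^k,a_h^k)=(s,a)]/(n_h^{k-1}(s,a)\vee1)\lesssim\ln n_h^K(s,a)$ give $\sum_k\sum_s\mathrm d_h^{\pi^k}(s,\pi_h^k(s))/(n_h^{k-1}\vee1)\lesssim SA\ln K$ via $\sum_{s,a}n_h^K(s,a)=K$. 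Collecting terms yields the leading contribution $\widetilde{\mathcal O}(C_{\mathrm{div}}\,\VisitRatio^{1/2}\sqrt{SA}\,H\sqrt K)$, while the first-visit terms ($n_h^{k-1}=0$), bounded crudely with $\mathrm q\le\VisitRatio\,\mathrm d$ and summed over the $SAH$ triples, produce the $K$-independent $\widetilde{\mathcal O}(C_{\mathrm{div}}\,\VisitRatio\,SAH)$ contribution. Substituting $C_{\mathrm{div}}\asymp HS$ (TV), $1+\tfrac{H\sqrt S}{\rho\MinProb}$ (KL), and $(1+\sqrt\rho)HS$ ($\chi^2$) reproduces the three stated bounds.

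The main obstacle I anticipate is the $\VisitRatio^{1/2}$ sharpening: the two-stage Cauchy--Schwarz must be arranged so the residual $\sum_s\mathrm q=1$ cancels exactly while the surviving $\sum_s\mathrm d/(n\vee1)$ remains amenable to the counting lemma, and any sloppiness collapses it back to a linear $\VisitRatio$. A secondary difficulty is the concentration step passing from the expected visitation $\mathrm d_h^{\pi^k}$ to the realized counts $n_h^k$ uniformly over the $K$ adaptively chosen policies $\pi^k$, which requires a martingale argument whose error must be controlled to stay of lower order than the $\sqrt K$ main term. Everything else—the telescoping and the per-divergence substitution of $C_{\mathrm{div}}$—is routine once optimism and bonus validity are granted.
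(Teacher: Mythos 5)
Your proposal is correct and mirrors the paper's proof: optimism plus bonus validity from the dual formulations, a recursion along the worst-case kernel yielding $\text{Regret}(K)\le 2\sum_{k,h}\mathbb{E}_{P_h^{w,k},\pi^k}\big[b_h^k\big]$, and a Cauchy--Schwarz conversion from worst-case to nominal visitation measure that produces the $\VisitRatio^{1/2}\sqrt{K}$ leading term and the $\VisitRatio SAH$ lower-order term, with the per-divergence bonus coefficients substituted at the end. Your variant $\mathrm{q}\le\VisitRatio^{1/2}\sqrt{\mathrm{q}\,\mathrm{d}}$ followed by Cauchy--Schwarz is algebraically equivalent to the paper's $\sum\mathrm{q}\,b\le(\sum\mathrm{q})^{1/2}(\sum\mathrm{q}\,b^2)^{1/2}$ followed by $\mathrm{q}\le\VisitRatio\,\mathrm{d}$, and the counting and martingale concentration steps you flag as the delicate points are exactly what the paper handles via its failure-event, good-set, and visitation-ratio lemmas.
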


\begin{remark}
\Cref{thm:main con regret upper} presents the first provably sublinear result in the online RMDP literature for KL and $\chi^2$ defined uncertainty sets. The differences in dominant terms stem from how value function errors are amplified in dual formulations through Bellman equations and induction. Although the radius $\rho$ is not explicitly part of the results for TV-distance, it implicitly affects the bounds through $\VisitRatio$ in \Cref{asm:main distribution shift}. A larger $\rho$ loosens constraints, increases distribution shifts, and consequently, requires a higher $\VisitRatio$ and leads to increased regret bound.
\end{remark}

It is worth noting that under the policy selection scheme in \Cref{alg:main alg}, our regret upper bounds still hold if we relax the definition of $\VisitRatio$ in \Cref{asm:main distribution shift} to be defined as the supremum visitation ratio over  deterministic policies. 

By the standard online-to-batch conversion \citep{cesa2004generalization}, the regret
bounds in \Cref{thm:main con regret upper} immediately imply the following sample complexity results: 
\begin{corollary}\label{cor:con sample complexity}
Under the same setup in \Cref{thm:main con regret upper},
when 
 {\small
\begin{align*}
    K=\begin{cases}
        \widetilde{\mathcal{O}}\big(\VisitRatio S^3AH^4/\epsilon^2\big)&\text{(TV)}\\
        \widetilde{\mathcal{O}}\Big(\Big(1+\frac{H\sqrt{S}}{\rho\MinProb}\Big)^2\VisitRatio SAH^2/\epsilon^2\Big)&\text{(KL)}\\
        \widetilde{\mathcal{O}}\big((1+\sqrt{\rho})^2\VisitRatio S^3AH^4/\epsilon^2\big)&\text{($\chi^2$)}
    \end{cases},
\end{align*}}%
with probability at least $1-\delta$, the uniform mixture of
the policies produced by \Cref{alg:main alg} is $\epsilon$-optimal.
\end{corollary}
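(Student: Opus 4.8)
The plan is to apply a standard online-to-batch conversion \citep{cesa2004generalization}, adapted to the robust setting, to convert the cumulative regret bound of \Cref{thm:main con regret upper} into a per-policy suboptimality guarantee. First I would define the output policy $\bar\pi$ as the uniform mixture over the $K$ policies $\pi^1,\dots,\pi^K$ produced by \algname: at the start of an episode, draw an index $k$ uniformly from $[K]$ and then execute $\pi^k$ throughout the horizon. The goal is to bound the robust suboptimality $V_1^{*,\rho}(s_1)-V_1^{\bar\pi,\rho}(s_1)$ at the fixed initial state $s_1$ (taking $s_1^k\equiv s_1$).

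The key step, which must be handled with care in the robust setting, is to lower bound the robust value of the mixture policy by the average of the individual robust values. Since for any fixed transition kernel $P$ the expected return under $\bar\pi$ decomposes as $\mathbb{E}_{\bar\pi,P}[\,\cdot\,]=\frac{1}{K}\sum_{k}\mathbb{E}_{\pi^k,P}[\,\cdot\,]$, and since the infimum over the uncertainty set is superadditive (the infimum of a sum is at least the sum of the infima), I would obtain
\begin{align*}
V_1^{\bar\pi,\rho}(s_1)&=\inf_{P\in\cU^\rho(P^o)}\frac{1}{K}\sum_{k=1}^K\mathbb{E}_{\pi^k,P}\Big[\textstyle\sum_{t=1}^H r_t(s_t,a_t)\,\Big|\,s_1\Big]\\
&\geq\frac{1}{K}\sum_{k=1}^K\inf_{P\in\cU^\rho(P^o)}\mathbb{E}_{\pi^k,P}\Big[\textstyle\sum_{t=1}^H r_t(s_t,a_t)\,\Big|\,s_1\Big]=\frac{1}{K}\sum_{k=1}^K V_1^{\pi^k,\rho}(s_1).
\end{align*}
This is exactly where the robust structure helps rather than hurts: forcing all $\pi^k$ to share a single worst-case transition can only decrease the value, so the mixture dominates the average. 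Rearranging and invoking the definition of regret then gives
\begin{align*}
V_1^{*,\rho}(s_1)-V_1^{\bar\pi,\rho}(s_1)\leq\frac{1}{K}\sum_{k=1}^K\big[V_1^{*,\rho}(s_1)-V_1^{\pi^k,\rho}(s_1)\big]=\frac{\text{Regret}(K)}{K}.
\end{align*}

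Finally, I would substitute each regret bound from \Cref{thm:main con regret upper}—which holds with probability at least $1-\delta$—and solve $\text{Regret}(K)/K\leq\epsilon$ for $K$. Each bound has the form $\widetilde{\mathcal{O}}(A_1+A_2\sqrt{K})$, so dividing by $K$ gives $\widetilde{\mathcal{O}}(A_1/K+A_2/\sqrt{K})$; for small $\epsilon$ the $A_2/\sqrt{K}$ term dominates, and setting it below $\epsilon$ and squaring recovers the stated complexities (e.g.\ for TV, $A_2=\VisitRatio^{1/2}S^{3/2}A^{1/2}H^2$ yields $K=\widetilde{\mathcal{O}}(\VisitRatio S^3AH^4/\epsilon^2)$, since the $A_1/K$ term only requires a lower-order number of episodes). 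The RRMDP cases are identical with $V_1^{\pi,\rho}$ replaced by the regularized value $V_1^{\pi,\beta}$ and the infimum taken over $\Delta(\cS)$. The main obstacle is the mixture-dominates-average inequality of the second paragraph: it requires verifying that the return genuinely decomposes linearly over the mixture index and that superadditivity of the infimum applies to the $(s,a)$-rectangular uncertainty set (and to the regularized objective), after which the remaining steps are routine algebra.
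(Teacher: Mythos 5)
Your proposal is correct and matches the paper's approach: the paper derives this corollary purely by citing the standard online-to-batch conversion of \citet{cesa2004generalization} applied to \Cref{thm:main con regret upper}, without spelling out the details. The one step you rightly flag as needing care in the robust setting—that $V_1^{\bar\pi,\rho}\geq\frac{1}{K}\sum_k V_1^{\pi^k,\rho}$ via superadditivity of the infimum over the uncertainty set—is valid and is exactly the implicit content of the paper's one-line justification.
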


To see how tight the upper bounds in \Cref{thm:main con regret upper} are, we provide the following results on lower bounds.
\begin{theorem}[CRMDP regret lower bound]
\label{thm:main con regret lower}
For CRMDPs with TV, KL and $\chi^2$ divergence defined uncertainty sets, for any learning algorithm $\xi$, there exists a CRMDP $\mathcal{M}$ satisfying \Cref{asm:main distribution shift}, such that $\mathbb{E}\big[\text{Regret}_\mathcal{M}(\xi,K)\big]=\Omega\big(\VisitRatio^\frac{1}{2}\sqrt{K}\big)$.
\end{theorem}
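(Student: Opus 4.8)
The plan is to prove the lower bound via a standard two-point (Le Cam) information-theoretic argument, but with a hard instance family specifically engineered so that the \emph{nominal} visitation of a critical state is small while its \emph{worst-case} visitation is a constant, thereby forcing the ratio $\VisitRatio$ into the regret. The central idea is that the agent collects data only under the nominal kernel $P^o$, so it can identify the optimal action at a rarely-visited state only slowly, yet the robust objective weights that state by its much larger worst-case visitation; the mismatch between these two visitation rates is exactly what the quantity $\VisitRatio$ measures.

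First I would construct, for each divergence (TV, KL, $\chi^2$), a pair of CRMDPs $\mathcal{M}_1,\mathcal{M}_2$ sharing an identical nominal kernel and differing only at a single ``critical'' state $s^\star$. From the initial state, the nominal kernel reaches $s^\star$ with small probability $d=\Theta(1/\VisitRatio)$ and otherwise enters a high-value ``safe'' region; because $s^\star$ carries strictly lower value than the safe region, the worst-case transition $P_h^{w,\pi}$ of \Cref{def:visitation_measure} shifts mass toward $s^\star$, and by choosing the uncertainty radius to be a constant I arrange the worst-case visitation $q=\Theta(1)$ for every relevant policy. This yields $q/d=\Theta(\VisitRatio)$, and by keeping all other states benign I ensure $s^\star$ attains the supremum in \Cref{asm:main distribution shift}, so the instance has visitation ratio $\Theta(\VisitRatio)$, polynomial in the problem parameters. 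At $s^\star$ the two instances place a reward gap $\pm\Delta$ on the two available actions (swapping which action is optimal), so that playing the wrong action costs $\Delta$ in immediate value and, because $s^\star$ carries worst-case weight $q$, costs $\Theta(q\Delta)$ in the robust value. Hence the robust regret in any episode that selects the wrong action at $s^\star$ is $\Theta(q\Delta)$.

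Next I would run the information-theoretic comparison. Over $K$ episodes the expected number of visits to $s^\star$ under the nominal kernel is $\Theta(Kd)$, and since $\mathcal{M}_1$ and $\mathcal{M}_2$ differ only in the $\Theta(\Delta)$-separated reward at $s^\star$, a divergence-decomposition argument bounds the KL divergence between the two trajectory distributions by $\Theta(Kd\Delta^2)$. Choosing $\Delta=\Theta(1/\sqrt{Kd})$ makes this KL a constant, so by the Bretagnolle--Huber inequality no algorithm can reliably identify the optimal action, and on at least one of the two instances it selects the wrong action at $s^\star$ in a constant fraction of the $K$ episodes. Summing the per-episode regret gives $\mathbb{E}[\text{Regret}(K)]=\Omega(K q\Delta)=\Omega(q\sqrt{K/d})=\Omega(\sqrt{\VisitRatio K})$, where the last step uses $q=\Theta(1)$ and $d=\Theta(1/\VisitRatio)$.

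The main obstacle I expect is the simultaneous control of the two visitation rates across the divergence-specific constructions: I must verify that the worst-case kernel $P_h^{w,\pi}$ genuinely raises the visitation of $s^\star$ to a constant $q$ for \emph{both} instances and for the optimal as well as the played policy (so the $q\Delta$ regret weighting is robust), while keeping the nominal rate $d$ as small as $\Theta(1/\VisitRatio)$---and that this is achievable with a constant radius under each of TV, KL, and $\chi^2$, whose admissible mass shifts differ. Correctly decoupling the nominal sampling rate, which governs learning speed, from the worst-case weight, which governs per-episode regret, and confirming that $s^\star$ attains the supremal ratio, is the crux that produces the $\sqrt{\VisitRatio}$ factor rather than a $\VisitRatio$ or $\sqrt{d}$ dependence.
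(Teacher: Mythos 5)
Your proposal is correct and follows essentially the same route as the paper's proof: a two-instance Le Cam/Bretagnolle--Huber argument on a short-horizon MDP with a critical state whose nominal visitation is $p=\Theta(1/\VisitRatio)$ (governing the KL between trajectory laws, hence learning speed) but whose worst-case visitation $\widetilde{p}=\Theta(1)$ weights the per-episode robust regret, with $\Delta=\Theta(1/\sqrt{Kp})$ yielding $\Omega(\widetilde{p}\sqrt{K/p})=\Omega(\VisitRatio^{1/2}\sqrt{K})$. The paper additionally spreads the perturbation over $|\mathcal{A}|$ actions and verifies the constant worst-case visitation separately for each divergence, but these are refinements of the same argument you describe.
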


\begin{remark}
\label{remark: constrained upper match lower bound}
Comparing \Cref{thm:main con regret upper} and \Cref{thm:main con regret lower}, we observe the order of $\VisitRatio$ in the dominant terms of the upper bounds matches that in the lower bounds. The upper bounds thus align with the lower bounds in the two most critical parameters governing sample complexity: $\VisitRatio$ and $K$. This indicates the fundamental presence of the information deficit issue in the online learning of robust policies, which stems from the discrepancy between the nominal and worst-case transitions and can be characterized by $\VisitRatio$.
\end{remark}

Based on the proof of \Cref{thm:main con regret lower}, we construct hard instances to illustrate the necessity of \Cref{asm:main distribution shift} in guaranteeing sample efficient online learning in CRMDPs.

\begin{lemma}[CRMDP hard instances]
\label{lem:CRMDP_hard_instances}
For CRMDPs with TV, KL and $\chi^2$ divergence defined uncertainty sets, for any learning algorithm $\xi$, there exists a CRMDP $\mathcal{M}$ with $\VisitRatio=2^{2A}$,
such that $\mathbb{E}\big[\text{Regret}_\mathcal{M}(\xi,K)\big]=\Omega\big(2^{A}\sqrt{K}\big)$.
\end{lemma}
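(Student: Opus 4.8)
The plan is to treat \Cref{lem:CRMDP_hard_instances} as a quantitative instantiation of the lower-bound construction underlying \Cref{thm:main con regret lower}: that theorem already produces, for every algorithm, a CRMDP on which the robust regret is $\Omega(\VisitRatio^{1/2}\sqrt{K})$, so it suffices to exhibit an instance within the same family whose supremal visitation ratio is driven up to $\VisitRatio=2^{2A}$. Substituting this value gives $\VisitRatio^{1/2}=2^{A}$ and hence the claimed $\Omega(2^{A}\sqrt{K})$ bound. The real content of the lemma is therefore the explicit construction of a CRMDP with an exponentially large visitation ratio, together with the verification that the planted learning problem cannot be solved with the limited data that the nominal dynamics provide.

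First I would build an information-deficit gadget: a corridor of states $c_1,\dots,c_{2A}$ in which, under the nominal kernel $P^o$, each $c_i$ advances to $c_{i+1}$ with probability $1/2$ and otherwise falls into a high-value absorbing state, so that the nominal visitation of the terminal corridor state satisfies $\mathrm{d}_h^\pi(c_{2A})=\Theta(2^{-2A})$ for every policy $\pi$. The hard-to-learn sub-problem from the \Cref{thm:main con regret lower} construction --- two reward (or transition) profiles separated by a gap $\epsilon$ that any algorithm must distinguish --- is planted at $c_{2A}$, which is designed to be the lowest-value state of the gadget. By \Cref{def:visitation_measure} the worst-case kernel $P^{w,\pi}$ minimizes value, so at each $c_i$ it shifts transition mass toward the low-value continuation $c_{i+1}$; choosing the uncertainty radius $\rho$ so that this shift pushes the per-layer advance probability from $1/2$ up to essentially $1$ makes the worst-case visitation $\mathrm{q}_h^\pi(c_{2A})=\Theta(1)$. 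Consequently $\mathrm{q}_h^\pi(c_{2A})/\mathrm{d}_h^\pi(c_{2A})=\Theta(2^{2A})$, and after calibrating the corridor length and constants the supremum in \Cref{asm:main distribution shift} is attained at $c_{2A}$ with $\VisitRatio=2^{2A}$.

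With the gadget in place I would run the standard two-point (Le Cam) argument inherited from \Cref{thm:main con regret lower}. Since a sample at $c_{2A}$ is collected in an episode only with nominal probability $\Theta(2^{-2A})$, after $K$ episodes the expected number of observations at $c_{2A}$ is at most $\Theta(K\,2^{-2A})$; the two planted instances are therefore statistically indistinguishable (their trajectory laws have $O(1)$ total KL) as long as $\epsilon^2\cdot K\,2^{-2A}=O(1)$, i.e. $\epsilon=\Theta(2^{A}/\sqrt{K})$. Under indistinguishability any algorithm errs on the planted sub-problem with constant probability, and because the robust value weights $c_{2A}$ by its worst-case visitation $\mathrm{q}_h^\pi(c_{2A})=\Theta(1)$, each such episode incurs robust regret $\Theta(\epsilon)$; summing over the $K$ episodes yields $\mathbb{E}\big[\text{Regret}_\mathcal{M}(\xi,K)\big]=\Omega(\epsilon K)=\Omega(2^{A}\sqrt{K})$.

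The main obstacle is the divergence-specific verification in the middle step: I must confirm that a single corridor gadget simultaneously forces $\mathrm{q}_h^\pi(c_{2A})=\Theta(1)$ under the TV, KL, and $\chi^2$ worst-case kernels, which requires re-deriving the per-layer mass shift from the dual formulations established in \Cref{sec:proof_constrained_TV}, \Cref{sec:proof_constrained_KL}, and \Cref{sec:proof_constrained_Chi2} and choosing a radius $\rho$ (possibly divergence-dependent, but constant per layer) that raises the advance probability to essentially $1$ while keeping the planted gap $\epsilon$ meaningful. The remaining work is bookkeeping: ensuring $H\ge 2A$ so the corridor fits in the horizon, checking that no other state inflates the visitation ratio beyond $2^{2A}$, and confirming that the $\Theta(\cdot)$ constants compose so that the exponential rate $2^{A}$ multiplying $\sqrt{K}$, rather than merely its order, comes out exactly as stated.
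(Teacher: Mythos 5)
Your overall strategy matches the paper's: \Cref{lem:CRMDP_hard_instances} is proved there exactly as you frame it, by instantiating the two-point construction of \Cref{thm:proof regret lower} with an instance whose supremal visitation ratio equals $2^{2A}$ and reading off $\Omega(\VisitRatio^{1/2}\sqrt{K})=\Omega(2^{A}\sqrt{K})$; your information-theoretic step ($\Theta(K2^{-2A})$ expected visits, gap $\epsilon=\Theta(2^{A}/\sqrt{K})$, per-episode regret weighted by the \emph{worst-case} visitation) is precisely the paper's argument. Where you genuinely diverge is the gadget producing the exponential ratio. The paper keeps the original $H=2$, three-state instance and simply makes the single nominal transition probability into the bandit state exponentially small, $p=\Theta(2^{-2A})$, choosing the radius so the worst-case kernel lifts it to $\widetilde{p}=\Theta(1)$ in one shot; for TV this costs $\rho=1/2$, while for $\chi^2$ and KL it forces an exponentially large radius $\rho\approx 2^{2A-1}$ (handled for KL via $\mathrm{KL}\leq\chi^2$ rather than a closed form), which the paper accepts since the lemma places no constraint on $\rho$. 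You instead build a depth-$2A$ corridor of half-probability transitions and perturb each layer by a constant amount. Your version buys a bounded, layer-wise radius and a more ``physical'' hard instance, at the price of a horizon $H\geq 2A$ and the calibration work you partially flag: under the KL and $\chi^2$ worst cases each layer's advance probability can only be pushed to $1-\delta$ with $\delta>0$, so you must verify that the product of $2A$ such factors stays $\Theta(1)$ (each layer must reach $1-O(1/A)$ within the chosen radius), that intermediate corridor states do not themselves realize the supremum in \Cref{asm:main distribution shift}, and that constants can be tuned so $\VisitRatio$ equals $2^{2A}$ exactly rather than up to $\Theta(\cdot)$. None of these is an obstruction, but they are exactly the verifications the paper's one-transition gadget collapses into a single one-dimensional constrained optimization per divergence.
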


\begin{remark}
\Cref{lem:CRMDP_hard_instances} shows that, in the absence of additional assumptions, any online learning algorithm may perform poorly in CRMDPs. The hard instances are constructed by selecting a critical state that has an exponentially small visitation measure in the nominal environment, and make it has a visitation measure of constant order in the worst-case environment. As a result, an agent requires an exponential number of episodes to explore sufficient information about this state, while suffering a constant regret per episode when taking a suboptimal action.
\end{remark}

\subsection{Regret Bound for Regularized Robust MDP}
We then focus on RRMDPs with TV, KL and $\chi^2$ divergences defined regularization.

\begin{theorem}[RRMDP regret upper bound]
\label{thm:main reg regret upper}
Assume \Cref{asm:main distribution shift} holds for RRMDPs with TV, KL and $\chi^2$ divergence defined regularization terms. Then for any $\delta\in(0,1)$ with probability at least $1-\delta$, \Cref{alg:main alg} satisfies
{\small
\begin{align*}
    &\text{Regret}(K)=\\
    &\begin{cases}
        \widetilde{\mathcal{O}}\big(\VisitRatio S^\frac{3}{2}AH^2+\VisitRatio^\frac{1}{2}SA^\frac{1}{2}H^2\sqrt{K}\big)&\text{(TV)}\\
        \widetilde{\mathcal{O}}\big(\big(1+\beta e^{\beta^{-1}H}\sqrt{S}\big)\big(\VisitRatio SAH+\VisitRatio^\frac{1}{2}S^\frac{1}{2}A^\frac{1}{2}H\sqrt{K}\big)\big)&\text{(KL)}\\
        \widetilde{\mathcal{O}}\big(\big(1+\frac{H}{\beta}\big)\big(\VisitRatio S^2AH^2+\VisitRatio^\frac{1}{2}S^\frac{3}{2}A^\frac{1}{2}H^2\sqrt{K}\big)\big)&\text{($\chi^2$)}
    \end{cases}.
\end{align*}}%
\end{theorem}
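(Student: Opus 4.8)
The plan is to follow the optimism-based regret analysis that underlies the CRMDP bound in \Cref{thm:main con regret upper}, now instantiated with the regularized dual formulations and bonuses of \Cref{sec:RRMDP update}. The first step is to establish optimism: on a high-probability event built from concentration of the empirical model \eqref{equ:empirical_model}, I would show $\widehat{Q}_h^k(s,a)\ge Q_h^{*,\beta}(s,a)$ for all $(s,a,h,k)$. This rests on the strong-duality reformulations of the regularized robust Bellman operator in \eqref{eq:robust optimality equation-RRMDP} (established in \Cref{sec:proof_regularized_TV}, \Cref{sec:proof_regularized_KL}, and \Cref{sec:proof_regularized_Chi2}) together with certifying that each bonus $b_h^k$ dominates the deviation between the empirical estimator $\mathrm{RB}_h^k$ and its population counterpart; the clipping to $H-h+1$ in \eqref{alg:generic form} preserves optimism since $Q_h^{*,\beta}\le H-h+1$. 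Optimism then gives the decomposition $\text{Regret}(K)\le\sum_{k=1}^K(\widehat V_1^k(s_1^k)-V_1^{\pi^k,\beta}(s_1^k))$.

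The second step is the value-difference recursion. Using the greedy policy $\pi^k$ and the regularized Bellman equation, I would expand $\widehat V_h^k-V_h^{\pi^k,\beta}$ at the chosen action, bounding the one-step gap by $2b_h^k$ plus a residual term carrying $\widehat V_{h+1}^k-V_{h+1}^{\pi^k,\beta}$ forward. The crucial structural point is that in the regularized robust setting this residual propagates under the worst-case kernel $P^{w,\pi^k}$ of \Cref{def:visitation_measure}, so unrolling from $h=1$ to $H$ yields $\widehat V_1^k(s_1^k)-V_1^{\pi^k,\beta}(s_1^k)\lesssim\sum_{h=1}^H\mathbb{E}_{s\sim q_h^{\pi^k}}[b_h^k(s,\pi^k(s))]$; that is, the bonuses are averaged against the worst-case visitation $q_h^{\pi^k}$, while the counts $n_h^{k-1}$ inside each bonus accumulate only under the nominal visitation $d_h^{\pi^k}$.

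The third step, which is the crux, is to reconcile this mismatch. Writing each bonus as $c_{\mathrm{div}}/\sqrt{n_h^{k-1}(s,a)\vee1}$, I would apply Cauchy--Schwarz with the split $q_h^{\pi^k}(s)=(q_h^{\pi^k}(s)/\sqrt{d_h^{\pi^k}(s)})\cdot\sqrt{d_h^{\pi^k}(s)}$. The first factor gives $\sqrt{\sum_{k,s}(q_h^{\pi^k}(s))^2/d_h^{\pi^k}(s)}\le\sqrt{\VisitRatio K}$, using \Cref{asm:main distribution shift} together with $\sum_s q_h^{\pi^k}(s)=1$, and this is precisely where $\VisitRatio$ enters at the square-root rather than linear power. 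The second factor $\sqrt{\sum_{k,s} d_h^{\pi^k}(s)/(n_h^{k-1}(s,\pi^k(s))\vee1)}$ is $\widetilde{\mathcal{O}}(\sqrt{SA})$ by a pigeonhole/telescoping argument after a martingale step relating expected visitation to realized counts. The product yields a leading $\widetilde{\mathcal{O}}(c_{\mathrm{div}}\,\VisitRatio^{1/2}\sqrt{SAK})$ contribution per step, while the $\vee1$ burn-in (each state--action needs $\sim\VisitRatio$ effective episodes before its bonus shrinks) together with the lower-order concentration slack contributes the $K$-independent, $\VisitRatio$-linear term $\widetilde{\mathcal{O}}(c_{\mathrm{div}}\,\VisitRatio\, SA)$. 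Summing over $h$ and substituting the divergence-specific coefficients $c_{\mathrm{div}}$, namely $\mathcal{O}(H\sqrt S)$ for TV, $\mathcal{O}(1+\beta e^{\beta^{-1}H}\sqrt S)$ for KL, and $\mathcal{O}((H+H^2/\beta)S)$ for $\chi^2$, reproduces the three stated cases.

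The main obstacle will be the optimism and bonus-validity step for the non-TV divergences. For RRMDP-KL the dual objective contains $\ln\mathbb{E}_{\widehat P_h^k}[\exp(-\beta^{-1}\widehat V_{h+1}^{k,\beta})]$, whose sensitivity to perturbations of $\widehat P_h^k$ is amplified by a factor of order $e^{\beta^{-1}H}$; certifying that the bonus absorbs this requires carefully bounding the Lipschitz constant of the log-sum-exp dual in the empirical transition, which is the source of the $\beta e^{\beta^{-1}H}\sqrt S$ term. For RRMDP-$\chi^2$ the obstacle is instead propagating error through the empirical variance $\mathrm{Var}_{\widehat P_h^k}$, whose deviation from the population variance enters with $1/\beta$ sensitivity and forces the $H^2/\beta$ factor. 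In both cases the delicate ingredients are the per-step gap and the resulting optimism, whereas the visitation-ratio conversion of the third step is identical across all divergences and is exactly where $\VisitRatio^{1/2}$, rather than $\VisitRatio$, emerges in the dominant term.
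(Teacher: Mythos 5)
Your proposal is correct and follows essentially the same route as the paper: strong duality for the regularized Bellman operator, bonus-driven optimism by backward induction, a value-difference recursion that propagates under the worst-case kernel $P^{w,\pi^k}$, and a Cauchy--Schwarz plus good-set/pigeonhole argument converting worst-case visitation weights into nominal counts via \Cref{asm:main distribution shift}, yielding the $\VisitRatio^{1/2}\sqrt{K}$ dominant term and the $\VisitRatio$-linear burn-in term. The only cosmetic difference is where you place the weights in Cauchy--Schwarz ($\sum_s q^2/d \le \VisitRatio$ versus the paper's $\sum q \le KH$ paired with $q/n \le \VisitRatio\, d/n$), which gives the same bound.
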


\begin{figure*}[!htbp]
    \centering
    \subfigure[Illustration of $\VisitRatio$\label{fig:visit ratio result}]
    {
        \includegraphics[width=0.23\linewidth]{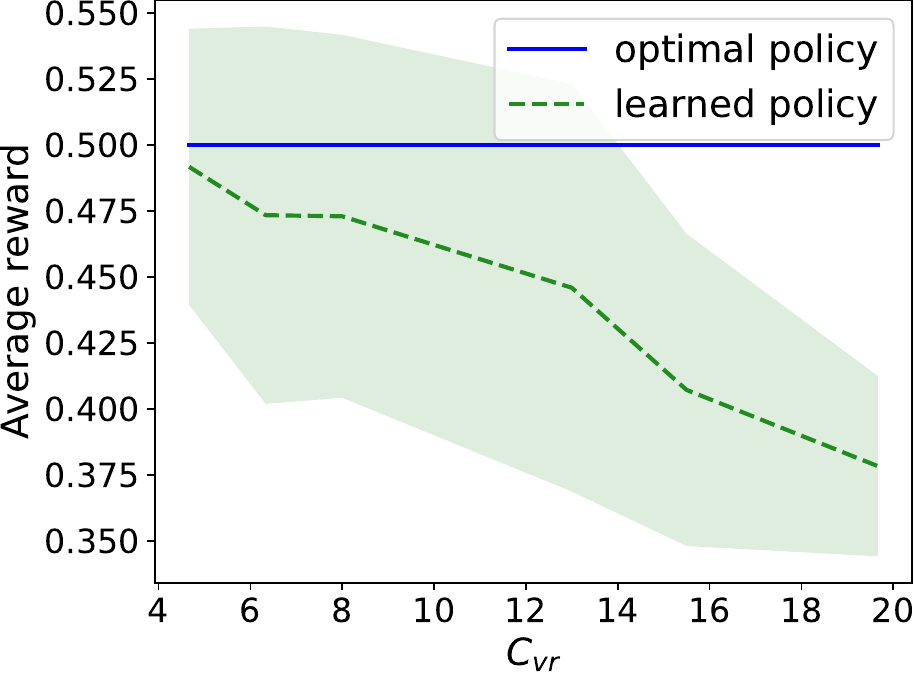}
    }
    \subfigure[TV settings\label{fig:Simple Example TV}]
    {
        \includegraphics[width=0.23\linewidth]{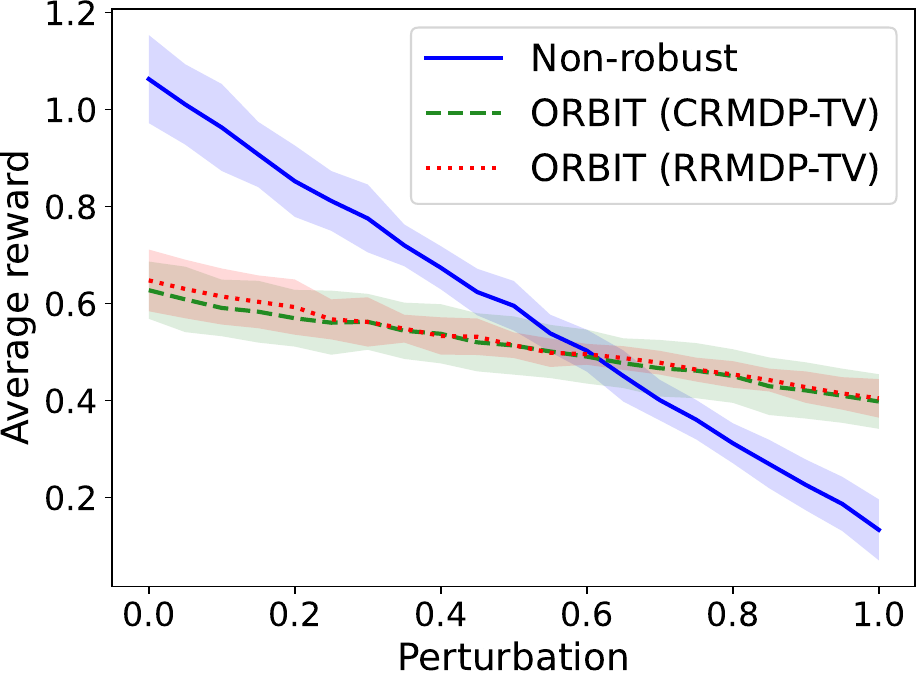}
    }
    \subfigure[KL settings\label{fig:Simple Example KL}]
    {
        \includegraphics[width=0.23\linewidth]{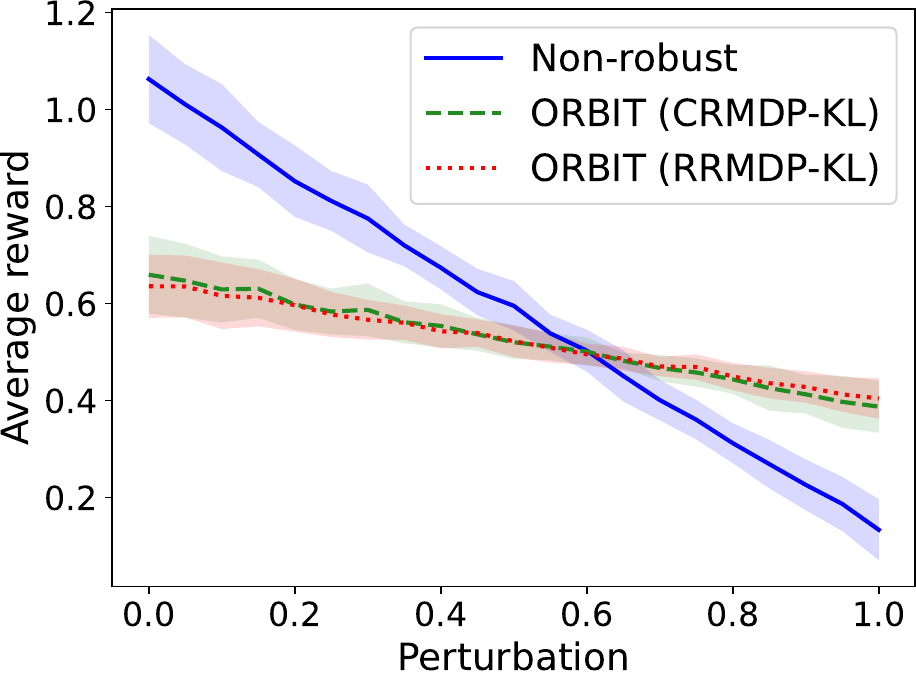}
    }
    \subfigure[$\chi^2$ settings\label{fig:Simple Example Chi2}]
    {
        \includegraphics[width=0.23\linewidth]{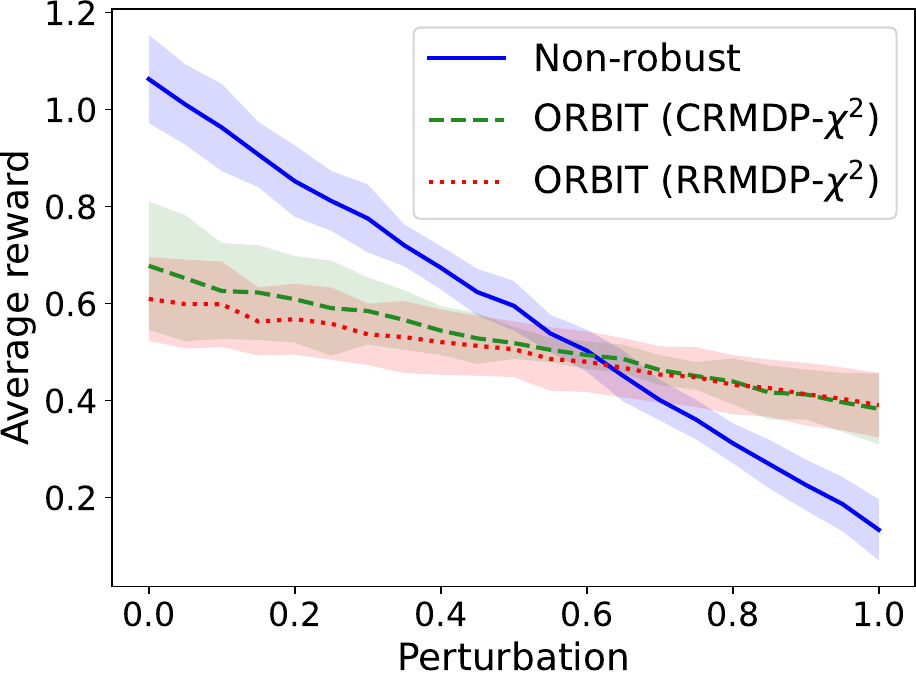}
    }
    \caption{\Cref{fig:visit ratio result} shows the comparison of the learned policy and the optimal policy in \Cref{sec:main show visit ratio} (Illustration of the Effect of $\VisitRatio$ on Robustness), where the optimal policy represents the ground truth optimal policy, the learned policy is obtained by \Cref{alg:main alg}. \Cref{fig:Simple Example TV,fig:Simple Example KL,fig:Simple Example Chi2} present the comparison between our algorithm \algname\ and the non-robust algorithm in \Cref{sec:main Simple Example} (Learning on Simulated RMDPs).}
\end{figure*}

\begin{remark}
\Cref{thm:main reg regret upper} presents the first provably sublinear results in the online RRMDP settings. The result in \Cref{thm:main reg regret upper} corresponding to the TV-distance is smaller by a factor of $O(\sqrt{S})$ compared to \Cref{thm:main con regret upper}.
This efficiency gain arises because the dual formulation for RRMDPs eliminates the need for constructing an $\epsilon$-net, which also speeds up solving the inner optimization problem and demonstrates computational advantages.
Notably, \Cref{asm:con KL more} is not required in the RRMDP-KL setting, as the dual formulation for KL has a closed-form solution, which also reduces the computational cost.
In the $\chi^2$-divergence setting, the upper bound is larger than that in the constrained setting by a factor of $O(H)$. This gap derives from the differences in dual formulations in two RMDPs, where \Cref{lem:reg Chi2 closed form} in the RRMDP-$\chi^2$ setting does not admit a square root compared to \Cref{lem:con Chi2 closed form} in the CRMDP-$\chi^2$ setting.
\end{remark}

\begin{theorem}[RRMDP regret lower bound]
\label{thm:main reg regret lower}
For RRMDPs with TV, KL and $\chi^2$ divergence defined regularization terms, for any learning algorithm $\xi$, there exists a RRMDP $\mathcal{M}$ satisfying \Cref{asm:main distribution shift}, such that $\mathbb{E}\big[\text{Regret}_\mathcal{M}(\xi,K)\big]=\Omega\big(\VisitRatio^\frac{1}{2}\sqrt{K}\big)$.
\end{theorem}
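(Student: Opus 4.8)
The plan is to establish the lower bound via a reduction to a hypothesis-testing problem over a family of hard RRMDP instances, mirroring the construction behind \Cref{thm:main con regret lower} but adapting the worst-case transition and the regret to the regularized objective. I would first build a small instance with a designated \emph{critical state} $s^\star$ that is reachable from the initial state only with a tunable nominal probability $p$, and at $s^\star$ place a two-action sub-problem in which one action yields reward $\tfrac12+\Delta$ and the other $\tfrac12-\Delta$ (or an analogous reward/transition gap), with the identity of the better action encoded by a hidden parameter $\theta\in\{0,1\}$. Everything outside $s^\star$ would be made identical across the two instances $\mathcal{M}_0,\mathcal{M}_1$, so that the only way to gain information about $\theta$ is to actually visit $s^\star$.

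The second step is to compute the regularized worst-case transition $P_h^{w,\pi}$ from \Cref{def:visitation_measure} for each of the three divergences and verify that it redirects probability mass onto the low-value critical state: since $s^\star$ carries below-average continuation value, the minimizing $P_h$ increases its weight, so the worst-case visitation $\mathrm{q}_h^\pi(s^\star)$ is of constant order while the nominal visitation $\mathrm{d}_h^\pi(s^\star)=\Theta(p)$. Choosing $p\asymp 1/\VisitRatio$ then makes the instance satisfy \Cref{asm:main distribution shift} with the prescribed supremal visitation ratio (one also checks the ratio is maximized, up to constants, at $s^\star$), which is the key structural property I need. For the TV and $\chi^2$ regularizers I would read off the worst-case shift from the dual forms used in \Cref{sec:RRMDP update}; for KL the closed-form exponential-tilting minimizer makes this explicit.

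The third step is the information-theoretic core. Over $K$ episodes the expected number of visits to $s^\star$ is $\Theta(Kp)$, and each visit contributes at most $O(\Delta^2)$ to the KL divergence between the trajectory laws $\mathbb{P}_0^{K}$ and $\mathbb{P}_1^{K}$; hence $\mathrm{KL}(\mathbb{P}_0^{K}\Vert\mathbb{P}_1^{K})=O(Kp\,\Delta^2)$ by the chain rule. By Bretagnolle--Huber (or Le Cam's two-point method), if $Kp\,\Delta^2=O(1)$ then no algorithm can identify $\theta$ with probability bounded away from $1/2$, so on at least one instance it plays the wrong action at $s^\star$ a constant fraction of the time. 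Because a wrong action at $s^\star$ costs $\Theta(\Delta)$ in regularized value \emph{weighted by the worst-case visitation} $\mathrm{q}_h^\pi(s^\star)=\Theta(1)$, each such episode incurs $\Theta(\Delta)$ regret, for a total of $\Omega(K\Delta)$. Setting $\Delta\asymp 1/\sqrt{Kp}$ balances the testing constraint against the per-episode cost and yields $\Omega(\sqrt{K/p})=\Omega(\VisitRatio^{1/2}\sqrt{K})$.

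The main obstacle I anticipate is the second step: unlike the CRMDP case, where the worst-case transition is pinned to a deterministic direction (cf.\ \Cref{prop:fail-states property}), the regularized minimizer depends smoothly on the continuation values through the penalty $\beta\,\mathrm{D}(\cdot,\cdot)$, so I must verify for \emph{each} divergence that the induced worst-case visitation of $s^\star$ is genuinely $\Theta(1)$ and that the per-episode regret gap stays $\Theta(\Delta)$ rather than being damped by the regularizer. Getting these constants right uniformly in $\beta$, and ensuring the gap survives the $\inf$ over $P_h$ in both the value and the regret definitions, is where the delicate work lies; the remaining bandit-reduction and KL chain-rule arguments are standard.
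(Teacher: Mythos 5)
Your proposal is correct and follows essentially the same route as the paper's proof: a low-continuation-value critical state reachable with nominal probability $p$ but worst-case probability $\widetilde{p}$ (so $\VisitRatio=\widetilde{p}/p$), a reward-distinguishing two-point construction at that state, a KL/Bretagnolle--Huber argument in which information accrues at the nominal rate $Kp$ while regret accrues at the worst-case rate $\widetilde{p}$, and the balance $\Delta\asymp 1/\sqrt{Kp}$. The "delicate work" you flag — verifying per divergence that the regularized minimizer shifts mass onto the critical state and that the value gap survives with only a $\beta$-dependent constant loss — is exactly what the paper's Lemma~\ref{lem:lower main regret sum} carries out explicitly for TV, KL, and $\chi^2$.
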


Comparing \Cref{thm:main reg regret upper} and \Cref{thm:main reg regret lower}, we observe that the order of $\VisitRatio$ in the dominant terms of upper bounds matches that in the lower bound. Together with the observation in \Cref{remark: constrained upper match lower bound}, we can conclude that $\VisitRatio$ is a tight measure for evaluating exploration difficulty in RMDPs.

\section{Experiments}\label{sec:main experiments}
In this section, we conduct numerical experiments to thoroughly verify the theoretical findings in previous sections. All numerical experiments were conducted on a server equipped with Intel(R) Xeon(R) Gold 5118 CPU @ 2.30GHz. The implementation of our \algname\ algorithm is available at \url{https://github.com/panxulab/Online-Robust-Bellman-Iteration}.

\begin{figure}[h]
    \centering
    \subfigure[CRMDP\label{fig:exp Lake con conv}]
    {
        \includegraphics[width=0.46\linewidth]{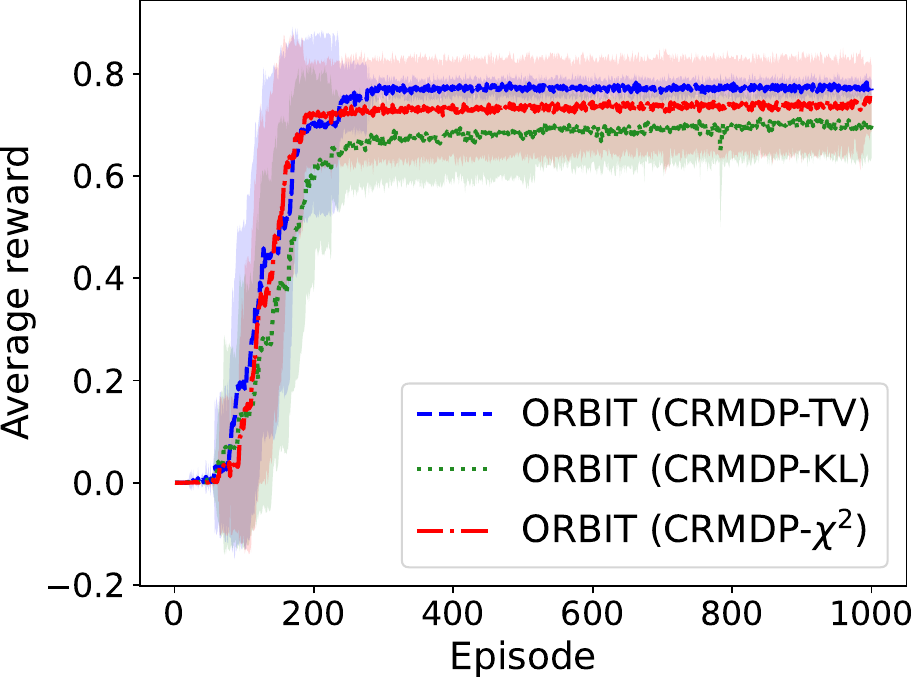}
    }
    \subfigure[RRMDP\label{fig:exp Lake reg conv}]
    {
        \includegraphics[width=0.46\linewidth]{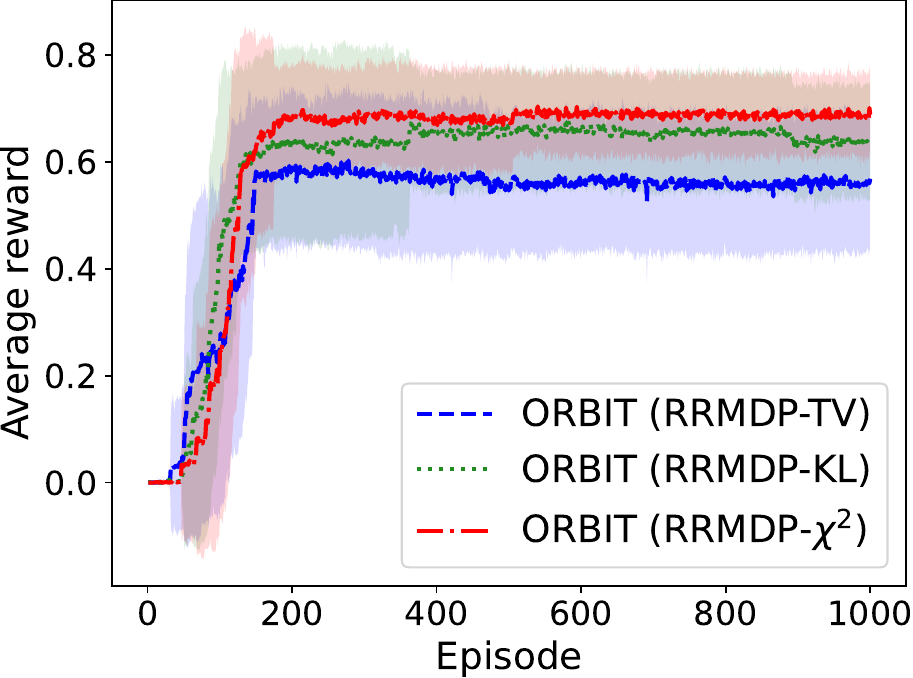}
    }
    \caption{The convergence of \algname\ in \Cref{sec:main Frozen Lake} (Learning the Frozen Lake Problem). We use the policies obtained after each training episode to evaluate the convergence.}
\end{figure}

\subsection{Illustration of the Effect of $\VisitRatio$ on Robustness}\label{sec:main show visit ratio}
The supremal visitation ratio $\VisitRatio$ measures the difficulty in exploration. With a fixed number of episodes, our results \Cref{thm:main con regret upper} and \Cref{thm:main reg regret upper} show that $\VisitRatio$ would increase the sub-optimality gap of the learned policies.
In this section, we construct a toy example (see \Cref{fig:visit ratio transition-all}) with $H=3$, $\mathcal{S}=\{s_0,\cdots,s_5\}$, and $\mathcal{A}=\{0,\cdots,9\}$, focusing on the CRMDP-TV setting. More details about the environment can be found in \Cref{sec:show visit ratio}. The visitation measure of each states are influenced by a hyper-parameter $\beta$, and we can calculate that $\VisitRatio=3+\frac{1}{\beta}$ in this case.

As we can see from \Cref{fig:visit ratio result},
when we increase $\VisitRatio$ by decreasing $\beta$ in the nominal environment, it becomes harder for the agent to explore the nominal environment sufficiently to learn the optimal policy in the perturbed environment, and thus deteriorate the performance of the learned policy and enlarge the sub-optimality gap. This aligns well with our theoretical results.

\begin{figure*}[!thbp]
    \centering
    \subfigure[Average training time\label{fig:exp Lake time}]
    {
        \includegraphics[width=0.23\linewidth]{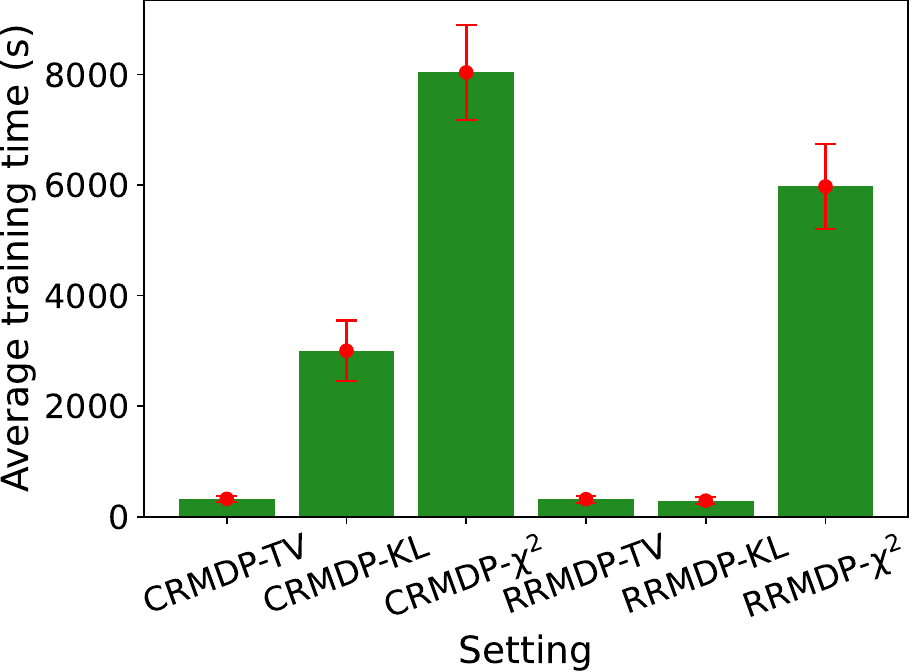}
    }
    \subfigure[Evaluation (TV settings)\label{fig:exp Lake TV res}]
    {
        \includegraphics[width=0.23\linewidth]{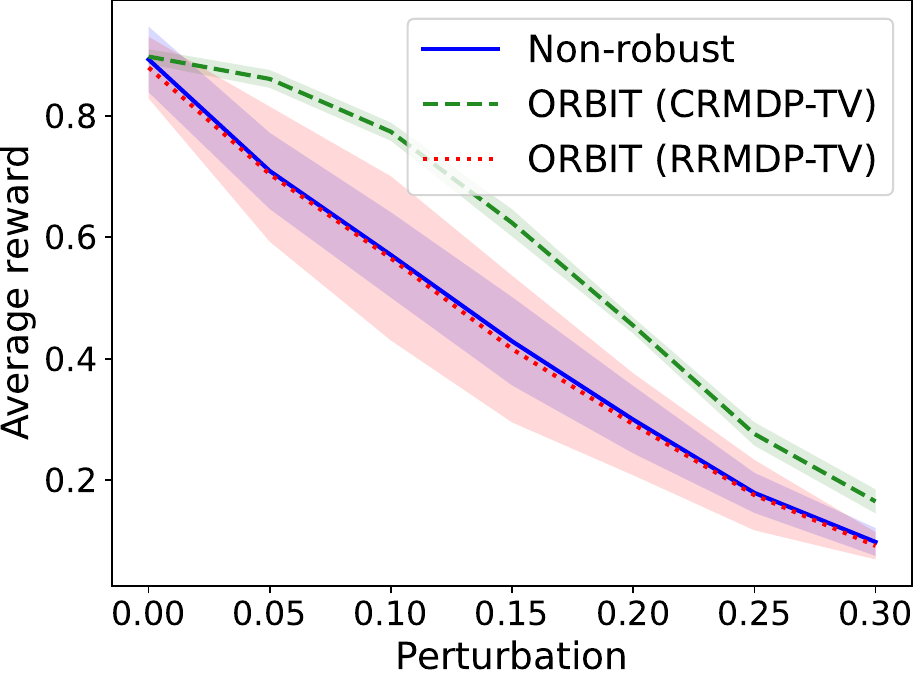}
    }
    \subfigure[Evaluation (KL settings)\label{fig:exp Lake KL res}]
    {
        \includegraphics[width=0.23\linewidth]{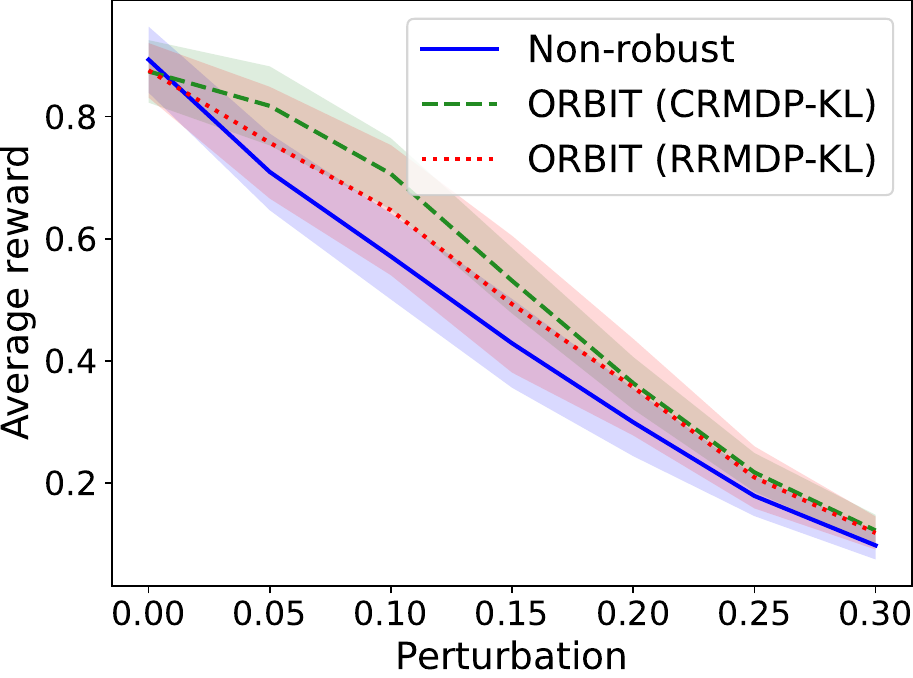}
    }
    \subfigure[Evaluation ($\chi^2$ settings)\label{fig:exp Lake Chi2 res}]
    {
        \includegraphics[width=0.23\linewidth]{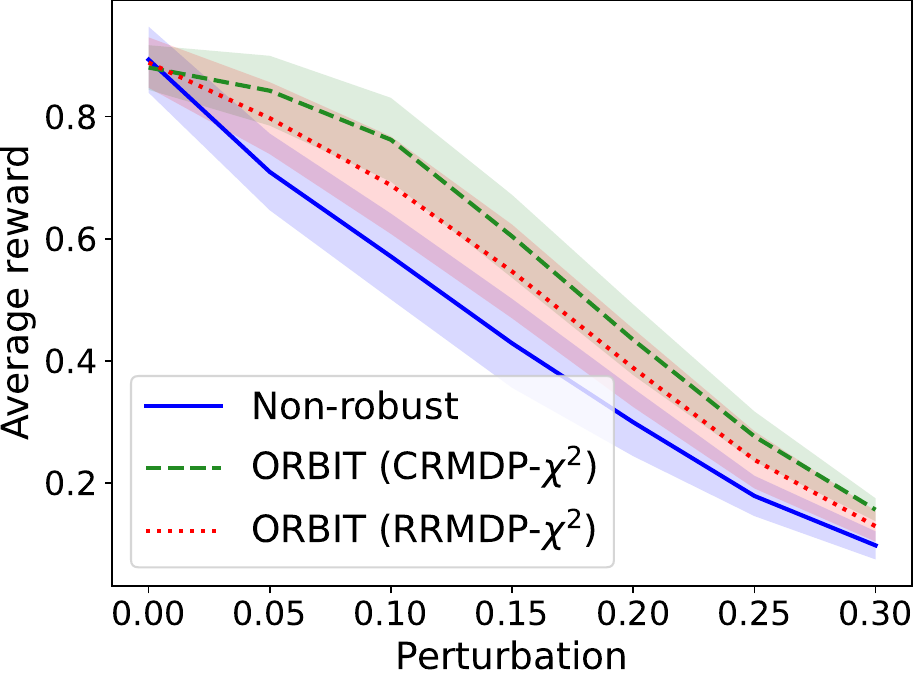}
    }
    \caption{Results in \Cref{sec:main Frozen Lake} (Learning the Frozen Lake Problem). \Cref{fig:exp Lake time} presents the average time taken for training in various settings. \Cref{fig:exp Lake TV res,fig:exp Lake KL res,fig:exp Lake Chi2 res} present the comparison between our algorithm \algname\ and the non-robust algorithm. We use the last episode policy $\pi^K$ for the comparison.}
\end{figure*}

\subsection{Learning on Simulated RMDPs}\label{sec:main Simple Example}
Next, we design a simple MDP with learning horizon $H=3$, the state space is $\mathcal{S}=\{s_0,\cdots,s_4\}$, and the action space is $\mathcal{A}=\{0,\cdots,4\}$. The source environment and target environment are illustrated in \Cref{fig:Simple Example source,fig:Simple Example target}. More details about the environment can be found in \Cref{sec:Simple Example}.

The experiment results are presented in
\Cref{fig:Simple Example TV,fig:Simple Example KL,fig:Simple Example Chi2}. We can see that the policies under CRMDPs perform similarly to their RRMDP counterpart. And compared to the non-robust algorithm, all robust policies are less sensitive to the environment perturbation. In particular, the performance of the non-robust algorithm drops drastically with respect to the perturbation. When the perturbation exceeds $0.6$, all robust policies outperform the non-robust algorithm. This confirms the robustness of our proposed algorithm.

\subsection{Learning the Frozen Lake Problem}\label{sec:main Frozen Lake}
Now we test our algorithm in a hard-to-explore setting, the Frozen Lake problem. In this scenario, the agent's objective is to traverse a frozen lake from the Start (S) to the Goal (G) without falling into any Holes (H), navigating over the Frozen (F) surface. A hyper-parameter $P_\text{perturb}$ is used to measure the perturbation in the test environment. More details about the environment can be found in \Cref{sec:Frozen Lake}.

First, we evaluate the convergence of our algorithm by tracking the average reward throughout the training process in a single target environment with a fixed perturbation model,  $P_{\text{perturb}}$. Specifically, we compute the average reward of the policy  $\pi^k$  obtained after each episode  $k $. As shown in \Cref{fig:exp Lake con conv,fig:exp Lake reg conv}, our algorithm consistently converges by the end of training. The corresponding average training time is reported in \Cref{fig:exp Lake time}.

For RRMDPs with TV and KL divergence defined regularization terms, the dual formulations of the  $Q$-functions admit closed-form solutions, simplifying the training process and resulting in lower computation complexity compared to CRMDPs. \Cref{lem:reg Chi2 closed form} in the RRMDP-$\chi^2$ setting, on the other hand, requires solving optimization problems to get the dual formulations, leading to higher computational complexity than \Cref{lem:con TV closed form,lem:reg TV closed form} in both two RMDPs with TV-distance and \Cref{lem:con KL closed form,lem:reg KL closed form} in both two RMDPs with KL-divergence, though still lower than \Cref{lem:con Chi2 closed form} in the CRMDP-$\chi^2$ setting. Notably, the training time for the CRMDP-TV setting is not significantly higher than that of RRMDP-TV, as we incorporate an additional optimization algorithm (detailed in \Cref{alg:fast TV}) to accelerate computation. Also, due to our additional optimizations, increased exploration may result in longer training times. 

We also evaluate the robustness of the policy $\pi^K$ after $K$ iterations of updates in various target environments with different $P_\text{perturb}$, by calculating the average reward obtained in each target environment. As shown in \Cref{fig:exp Lake TV res,fig:exp Lake KL res,fig:exp Lake Chi2 res}, our robust algorithm outperforms the corresponding non-robust version in most cases.

\section{Conclusion}
We investigated online robust reinforcement learning within the context of tabular CRMDPs and RRMDPs, demonstrating that when the nominal MDP is sufficiently exploratory, sample-efficient online learning becomes feasible. We quantified the exploration efficiency of RMDPs through a novel quantity called the supremal visitation ratio. We constructed hard instances to show that a moderate supremal visitation ratio is  necessary for ensuring sample-efficient online learning. We developed computationally efficient algorithms and provided regret analyses with both upper and lower bounds, which indicates our algorithm has an optimal dependency on the supremal visitation ratio and the number of episodes. We also conducted numerical experiments on diverse environments to validate our theory and show the robustness of our proposed algorithm.

\section*{Impact Statement}
This paper presents work whose goal is to advance the field of Machine Learning. There are many potential societal consequences of our work, none which we feel must be specifically highlighted here.

\section*{Acknowledgements}
W. Wang, Z. Liu and P. Xu are supported in part by the National Science Foundation (DMS-2323112) and the Whitehead Scholars Program at the Duke University School of Medicine.

\bibliography{referenceRMDP}
\bibliographystyle{icml2025}

\newpage
\appendix
\onecolumn

\section{Additional Details on Experiments.}

In this section, we provide more details about the experiments conducted in \Cref{sec:main experiments}.

\subsection{More Details on \Cref{sec:main show visit ratio} (Illustration of the Effect of $\VisitRatio$ on Robustness)}\label{sec:show visit ratio}

\paragraph{Construction} We consider a simulated RMDP  \Cref{fig:visit ratio transition-nominal} with horizon length $H=3$, state space $\mathcal{S}=\{s_0,\cdots,s_5\}$, and action space $\mathcal{A}=\{0,\cdots,9\}$.
At each episode, the initial state is always $s_0$.
The nominal transition at the first stage is independent of actions taken, $P^o(s_1|s_0)=1-P^o(s_2|s_0)=\beta$, where $\beta$ is a hyperparameter.
At the second stage, if the current state is $s_2$, it will transit to $s_5$ with probability 1; if the current state is $s_1$, then we have $P^o(s_3|s_1, a=0)=\frac{5}{6}$, $P^o(s_3|s_1, a)=\frac{1}{2}, \forall a\in\{1,\cdots,9\}$ and $P^o(s_4|s_1, a)=1-P^o(s_3|s_1, a)$.
Only $s_3$ and $s_5$ can generate reward, with $r(s_3,a) = 1, r(s_5,a)=\frac{1}{2},\forall a\in\cA$.
By construction, the action taken at $s_1$ determines the final reward, and there are actually two kinds of actions: $\tilde{a}=0$ if $a=0$ and $\tilde{a}=1$ if $a\in\cA/0$. Though actions in $\cA/0$ are equivalent, they are set to increase the harness in exploration. We construct a TV-distance defined uncertainty set with radius $\rho = \frac{1}{3}$.

It is easy to observe that, regardless of the policy $\pi$ chosen, $V^{\pi,\rho}(s_4)\leq V^{\pi,\rho}(s_3)$ and $V^{\pi,\rho}(s_1)\leq V^{\pi,\rho}(s_2)$, therefore the worst-case transition probability for any policy $\pi$ is $P^{w,\pi}(s_3|s_1,a=0)=\frac{1}{2}$, $P^{w,\pi}(s_3|s_1,a)=\frac{1}{6}, \forall a\in\{1,\cdots,9\}$ and $P^{w,\pi}(s_4|s_1,a)=1-P^{w,\pi}(s_3|s_1,a)$.
Thus, $V^{\pi,\rho}(s_1)=\frac{1}{2}-\frac{\widetilde{a}}{3}\leq\frac{1}{2}=V^{\pi,\rho}(s_2)$. Furthermore, the transition probability $P^{w,\pi}(s_1|s_0,a)=1-P^{w,\pi}(s_2|s_0,a)=\beta+\frac{1}{3}, \forall a$, where $\beta\in(0,\frac{2}{3})$ is a hyper-parameter. We can easily verify that all those transitions are within $[0,1]$ and therefore well defined. With this analysis, we can calculate the visitation measure for each policy in \Cref{tab:visit ratio visitation measure} and derive $\VisitRatio=3+\frac{1}{\beta}$.

\paragraph{Implementation}
Under the optimal policy (taking $a=0$ at $s=s_1$), the expected reward is $\mathbb{E}_{\pi^*}[r]=\frac{1}{2}$, regardless of $\beta$. We set the number of episodes $K=1000$ to simulate a scenario with limited exploration and run \Cref{alg:main alg} in the CRMDP-TV setting.
We test learned robust policies in the worst-case target environment and calculate the average reward among $2000$ runs. The experimental results are based on $50$ replications and plotted in \Cref{fig:visit ratio result}.

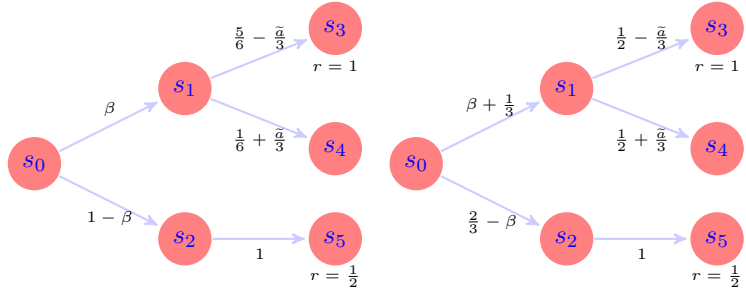
\begin{figure*}[ht]
    \centering
    \subfigure[The nominal MDP environment. \label{fig:visit ratio transition-nominal}]
    {
        \begin{tikzpicture}[->,>=stealth',shorten >=1pt,auto,node distance=2.9cm,thick]
        \tikzstyle{every state}=[fill=red!50,draw=none,text=blue,minimum size=0.6cm]
        \node[state] (s0) {$s_0$};
        \node[state] (s1) [shift={(2cm,1cm)}] {$s_1$};
        \node[state] (s2) [shift={(2cm,-1cm)}] {$s_2$};
        \node[state] (s3) [shift={(4cm,1.8cm)}] {$s_3$};
        \node (r3) [shift={(4cm,1.3cm)}] {\tiny$r=1$};
        \node[state] (s4) [shift={(4cm,0.2cm)}] {$s_4$};
        \node[state] (s5) [shift={(4cm,-1cm)}] {$s_5$};
        \node (r5) [shift={(4cm,-1.5cm)}] {\tiny$r=\frac{1}{2}$};

        \path (s0) edge[draw=blue!20] node[above] {\tiny$\beta$} (s1);
        \path (s0) edge[draw=blue!20] node[below] {\tiny$1-\beta$} (s2);
        \path (s1) edge[draw=blue!20] node[above] {\tiny$\frac{5}{6}-\frac{\widetilde{a}}{3}$} (s3);
        \path (s1) edge[draw=blue!20] node[below] {\tiny$\frac{1}{6}+\frac{\widetilde{a}}{3}$} (s4);
        \path (s2) edge[draw=blue!20] node[below] {\tiny$1$} (s5);
        \end{tikzpicture}
    }
    \subfigure[The worst-case MDP environment.\label{fig:visit ratio transition-target}]
    {
        \begin{tikzpicture}[->,>=stealth',shorten >=1pt,auto,node distance=2.9cm,thick]
        \tikzstyle{every state}=[fill=red!50,draw=none,text=blue,minimum size=0.6cm]
        \node[state] (s0) {$s_0$};
        \node[state] (s1) [shift={(2cm,1cm)}] {$s_1$};
        \node[state] (s2) [shift={(2cm,-1cm)}] {$s_2$};
        \node[state] (s3) [shift={(4cm,1.8cm)}] {$s_3$};
        \node (r3) [shift={(4cm,1.3cm)}] {\tiny$r=1$};
        \node[state] (s4) [shift={(4cm,0.2cm)}] {$s_4$};
        \node[state] (s5) [shift={(4cm,-1cm)}] {$s_5$};
        \node (r5) [shift={(4cm,-1.5cm)}] {\tiny$r=\frac{1}{2}$};

        \path (s0) edge[draw=blue!20] node[above] {\tiny$\beta+\frac{1}{3}$} (s1);
        \path (s0) edge[draw=blue!20] node[below] {\tiny$\frac{2}{3}-\beta$} (s2);
        \path (s1) edge[draw=blue!20] node[above] {\tiny$\frac{1}{2}-\frac{\widetilde{a}}{3}$} (s3);
        \path (s1) edge[draw=blue!20] node[below] {\tiny$\frac{1}{2}+\frac{\widetilde{a}}{3}$} (s4);
        \path (s2) edge[draw=blue!20] node[below] {\tiny$1$} (s5);
        \end{tikzpicture}
    }
    \caption{The constructions of the nominal MDP and the worst-case MDP environments in \Cref{sec:main show visit ratio}.\label{fig:visit ratio transition-all}}%
\end{figure*}

\begin{table*}[ht]
    \centering
    \caption{The visitation measure of each state in \Cref{sec:main show visit ratio}, the maximum of $\frac{\mathrm{q}_h^\pi(s)}{\mathrm{d}_h^\pi(s)}$ is achieved by taking $\widetilde{a}=0$ at $s_4$.}
    \begin{tabular}{ccccccc}
    \toprule
    & $s_0$ & $s_1$ & $s_2$ & $s_3$ & $s_4$ & $s_5$ \\
    \midrule
    $\mathrm{d}_h^\pi(s)$ & $1$ & $\beta$ & $1-\beta$ & $(\frac{5}{6}-\frac{\widetilde{a}}{3})\beta$ & $(\frac{1}{6}+\frac{\widetilde{a}}{3})\beta$ & $1-\beta$ \\
    $\mathrm{q}_h^\pi(s)$ & $1$ & $\beta+\frac{1}{3}$ & $\frac{2}{3}-\beta$ & $(\frac{1}{2}-\frac{\widetilde{a}}{3})(\beta+\frac{1}{3})$ & $(\frac{1}{2}+\frac{\widetilde{a}}{3})(\beta+\frac{1}{3})$ & $\frac{2}{3}-\beta$ \\
    \bottomrule
    \end{tabular}
    \label{tab:visit ratio visitation measure}
\end{table*}

\subsection{More Details on \Cref{sec:main Simple Example} (Learning on Simulated RMDPs)}\label{sec:Simple Example}

\paragraph{Construction} We consider a simple MDP \Cref{fig:Simple Example source} as the source environment.
The learning horizon $H=3$, the state space is $\mathcal{S}=\{s_0,\cdots,s_4\}$, and the action space is $\mathcal{A}=\{0,\cdots,4\}$. The initial state is always $s_0$, where it can transit to $s_1$, $s_3$ and $s_4$ with probability $P^o(s_1|s_0,a)=0.4+\frac{a}{10}$, $P^o(s_3|s_0,a)=0.1$ and $P^o(s_4|s_0,a)=0.5-\frac{a}{10}$ correspondingly. From $s_1$, it can transit to $s_2$ and $s_3$ with probability $P^o(s_2|s_1,a)=\frac{a}{10}$ and $P^o(s_3|s_1,a)=1-\frac{a}{10}$ by taking action $a$. From $s_2$, it can transit to $s_3$ and $s_4$ with probability $P^o(s_3|s_2,a)=1-\frac{a}{10}$ and $P^o(s_4|s_2,a)=\frac{a}{10}$ by taking action $a$. The $s_3$ and $s_4$ are absorbing states. The agent is rewarded $\frac{a}{20}$ by taking action $a$ at $s_0$, $s_1$ and $s_2$, rewarded $1$ regardless of the action taken at $s_4$, and rewarded $0$ regardless of the action taken at $s_3$.

The target environment \Cref{fig:Simple Example target} is obtained by perturbing the first step in the source environment. To be specific, with perturbation rate $q$, the transition probability from $s_0$ to $s_3$ and $s_4$ is $P^w(s_3|s_0,a)=0.1+q\times(0.5-\frac{a}{10})$ and $P^w(s_4|s_0,a)=(1-q)\times(0.5-\frac{a}{10})$, while $P^w(s_1|s_0,a)=0.4+\frac{a}{10}$ stays the same.

\paragraph{Implementation}
We set $K=1,000$ in \Cref{alg:main alg} and evaluate the learned policy in target environments with $q\in\{0,0.05,0.1,\cdots,1\}$, respectively. In each target environment, the average reward among $500$ runs is calculated for evaluation. All experimental results are based on $20$ replications.
The choice of uncertainty level $\rho$, regularizer $\beta$, and constant $c_\text{bonus}$ are provided in \Cref{tab:Simple Example parameters}.
For the non-robust algorithm, we simply set $\beta=10000$ for \Cref{alg:main alg} instantiated with TV-distance defined regularization. It can be
justified by \Cref{lem:reg TV closed form} that the extremely large regularization would not tolerate any perturbation, and thus the learned policy is basically the optimal policy under the source environment. The results are plotted in \Cref{fig:Simple Example TV,fig:Simple Example KL,fig:Simple Example Chi2}.

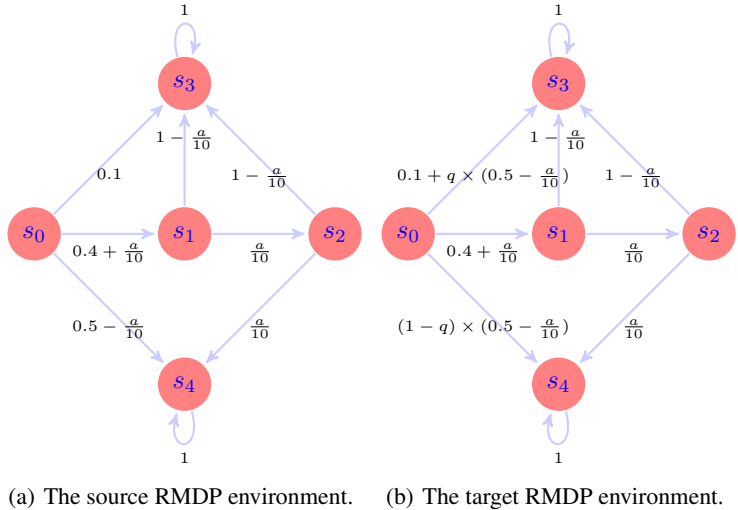
\begin{figure*}[ht]
    \centering
    \subfigure[The source RMDP environment. \label{fig:Simple Example source}]
    {
        \begin{tikzpicture}[->,>=stealth',shorten >=1pt,auto,node distance=2.9cm,thick]
        \tikzstyle{every state}=[fill=red!50,draw=none,text=blue,minimum size=0.6cm]
        \node[state] (s0) {$s_0$};
        \node[state] (s1) [shift={(2cm,0cm)}] {$s_1$};
        \node[state] (s2) [shift={(4cm,0cm)}] {$s_2$};
        \node[state] (s3) [shift={(2cm,2cm)}] {$s_3$};
        \node[state] (s4) [shift={(2cm,-2cm)}] {$s_4$};

        \path (s0) edge[draw=blue!20] node[below] {\tiny$0.1$} (s3);
        \path (s0) edge[draw=blue!20] node[below] {\tiny$0.4+\frac{a}{10}$} (s1);
        \path (s0) edge[draw=blue!20] node[below] {\tiny$0.5-\frac{a}{10}$} (s4);
        \path (s1) edge[draw=blue!20] node[above] {\tiny$1-\frac{a}{10}$} (s3);
        \path (s1) edge[draw=blue!20] node[below] {\tiny$\frac{a}{10}$} (s2);
        \path (s2) edge[draw=blue!20] node[below] {\tiny$1-\frac{a}{10}$} (s3);
        \path (s2) edge[draw=blue!20] node[below] {\tiny$\frac{a}{10}$} (s4);
        \path (s3) edge[draw=blue!20] [loop above] node {\tiny$1$} (s3);
        \path (s4) edge[draw=blue!20] [loop below] node {\tiny$1$} (s4);
        \end{tikzpicture}
    }
    \subfigure[The target RMDP environment. \label{fig:Simple Example target}]
    {
        \begin{tikzpicture}[->,>=stealth',shorten >=1pt,auto,node distance=2.9cm,thick]
        \tikzstyle{every state}=[fill=red!50,draw=none,text=blue,minimum size=0.6cm]
        \node[state] (s0) {$s_0$};
        \node[state] (s1) [shift={(2cm,0cm)}] {$s_1$};
        \node[state] (s2) [shift={(4cm,0cm)}] {$s_2$};
        \node[state] (s3) [shift={(2cm,2cm)}] {$s_3$};
        \node[state] (s4) [shift={(2cm,-2cm)}] {$s_4$};

        \path (s0) edge[draw=blue!20] node[below] {\tiny$0.1+q\times(0.5-\frac{a}{10})$} (s3);
        \path (s0) edge[draw=blue!20] node[below] {\tiny$0.4+\frac{a}{10}$} (s1);
        \path (s0) edge[draw=blue!20] node[below] {\tiny$(1-q)\times(0.5-\frac{a}{10})$} (s4);
        \path (s1) edge[draw=blue!20] node[above] {\tiny$1-\frac{a}{10}$} (s3);
        \path (s1) edge[draw=blue!20] node[below] {\tiny$\frac{a}{10}$} (s2);
        \path (s2) edge[draw=blue!20] node[below] {\tiny$1-\frac{a}{10}$} (s3);
        \path (s2) edge[draw=blue!20] node[below] {\tiny$\frac{a}{10}$} (s4);
        \path (s3) edge[draw=blue!20] [loop above] node {\tiny$1$} (s3);
        \path (s4) edge[draw=blue!20] [loop below] node {\tiny$1$} (s4);
        \end{tikzpicture}
    }
    \caption{The source and target RMDP environments in \Cref{sec:main Simple Example}, where the target environment is constructed by perturbing the first step.}
\end{figure*}

\subsection{More Details on \Cref{sec:main Frozen Lake} (Learning the Frozen Lake Problem)}\label{sec:Frozen Lake}

\paragraph{Construction} In this scenario, the agent's objective is to traverse a frozen lake from the Start (S) to the Goal (G) without falling into any Holes (H), navigating over the Frozen (F) surface. The agent's movement is influenced by a hyper-parameter $P_\text{slip}=0.1$, which determines the probability of the agent successfully moving in the intended direction. Specifically, the agent moves towards the intended direction with a probability of $1-P_\text{slip}=0.9$, and with probability $P_\text{slip}/2=0.05$, it will veer off in either perpendicular direction. The Goal state is an absorbing state, once reached the agent will stay there. The agent earns a reward of $1$ if and only if it is at the goal (G) at step $H-1$. For evaluation, after the agent selects an intended action, with a probability of $P_\text{perturb}$, the agent actually takes action towards the opposite direction instead. The difficulty of this environment arises from two sources: 1) it is a sparse reward MDP, and 2) the influence onto the movement in the source environment makes the exploration of the goal state very hard.

\paragraph{Implementation}
We use the default map in the OpenAI Gym library, which is illustrated in \Cref{eg:Frozen Lake map},
and set $H=25$ and $K=1,000$ in \Cref{alg:main alg}. The hyperparameter $\rho$ in the constrained setting, $\beta$ in the regularized setting, and $c_\text{bonus}$ are tuned from $\{0.001,0.003,0.01,0.03,0.1,0.3,1\}$,
with the final choice presented  in \Cref{tab:Simple Example parameters}. All experimental results are based on $20$ replications. %

We implement several optimizations to accelerate the algorithm. First, we reuse the counts $n_h^k(s,a)$ from \eqref{equ:empirical_model} across different steps, as the transition dynamics remain consistent throughout. Additionally, we observe that the inner optimization process, which is the primary bottleneck of the algorithm, does not need to be fully recalculated at each episode. For instance, if the agent falls into a hole after the first step, the information about the environment remains largely unchanged. To take advantage of this, we create a map where the keys are hashes of the optimization parameters and the values are the corresponding results, allowing us to reuse prior computations efficiently. We also applied \Cref{alg:fast TV} to obtain the result in the CRMDP-TV setting, which can directly be derived from definition.

We assess the convergence of our algorithms  by calculating the average reward among $500$ runs in the single target environment with $P_\text{perturb}=0.1$, of the policy $\pi^k$ obtained after each episode $k$ during the training. The convergence results are plotted in \Cref{fig:exp Lake con conv,fig:exp Lake reg conv}, and the average training time is plotted in \Cref{fig:exp Lake time}. We also evaluate the robustness of the policy $\pi^K$ after $K$ episodes across various target environments with $P_\text{perturb}\in\{0,0.05,0.1,\cdots,0.3\}$. For each target environment, we compute the average reward among $500$ runs, and the results are shown in \Cref{fig:exp Lake TV res,fig:exp Lake KL res,fig:exp Lake Chi2 res}. %

\begin{example}[Illustration of Frozen Lake environment]\label{eg:Frozen Lake map}
The environment of the Frozen Lake problem is illustrated as follows, where $\text{S}$ denotes ``Start", $\text{G}$ denotes ``Goal", $\text{H}$ denotes ``Hole" and $\text{F}$ denotes ``Frozen". %
\begin{align*}
    \begin{bmatrix}
        \text{S}&\text{F}&\text{F}&\text{F}&\text{F}&\text{F}&\text{F}&\text{F}\\
        \text{F}&\text{F}&\text{F}&\text{F}&\text{F}&\text{F}&\text{F}&\text{F}\\
        \text{F}&\text{F}&\text{F}&\text{H}&\text{F}&\text{F}&\text{F}&\text{F}\\
        \text{F}&\text{F}&\text{F}&\text{F}&\text{F}&\text{H}&\text{F}&\text{F}\\
        \text{F}&\text{F}&\text{F}&\text{H}&\text{F}&\text{F}&\text{F}&\text{F}\\
        \text{F}&\text{H}&\text{H}&\text{F}&\text{F}&\text{F}&\text{H}&\text{F}\\
        \text{F}&\text{H}&\text{F}&\text{F}&\text{H}&\text{F}&\text{H}&\text{F}\\
        \text{F}&\text{F}&\text{F}&\text{H}&\text{F}&\text{F}&\text{F}&\text{G}\\
    \end{bmatrix}
\end{align*}
\end{example}

\subsection{A More Computationally Efficient Solver for the CRMDP-TV Setting}

As shown in \Cref{lem:con TV closed form}, the update formulation of \Cref{alg:main alg} in the CRMDP-TV setting involves solving an optimization problem in its dual formulation. To reduce the computational complexity, we introduce \Cref{alg:fast TV}, which simplifies this procedure.

\begin{algorithm}[ht]
    \caption{A more efficient solver for the CRMDP-TV setting}\label{alg:fast TV}
    \begin{algorithmic}[1]
        \REQUIRE{ robust set radius $\rho > 0$, transition array $P[S]$, value function array $V[S]$ ($S>1$) }
        \STATE $\text{args}=\mathrm{argsort}(V)$.
        \STATE $V=V[\text{args}],P=P[\text{args}]$.
        \STATE $\text{pnt}_0=1,\text{pnt}_1=S$.
        \STATE $\text{rho}_0=0,\text{rho}_1=0$.
        \WHILE {$\text{rho}_0<\rho$}
            \STATE $\text{tmp}=\mathrm{min}(\rho-\text{rho}_0,1-P[\text{pnt}_0])$.
            \STATE $P[\text{pnt}_0]=P[\text{pnt}_0]+\text{tmp}$.
            \STATE $\text{rho}_0=\text{rho}_0+\text{tmp}$.
            \STATE $\text{pnt}_0=\text{pnt}_0+1$.
        \ENDWHILE
        \WHILE {$\text{rho}_1<\rho$}
            \STATE $\text{tmp}=\mathrm{min}(\rho-\text{rho}_1,P[\text{pnt}_1])$.
            \STATE $P[\text{pnt}_1]=P[\text{pnt}_1]-\text{tmp}$.
            \STATE $\text{rho}_1=\text{rho}_1+\text{tmp}$.
            \STATE $\text{pnt}_1=\text{pnt}_1-1$.
        \ENDWHILE
        \STATE Output: $\sum_{i=1}^SP[i]*V[i]$.
    \end{algorithmic}
\end{algorithm}

We explain the rationale behind \Cref{alg:fast TV} as follows. The original formulation of the CRMDP-TV problem is given by $Q_h(s,a)=r_h(s,a)+\inf\limits_{\mathrm{TV}(P\Vert P_h^o)\leq\rho}\mathbb{E}_P[V_{h+1}](s,a)$. It is easy to see that the worst-case scenario is reached when the transition probabilities for states with the highest value functions are reduced by a total of $\rho$, and those for states with the lowest value functions are increased by $\rho$. This greedy approach avoids the need to solve the optimization problem for $\eta$ as described in \Cref{lem:con TV closed form}.

\subsection{Hyper-parameters for Experiments in \Cref{sec:main experiments}}

Here, we provide the hyper-parameters used in training in the experiments section. Note that we reformulate the bonus term as $c_\text{bonus}/\sqrt{K}$ in practical experiments.

\begin{table}[ht]
    \centering
    \caption{hyper-parameters for \Cref{sec:main Simple Example} (Learning on Simulated RMDPs)}
    \begin{tabular}{ccc}
    \toprule
    \textbf{Setting} & \textbf{$\rho$ or $\beta$} & $c_\text{bonus}$ \\
    \midrule
    non-robust & -- & $1$ \\
    constrained TV & $0.5$ & $1$ \\
    constrained KL & $0.5$ & $1$ \\
    constrained $\chi^2$ & $1$ & $1$ \\
    regularized TV & $0.1$ & $1$ \\
    regularized KL & $0.1$ & $1$ \\
    regularized $\chi^2$ & $0.1$ & $1$ \\
    \bottomrule
    \end{tabular}
    \label{tab:Simple Example parameters}
\end{table}

\begin{table}[ht]
    \centering
    \caption{hyper-parameters for \Cref{sec:main Frozen Lake} (Learning the Frozen Lake Problem)}
    \begin{tabular}{ccc}
    \toprule
    \textbf{Setting} & \textbf{$\rho$ or $\beta$} & $c_\text{bonus}$ \\
    \midrule
    non-robust & -- & $0.001$ \\
    constrained TV & $0.15$ & $0.001$ \\
    constrained KL & $0.15$ & $0.01$ \\
    constrained $\chi^2$ & $0.5$ & $0.01$ \\
    regularized TV & $0.1$ & $0.003$ \\
    regularized KL & $0.1$ & $0.001$ \\
    regularized $\chi^2$ & $0.05$ & $0.01$ \\
    \bottomrule
    \end{tabular}
    \label{tab:Frozen Lake parameters}
\end{table}

\section{Proof of \Cref{prop:fail-states property}}

\begin{proof}[Proof of \Cref{prop:fail-states property}]
We prove it by contradiction. We assume that there exists $s^*$, $a^*$ and $s'$ such that $V_{h+1}^{\pi,\rho}(s')>0$ and $P_h^{w,\pi}(s'|s^*,a^*)>P_h^o(s'|s^*,a^*)$. We pick $\widetilde{s}\in\mathcal{S}_f$ arbitrarily and consider the following transition measure $P_h'$:
\begin{align*}
P_h'(s|s^*,a^*)=
\begin{cases}
   P_h^{w,\pi}(s|s^*,a^*)&s\not\in\{s',\widetilde{s}\},\\
   P_h^o(s'|s^*,a^*)&s=s',\\
   P_h^{w,\pi}(\widetilde{s}|s^*,a^*)+P_h^{w,\pi}(s'|s^*,a^*) -P_h^o(s'|s^*,a^*)&s=\widetilde{s}.
\end{cases}
\end{align*}
It is easy to verify that $P_h'\in\Delta(\mathcal{S})$, $\mathrm{TV}(P_h'\big\Vert P_h^o) \leq \mathrm{TV}(P_h^{w,\pi}\big\Vert P_h^o)$, therefore $P_h'$ is a valid transition measure in the transition uncertainty set. Based on the definition of fail-states, we have $V_{h+1}^{\pi,\rho}(\widetilde{s})=0<V_{h+1}^{\pi,\rho}(s')$ and thus $\mathbb{E}_{P_h'}[V_{h+1}^{\pi,\rho}]<\mathbb{E}_{P_h^{w,\pi}}[V_{h+1}^{\pi,\rho}]$, which contradicts the fact that $P^{w,\pi}_h$ is the worst-case transition.
\end{proof}

\section{Proofs of Results in Constrained RMDPs}
\label{sec:crmdp_proof}

\subsection{Proof of \Cref{thm:main con regret upper} (Constrained TV Setting)}
\label{sec:proof_constrained_TV}
Before proving this theorem, we first present several technical lemmas that will be useful in the proof. For convenience, we also write $P^{w,k}:=P^{w,\pi^k}$, $\mathrm{d}^k:=\mathrm{d}^{\pi^k}$ and $\mathrm{q}^k:=\mathrm{q}^{\pi^k}$.

We first give the closed form solution of constrained TV update formulation. This dual formulation has also been proved by \citet[Lemma 4.3]{iyengar2005robust}, but the formulation of our result is slightly different from theirs. Note that \citet{lu2024distributionally} used the same formulation, but their expression is incorrect by a factor of $\frac{1}{2}$. This error arises because they directly cited \citet{yang2022toward}, which employed the $L_1$ distance rather than the TV-distance. So we prove it again for the sake of completeness.

\begin{lemma}[Dual formulation] \label{lem:con TV closed form}
For the optimization problem $Q_h(s,a)=r_h(s,a)+\inf\limits_{\mathrm{TV}(P\Vert P_h^o)\leq\rho}\mathbb{E}_P[V_{h+1}](s,a)$, we have its dual formulation as follows
\begin{align}
    Q_h=r_h-\inf\limits_{\eta\in[0,H]}\Big(\mathbb{E}_{P_h^o}[(\eta-V_{h+1}(s))_+]+\rho\Big(\eta-\min\limits_{s\in\mathcal{S}}V_{h+1}(s)\Big)_+-\eta\Big).\label{eq2:con TV dual}
\end{align}
\end{lemma}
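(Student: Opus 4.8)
The plan is to treat the inner optimization as a linear program over the probability simplex and invoke Lagrangian duality. Fix $(s,a,h)$ and write $V=V_{h+1}$, $q=P_h^o(\cdot|s,a)$. The problem $\inf\{\mathbb{E}_P[V] : P\in\Delta(\mathcal{S}),\ \tfrac12\|P-q\|_1\le\rho\}$ has a linear objective, and its feasible region is the intersection of the simplex with a TV-ball; since the TV constraint is piecewise linear, the whole problem is an LP, and because $P=q$ is strictly feasible for the TV constraint when $\rho>0$ and the objective is bounded, strong LP duality (equivalently, Slater's condition via the point $q$) applies. I would introduce a multiplier $\lambda\ge0$ for the constraint $\tfrac12\|P-q\|_1-\rho\le0$ and a multiplier $\eta\in\mathbb{R}$ for the normalization $\sum_{s'}P(s')=1$, forming $L(P,\lambda,\eta)=\mathbb{E}_P[V]+\lambda(\tfrac12\|P-q\|_1-\rho)+\eta(1-\sum_{s'}P(s'))$.

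The key observation is that the dual function $g(\lambda,\eta)=\min_{P\ge0}L$ \emph{decouples across next-states}, because both $\mathbb{E}_P[V]$ and the $\ell_1$ penalty are separable. This reduces the computation to the scalar subproblems $\min_{p\ge0}\big[p(V(s')-\eta)+\tfrac{\lambda}{2}|p-q(s')|\big]$ for each $s'$. Each such subproblem is piecewise linear in $p$ with a kink at $p=q(s')$; a short case analysis on the two slopes $V(s')-\eta\pm\tfrac{\lambda}{2}$ shows the minimizer sits at $p=0$ (drop mass from high-value states), at $p=q(s')$, or diverges to $-\infty$ when $V(s')-\eta<-\tfrac{\lambda}{2}$. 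Collecting the finite cases yields $g(\lambda,\eta)=\eta-\lambda\rho+\mathbb{E}_q[\min(V-\eta,\tfrac{\lambda}{2})]$ on the region $\eta\le\min_{s}V(s)+\tfrac{\lambda}{2}$.

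I would then simplify via the identity $\eta+\min(V(s')-\eta,\tfrac\lambda2)=\min(V(s'),\eta+\tfrac\lambda2)$, which turns $g$ into $\mathbb{E}_q[\min(V,u)]-\lambda\rho$ with $u:=\eta+\tfrac\lambda2$ and feasibility constraint $u\le\min_s V(s)+\lambda$. Since $-\lambda\rho$ is decreasing in $\lambda$ while $\lambda$ enters only through this constraint, the optimal multiplier is $\lambda^\star=(u-\min_s V(s))_+$, and substituting gives $\sup_u\big[\mathbb{E}_q[\min(V,u)]-\rho(u-\min_s V(s))_+\big]$. Rewriting $\mathbb{E}_q[\min(V,u)]=u-\mathbb{E}_q[(u-V)_+]$ (using $\mathbb{E}_q[u]=u$) recovers exactly the claimed expression after relabeling $u\mapsto\eta$. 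Finally, because $0\le V\le H$, the objective is concave in $\eta$, non-increasing on $\eta\ge\max_s V(s)$ (where the first term is constant) and increasing on $\eta\le0$, so its supremum is attained in $[0,\max_s V(s)]\subseteq[0,H]$, justifying the stated range; adding $r_h$ back yields \eqref{eq2:con TV dual}.

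The main obstacle I anticipate is bookkeeping rather than conceptual: correctly carrying out the per-state case analysis (in particular tracking when the subproblem is unbounded, which produces the implicit constraint coupling $\eta$ and $\lambda$) and then converting the $\min$/truncation form that naturally emerges into the paper's $(\cdot)_+$ representation while simultaneously eliminating the redundant multiplier $\lambda$ to surface the $\rho(\eta-\min_s V(s))_+$ term. Care is also needed to confirm strong duality with the correct TV normalization ($\mathrm{TV}=\tfrac12\|\cdot\|_1$), which is precisely the factor-of-$\tfrac12$ discrepancy flagged before the lemma.
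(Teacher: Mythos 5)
Your proposal is correct and follows essentially the same route as the paper's proof: Lagrangian duality on the inner minimization over $P$, a per-state reduction that is exactly the computation of the convex conjugate $\varphi^*$ of $\varphi(t)=\tfrac12|t-1|$ (which the paper cites from the literature rather than re-deriving), elimination of the divergence multiplier by monotonicity to produce the $\rho\big(\eta-\min_{s}V_{h+1}(s)\big)_+$ term, and a monotonicity argument to restrict $\eta$ to $[0,H]$. The only differences are cosmetic—you fold the nonnegativity constraint directly into the scalar subproblems instead of carrying explicit multipliers $\lambda(s)$, and you are slightly more explicit about why strong duality holds.
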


\begin{proof}
Consider the optimization problem
\begin{align*}
    Q_h = r_h+\inf\limits_{\mathrm{TV}(P\Vert P_h^o)\leq\rho}\mathbb{E}_P[V_{h+1}]=r_h+\inf\limits_{\mathrm{TV}(P\Vert P_h^o)\leq\rho}\sum\limits_{s\in\mathcal{S}}P(s)V_{h+1}(s).
\end{align*}
We denote $\varphi(t)=|t-1|/2$, then the Lagrangian can be written as
\begin{align*}
    \mathcal{L}(P,\eta)=\sum\limits_{s\in S}P(s)V_{h+1}(s)+\nu\bigg(\sum\limits_{s\in S}P_h^o(s)\varphi\bigg(\frac{P(s)}{P_h^o(s)}\bigg)-\rho\bigg)-\sum\limits_{s\in\mathcal{S}}\lambda(s)P(s)+\eta\bigg(1-\sum\limits_{s\in S}P(s)\bigg).
\end{align*}
We denote $g(s)=P(s)/P_h^o(s)$, then we have
\begin{align*}
    \inf\limits_P\mathcal{L}(P,\eta)&=-\sup\limits_g\sum\limits_{s\in\mathcal{S}}P_h^o(s)[g(s)[(\lambda(s)+\eta)-V_{h+1}(s)]-\nu\varphi(g(s))]-\nu\rho+\eta\\
    &=-\nu\mathbb{E}_{P_h^o}\bigg[\varphi^*\bigg(\frac{(\lambda(s)+\eta)-V_{h+1}(s)}{\nu}\bigg)\bigg]-\nu\rho+\eta,
\end{align*}
where the second equation is from the definition of dual function $\varphi^*(y)=\sup\limits_x(y^\top x-\varphi(x))$.
From $f$-divergence literature \citep{xu2023improved}, we know that for TV-distance,
\begin{align}
    \varphi^*(s)=
    \begin{cases}
    -\frac{1}{2}&\text{for } s\leq-\frac{1}{2};\\
    s&\text{for } -\frac{1}{2}<s\leq\frac{1}{2};\\
    +\infty&\text{for } s>\frac{1}{2}.
    \end{cases}
    \label{eq:TV literature}
\end{align}
So we have
\begin{align}
    Q_h&=r_h+\sup\limits_{\nu\geq0,\lambda\geq0,\eta}\Big(\inf\limits_P\mathcal{L}(P,\eta)\Big)\nonumber\\
    &=r_h-\inf\limits_{\nu\geq0,\lambda\geq0,\eta}\bigg(\nu\mathbb{E}_{P_h^o}\bigg[\varphi^*\bigg(\frac{(\lambda(s)+\eta)-V_{h+1}(s)}{\nu}\bigg)\bigg]+\nu\rho-\eta\bigg)\nonumber\\
    &=r_h-\inf\limits_{\nu\geq0,\lambda\geq0,\eta,\frac{\lambda(s)+\eta-V_{h+1}(s)}{\nu}\leq\frac{1}{2}}\bigg(\nu\mathbb{E}_{P_h^o}\bigg[\max\bigg(\frac{(\lambda(s)+\eta)-V_{h+1}(s)}{\nu},-\frac{1}{2}\bigg)\bigg]+\nu\rho-\eta\bigg)\label{eq2:con TV lem1 phi def}\\
    &=r_h-\inf\limits_{\nu\geq0,\lambda\geq0,\eta,\frac{\lambda(s)+\eta-V_{h+1}(s)}{\nu}\leq\frac{1}{2}}\bigg(\mathbb{E}_{P_h^o}\bigg[\bigg((\lambda(s)+\eta)-V_{h+1}(s)+\frac{\nu}{2}\bigg)_+\bigg]+\nu\rho-\eta-\frac{\nu}{2}\bigg)\nonumber\\
    &=r_h-\inf\limits_{\nu\geq0,\lambda\geq0,\eta',\nu\geq\lambda(s)+\eta'-V_{h+1}(s)}(\mathbb{E}_{P_h^o}[(\lambda(s)+\eta'-V_{h+1}(s))_+]+\nu\rho-\eta')\label{eq2:con TV lem1 def eta'}\\
    &=r_h-\inf\limits_{\lambda\geq0,\eta'}\Big(\mathbb{E}_{P_h^o}[(\lambda(s)+\eta'-V_{h+1}(s))_+]+\rho\max\limits_{s\in\mathcal{S}}(\lambda(s)+\eta'-V_{h+1}(s))_+-\eta'\Big)\label{eq2:con TV lem1 rm nu}\\
    &=r_h-\inf\limits_{\eta'}\Big(\mathbb{E}_{P_h^o}[(\eta'-V_{h+1}(s))_+]+\rho\Big(\eta'-\min\limits_{s\in\mathcal{S}}V_{h+1}(s)\Big)_+-\eta'\Big)\label{eq2:con TV lem1 rm lambda}\\
    &=r_h-\inf\limits_{\eta'\in[0,H]}\Big(\mathbb{E}_{P_h^o}[(\eta'-V_{h+1}(s))_+]+\rho\Big(\eta'-\min\limits_{s\in\mathcal{S}}V_{h+1}(s)\Big)_+-\eta'\Big),\label{eq2:con TV lem1 bnd eta'}
\end{align}
where \eqref{eq2:con TV lem1 phi def} follows from the definition of $\varphi^*$ \eqref{eq:TV literature}, we redefine $\eta'=\eta+\frac{\nu}{2}$ in \eqref{eq2:con TV lem1 def eta'}, \eqref{eq2:con TV lem1 rm nu} holds because the result increases monotonically with respect to $\nu$ thus the minimum value is attained at $\nu=\max\limits_{s\in\mathcal{S}}(\lambda(s)+\eta'-V_{h+1}(s))_+$, \eqref{eq2:con TV lem1 rm lambda} holds because the result increases monotonically with respect to $\lambda$, \eqref{eq2:con TV lem1 bnd eta'} holds because the result increases monotonically with respect to $\eta'$ when $\eta'\leq0$ and increases monotonically with respect to $\eta'$ when $\eta'\geq H$.
\end{proof}

In the next lemma, we prove the optimism of estimation $Q^k$, which helps control the estimation error $Q^*-Q^k$.

\begin{lemma}[Optimism] \label{lem:con TV Q^*-Q^k}
If we set the bonus term as follows
\begin{align}
    \text{bonus}_h^k(s,a)=2H\sqrt{\frac{2S^2\ln(12SAH^2K^2/\delta)}{n_h^{k-1}(s,a)\vee1}}+\frac{1}{K},\label{eq2:con TV bonus}
\end{align}
then for any policy $\pi$ and any $(k,h,s,a)\in[K]\times[H]\times\mathcal{S}\times\mathcal{A}$, with probability at least $1-2\delta$, we have $Q_h^{k,\rho}(s,a)\geq Q_h^{\pi,\rho}(s,a)$. Specially, by setting $\pi=\pi^*$, we have $Q_h^{k,\rho}(s,a)\geq Q_h^{{\pi^*},\rho}(s,a)$.
\end{lemma}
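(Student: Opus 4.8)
The plan is to prove optimism by backward induction on $h$, showing that with high probability the estimated $Q$-function dominates the true robust $Q$-function for any policy $\pi$. The base case $h=H+1$ is immediate since $\widehat V_{H+1}^k = 0 = V_{H+1}^{\pi,\rho}$. For the inductive step, I would assume $\widehat V_{h+1}^{k,\rho}(s) \ge V_{h+1}^{\pi,\rho}(s)$ for all $s$ and establish the claim at step $h$. Since both the estimated $Q$ (via \eqref{alg:generic form} with the truncation at $H-h+1$) and the true $Q$ are formed by applying the robust Bellman operator, and since the true robust value functions are bounded in $[0,H-h+1]$, the truncation in \eqref{alg:generic form} is harmless and I can focus on comparing $\mathrm{RB}_h^k(s,a)+b_h^k(s,a)$ against $Q_h^{\pi,\rho}(s,a)$.

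The core of the argument decomposes the gap into two pieces using the dual formulation from \Cref{lem:con TV closed form}. Writing both the empirical robust Bellman estimate and the true robust Bellman value in the dual form \eqref{eq2:con TV dual}, I would argue that at the optimizing (or a fixed near-optimal) choice of $\eta$, the difference between using the empirical kernel $\widehat P_h^k$ and value $\widehat V_{h+1}^{k,\rho}$ versus the nominal kernel $P_h^o$ and value $V_{h+1}^{\pi,\rho}$ splits into (i) a value-function monotonicity term, controlled by the inductive hypothesis $\widehat V_{h+1}^{k,\rho}\ge V_{h+1}^{\pi,\rho}$ together with the monotonicity of the map $v \mapsto (\eta - v)_+$ and of $-\min_s v(s)$, and (ii) an estimation-error term measuring the deviation of $\widehat P_h^k$ from $P_h^o$ applied to a bounded function. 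The reward deviation $\widehat r_h^k - r_h$ is handled analogously. For part (ii), I would invoke a concentration inequality—an empirical Bernstein or Hoeffding bound on $\mathbb{E}_{\widehat P_h^k}[\phi] - \mathbb{E}_{P_h^o}[\phi]$ uniformly over the relevant class of functions $\phi$ (here $\phi = (\eta - \cdot)_+$ indexed by $\eta\in[0,H]$), together with a union bound over the $\eta$-net and over all $(s,a,h,k)$—and show the resulting error is dominated by the bonus term $b_h^k(s,a)$ in \eqref{eq2:con TV bonus}. The $S^2$ factor inside the bonus and the $1/K$ additive term are precisely what one expects from covering the empirical transition vector over $\mathcal{S}$ and discretizing $\eta$ at resolution $1/K$.

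The main obstacle I anticipate is controlling the term involving $\min_{s\in\mathcal{S}}\widehat V_{h+1}^{k,\rho}(s)$ in the dual formulation \eqref{eq2:con TV dual}. Unlike the expectation term, this minimum couples the value estimates across all states, and the coefficient $\rho(\eta - \min_s V(s))_+$ depends on the empirical value in a way that does not factor cleanly through a single concentration event. I would handle this by exploiting monotonicity: since $\widehat V_{h+1}^{k,\rho}\ge V_{h+1}^{\pi,\rho}$ pointwise by the inductive hypothesis, we have $\min_s \widehat V_{h+1}^{k,\rho}(s)\ge \min_s V_{h+1}^{\pi,\rho}(s)$, which makes the term $-\rho(\eta-\min_s\widehat V)_+$ larger (less negative) and thus works in the favorable direction for optimism, so no concentration is needed there—only careful sign bookkeeping. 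This is the delicate step where the correct factor in the TV dual (the point flagged in the remark before \Cref{lem:con TV closed form} about \citet{lu2024distributionally}'s factor-of-$\frac12$ error) must be tracked consistently.

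Finally, combining the monotonicity contribution (nonnegative) with the concentration contribution (bounded by the bonus on the high-probability event), I would conclude $\mathrm{RB}_h^k(s,a)+b_h^k(s,a)\ge r_h(s,a)+\inf_{P\in\mathcal{U}_h^\rho}\mathbb{E}_P[V_{h+1}^{\pi,\rho}](s,a)=Q_h^{\pi,\rho}(s,a)$, and after the harmless truncation this yields $\widehat Q_h^{k,\rho}(s,a)\ge Q_h^{\pi,\rho}(s,a)$, completing the induction. The failure probability $2\delta$ in the statement comes from splitting the budget between the reward concentration and the transition concentration (or between two separate high-probability events), each absorbed into the logarithmic factor $\ln(12SAH^2K^2/\delta)$ appearing in the bonus. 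Setting $\pi=\pi^\star$ gives the stated corollary immediately.
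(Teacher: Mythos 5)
Your proposal is correct and follows essentially the same route as the paper: backward induction, greedy-policy monotonicity of the value estimates, the dual formulation of \Cref{lem:con TV closed form}, and an add-and-subtract decomposition into a monotonicity term (nonnegative by the inductive hypothesis) plus an estimation-error term dominated by the bonus via concentration, a covering argument, and a union bound. Two small remarks. First, the paper organizes the add-and-subtract so that the value-function change ($\widehat V^{k}$ vs.\ $V^{\pi}$) is absorbed at the level of the robust Bellman operator over the \emph{true} uncertainty set, $\inf_{P\in\mathcal{U}_h^\rho(P^o)}\mathbb{E}_P[\widehat V^{k}]\geq\inf_{P\in\mathcal{U}_h^\rho(P^o)}\mathbb{E}_P[V^{\pi}]$, before the dual is ever invoked; consequently the two dual expressions that remain to be compared share the same value function $\widehat V^{k}$, and the $\rho(\eta-\min_s\widehat V^{k}(s))_+$ terms cancel identically, so the delicate sign bookkeeping you describe for the min term is never needed. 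Your alternative—applying monotonicity inside the dual and checking that both $(\eta-v)_+$ and $(\eta-\min_s v)_+$ move in the favorable direction—is also valid, just slightly more work. Second, be careful with the uniform concentration step: the test function $(\eta-\widehat V_{h+1}^{k,\rho})_+$ is data-dependent through $\widehat V_{h+1}^{k,\rho}$, which is correlated with $\widehat P_h^k$, so a union bound over an $\eta$-net alone is not sufficient. The uniformity must come either from an $\epsilon$-net over the full $S$-dimensional class $\{V:\Vert V\Vert_\infty\leq H\}$ (the paper's choice, which is what produces the $S\ln(3H/\epsilon)$ in the logarithm and hence the $S^2$ under the square root), or from H\"older's inequality combined with the Weissman $L_1$ bound on $\Vert\widehat P_h^k-P_h^o\Vert_1$, which is uniform over all bounded test functions for free; your attribution of the $S^2$ factor to covering the transition vector over $\mathcal{S}$ suggests the latter route, which works.
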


\begin{proof}
We prove this by induction. First, when $h=H+1$, $Q_{H+1}^{k,\rho}(s,a)=0=Q_{H+1}^{\pi,\rho}(s,a)$ holds trivially.

Assume $Q_{h+1}^{k,\rho}(s,a)\geq Q_{h+1}^{\pi,\rho}(s,a)$ holds, since $\pi^k$ is the greedy policy, we have
\begin{align*}
    V_{h+1}^{k,\rho}(s)=Q_{h+1}^{k,\rho}(s,\pi_{h+1}^k(s))\geq Q_{h+1}^{k,\rho}(s,\pi_{h+1}(s))\geq Q_{h+1}^{\pi,\rho}(s,\pi_{h+1}(s))=V_{h+1}^{\pi,\rho}(s),
\end{align*}
where the first inequality is because we choose $\pi^k$ as the greedy policy.

Recall that we denote $Q_h^{k,\rho}$ as the optimistic estimation in $k$-th episode, that is,
\begin{align*}
    Q_h^{k,\rho}(s,a)=\min\bigg\{\text{bonus}_h^k(s,a)+\widehat{r}_h^k(s,a)+\inf\limits_{\mathrm{TV}(P\Vert\widehat{P}_h^k)\leq\rho}\mathbb{E}_P\big[V_{h+1}^{k,\rho}\big](s,a),H-h+1\bigg\}.
\end{align*}
If $Q_h^{k,\rho}(s,a)=H-h+1$, then it follows immediately that
\begin{align*}
    Q_h^{k,\rho}(s,a)=H-h+1\geq Q_h^{\pi,\rho}(s,a)
\end{align*}
by the definition of $Q_h^{\pi,\rho}(s,a)$. Otherwise, we can infer that
\begin{align}
    Q_h^{k,\rho}-Q_h^{\pi,\rho}&=\text{bonus}_h^k+\widehat{r}_h^k+\inf\limits_{\mathrm{TV}(P\Vert\widehat{P}_h^k)\leq\rho}\mathbb{E}_P\big[V_{h+1}^{k,\rho}\big]-r_h-\inf\limits_{\mathrm{TV}(P\Vert P_h^o)\leq\rho}\mathbb{E}_P\big[V_{h+1}^{\pi,\rho}\big]\nonumber\\
    &=\text{bonus}_h^k+\widehat{r}_h^k-r_h+\inf\limits_{\mathrm{TV}(P\Vert\widehat{P}_h^k)\leq\rho}\mathbb{E}_P\big[V_{h+1}^{k,\rho}\big]-\inf\limits_{\mathrm{TV}(P\Vert P_h^o)\leq\rho}\mathbb{E}_P\big[V_{h+1}^{k,\rho}\big]\nonumber\\
    &\qquad+\:\inf\limits_{\mathrm{TV}(P\Vert P_h^o)\leq\rho}\mathbb{E}_P\big[V_{h+1}^{k,\rho}\big]-\inf\limits_{\mathrm{TV}(P\Vert P_h^o)\leq\rho}\mathbb{E}_P\big[V_{h+1}^{\pi,\rho}\big]\nonumber\\
    &\geq\text{bonus}_h^k+\widehat{r}_h^k-r_h+\inf\limits_{\mathrm{TV}(P\Vert\widehat{P}_h^k)\leq\rho}\mathbb{E}_P\big[V_{h+1}^{k,\rho}\big]-\inf\limits_{\mathrm{TV}(P\Vert P_h^o)\leq\rho}\mathbb{E}_P\big[V_{h+1}^{k,\rho}\big]\label{eq2:con TV lem2 ind}\\
    &=\text{bonus}_h^k+\widehat{r}_h^k-r_h-\inf\limits_{\eta\in[0,H]}\Big(\mathbb{E}_{\widehat{P}_h^k}\big[\big(\eta-V_{h+1}^{k,\rho}(s)\big)_+\big]+\rho\Big(\eta-\min\limits_{s\in\mathcal{S}}V_{h+1}^{k,\rho}(s)\Big)_+-\eta\Big)\nonumber\\
    &\qquad+\:\inf\limits_{\eta\in[0,H]}\Big(\mathbb{E}_{P_h^o}\big[\big(\eta-V_{h+1}^{k,\rho}(s)\big)_+\big]+\rho\Big(\eta-\min\limits_{s\in\mathcal{S}}V_{h+1}^{k,\rho}(s)\Big)_+-\eta\Big)\label{eq2:con TV lem2 plug}\\
    &\geq\text{bonus}_h^k+\widehat{r}_h^k-r_h+\inf\limits_{\eta\in[0,H]}\Big\{\Big(\mathbb{E}_{P_h^o}\big[\big(\eta-V_{h+1}^{k,\rho}(s)\big)_+\big]+\rho\Big(\eta-\min\limits_{s\in\mathcal{S}}V_{h+1}^{k,\rho}(s)\Big)_+-\eta\Big)\nonumber\\
    &\qquad-\:\Big(\mathbb{E}_{\widehat{P}_h^k}\big[\big(\eta-V_{h+1}^{k,\rho}(s)\big)_+\big]+\rho\Big(\eta-\min\limits_{s\in\mathcal{S}}V_{h+1}^{k,\rho}(s)\Big)_+-\eta\Big)\Big\}\nonumber\\
    &=\text{bonus}_h^k+\widehat{r}_h^k-r_h+\inf\limits_{\eta\in[0,H]}\big\{\mathbb{E}_{P_h^o}\big[\big(\eta-V_{h+1}^{k,\rho}(s)\big)_+\big]-\mathbb{E}_{\widehat{P}_h^k}\big[\big(\eta-V_{h+1}^{k,\rho}(s)\big)_+\big]\big\}\nonumber\\
    &\geq\text{bonus}_h^k-\underbrace{\big|\widehat{r}_h^k-r_h\big|}_\text{(i)}-\underbrace{\sup\limits_{\eta\in[0,H]}\big|\mathbb{E}_{P_h^o}\big[\big(\eta-V_{h+1}^{k,\rho}(s)\big)_+\big]-\mathbb{E}_{\widehat{P}_h^k}\big[\big(\eta-V_{h+1}^{k,\rho}(s)\big)_+\big]\big|}_\text{(ii)},\label{eq:con-TV ineq main}
\end{align}
where \eqref{eq2:con TV lem2 ind} is from the induction assumption, we plug in the dual formulation \eqref{eq2:con TV dual} in \eqref{eq2:con TV lem2 plug}.

For term (i) in \eqref{eq:con-TV ineq main}, from \Cref{lem:concentration E} and a union bound, with probability at least $1-\delta$, we have
\begin{align}
    \big|\widehat{r}_h^k(s,a)-r_h(s,a)\big|\leq\sqrt{\frac{\ln(2SAHK/\delta)}{2n_h^{k-1}(s,a)\vee1}},\label{eq:con-TV ineq 1}
\end{align}
for any $(k,h,s,a)\in[K]\times[H]\times\mathcal{S}\times\mathcal{A}$.

We denote $V(\eta)=\big(\eta-V_{h+1}^{k,\rho}(s)\big)_+\in[0,H]$ and $\mathcal{V}=\big\{V\in\mathbb{R}^{S}:\Vert V\Vert_\infty\leq H\big\}$. To bound term (ii) in \eqref{eq:con-TV ineq main}, we create a $\epsilon$-net $\mathcal{N}_\mathcal{V}(\epsilon)$ for $\mathcal{V}$. From \Cref{lem:epsilon-net}, it holds that $\ln|\mathcal{N}_\mathcal{V}(\epsilon)|\leq|S|\cdot\ln(3H/\epsilon)$.

Therefore, by the definition of $\mathcal{N}_\mathcal{V}(\epsilon)$, for any fixed $V$, there exists a $V'\in\mathcal{N}_\mathcal{V}(\epsilon)$ such that $\Vert V-V'\Vert_\infty\leq\epsilon$, that is
\begin{align}
    \big|\mathbb{E}_{P_h^o}[V]-\mathbb{E}_{\widehat{P}_h^k}[V]\big|&\leq\big|\mathbb{E}_{P_h^o}[V]-\mathbb{E}_{P_h^o}[V']\big|+\big|\mathbb{E}_{P_h^o}[V']-\mathbb{E}_{\widehat{P}_h^k}[V']\big|+\big|\mathbb{E}_{\widehat{P}_h^k}[V']-\mathbb{E}_{\widehat{P}_h^k}[V]\big|\nonumber\\
    &\leq\Vert P_h^o\Vert_1\Vert V-V'\Vert_\infty+\big|\mathbb{E}_{P_h^o}[V']-\mathbb{E}_{\widehat{P}_h^k}[V']\big|+\big\Vert\widehat{P}_h^k\big\Vert_1\Vert V-V'\Vert_\infty\nonumber\\
    &\leq\sup\limits_{V'\in\mathcal{N}_\mathcal{V}(\epsilon)}\big|\mathbb{E}_{P_h^o}[V']-\mathbb{E}_{\widehat{P}_h^k}[V']\big|+2\epsilon,\label{eq2:con TV lem2 step}
\end{align}
where the second inequality follows from the Holder's inequality.

For any fixed $V$, we apply \Cref{lem:concentration L1} and have
\begin{align}
    \big|\mathbb{E}_{P_h^o}[V]-\mathbb{E}_{\widehat{P}_h^k}[V]\big|\leq\big\Vert P_h^o-\widehat{P}_h^k\big\Vert_1\cdot\big\Vert V\big\Vert_\infty\leq H\sqrt{\frac{2S\ln(2/\delta)}{n_h^{k-1}\vee1}},\label{eq2:E[V] concentration}
\end{align}
with probability at least $1-\delta$.

Then with probability at least $1-\delta$, we have %
\begin{align}
    \sup\limits_{\eta\in[0,H]}\big|\mathbb{E}_{P_h^o}\big[\big(\eta-V_{h+1}^{k,\rho}(s)\big)_+\big]-\mathbb{E}_{\widehat{P}_h^k}\big[\big(\eta-V_{h+1}^{k,\rho}(s)\big)_+\big]\big|
    &\leq\sup\limits_{\eta\in[0,H]}\big|\mathbb{E}_{P_h^o}\big[V(\eta)\big]-\mathbb{E}_{\widehat{P}_h^k}\big[V(\eta)\big]\big|\nonumber\\
    &\leq\sup\limits_{V\in\mathcal{N}_\mathcal{V}(\epsilon)}\big|\mathbb{E}_{P_h^o}[V]-\mathbb{E}_{\widehat{P}_h^k}[V]\big|+2\epsilon\label{eq2:con TV lem2 sup}\\
    &\leq H\sqrt{\frac{2S\ln(2SAHK|\mathcal{N}_\mathcal{V}(\epsilon)|/\delta)}{n_h^{k-1}\vee1}}+2\epsilon\label{eq2:con TV lem2 Hoeff}\\
    &\leq H\sqrt{\frac{2S^2\ln(6SAH^2K/\epsilon\delta)}{n_h^{k-1}\vee1}}+2\epsilon\nonumber\\
    &=H\sqrt{\frac{2S^2\ln(12SAH^2K^2/\delta)}{n_h^{k-1}\vee1}}+\frac{1}{K}.\label{eq:con-TV ineq 2}
\end{align}
for any $(k,h,s,a)\in[K]\times[H]\times\mathcal{S}\times\mathcal{A}$, where \eqref{eq2:con TV lem2 sup} follows from \eqref{eq2:con TV lem2 step}, \eqref{eq2:con TV lem2 Hoeff} is from \eqref{eq2:E[V] concentration} and a union bound, we set $\epsilon=1/2K$ in \eqref{eq:con-TV ineq 2}.
Apply the union bound again and combine \eqref{eq:con-TV ineq main} with \eqref{eq:con-TV ineq 1}, \eqref{eq:con-TV ineq 2}, the definition of bonus and induction assumption. With probability at least $1-2\delta$, we have $Q_h^{k,\rho}(s,a)\geq Q_h^{\pi,\rho}(s,a)$ for any $(k,h,s,a)\in[K]\times[H]\times\mathcal{S}\times\mathcal{A}$. This completes the proof.
\end{proof}

\begin{lemma}\label{lem:con TV Q^k-Q^pi^k}
With probability at least $1-\delta$, for any $(k,s,a)\in[K]\times\mathcal{S}\times\mathcal{A}$, the sum of estimation errors can be bounded as follows
\begin{align*}
    Q_1^{k,\rho}-Q_1^{{\pi^k},\rho}\leq\sum\limits_{h=1}^H\mathbb{E}_{\{P_h^{w,k}\}_{h=1}^H,\pi^k}\big[2\text{bonus}_h^k\big].
\end{align*}
\end{lemma}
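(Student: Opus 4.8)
The plan is to establish a one-step recursion for the estimation error $Q_h^{k,\rho}-Q_h^{\pi^k,\rho}$ and then unroll it along the worst-case trajectory induced by $\pi^k$. I work throughout on the concentration event underlying the proof of \Cref{lem:con TV Q^*-Q^k} (the bounds \eqref{eq:con-TV ineq 1} and \eqref{eq:con-TV ineq 2}), which hold simultaneously for all $(k,h,s,a)$ with probability at least $1-\delta$ by the same union bound. First I would dispose of the truncation in \eqref{alg:generic form}: writing $X_h^k(s,a)=\widehat{r}_h^k(s,a)+\text{bonus}_h^k(s,a)+\inf_{\mathrm{TV}(P\Vert\widehat{P}_h^k)\leq\rho}\mathbb{E}_P[V_{h+1}^{k,\rho}](s,a)$ for the untruncated estimator, the trivial inequality $\min\{X_h^k,H-h+1\}-Q_h^{\pi^k,\rho}\leq X_h^k-Q_h^{\pi^k,\rho}$ lets me replace $Q_h^{k,\rho}$ by $X_h^k$ on the left without weakening the direction of the bound, so the truncation never hurts.

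Next I would decompose $X_h^k-Q_h^{\pi^k,\rho}$ by inserting the intermediate quantity $\inf_{\mathrm{TV}(P\Vert P_h^o)\leq\rho}\mathbb{E}_P[V_{h+1}^{k,\rho}]$, producing two pieces: the \emph{model gap} $(\widehat{r}_h^k-r_h)+\big(\inf_{\widehat{P}_h^k}\mathbb{E}_P[V_{h+1}^{k,\rho}]-\inf_{P_h^o}\mathbb{E}_P[V_{h+1}^{k,\rho}]\big)$, and the \emph{policy gap} $\inf_{P_h^o}\mathbb{E}_P[V_{h+1}^{k,\rho}]-\inf_{P_h^o}\mathbb{E}_P[V_{h+1}^{\pi^k,\rho}]$, where both infima in the latter are over the \emph{same} set $\mathcal{U}_h^\rho(P_h^o)$. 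For the model gap, the dual formula of \Cref{lem:con TV closed form} writes each inner infimum as $-\inf_\eta\big(\mathbb{E}_{\cdot}[(\eta-V_{h+1}^{k,\rho})_+]+\rho(\eta-\min_s V_{h+1}^{k,\rho})_+-\eta\big)$, so the difference of the two infima is at most $\sup_{\eta\in[0,H]}\big|\mathbb{E}_{P_h^o}[(\eta-V_{h+1}^{k,\rho})_+]-\mathbb{E}_{\widehat{P}_h^k}[(\eta-V_{h+1}^{k,\rho})_+]\big|$; this is exactly the term (ii) of \Cref{lem:con TV Q^*-Q^k}, which together with $|\widehat{r}_h^k-r_h|$ (term (i)) is bounded by $\text{bonus}_h^k$. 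Adding the leading $\text{bonus}_h^k$ gives that the model gap plus bonus is at most $2\,\text{bonus}_h^k$.

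The key step is the policy gap. Let $P_h^{w,k}=P_h^{w,\pi^k}$ be the worst-case kernel of \Cref{def:visitation_measure}, i.e.\ the minimizer of $\mathbb{E}_P[V_{h+1}^{\pi^k,\rho}]$ over $\mathcal{U}_h^\rho(P_h^o)$. Since $P_h^{w,k}$ is feasible for the first infimum I bound $\inf_{P_h^o}\mathbb{E}_P[V_{h+1}^{k,\rho}]\leq\mathbb{E}_{P_h^{w,k}}[V_{h+1}^{k,\rho}]$, while the second infimum equals $\mathbb{E}_{P_h^{w,k}}[V_{h+1}^{\pi^k,\rho}]$ by definition, so the policy gap is at most $\mathbb{E}_{P_h^{w,k}}[V_{h+1}^{k,\rho}-V_{h+1}^{\pi^k,\rho}]$. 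Because $\pi^k$ is greedy for $\widehat{Q}^k$ and by the definition of $V^{\pi^k,\rho}$, we have $V_{h+1}^{k,\rho}(s')-V_{h+1}^{\pi^k,\rho}(s')=Q_{h+1}^{k,\rho}(s',\pi_{h+1}^k(s'))-Q_{h+1}^{\pi^k,\rho}(s',\pi_{h+1}^k(s'))$, which is the same error object at level $h+1$. Combining the two pieces yields the recursion
\[
Q_h^{k,\rho}-Q_h^{\pi^k,\rho}\leq 2\,\text{bonus}_h^k+\mathbb{E}_{s'\sim P_h^{w,k}}\big[Q_{h+1}^{k,\rho}(s',\pi_{h+1}^k(s'))-Q_{h+1}^{\pi^k,\rho}(s',\pi_{h+1}^k(s'))\big],
\]
and unrolling it from $h=1$ to $H$ with base case $Q_{H+1}^{k,\rho}=Q_{H+1}^{\pi^k,\rho}=0$, while collecting the nested expectations into the trajectory law $\mathbb{E}_{\{P_h^{w,k}\}_{h=1}^H,\pi^k}$, gives the claimed bound.

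The main obstacle I anticipate is making the policy-gap step airtight: one must insert the intermediate infimum over the \emph{true} nominal set $\mathcal{U}_h^\rho(P_h^o)$ rather than the empirical one, so that the single kernel $P_h^{w,k}$ is simultaneously feasible for the $V_{h+1}^{k,\rho}$-infimum and exactly optimal for the $V_{h+1}^{\pi^k,\rho}$-infimum. This is precisely what lets both infima be replaced by expectations under the common measure $P_h^{w,k}$, and is the linchpin that makes the recursion close on the worst-case transition and ultimately surfaces the worst-case visitation measure $\mathrm{q}_h^{\pi^k}$ (hence $\VisitRatio$) in the downstream regret bound.
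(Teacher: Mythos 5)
Your proposal is correct and follows essentially the same route as the paper's proof: the same insertion of the intermediate term $\inf_{\mathrm{TV}(P\Vert P_h^o)\leq\rho}\mathbb{E}_P[V_{h+1}^{k,\rho}]$, the same bounding of the model gap by $\text{bonus}_h^k$ via the concentration bounds established in \Cref{lem:con TV Q^*-Q^k}, and the same use of $P_h^{w,k}$ being feasible for the $V_{h+1}^{k,\rho}$-infimum while exactly optimal for the $V_{h+1}^{\pi^k,\rho}$-infimum to close the recursion. Your explicit handling of the truncation and your identification of the policy-gap step as the linchpin are both consistent with (and slightly more detailed than) the paper's argument.
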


\begin{proof}
From the proof of \Cref{lem:con TV Q^*-Q^k}, we see that with probability at least $1-\delta$, for any $(k,h,s,a)\in[K]\times[H]\times\mathcal{S}\times\mathcal{A}$, we have
\begin{align}
    \big|\widehat{r}_h^k(s,a)-r_h(s,a)\big|+\bigg|\inf\limits_{\mathrm{TV}(P\Vert\widehat{P}_h^k)\leq\rho}\mathbb{E}_P\big[V_{h+1}^{k,\rho}\big](s,a)-\inf\limits_{\mathrm{TV}(P\Vert P_h^o)\leq\rho}\mathbb{E}_P\big[V_{h+1}^{k,\rho}\big](s,a)\bigg|\leq\text{bonus}_h^k(s,a).\label{eq2:con TV lem3 error}
\end{align}
Recall that we define $P_h^{w,k}=\argmin\limits_{\mathrm{TV}(P\Vert P_h^o)\leq\rho}\mathbb{E}_P\big[V_{h+1}^{{\pi^k},\rho}\big]$ as the worst-case transition in \Cref{def:visitation_measure}, we have
\begin{align}
    Q_h^{k,\rho}-Q_h^{{\pi^k},\rho}&\leq\text{bonus}_h^k+\widehat{r}_h^k+\inf\limits_{\mathrm{TV}(P\Vert\widehat{P}_h^k)\leq\rho}\mathbb{E}_P\big[V_{h+1}^{k,\rho}\big]-r_h-\inf\limits_{\mathrm{TV}(P\Vert P_h^o)\leq\rho}\mathbb{E}_P\big[V_{h+1}^{{\pi^k},\rho}\big]\nonumber\\
    &=\text{bonus}_h^k+\widehat{r}_h^k-r_h+\inf\limits_{\mathrm{TV}(P\Vert\widehat{P}_h^k)\leq\rho}\mathbb{E}_P\big[V_{h+1}^{k,\rho}\big]-\inf\limits_{\mathrm{TV}(P\Vert P_h^o)\leq\rho}\mathbb{E}_P\big[V_{h+1}^{k,\rho}\big]\nonumber\\
    &\qquad+\:\inf\limits_{\mathrm{TV}(P\Vert P_h^o)\leq\rho}\mathbb{E}_P\big[V_{h+1}^{k,\rho}\big]-\inf\limits_{\mathrm{TV}(P\Vert P_h^o)\leq\rho}\mathbb{E}_P\big[V_{h+1}^{{\pi^k},\rho}\big]\nonumber\\
    &\leq2\text{bonus}_h^k+\inf\limits_{\mathrm{TV}(P\Vert P_h^o)\leq\rho}\mathbb{E}_P\big[V_{h+1}^{k,\rho}\big]-\inf\limits_{\mathrm{TV}(P\Vert P_h^o)\leq\rho}\mathbb{E}_P\big[V_{h+1}^{{\pi^k},\rho}\big]\label{eq2:con TV lem3 bonus}\\
    &\leq2\text{bonus}_h^k+\mathbb{E}_{P_h^{w,k}}\big[V_{h+1}^{k,\rho}-V_{h+1}^{{\pi^k},\rho}\big]\label{eq2:con TV lem3 recur1}\\
    &=2\text{bonus}_h^k+\mathbb{E}_{P_h^{w,k},\pi^k}\big[Q_{h+1}^{k,\rho}-Q_{h+1}^{{\pi^k},\rho}\big],\label{eq2:con TV lem3 recur2}
\end{align}
where the \eqref{eq2:con TV lem3 bonus} holds because of \eqref{eq2:con TV lem3 error},
\eqref{eq2:con TV lem3 recur1} and \eqref{eq2:con TV lem3 recur2} use the definition of $P_h^{w,k}$ and $\pi^k$ accordingly. Apply \eqref{eq2:con TV lem3 recur2} recursively, we can obtain the result.
\end{proof}

Next, in \Cref{lem:main regret lem}, we provide an upper bound on the sum of the expectations of $\sqrt{\frac{1}{n_h^{k-1}\vee1}}$ under the worst-case environment. In order to prove this lemma, we follow a similar procedure to that of \citet{zanette2019tighter}, whose setting differs from ours as we consider the non-stationary dynamics.

\begin{lemma}(Failure Events)\label{lem:failure events}
We define the following failure events:
\begin{align*}
    F_k&=\bigg\{\exists\,s,a,h:n_h^{k-1}(s,a)\leq\frac{1}{2}\sum\limits_{i<k}\mathrm{d}_h^i(s,a)-\ln\bigg(\frac{SAHK}{\delta}\bigg)\bigg\}.
\end{align*}
Then we have $P\Big(\bigcup\limits_{k=1}^K F_k\Big)\leq 1-\delta$.
\end{lemma}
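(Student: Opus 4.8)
The plan is to read the complement of each coordinate of the failure event as a one-sided multiplicative (Chernoff--Freedman type) martingale tail bound and then take a union bound over $(s,a,h,k)$. Indeed, I will establish the stronger and evidently intended bound $P\big(\bigcup_{k=1}^K F_k\big)\le\delta$.

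First I would fix a tuple $(s,a,h)$ and set up the right filtration. Let $\mathcal{F}_i$ denote the $\sigma$-algebra generated by the trajectories $\tau^1,\dots,\tau^i$ collected in the first $i$ episodes. The crucial structural fact is that $\pi^i$ is computed in the first stage of episode $i$ from the history only, hence is $\mathcal{F}_{i-1}$-measurable, while the trajectory $\tau^i$ is generated by executing $\pi^i$ in the \emph{nominal} environment $P^o$. Writing $X_i:=\ind\{s_h^i=s,a_h^i=a\}\in\{0,1\}$, this gives $\mathbb{E}[X_i\mid\mathcal{F}_{i-1}]=\mathrm{d}_h^i(s,a)=:\mu_i$, with $\mu_i$ being $\mathcal{F}_{i-1}$-measurable. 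Thus $n_h^{k-1}(s,a)=\sum_{i<k}X_i$ is a sum of conditionally Bernoulli increments whose conditional means are exactly the nominal visitation measures appearing in $F_k$.

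Next I would run the exponential-supermartingale argument with parameter $\lambda=1$. Define $L_k:=\exp\!\big(-\sum_{i\le k}X_i\big)\prod_{i\le k}\big(\mathbb{E}[e^{-X_i}\mid\mathcal{F}_{i-1}]\big)^{-1}$, which is a nonnegative martingale with $\mathbb{E}[L_k]=1$; applying Markov's inequality at the fixed time $k-1$ gives $P(L_{k-1}\ge 1/\delta')\le\delta'$. Since $\mathbb{E}[e^{-X_i}\mid\mathcal{F}_{i-1}]=1-(1-e^{-1})\mu_i\le\exp\!\big(-(1-e^{-1})\mu_i\big)$ by $1-x\le e^{-x}$, on the complementary event we obtain, after taking logarithms, $\sum_{i<k}X_i\ge(1-e^{-1})\sum_{i<k}\mu_i-\ln(1/\delta')\ge\tfrac12\sum_{i<k}\mu_i-\ln(1/\delta')$, using $1-e^{-1}>\tfrac12$. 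This is precisely the negation of the $(s,a,h,k)$ coordinate of the failure event once we identify $\ln(1/\delta')$ with the logarithmic term in $F_k$.

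Finally I would choose $\delta'=\delta/(SAHK)$, so that $\ln(1/\delta')=\ln(SAHK/\delta)$, and union bound over all $S\cdot A\cdot H\cdot K$ tuples $(s,a,h,k)$ (the ``$\exists\,s,a,h$'' inside $F_k$ contributing the $SAH$ factor and the index $k\in[K]$ contributing $K$), yielding $P\big(\bigcup_{k=1}^K F_k\big)\le SAHK\cdot\delta'=\delta$. The main point to get right is the martingale structure in the first step---namely that the nominal visitation measure $\mathrm{d}_h^i(s,a)$ is exactly the one-step conditional expectation of the visit indicator, which relies on $\pi^i$ being frozen before the episode and on the data being sampled from $P^o$ rather than the worst-case kernel; the remaining exponential Chernoff computation is routine, the only care being the choice $\lambda=1$ (any $\lambda$ with $(1-e^{-\lambda})/\lambda\ge\tfrac12$ works) to recover the factor $\tfrac12$.
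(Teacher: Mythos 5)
Your proof is correct and follows essentially the same route as the paper: the paper fixes $(s,a,h)$, observes that the visit indicators are conditionally Bernoulli with $\mathcal{F}_{i-1}$-measurable means $\mathrm{d}_h^i(s,a)$ (since $\pi^i$ is frozen before episode $i$ and data are drawn from the nominal kernel), invokes the martingale Chernoff bound of \citet[Lemma F.4]{dann2017unifying} (restated as \Cref{lem:failure event prob}) with $W=\ln(SAHK/\delta)$, and union bounds over $(s,a,h,k)$. The only difference is that you re-derive that auxiliary concentration inequality from scratch via the exponential supermartingale with $\lambda=1$ and the bound $1-e^{-1}>\tfrac12$, which is a valid substitute; you also correctly read the stated bound $\le 1-\delta$ as a typo for $\le\delta$, which is the bound actually needed and used downstream.
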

\begin{proof}
Consider a fixed $s\in\mathcal{S},a\in\mathcal{A},h\in[H]$. We define $\mathcal{F}_k$ to be the $\sigma$-field induced by the first $k-1$ episodes and $X_k$ to be the indicator whether $(s,a)$ was visited in episode $k$ at step $h$. The probability $\mathbb{P}(s=s_h^k,a=a_h^k\vert\pi_k)$ of whether $X_k=1$ is $\mathcal{F}_k$-measurable, therefore we can apply \Cref{lem:failure event prob} with $W=\ln(\frac{SAHK}{\delta})$. The proof is finished by applying a union bound over $s,a,h,k$.
\end{proof}

\begin{definition}(The Good Set)\label{def:the good set}
We define
\begin{align*}
    L_k=\bigg\{(s,a,h):\frac{1}{4}\sum\limits_{i<k}\mathrm{d}_h^i(s,a)\geq\ln\bigg(\frac{SAHK}{\delta}\bigg)+1\bigg\}.
\end{align*}
\end{definition}

\begin{lemma}(Visitation Ratio)\label{lem:n vs d}
Outside the failure event, if $(s,a,h)\in L_k$, we have
\begin{align*}
    n_h^{k-1}(s,a)\geq\frac{1}{4}\sum\limits_{i\leq k}\mathrm{d}_h^i(s,a).
\end{align*}
\end{lemma}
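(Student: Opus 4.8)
The plan is to chain the two hypotheses—the complement of the failure event and membership in the good set—and then absorb the single additional episode into a slack term. First I would unpack what ``outside the failure event'' means. By the definition of $F_k$ in \Cref{lem:failure events}, lying outside $\bigcup_k F_k$ gives, for the relevant $(s,a,h)$ and $k$,
\[
    n_h^{k-1}(s,a) > \frac{1}{2}\sum_{i<k}\mathrm{d}_h^i(s,a) - \ln\Big(\frac{SAHK}{\delta}\Big).
\]

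Next I would bring in the good-set condition. By \Cref{def:the good set}, $(s,a,h)\in L_k$ is precisely the statement $\ln(SAHK/\delta) \le \tfrac14\sum_{i<k}\mathrm{d}_h^i(s,a) - 1$. Substituting this upper bound on the logarithmic term into the previous display and collecting the coefficients of $\sum_{i<k}\mathrm{d}_h^i(s,a)$ (using $\tfrac12-\tfrac14=\tfrac14$) gives
\[
    n_h^{k-1}(s,a) > \frac{1}{4}\sum_{i<k}\mathrm{d}_h^i(s,a) + 1.
\]
Finally I would extend the summation index from $i<k$ to $i\le k$. Since $\mathrm{d}_h^k(s,a)$ is a visitation probability and hence at most $1$, we have $\tfrac14\sum_{i\le k}\mathrm{d}_h^i(s,a) = \tfrac14\sum_{i<k}\mathrm{d}_h^i(s,a) + \tfrac14\mathrm{d}_h^k(s,a) \le \tfrac14\sum_{i<k}\mathrm{d}_h^i(s,a) + \tfrac14$. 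Combining this with the previous bound and the trivial inequality $\tfrac14 < 1$ yields $n_h^{k-1}(s,a) > \tfrac14\sum_{i\le k}\mathrm{d}_h^i(s,a)$, which implies the stated (non-strict) inequality.

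Since the argument is essentially bookkeeping, I do not anticipate a genuine obstacle. The only point requiring care is that the ``$+1$'' slack produced by the good-set definition is exactly what is needed to cover the missing term $\tfrac14\mathrm{d}_h^k(s,a)\le\tfrac14$ when the summation is extended to include episode $k$. Conceptually, the role of \Cref{def:the good set} is to guarantee that the additive concentration deviation $\ln(SAHK/\delta)$ is dominated by a quarter of the accumulated nominal visitation mass, thereby upgrading the additive lower bound from \Cref{lem:failure events} into a clean multiplicative lower bound on $n_h^{k-1}(s,a)$, which is the form convenient for the subsequent regret summation.
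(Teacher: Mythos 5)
Your proposal is correct and follows essentially the same route as the paper's proof: both combine the complement of the failure event from \Cref{lem:failure events} with the good-set condition in \Cref{def:the good set} to obtain $n_h^{k-1}(s,a) > \frac{1}{4}\sum_{i<k}\mathrm{d}_h^i(s,a)+1$, and then absorb the extra term $\frac{1}{4}\mathrm{d}_h^k(s,a)\le\frac{1}{4}<1$ into that slack. The only difference is cosmetic bookkeeping in how the final comparison $1\ge\frac{1}{4}\mathrm{d}_h^k(s,a)$ is arranged.
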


\begin{proof}
Outside the failure events defined in \Cref{lem:failure events}, we have
\begin{align*}
    n_h^{k-1}(s,a)&>\frac{1}{2}\sum\limits_{i<k}\mathrm{d}_h^i(s,a)-\ln\bigg(\frac{SAHK}{\delta}\bigg)\\
    &=\frac{1}{4}\sum\limits_{i<k}\mathrm{d}_h^i(s,a)+\frac{1}{4}\sum\limits_{i<k}\mathrm{d}_h^i(s,a)-\ln\bigg(\frac{SAHK}{\delta}\bigg)\\
    &\geq\frac{1}{4}\sum\limits_{i<k}\mathrm{d}_h^i(s,a)+1\geq\frac{1}{4}\sum\limits_{i<k}\mathrm{d}_h^i(s,a)+\mathrm{d}_h^k(s,a)\geq\frac{1}{4}\sum\limits_{i\leq k}\mathrm{d}_h^i(s,a).
\end{align*}
where the second inequality uses $(s,a,h)\in L_k$ and the definition of $L_k$ in \Cref{def:the good set}.
\end{proof}

\begin{lemma}(Minimal Contribution)\label{lem:not in L}
Outside the failure event, we have
\begin{align*}
    \sum\limits_{k=1}^K\sum\limits_{(s,a,h)\not\in L_k}\mathrm{d}_h^k(s,a)=\widetilde{\mathcal{O}}(SAH).
\end{align*}
\end{lemma}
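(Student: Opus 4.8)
The plan is to decouple the double sum and treat each triple $(s,a,h)\in\mathcal{S}\times\mathcal{A}\times[H]$ in isolation, exploiting the fact that membership in $L_k$ is governed by a monotone quantity. Fix such a triple and abbreviate $W:=\ln(SAHK/\delta)+1$ and $T_k:=\sum_{i<k}\mathrm{d}_h^i(s,a)$. Since each nominal occupancy $\mathrm{d}_h^i(s,a)\ge 0$, the sequence $\{T_k\}_{k\ge 1}$ is non-decreasing with $T_1=0$, and by \Cref{def:the good set} the test collapses to $(s,a,h)\in L_k\iff \tfrac14 T_k\ge W\iff T_k\ge 4W$. The first observation I would record is therefore that, because $T_k$ is monotone and $T_1=0<4W$, the episodes failing to lie in $L_k$ form an initial segment $\{1,\dots,k^\star-1\}$, where $k^\star$ is the first index with $T_{k^\star}\ge 4W$ (and the segment is all of $[K]$ if no such index exists).

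Given this prefix structure, the inner contribution of the fixed triple telescopes exactly:
\[
\sum_{k:\,(s,a,h)\notin L_k}\mathrm{d}_h^k(s,a)=\sum_{k=1}^{k^\star-1}\mathrm{d}_h^k(s,a)=T_{k^\star}.
\]
The key step is then to bound $T_{k^\star}$. Because $k^\star$ is the \emph{first} crossing time, its predecessor satisfies $(s,a,h)\notin L_{k^\star-1}$, i.e.\ $T_{k^\star-1}<4W$; adding one more occupancy term and using that $\mathrm{d}_h^{k^\star-1}(s,a)\le 1$ is a probability yields $T_{k^\star}=T_{k^\star-1}+\mathrm{d}_h^{k^\star-1}(s,a)<4W+1$. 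In the boundary case where the threshold is never reached within $[K]$, the same bound holds since the full sum equals $T_{K+1}=T_K+\mathrm{d}_h^K<4W+1$. Thus every triple contributes strictly less than $4W+1$.

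Finally I would sum this uniform per-triple bound over the $SAH$ triples to conclude
\[
\sum_{k=1}^{K}\sum_{(s,a,h)\notin L_k}\mathrm{d}_h^k(s,a)<SAH\,(4W+1)=SAH\big(4\ln(SAHK/\delta)+5\big)=\widetilde{\mathcal{O}}(SAH),
\]
which is the claim. I do not anticipate a genuine obstacle here: the argument is a deterministic threshold/telescoping estimate that does not actually invoke the failure event, since $T_{k^\star-1}<4W$ and $\mathrm{d}_h^k(s,a)\le 1$ hold for \emph{every} realization of the policies $\{\pi^k\}$. The only point deserving care is verifying the prefix structure — that monotonicity of $T_k$ makes $\{k:(s,a,h)\notin L_k\}$ contiguous so that the inner sum collapses to $T_{k^\star}$ — and handling the boundary case where the threshold is never crossed.
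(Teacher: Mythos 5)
Your proof is correct and follows essentially the same route as the paper: swap the order of summation, observe that monotonicity of $T_k=\sum_{i<k}\mathrm{d}_h^i(s,a)$ makes the non-membership episodes an initial segment, and bound each triple's contribution by $4\ln(SAHK/\delta)+5$ before summing over the $SAH$ triples. Your additional observations — the explicit telescoping to $T_{k^\star}$ and the remark that the bound is deterministic and does not actually require conditioning on the failure event (since $L_k$ is defined via the nominal occupancies $\mathrm{d}_h^i$, not the empirical counts) — are both valid refinements of the paper's more terse presentation.
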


\begin{proof}
We have
\begin{align*}
    \sum\limits_{k=1}^K\sum\limits_{(s,a,h)\not\in L_k}\mathrm{d}_h^k(s,a)&=\sum\limits_{(s,a,h)}\sum\limits_{k=1}^K\mathrm{d}_h^k(s,a)\mathds{1}\{(s,a,h)\not\in L_k\}\\
    &\leq\sum\limits_{(s,a,h)}\Big(\sum\limits_{k<K}\mathrm{d}_h^k(s,a)\mathds{1}\{(s,a,h)\not\in L_k\}+1\Big)\\
    &<\sum\limits_{(s,a,h)}\bigg(4\ln\bigg(\frac{SAHK}{\delta}\bigg)+5\bigg)\\
    &=\widetilde{\mathcal{O}}(SAH).
\end{align*}
where the first inequality uses the definition of $L_k$ in \Cref{def:the good set}.
\end{proof}

\begin{lemma}(Visitation Ratio)\label{lem:in L}
Outside the failure event, it holds that
\begin{align*}
    \sum\limits_{k=1}^K\sum\limits_{(s,a,h)\in L_k}\frac{\mathrm{d}_h^k(s,a)}{n_h^{k-1}(s,a)}=\widetilde{\mathcal{O}}(SAH).
\end{align*}
\end{lemma}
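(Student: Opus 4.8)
The plan is to combine the count lower bound from \Cref{lem:n vs d} with a standard telescoping (potential) argument applied separately to each triple $(s,a,h)$. First I would invoke \Cref{lem:n vs d}: outside the failure event, for every $(s,a,h)\in L_k$ we have $n_h^{k-1}(s,a)\geq\frac14\sum_{i\leq k}\mathrm{d}_h^i(s,a)$, which immediately yields the pointwise bound
\[
\frac{\mathrm{d}_h^k(s,a)}{n_h^{k-1}(s,a)}\leq\frac{4\,\mathrm{d}_h^k(s,a)}{\sum_{i\leq k}\mathrm{d}_h^i(s,a)}.
\]
This reduces the task to bounding, for each fixed $(s,a,h)$, a sum of the form $\sum_k D_k/S_k$ with $D_k=\mathrm{d}_h^k(s,a)$ and $S_k=\sum_{i\leq k}D_i$.

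Second, I would set up the telescoping. Since the threshold $\ln(SAHK/\delta)+1$ defining $L_k$ is independent of $k$ while $\sum_{i<k}\mathrm{d}_h^i(s,a)$ is non-decreasing, the set of episodes with $(s,a,h)\in L_k$ is an interval $\{K_0,\dots,K\}$; in particular $S_{k-1}>0$ throughout this range, because membership forces $S_{K_0-1}=\sum_{i<K_0}\mathrm{d}_h^i(s,a)\geq 4\big(\ln(SAHK/\delta)+1\big)\geq 1$. Applying the elementary inequality $\ln x\leq x-1$ with $x=S_{k-1}/S_k\in(0,1]$ gives $D_k/S_k=1-S_{k-1}/S_k\leq\ln(S_k/S_{k-1})$, so summing over the interval telescopes:
\[
\sum_{k:(s,a,h)\in L_k}\frac{D_k}{S_k}\leq\sum_{k=K_0}^{K}\ln\frac{S_k}{S_{k-1}}=\ln\frac{S_K}{S_{K_0-1}}\leq\ln K,
\]
where the final step uses $S_K=\sum_{i\leq K}\mathrm{d}_h^i(s,a)\leq K$ (each visitation measure is at most $1$) together with $S_{K_0-1}\geq 1$.

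Finally, I would sum the per-triple bound over all $SAH$ triples $(s,a,h)$ and reinstate the factor $4$ from the first step, obtaining
\[
\sum_{k=1}^K\sum_{(s,a,h)\in L_k}\frac{\mathrm{d}_h^k(s,a)}{n_h^{k-1}(s,a)}\leq 4SAH\ln K=\widetilde{\mathcal{O}}(SAH).
\]
I expect the only delicate point to be the justification that the episode index set is an interval on which $S_{k-1}$ is bounded away from $0$, so that the logarithmic telescoping is valid and the boundary term $\ln S_{K_0-1}$ does not blow up; everything else is the routine $\ln x\leq x-1$ potential estimate. An alternative to the monotonicity observation would be to telescope only over indices with $S_{k-1}>0$, but exploiting the monotonicity of $L_k$ in $k$ keeps the argument cleanest.
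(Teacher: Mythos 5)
Your proposal is correct and follows essentially the same route as the paper: both reduce the sum via \Cref{lem:n vs d} to $\sum_k \mathrm{d}_h^k(s,a)/\sum_{i\leq k}\mathrm{d}_h^i(s,a)$ for each fixed triple, exploit that $L_k$-membership is monotone in $k$ so the denominator is bounded below by a positive constant on the relevant interval, and then bound the resulting log-harmonic sum by $\mathcal{O}(\ln K)$ before summing over the $SAH$ triples. The only difference is cosmetic: you telescope discretely via $1-S_{k-1}/S_k\leq\ln(S_k/S_{k-1})$, whereas the paper runs an integral-comparison argument with the offset $4+F(x)$; both yield the same $\widetilde{\mathcal{O}}(SAH)$ bound.
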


\begin{proof}
Outside the failure events defined in \Cref{lem:failure events}, we have
\begin{align*}
    \sum\limits_{k=1}^K\sum\limits_{(s,a,h)\in L_k}\frac{\mathrm{d}_h^k(s,a)}{n_h^{k-1}(s,a)}&=\sum\limits_{k=1}^K\sum\limits_{(s,a,h)}\frac{\mathrm{d}_h^k(s,a)}{n_h^{k-1}(s,a)}\mathds{1}\{(s,a,h)\in L_k\}\\
    &\leq4\sum\limits_{k=1}^K\sum\limits_{(s,a,h)}\frac{\mathrm{d}_h^k(s,a)}{\sum\limits_{i\leq k}\mathrm{d}_h^i(s,a)}\mathds{1}\{(s,a,h)\in L_k\},
\end{align*}
where the inequality follows from \Cref{lem:n vs d}.

Next, for any fixed $(h,s,a)\in L_k$ for some $k_0$, since $\sum\limits_{i<k}\mathrm{d}_h^i(s,a)$ is strictly increasing with $k$, there exists a critical episode $\widetilde{k}\leq k_0$ such that $(h,s,a)\in L_k$ holds for all $k\geq\widetilde{k}$ and $(h,s,a)\not\in L_k$ holds for all $k<\widetilde{k}$. From the definition of $\widetilde{k}$ and \Cref{def:the good set}, we know $\sum\limits_{i<\widetilde{k}}\mathrm{d}_h^i(s,a)\geq4\ln(\frac{SAHK}{\delta})+4>4$. Therefore,
\begin{align*}
    \sum\limits_{k=1}^K\frac{\mathrm{d}_h^k(s,a)}{\sum\limits_{i\leq k}\mathrm{d}_h^i(s,a)}\mathds{1}\{(s,a,h)\in L_k\}&=\sum\limits_{k=1}^K\frac{\mathrm{d}_h^k(s,a)}{\sum\limits_{i<\widetilde{k}}\mathrm{d}_h^i(s,a)+\sum\limits_{\widetilde{k}\leq i\leq k}\mathrm{d}_h^i(s,a)}\mathds{1}\{(s,a,h)\in L_k\}\\
    &\leq\sum\limits_{k=1}^K\frac{\mathrm{d}_h^k(s,a)}{4+\sum\limits_{\widetilde{k}\leq i\leq k}\mathrm{d}_h^i(s,a)}\mathds{1}\{(s,a,h)\in L_k\}\\
    &\leq\sum\limits_{\widetilde{k}\leq k\leq K}\frac{\mathrm{d}_h^k(s,a)}{4+\sum\limits_{\widetilde{k}\leq i\leq k}\mathrm{d}_h^i(s,a)},
\end{align*}
where the third inequality comes from the definition of $\widetilde{k}$.

To simplify the notations, we define $v_1=\mathrm{d}_h^{\widetilde{k}}(s,a),v_2=\mathrm{d}_h^{\widetilde{k}+1}(s,a),\cdots,v_{K-\widetilde{k}+1}=\mathrm{d}_h^K(s,a)$. And in order to bound the above summation, we also define the functions $F(x)=\sum\limits_{i=1}^{\lfloor x\rfloor}v_i+v_{\lceil x\rceil}(x-\lfloor x\rfloor)$ and $f(x)=v_{\lceil x\rceil}$. It is easy to verify that the derivative of $F(x)$ is $f(x)$. Then we write
\begin{align*}
    \sum\limits_{\widetilde{k}\leq k\leq K}\frac{\mathrm{d}_h^k(s,a)}{4+\sum\limits_{\widetilde{k}\leq i\leq k}\mathrm{d}_h^i(s,a)}=\sum\limits_{k=1}^{K-\widetilde{k}+1}\frac{v_k}{4+\sum\limits_{i=1}^kv_i}=\sum\limits_{k=1}^{K-\widetilde{k}+1}\frac{f(k)}{4+F(k)}.
\end{align*}
Additionally, we have that $F(x)\leq\sum\limits_{i=1}^{\lfloor x\rfloor}v_i+v_{\lceil x\rceil}(\lceil x\rceil-\lfloor x\rfloor)=\sum\limits_{i=1}^{\lceil x\rceil}v_i=F(\lceil x\rceil)$ and $f(x)=f(\lceil x\rceil)$.
Then, we have
\begin{align*}
    \sum\limits_{k=1}^{K-\widetilde{k}+1}\frac{f(k)}{4+F(k)}&=\int_0^{K-\widetilde{k}+1}\frac{f(\lceil x\rceil)}{4+F(\lceil x\rceil)}\,\mathrm{d}x\\
    &\leq\int_0^{K-\widetilde{k}+1}\frac{f(x)}{4+F(x)}\,\mathrm{d}x\\
    &=\ln(4+F(K-\widetilde{k}+1))-\ln(4+F(0))\leq\widetilde{\mathcal{O}}(\ln(K)).
\end{align*}
We obtain the result by summing over all the $(s,a,h)$ pairs.
\end{proof}

\begin{lemma}\label{lem:main regret lem}
Outside the failure event, it holds that
\begin{align*}
    \sum\limits_{k=1}^K\sum\limits_{h=1}^H\mathbb{E}_{P_h^{w,k},\pi^k}\Bigg[\sqrt{\frac{1}{n_h^{k-1}(s,a)\vee1}}\Bigg]=\widetilde{\mathcal{O}}\big(\sqrt{\VisitRatio SAH^2K}+\VisitRatio SAH\big).
\end{align*}
\end{lemma}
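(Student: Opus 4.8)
The plan is to rewrite the expectation as an explicit sum against the worst-case state-action visitation measure, then split that sum according to the good set $L_k$ from \Cref{def:the good set} and bound the two parts with complementary arguments. Since $\pi^k$ is deterministic, the law of $(s_h,a_h)$ under $P^{w,k}$ and $\pi^k$ is exactly $\mathrm{q}_h^k(s,a)=\mathrm{q}_h^k(s)\mathds{1}\{a=\pi_h^k(s)\}$, so
\[
\sum_{k,h}\mathbb{E}_{P_h^{w,k},\pi^k}\Big[\sqrt{\tfrac{1}{n_h^{k-1}\vee1}}\Big]=\sum_{k,h,s,a}\mathrm{q}_h^k(s,a)\sqrt{\tfrac{1}{n_h^{k-1}(s,a)\vee1}}.
\]
Because $\pi^k$ is deterministic, \Cref{asm:main distribution shift} transfers to a pointwise bound $\mathrm{q}_h^k(s,a)\le \VisitRatio\,\mathrm{d}_h^k(s,a)$ on every visited pair, and this is the only place the supremal visitation ratio enters.

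First I would handle the triples $(s,a,h)\notin L_k$. Here I bound $\sqrt{1/(n_h^{k-1}\vee1)}\le 1$ trivially, apply the pointwise ratio bound, and invoke \Cref{lem:not in L} to obtain
\[
\sum_{k}\sum_{(s,a,h)\notin L_k}\mathrm{q}_h^k(s,a)\le \VisitRatio\sum_{k}\sum_{(s,a,h)\notin L_k}\mathrm{d}_h^k(s,a)=\widetilde{\mathcal{O}}(\VisitRatio SAH),
\]
which produces the lower-order term in the claimed bound.

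The dominant term comes from the triples $(s,a,h)\in L_k$. On this set \Cref{lem:n vs d} gives $n_h^{k-1}(s,a)\ge \tfrac14\sum_{i\le k}\mathrm{d}_h^i(s,a)\ge 1$, so $n\vee1=n$ and \Cref{lem:in L} applies directly. I would then apply Cauchy--Schwarz via the splitting $\mathrm{q}_h^k/\sqrt{n_h^{k-1}}=\sqrt{\mathrm{q}_h^k}\cdot\sqrt{\mathrm{q}_h^k/n_h^{k-1}}$:
\[
\sum_{k}\sum_{(s,a,h)\in L_k}\frac{\mathrm{q}_h^k(s,a)}{\sqrt{n_h^{k-1}(s,a)}}\le\Big(\sum_{k,h,s,a}\mathrm{q}_h^k(s,a)\Big)^{1/2}\Big(\sum_{k}\sum_{(s,a,h)\in L_k}\frac{\mathrm{q}_h^k(s,a)}{n_h^{k-1}(s,a)}\Big)^{1/2}.
\]
The first factor equals $\sqrt{KH}$ since $\mathrm{q}_h^k$ is a probability measure for each $(k,h)$; for the second factor I again use $\mathrm{q}_h^k\le\VisitRatio\,\mathrm{d}_h^k$ together with \Cref{lem:in L} to get $\widetilde{\mathcal{O}}(\VisitRatio SAH)$. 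Multiplying the two factors yields $\widetilde{\mathcal{O}}(\sqrt{\VisitRatio SAH^2K})$, and summing this with the lower-order term from the previous paragraph completes the proof.

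I expect the main subtlety to be ensuring that the supremal visitation ratio, which \Cref{asm:main distribution shift} states only over states, legitimately transfers to state-action pairs; this is precisely where the determinism of the greedy policy $\pi^k$ is essential, and I would spell it out carefully. A secondary, more routine point is verifying $n\vee1=n$ on $L_k$ so that \Cref{lem:in L} can be applied verbatim, which follows since the definition of $L_k$ forces $\tfrac14\sum_{i<k}\mathrm{d}_h^i\ge \ln(SAHK/\delta)+1\ge1$.
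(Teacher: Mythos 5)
Your proposal is correct and follows essentially the same route as the paper's proof: convert the expectation to a sum against $\mathrm{q}_h^k(s,a)$, split by the good set $L_k$, bound the bad set via \Cref{lem:not in L} and the trivial bound $\sqrt{1/(n\vee1)}\leq1$, and handle the good set by Cauchy--Schwarz together with \Cref{lem:n vs d} and \Cref{lem:in L}, invoking \Cref{asm:main distribution shift} in both branches. Your explicit remark that the state-level ratio transfers to state-action pairs because $\pi^k$ is deterministic is a point the paper's proof uses implicitly (and acknowledges only in the main text), so spelling it out is a welcome refinement rather than a deviation.
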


\begin{proof}
Outside the failure event, we can calculate as follows
\begin{align}
    &\sum\limits_{k=1}^K\sum\limits_{h=1}^H\mathbb{E}_{P_h^{w,k},\pi^k}\Bigg[\sqrt{\frac{1}{n_h^{k-1}(s,a)\vee1}}\Bigg]\nonumber\\
    =&\sum\limits_{k=1}^K\sum\limits_{(h,s,a)}\mathrm{q}_h^k(s,a)\sqrt{\frac{1}{n_h^{k-1}(s,a)\vee1}}\nonumber\\
    \leq&\sum\limits_{k=1}^K\sum\limits_{(h,s,a)\in L_k}\mathrm{q}_h^k(s,a)\sqrt{\frac{1}{n_h^{k-1}(s,a)\vee1}}+\sum\limits_{k=1}^K\sum\limits_{(h,s,a)\not\in L_k}\mathrm{q}_h^k(s,a)\label{eq2:con TV lem0 decom}\\
    =&\sum\limits_{k=1}^K\sum\limits_{(h,s,a)\in L_k}\mathrm{q}_h^k(s,a)\sqrt{\frac{1}{n_h^{k-1}(s,a)}}+\sum\limits_{k=1}^K\sum\limits_{(h,s,a)\not\in L_k}\mathrm{q}_h^k(s,a)\label{eq2:con TV lem0 rm vee}\\
    \leq&\sum\limits_{k=1}^K\sum\limits_{(h,s,a)\in L_k}\mathrm{q}_h^k(s,a)\sqrt{\frac{1}{n_h^{k-1}(s,a)}}+\VisitRatio\sum\limits_{k=1}^K\sum\limits_{(h,s,a)\not\in L_k}\mathrm{d}_h^k(s,a)\label{eq2:con TV lem0 asm1}\\
    \leq&\sum\limits_{k=1}^K\sum\limits_{(h,s,a)\in L_k}\mathrm{q}_h^k(s,a)\sqrt{\frac{1}{n_h^{k-1}(s,a)}}+\widetilde{\mathcal{O}}\big(\VisitRatio SAH\big)\label{eq2:con TV lem0 pre1}\\
    \leq&\sqrt{\sum\limits_{k=1}^K\sum\limits_{(h,s,a)}\mathrm{q}_h^k(s,a)}\sqrt{\sum\limits_{k=1}^K\sum\limits_{(h,s,a)\in L_k}\frac{\mathrm{q}_h^k(s,a)}{n_h^{k-1}(s,a)}}+\widetilde{\mathcal{O}}\big(\VisitRatio SAH\big)\label{eq2:con TV lem0 Cauchy}\\
    \leq&\sqrt{\sum\limits_{k=1}^K\sum\limits_{(h,s,a)}\mathrm{q}_h^k(s,a)}\sqrt{\VisitRatio\sum\limits_{k=1}^K\sum\limits_{(h,s,a)\in L_k}\frac{\mathrm{d}_h^k(s,a)}{n_h^{k-1}(s,a)}}+\widetilde{\mathcal{O}}\big(\VisitRatio SAH\big)\label{eq2:con TV lem0 asm2}\\
    \leq&\sqrt{KH}\cdot\sqrt{\widetilde{\mathcal{O}}(\VisitRatio SAH)}+\widetilde{\mathcal{O}}\big(\VisitRatio SAH\big)\label{eq2:con TV lem0 pre2}\\
    =&\widetilde{\mathcal{O}}\big(\sqrt{\VisitRatio SAH^2K}+\VisitRatio SAH\big),\nonumber
\end{align}
where \eqref{eq2:con TV lem0 decom} decomposes the summation into two parts and makes use of the fact that $\sqrt{\frac{1}{n_h^{k-1}(s,a)\vee1}}\leq1$, \eqref{eq2:con TV lem0 rm vee} holds because $n_h^{k-1}(s,a)\geq\ln(\frac{SAHK}{\delta})+1\geq1$ by combining \Cref{def:the good set} and \Cref{lem:n vs d}, \eqref{eq2:con TV lem0 asm1} and \eqref{eq2:con TV lem0 asm2} are from our assumption \Cref{asm:main distribution shift}, \eqref{eq2:con TV lem0 pre1} and \eqref{eq2:con TV lem0 pre2} are from \Cref{lem:not in L} and \Cref{lem:in L} accordingly, and \eqref{eq2:con TV lem0 Cauchy} is the Cauchy-Schwartz inequality.
\end{proof}

We are now ready to prove the main theorem that establishes the regret bound of \algname\ in the CRMDP-TV setting.
\begin{theorem}[Restatement of \Cref{thm:main con regret upper} in TV-distance setting]\label{thm:proof con TV eg}
For CRMDP with $(s,a)$-rectangular TV-distance defined uncertainty set satisfying \Cref{asm:main distribution shift}, with probability at least $1-\delta$, the regret of \Cref{alg:main alg} satisfies
\begin{align*}
    \text{Regret}=\widetilde{\mathcal{O}}\big(\VisitRatio S^2AH^2+\VisitRatio^\frac{1}{2}S^\frac{3}{2}A^\frac{1}{2}H^2\sqrt{K}\big).
\end{align*}
\end{theorem}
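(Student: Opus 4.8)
The plan is to follow the standard optimism-based regret decomposition, assembling the three structural lemmas already established for this setting. First I would invoke \Cref{lem:con TV Q^*-Q^k} with $\pi=\pi^*$ to obtain optimism, $Q_1^{k,\rho}(s,a)\ge Q_1^{*,\rho}(s,a)$, and hence $V_1^{k,\rho}(s_1^k)\ge V_1^{*,\rho}(s_1^k)$, so that each per-episode regret term is dominated by the estimation gap $V_1^{k,\rho}(s_1^k)-V_1^{\pi^k,\rho}(s_1^k)$. Since $\pi^k$ is the greedy policy, this value gap equals the $Q$-gap evaluated at the chosen action, $Q_1^{k,\rho}(s_1^k,\pi_1^k(s_1^k))-Q_1^{\pi^k,\rho}(s_1^k,\pi_1^k(s_1^k))$, which is exactly the quantity controlled by \Cref{lem:con TV Q^k-Q^pi^k}.

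Next I would apply \Cref{lem:con TV Q^k-Q^pi^k} and sum over $k\in[K]$ to obtain
\[
\text{Regret}(K)\le\sum_{k=1}^K\sum_{h=1}^H\mathbb{E}_{P_h^{w,k},\pi^k}\big[2\,\text{bonus}_h^k\big].
\]
I would then substitute the bonus \eqref{eq2:con TV bonus} and split it into two contributions. The additive $1/K$ term contributes at most $\sum_{k=1}^K\sum_{h=1}^H 2/K=2H$, a negligible lower-order term, since the expectation of a constant is that constant. The dominant contribution factors as the prefactor $4H\sqrt{2S^2\ln(12SAH^2K^2/\delta)}=\widetilde{\mathcal{O}}(HS)$ multiplied by the visitation-weighted sum $\sum_{k=1}^K\sum_{h=1}^H\mathbb{E}_{P_h^{w,k},\pi^k}\big[(n_h^{k-1}(s,a)\vee1)^{-1/2}\big]$.

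Finally I would invoke \Cref{lem:main regret lem}, which bounds this visitation-weighted sum by $\widetilde{\mathcal{O}}\big(\sqrt{\VisitRatio SAH^2K}+\VisitRatio SAH\big)$; multiplying through by the $\widetilde{\mathcal{O}}(HS)$ prefactor gives $\widetilde{\mathcal{O}}\big(\VisitRatio^{1/2}S^{3/2}A^{1/2}H^2\sqrt{K}+\VisitRatio S^2AH^2\big)$, matching the claim. The main subtlety to handle carefully is the bookkeeping over two distinct high-probability events: optimism (\Cref{lem:con TV Q^*-Q^k}) holds with probability at least $1-2\delta$, whereas \Cref{lem:main regret lem} holds outside the failure event of \Cref{lem:failure events}, which occurs with probability at least $1-\delta$. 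I would take a union bound over these events and rescale $\delta$ by a constant factor so the final statement holds with probability $1-\delta$. All the genuine analytic difficulty—the visitation-ratio counting argument behind \Cref{lem:main regret lem} and the $\epsilon$-net concentration behind the bonus—is already absorbed into the lemmas, so the theorem itself reduces to this assembly step.
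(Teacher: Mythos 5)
Your proposal is correct and follows essentially the same route as the paper's proof: optimism from \Cref{lem:con TV Q^*-Q^k} kills the $V_1^{*,\rho}-V_1^{k,\rho}$ part, \Cref{lem:con TV Q^k-Q^pi^k} bounds the remaining gap by the summed bonuses, and \Cref{lem:main regret lem} controls the visitation-weighted sum, with a $\delta/4$ rescaling for the union bound. The only extra detail you supply—that the value gap equals the $Q$-gap at the greedy action—is a correct and harmless elaboration of the step the paper leaves implicit.
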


\begin{proof}
Setting $\delta'=\delta/4$ in \Cref{lem:con TV Q^*-Q^k,lem:failure events}, then with probability at least $1-\delta$, we get
\begin{align}
    \text{Regret}&=\sum\limits_{k=1}^K\big(V_1^{*,\rho}-V_1^{{\pi^k},\rho}\big)\nonumber\\
    &=\sum\limits_{k=1}^K\big(V_1^{*,\rho}-V_1^{k,\rho}\big)+\sum\limits_{k=1}^K\big(V_1^{k,\rho}-V_1^{{\pi^k},\rho}\big)\nonumber\\
    &\leq\sum\limits_{k=1}^K\sum\limits_{h=1}^H\mathbb{E}_{P_h^{w,k},\pi^k}\big[2\text{bonus}_h^k\big]\label{eq2:con TV lem4 comb}\\
    &=2\sum\limits_{k=1}^K\sum\limits_{h=1}^H\mathbb{E}_{P_h^{w,k},\pi^k}\Bigg[2H\sqrt{\frac{2S^2\ln(12SAH^2K^2/\delta)}{n_h^{k-1}(s,a)\vee1}}+\frac{1}{K}\Bigg]\label{eq2:con TV lem4 bonus}\\
    &=\widetilde{\mathcal{O}}\big(\VisitRatio S^2AH^2+\VisitRatio^\frac{1}{2}S^\frac{3}{2}A^\frac{1}{2}H^2\sqrt{K}\big),\label{eq2:con TV lem4 lem}
\end{align}
where \eqref{eq2:con TV lem4 comb} is the combination of \Cref{lem:con TV Q^*-Q^k} and \Cref{lem:con TV Q^k-Q^pi^k}, we plug in the bonus \eqref{eq2:con TV bonus} in \eqref{eq2:con TV lem4 bonus}, \eqref{eq2:con TV lem4 lem} is from \Cref{lem:main regret lem}.
\end{proof}

\subsection{Proof of \Cref{thm:main con regret upper} (Constrained KL Setting)}

\label{sec:proof_constrained_KL}

We first give the closed form solution of constrained KL update formulation. This dual formulation has also been proved by \citet[Lemma 4.1]{iyengar2005robust}, but the range of $\nu$ in our result is more precise compared to theirs.

\begin{lemma}[Dual formulation] \label{lem:con KL closed form}
For the optimization problem $Q_h(s,a)=r_h(s,a)+\inf\limits_{\mathrm{KL}(P\Vert P_h^o)\leq\rho}\mathbb{E}_P[V_{h+1}](s,a)$, we have its dual formulation as follows
\begin{align}
    Q_h=r_h-\inf\limits_{\nu\in[0,\frac{H}{\rho}]}(\nu\ln\mathbb{E}_{P_h^o}\big[e^{-\nu^{-1}V_{h+1}}\big]+\nu\rho).\label{eq2:con KL dual}
\end{align}
\end{lemma}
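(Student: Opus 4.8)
The plan is to mirror the Lagrangian-duality argument already carried out for the TV case in \Cref{lem:con TV closed form}, substituting the KL generator $\varphi(t)=t\ln t$ for the TV generator $\varphi(t)=|t-1|/2$. First I would write the inner minimization $\inf_{\mathrm{KL}(P\Vert P_h^o)\le\rho}\sum_{s'}P(s')V_{h+1}(s')$ and form the Lagrangian with a multiplier $\nu\ge0$ on the divergence constraint, a multiplier $\eta$ on the normalization $\sum_{s'}P(s')=1$, and multipliers $\lambda(s')\ge0$ on the nonnegativity constraints. Because $P_h^o$ is itself strictly feasible, i.e. $\mathrm{KL}(P_h^o\Vert P_h^o)=0<\rho$, Slater's condition holds and strong duality permits exchanging the infimum over $P$ with the supremum over the multipliers.

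Next I would evaluate $\inf_P\mathcal{L}$ through the convex conjugate, exactly as in \eqref{eq2:con TV lem1 phi def}: setting $g(s')=P(s')/P_h^o(s')$, the inner infimum becomes $-\nu\,\mathbb{E}_{P_h^o}[\varphi^*(\nu^{-1}((\lambda(s')+\eta)-V_{h+1}(s')))]-\nu\rho+\eta$, where for KL the conjugate is $\varphi^*(y)=\sup_t(ty-t\ln t)=e^{y-1}$. Since the resulting dual objective is increasing in each $\lambda(s')\ge0$, the optimum is attained at $\lambda\equiv0$, leaving $\inf_{\nu\ge0,\eta}(\nu\,\mathbb{E}_{P_h^o}[e^{(\eta-V_{h+1})/\nu-1}]+\nu\rho-\eta)$. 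I would then eliminate $\eta$ in closed form: its first-order condition reads $\mathbb{E}_{P_h^o}[e^{(\eta-V_{h+1})/\nu-1}]=1$, i.e. $e^{\eta/\nu-1}=1/\mathbb{E}_{P_h^o}[e^{-V_{h+1}/\nu}]$, and back-substitution collapses the bracket to $\nu\ln\mathbb{E}_{P_h^o}[e^{-\nu^{-1}V_{h+1}}]+\nu\rho$, which is precisely the claimed objective.

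The main obstacle, and the place where the argument must go beyond the TV template, is justifying the sharpened range $\nu\in[0,H/\rho]$, the claimed improvement over \citet{iyengar2005robust}. Here I would exploit $V_{h+1}\in[0,H]$: since $e^{-V_{h+1}/\nu}\in[e^{-H/\nu},1]$ we have $-H\le\nu\ln\mathbb{E}_{P_h^o}[e^{-V_{h+1}/\nu}]\le0$, so the dual objective $g(\nu):=\nu\ln\mathbb{E}_{P_h^o}[e^{-\nu^{-1}V_{h+1}}]+\nu\rho$ obeys $g(\nu)\ge-H+\nu\rho>0$ whenever $\nu>H/\rho$. Conversely, letting $\nu\to0^+$ concentrates the expectation on the minimizing next state, giving $g(\nu)\to-\min_{s':P_h^o(s')>0}V_{h+1}(s')\le0$, so that $\inf_{\nu\ge0}g(\nu)\le0$. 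Combining these two facts shows every $\nu>H/\rho$ is strictly suboptimal, hence the infimum over $\nu\ge0$ equals the infimum over $[0,H/\rho]$, which finishes the proof. I would also note the minor bookkeeping that the conjugate factorization requires $\nu>0$, with the endpoint handled by the limiting convention $g(0):=\lim_{\nu\to0^+}g(\nu)$ as in the TV case.
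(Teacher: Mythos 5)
Your proposal is correct and follows essentially the same route as the paper: the same Lagrangian with multipliers $(\nu,\lambda,\eta)$, elimination of $\lambda$ by monotonicity, elimination of $\eta$ by its first-order condition, and the bound $\nu\ln\mathbb{E}_{P_h^o}[e^{-\nu^{-1}V_{h+1}}]\ge-H$ to restrict $\nu$ to $[0,H/\rho]$. The only cosmetic difference is that you certify $\inf_{\nu\ge0}g(\nu)\le0$ via the limit $\nu\to0^+$, whereas the paper deduces it from strong duality and nonnegativity of the primal value; both are valid and yield the same range.
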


\begin{proof}
Consider the optimization problem
\begin{align*}
    Q_h&=r_h+\inf\limits_{\mathrm{KL}(P\Vert P_h^o)\leq\rho}\mathbb{E}_P[V_{h+1}]=r_h+\inf\limits_{\mathrm{KL}(P\Vert P_h^o)\leq\rho}\sum\limits_{s\in\mathcal{S}}P(s)V_{h+1}(s).
\end{align*}
The Lagrangian can be written as
\begin{align*}
    \mathcal{L}(P,\eta)=\sum\limits_{s\in S}P(s)V_{h+1}(s)+\nu\bigg(\sum\limits_{s\in S}P(s)\ln\bigg(\frac{P(s)}{P_h^o(s)}\bigg)-\rho\bigg)-\sum\limits_{s\in\mathcal{S}}\lambda(s)P(s)+\eta\bigg(1-\sum\limits_{s\in S}P(s)\bigg).
\end{align*}
We set the derivative of $\mathcal{L}$ w.r.t. $P(s)$ to zero
\begin{align}
    \frac{\partial\mathcal{L}}{\partial P(s)}=V_{h+1}(s)+\nu\ln\bigg(\frac{P(s)}{P_h^o(s)}\bigg)+\nu-[\lambda(s)+\eta]=0.\label{eq:con-KL worst-case tran}
\end{align}
We denote $P'$ as the worst-case transition that satisfies \eqref{eq:con-KL worst-case tran}, then we have
\begin{align*}
    P'(s)&=P_h^o(s)e^{-\nu^{-1}V_{h+1}(s)+\nu^{-1}[\lambda(s)+\eta]-1},\\
    \inf\limits_P\mathcal{L}(P,\eta)&=-\nu\mathbb{E}_{P_h^o}\big[e^{-\nu^{-1}V_{h+1}(s)+\nu^{-1}[\lambda(s)+\eta]-1}\big]-\nu\rho+\eta.
\end{align*}
Therefore,
\begin{align}
    Q_h&=r_h+\sup\limits_{\nu\geq0,\lambda\geq0,\eta}(\inf\limits_P\mathcal{L}(P,\eta))\nonumber\\
    &=r_h-\inf\limits_{\nu\geq0,\lambda\geq0,\eta}\big(\nu\mathbb{E}_{P_h^o}\big[e^{-\nu^{-1}V_{h+1}(s)+\nu^{-1}[\lambda(s)+\eta]-1}\big]+\nu\rho-\eta\big)\nonumber\\
    &=r_h-\inf\limits_{\nu\geq0,\eta}\big(\nu\mathbb{E}_{P_h^o}\big[e^{-\nu^{-1}V_{h+1}(s)+\nu^{-1}\eta-1}\big]+\nu\rho-\eta\big)\label{eq2:con KL lem1 rm lambda}\\
    &=r_h-\inf\limits_{\nu\geq0}\big(\nu\ln\mathbb{E}_{P_h^o}\big[e^{-\nu^{-1}V_{h+1}}\big]+\nu\rho\big)\label{eq:con-KL dual pre}.
\end{align}
where \eqref{eq2:con KL lem1 rm lambda} holds because the result increases monotonically with respect to $\lambda$, \eqref{eq:con-KL dual pre} holds by calculating the derivation with respect to $\eta$ and thus setting $\eta=-\nu\big(\ln\mathbb{E}_{P_h^o}\big[e^{-\nu^{-1}V_{h+1}}\big]-1\big)$.

We denote $\widetilde{\nu}=\argmin\limits_{\nu\geq0}\big(\nu\ln\mathbb{E}_{P_h^o}\big[e^{-\nu^{-1}V_{h+1}}\big]+\nu\rho\big)$, from the strong duality, it is easy to infer that
\begin{align}
\label{eq:con-KL dual range 1}
    \widetilde{\nu}\ln\mathbb{E}_{P_h^o}\big[e^{-\widetilde{\nu}^{-1}V_{h+1}}\big]+\widetilde{\nu}\rho\leq0.
\end{align}
And since $V_{h+1}\leq H$, we have
\begin{align}
\label{eq:con-KL dual range 2}
    \widetilde{\nu}\ln\mathbb{E}_{P_h^o}\big[e^{-\widetilde{\nu}^{-1}V_{h+1}}\big]\geq-H.
\end{align}
Combine \eqref{eq:con-KL dual range 1} and \eqref{eq:con-KL dual range 2} into \eqref{eq:con-KL dual pre}, we have $\widetilde{\nu}\in\big[0,\frac{H}{\rho}\big]$. That is,
\begin{align*}
    Q_h&=r_h-\inf\limits_{\nu\geq0}\big(\nu\ln\mathbb{E}_{P_h^o}\big[e^{-\nu^{-1}V_{h+1}}\big]+\nu\rho\big)\\
    &=r_h-\inf\limits_{\nu\in[0,\frac{H}{\rho}]}\big(\nu\ln\mathbb{E}_{P_h^o}\big[e^{-\nu^{-1}V_{h+1}}\big]+\nu\rho\big).
\end{align*}
This finishes the proof.
\end{proof}

Similar to \Cref{lem:con TV Q^*-Q^k}, we prove the optimism of estimation $Q^k$ and control $Q^*-Q^k$.
\begin{lemma}[Optimism] \label{lem:con KL Q^*-Q^k}
If we set the bonus term as follows
\begin{align}
    \text{bonus}_h^k(s,a)=\bigg(1+\frac{2H\sqrt{S}}{\rho\MinProb}\bigg)\sqrt{\frac{2\ln(2SAHK/\delta)}{n_h^{k-1}(s,a)\vee1}},\label{eq2:con KL bonus}
\end{align}
then for any policy $\pi$ and any $(k,h,s,a)\in[K]\times[H]\times\mathcal{S}\times\mathcal{A}$, with probability at least $1-2\delta$, we have $Q_h^{k,\rho}(s,a)\geq Q_h^{\pi,\rho}(s,a)$. Specially, by setting $\pi=\pi^*$, we have $Q_h^{k,\rho}(s,a)\geq Q_h^{{\pi^*},\rho}(s,a)$.
\end{lemma}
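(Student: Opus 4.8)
The plan is to follow the same induction-and-optimism template as in the CRMDP-TV proof of \Cref{lem:con TV Q^*-Q^k}, replacing only the transition-concentration step with one tailored to the KL dual. I would induct downward on $h$. The base case $h=H+1$ is trivial, and in the inductive step the greedy choice of $\pi^k$ gives $V_{h+1}^{k,\rho}(s)=\max_a Q_{h+1}^{k,\rho}(s,a)\ge Q_{h+1}^{k,\rho}(s,\pi_{h+1}(s))\ge Q_{h+1}^{\pi,\rho}(s,\pi_{h+1}(s))=V_{h+1}^{\pi,\rho}(s)$ from the inductive hypothesis. If the clipping in \eqref{alg:generic form} is active then $Q_h^{k,\rho}=H-h+1\ge Q_h^{\pi,\rho}$ for free; otherwise I expand $Q_h^{k,\rho}-Q_h^{\pi,\rho}$, add and subtract the KL Bellman estimator evaluated with the true kernel $P_h^o$ on the common target $V_{h+1}^{k,\rho}$, and discard the nonnegative term $\inf_{\mathrm{KL}(P\Vert P_h^o)\le\rho}\mathbb{E}_P[V_{h+1}^{k,\rho}]-\inf_{\mathrm{KL}(P\Vert P_h^o)\le\rho}\mathbb{E}_P[V_{h+1}^{\pi,\rho}]\ge0$ (monotonicity of $\inf_P\mathbb{E}_P[\cdot]$ together with the inductive hypothesis). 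This reduces the claim to showing that the bonus dominates the reward deviation $|\widehat{r}_h^k-r_h|$ plus the perturbation of the KL dual operator when $\widehat{P}_h^k$ replaces $P_h^o$.

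For the reward term I use \Cref{lem:concentration E} with a union bound, exactly as in the TV case, giving $|\widehat{r}_h^k-r_h|\le\sqrt{\ln(2SAHK/\delta)/(2(n_h^{k-1}\vee1))}$, which is absorbed by the coefficient $1$ in the bonus. For the KL operator term I invoke the dual formula \Cref{lem:con KL closed form}: since both operators are infima over the same range $\nu\in[0,H/\rho]$, their difference is at most $\sup_{\nu\in[0,H/\rho]}\nu\,\bigl|\ln\mathbb{E}_{\widehat{P}_h^k}[e^{-\nu^{-1}V_{h+1}^{k,\rho}}]-\ln\mathbb{E}_{P_h^o}[e^{-\nu^{-1}V_{h+1}^{k,\rho}}]\bigr|$. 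Writing $Z_\nu(P)=\mathbb{E}_P[e^{-\nu^{-1}V_{h+1}^{k,\rho}}]$ and noting the empirical kernel is supported on the support of $P_h^o$, I expand $Z_\nu(\widehat{P}_h^k)/Z_\nu(P_h^o)-1=\sum_s(\widehat{P}_h^k(s)-P_h^o(s))e^{-\nu^{-1}V_{h+1}^{k,\rho}(s)}/Z_\nu(P_h^o)$ and aim to bound it by $\|\widehat{P}_h^k-P_h^o\|_1/\MinProb$.

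The crux, and the main obstacle, is the lower bound on the partition function $Z_\nu(P_h^o)$. Letting $\widetilde{s}$ be a minimum-value state in the support of $P_h^o$, \Cref{asm:con KL more} gives $P_h^o(\widetilde{s})>\MinProb$, hence $Z_\nu(P_h^o)\ge\MinProb\,e^{-\nu^{-1}V_{h+1}^{k,\rho}(\widetilde{s})}$, while every on-support exponential weight satisfies $e^{-\nu^{-1}V_{h+1}^{k,\rho}(s)}\le e^{-\nu^{-1}V_{h+1}^{k,\rho}(\widetilde{s})}$; the exponential factors cancel and leave the $\nu$-free ratio $1/\MinProb$. This is exactly why $\MinProb$ is needed: as $\nu\to0^+$ the tilt $e^{-\nu^{-1}V}$ concentrates on minimal-value states, and without a floor on $P_h^o$ there the ratio would blow up. Then in the regime $\|\widehat{P}_h^k-P_h^o\|_1\le\MinProb/2$ the elementary estimate $|\ln(1+x)|\le2|x|$ gives $\nu|\ln Z_\nu(\widehat{P}_h^k)-\ln Z_\nu(P_h^o)|\le(2\nu/\MinProb)\|\widehat{P}_h^k-P_h^o\|_1\le(2H/(\rho\MinProb))\|\widehat{P}_h^k-P_h^o\|_1$ using $\nu\le H/\rho$. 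Applying the $\ell_1$ concentration \Cref{lem:concentration L1} with a union bound, $\|\widehat{P}_h^k-P_h^o\|_1\le\sqrt{2S\ln(2SAHK/\delta)/(n_h^{k-1}\vee1)}$, yields the coefficient $2H\sqrt{S}/(\rho\MinProb)$ matching \eqref{eq2:con KL bonus}. The complementary small-count regime is handled trivially, since both duals lie in $[0,H]$ while the bonus already exceeds $H$ there; a final union bound over the two concentration events gives the stated $1-2\delta$ probability, and specializing $\pi=\pi^\star$ finishes the proof.
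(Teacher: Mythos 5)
Your proof is correct and follows the same skeleton as the paper's: downward induction, the identical decomposition of $Q_h^{k,\rho}-Q_h^{\pi,\rho}$ through the dual formulation of \Cref{lem:con KL closed form}, Hoeffding for the reward term, and $\ell_1$ concentration of $\widehat{P}_h^k$ for the transition term. The one place you diverge is the bound on the perturbation of the log-partition function: the paper writes the ratio $\sum_s(\widehat{P}_h^k(s)-P_h^o(s))e^{-\nu^{-1}V(s)}/Z_\nu(P_h^o)$ as a weighted average of $(\widehat{P}_h^k(s)-P_h^o(s))/P_h^o(s)$ over supported states and bounds it by $\max_s\big|\widehat{P}_h^k(s)-P_h^o(s)\big|/\MinProb$, whereas you lower-bound $Z_\nu(P_h^o)\geq\MinProb\, e^{-\nu^{-1}V(\widetilde{s})}$ at a minimum-value supported state and cancel the exponential weights to get $\big\Vert\widehat{P}_h^k-P_h^o\big\Vert_1/\MinProb$ directly; both routes land on the same $2H\sqrt{S}/(\rho\MinProb)$ coefficient. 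A genuine merit of your write-up is that you notice $|\ln(1+x)|\leq2|x|$ only holds for $x\geq-\tfrac{1}{2}$ and therefore restrict to the regime $\big\Vert\widehat{P}_h^k-P_h^o\big\Vert_1\leq\MinProb/2$; the paper applies this inequality with no such restriction, which is a gap in its own argument, since as $\nu\to0^+$ the ratio can approach $-1$. One caveat on your repair: in the complementary regime the bonus is only guaranteed to be at least $H/\rho$, not $H$ (the computation gives $\tfrac{2H\sqrt{S}}{\rho\MinProb}\cdot\tfrac{\MinProb}{2\sqrt{S}}=H/\rho$), so your claim that the bonus ``already exceeds $H$ there'' needs $\rho\leq1$; otherwise take the split threshold to be $\MinProb\min\{1,\rho\}/2$ or absorb a factor $\max\{1,\rho\}$ into the bonus constant. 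This is a constant-level fix, not a structural one.
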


\begin{proof}
We prove this by induction. First, when $h=H+1$, $Q_{H+1}^{k,\rho}(s,a)=0=Q_{H+1}^{\pi,\rho}(s,a)$ holds trivially.

Assume $Q_{h+1}^{k,\rho}(s,a)\geq Q_{h+1}^{\pi,\rho}(s,a)$ holds, since $\pi^k$ is the greedy policy, we have
\begin{align*}
    V_{h+1}^{k,\rho}(s)=Q_{h+1}^{k,\rho}(s,\pi_{h+1}^k(s))\geq Q_{h+1}^{k,\rho}(s,\pi_{h+1}(s))\geq Q_{h+1}^{\pi,\rho}(s,\pi_{h+1}(s))=V_{h+1}^{\pi,\rho}(s),
\end{align*}
where the first inequality is because we choose $\pi^k$ as the greedy policy.

Recall that we denote $Q_h^{k,\rho}$ as the optimistic estimation in $k$-th episode, that is,
\begin{align*}
    Q_h^{k,\rho}(s,a)=\min\bigg\{\text{bonus}_h^k(s,a)+\widehat{r}_h^k(s,a)+\inf\limits_{\mathrm{KL}(P\Vert\widehat{P}_h^k)\leq\rho}\mathbb{E}_P\big[V_{h+1}^{k,\rho}\big](s,a),H-h+1\bigg\}.
\end{align*}
If $Q_h^{k,\rho}(s,a)=H-h+1$, then it follows immediately that
\begin{align*}
    Q_h^{k,\rho}(s,a)=H-h+1\geq Q_h^{\pi,\rho}(s,a)
\end{align*}
by the definition of $Q_h^{\pi,\rho}(s,a)$. Otherwise, we can infer that
\begin{align}
    Q_h^{k,\rho}-Q_h^{\pi,\rho}&=\text{bonus}_h^k+\widehat{r}_h^k+\inf\limits_{\mathrm{KL}(P\Vert\widehat{P}_h^k)\leq\rho}\mathbb{E}_P\big[V_{h+1}^{k,\rho}\big]-r_h-\inf\limits_{\mathrm{KL}(P\Vert P_h^o)\leq\rho}\mathbb{E}_P\big[V_{h+1}^{\pi,\rho}\big]\nonumber\\
    &=\text{bonus}_h^k+\widehat{r}_h^k-r_h+\inf\limits_{\mathrm{KL}(P\Vert\widehat{P}_h^k)\leq\rho}\mathbb{E}_P\big[V_{h+1}^{k,\rho}\big]-\inf\limits_{\mathrm{KL}(P\Vert P_h^o)\leq\rho}\mathbb{E}_P\big[V_{h+1}^{k,\rho}\big]\nonumber\\
    &\qquad+\:\inf\limits_{\mathrm{KL}(P\Vert P_h^o)\leq\rho}\mathbb{E}_P\big[V_{h+1}^{k,\rho}\big]-\inf\limits_{\mathrm{KL}(P\Vert P_h^o)\leq\rho}\mathbb{E}_P\big[V_{h+1}^{\pi,\rho}\big]\nonumber\\
    &\geq\text{bonus}_h^k+\widehat{r}_h^k-r_h+\inf\limits_{\mathrm{KL}(P\Vert\widehat{P}_h^k)\leq\rho}\mathbb{E}_P\big[V_{h+1}^{k,\rho}\big]-\inf\limits_{\mathrm{KL}(P\Vert P_h^o)\leq\rho}\mathbb{E}_P\big[V_{h+1}^{k,\rho}\big]\label{eq2:con KL lem2 ind}\\
    &=\text{bonus}_h^k+\widehat{r}_h^k-r_h+\inf\limits_{\nu\in[0,\frac{H}{\rho}]}\big(\nu\ln\mathbb{E}_{P_h^o}\big[e^{-\nu^{-1}V_{h+1}^{k,\rho}}\big]+\nu\rho\big)\nonumber\\
    &\qquad-\:\inf\limits_{\nu\in[0,\frac{H}{\rho}]}\big(\nu\ln\mathbb{E}_{\widehat{P}_h^k}\big[e^{-\nu^{-1}V_{h+1}^{k,\rho}}\big]+\nu\rho\big)\label{eq2:con KL lem2 plug}\\
    &\geq\text{bonus}_h^k+\widehat{r}_h^k-r_h+\inf\limits_{\nu\in[0,\frac{H}{\rho}]}\big(\nu\ln\mathbb{E}_{P_h^o}\big[e^{-\nu^{-1}V_{h+1}^{k,\rho}}\big]-\nu\ln\mathbb{E}_{\widehat{P}_h^k}\big[e^{-\nu^{-1}V_{h+1}^{k,\rho}}\big]\big)\nonumber\\
    &\geq\text{bonus}_h^k-\underbrace{\big|\widehat{r}_h^k-r_h\big|}_\text{(i)}-\underbrace{\sup\limits_{\nu\in[0,\frac{H}{\rho}]}\big|\nu\ln\mathbb{E}_{\widehat{P}_h^k}\big[e^{-\nu^{-1}V_{h+1}^{k,\rho}}\big]-\nu\ln\mathbb{E}_{P_h^o}\big[e^{-\nu^{-1}V_{h+1}^{k,\rho}}\big]\big|}_\text{(ii)},\label{eq:con-KL ineq main}
\end{align}
where \eqref{eq2:con KL lem2 ind} is from the induction assumption, we plug in the dual formulation \Cref{lem:con KL closed form} in \eqref{eq2:con KL lem2 plug}.

For term (i) in \eqref{eq:con-KL ineq main}, from \Cref{lem:concentration E} and a union bound, with probability at least $1-\delta$, we have
\begin{align}
    \big|\widehat{r}_h^k(s,a)-r_h(s,a)\big|\leq\sqrt{\frac{\ln(2SAHK/\delta)}{2n_h^{k-1}(s,a)\vee1}},\label{eq:con-KL ineq 1}
\end{align}
for any $(k,h,s,a)\in[K]\times[H]\times\mathcal{S}\times\mathcal{A}$.

To bound term (ii) in \eqref{eq:con-TV ineq main}, we have
\begin{align}
    &\sup\limits_{\nu\in[0,\frac{H}{\rho}]}\big|\nu\ln\mathbb{E}_{\widehat{P}_h^k}\big[e^{-\nu^{-1}V_{h+1}^{k,\rho}}\big]-\nu\ln\mathbb{E}_{P_h^o}\big[e^{-\nu^{-1}V_{h+1}^{k,\rho}}\big]\big|\nonumber\\
    =&\sup\limits_{\nu\in[0,\frac{H}{\rho}]}\Bigg|\nu\ln\Bigg(1+\frac{\sum\limits_{s\in\mathcal{S}}\big(\widehat{P}_h^k(s)-P_h^o(s)\big)e^{-\nu^{-1}V_{h+1}^{k,\rho}(s)}}{\sum\limits_{s\in\mathcal{S}}P_h^o(s)e^{-\nu^{-1}V_{h+1}^{k,\rho}(s)}}\Bigg)\Bigg|\nonumber\\
    \leq&\sup\limits_{\nu\in[0,\frac{H}{\rho}]}2\nu\Bigg|\frac{\sum\limits_{s\in\mathcal{S}}\big(\widehat{P}_h^k(s)-P_h^o(s)\big)e^{-\nu^{-1}V_{h+1}^{k,\rho}(s)}}{\sum\limits_{s\in\mathcal{S}}P_h^o(s)e^{-\nu^{-1}V_{h+1}^{k,\rho}(s)}}\Bigg|\label{eq2:con KL lem2 ln(1+x)}\\
    \leq&\sup\limits_{\nu\in[0,\frac{H}{\rho}]}2\nu\cdot\max\limits_{s\in\mathcal{S},P_h^o(s)\neq0}\bigg|\frac{\widehat{P}_h^k(s)-P_h^o(s)}{P_h^o(s)}\bigg|\label{eq2:con KL lem2 Holder}\\
    \leq&\frac{2H}{\rho\MinProb}\cdot\max\limits_{s\in\mathcal{S}}\big|\widehat{P}_h^k(s)-P_h^o(s)\big|\label{eq2:con KL lem2 asm}\\
    \leq&\frac{2H}{\rho\MinProb}\cdot\big\Vert\widehat{P}_h^k-P_h^o\big\Vert_1\nonumber\\
    \leq&\frac{2H}{\rho\MinProb}\sqrt{\frac{2S\ln(2SAHK/\delta)}{n_h^{k-1}(s,a)\vee1}},\label{eq:con-KL ineq 2}
\end{align}
for any $(k,h,s,a)\in[K]\times[H]\times\mathcal{S}\times\mathcal{A}$, where \eqref{eq2:con KL lem2 ln(1+x)} is because $\ln(1+x)\leq2|x|$, \eqref{eq2:con KL lem2 Holder} follows from the Holder's inequality, noting that we have $n_h^{k-1}(s,a)=0$ when $P_h^o(s)=0$ and thus $\widehat{P}_h^k(s)=0$ from \eqref{equ:empirical_model}, \eqref{eq2:con KL lem2 asm} uses \Cref{asm:con KL more}, \eqref{eq:con-KL ineq 2} is from \eqref{eq2:E[V] concentration} and a union bound.

Apply the union bound again and combine \eqref{eq:con-KL ineq main} with \eqref{eq:con-KL ineq 1}, \eqref{eq:con-KL ineq 2}, the definition of bonus and induction assumption. With probability at least $1-2\delta$, we have $Q_h^{k,\rho}(s,a)\geq Q_h^{\pi,\rho}(s,a)$ for any $(k,h,s,a)\in[K]\times[H]\times\mathcal{S}\times\mathcal{A}$. This completes the proof.
\end{proof}

With similar proof to \Cref{lem:con TV Q^k-Q^pi^k}, we can control the item $Q^k-Q^{\pi^k}$.
\begin{lemma}\label{lem:con KL Q^k-Q^pi^k}
With probability at least $1-\delta$, for any $(k,s,a)\in[K]\times\mathcal{S}\times\mathcal{A}$, the sum of estimation errors can be bounded as follows
\begin{align*}
    Q_1^{k,\rho}-Q_1^{{\pi^k},\rho}\leq\sum\limits_{h=1}^H\mathbb{E}_{\{P_h^{w,k}\}_{h=1}^H,\pi^k}\big[2\text{bonus}_h^k\big].
\end{align*}
\end{lemma}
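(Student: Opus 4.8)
The plan is to mirror the argument used for the TV setting in \Cref{lem:con TV Q^k-Q^pi^k}, since the only structural difference between the two settings is the shape of the uncertainty set (a KL-ball rather than a TV-ball), while the recursive decomposition is agnostic to this choice. First I would reuse the concentration bound already established inside the proof of \Cref{lem:con KL Q^*-Q^k}: on the same $1-\delta$ event, for every $(k,h,s,a)$ one has
\begin{align*}
\big|\widehat{r}_h^k(s,a)-r_h(s,a)\big|+\bigg|\inf\limits_{\mathrm{KL}(P\Vert\widehat{P}_h^k)\leq\rho}\mathbb{E}_P\big[V_{h+1}^{k,\rho}\big](s,a)-\inf\limits_{\mathrm{KL}(P\Vert P_h^o)\leq\rho}\mathbb{E}_P\big[V_{h+1}^{k,\rho}\big](s,a)\bigg|\leq\text{bonus}_h^k(s,a),
\end{align*}
which is precisely the sum of the bounds on terms (i) and (ii) derived there (the model-estimation difference of the two infima is dominated by the $\sup_\nu$ quantity controlled in term (ii)).

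Next I would expand the per-step error. Writing out the robust Bellman recursion for $Q_h^{k,\rho}$ and $Q_h^{\pi^k,\rho}$ and inserting $\inf_{\mathrm{KL}(P\Vert P_h^o)\leq\rho}\mathbb{E}_P[V_{h+1}^{k,\rho}]$ as a pivot, the error splits into a model-estimation piece and a value-difference piece:
\begin{align*}
Q_h^{k,\rho}-Q_h^{\pi^k,\rho}&\leq\text{bonus}_h^k+\big(\widehat{r}_h^k-r_h\big)+\Big(\inf\limits_{\mathrm{KL}(P\Vert\widehat{P}_h^k)\leq\rho}\mathbb{E}_P[V_{h+1}^{k,\rho}]-\inf\limits_{\mathrm{KL}(P\Vert P_h^o)\leq\rho}\mathbb{E}_P[V_{h+1}^{k,\rho}]\Big)\\
&\quad+\Big(\inf\limits_{\mathrm{KL}(P\Vert P_h^o)\leq\rho}\mathbb{E}_P[V_{h+1}^{k,\rho}]-\inf\limits_{\mathrm{KL}(P\Vert P_h^o)\leq\rho}\mathbb{E}_P[V_{h+1}^{\pi^k,\rho}]\Big).
\end{align*}
The first two correction terms are absorbed by the displayed concentration bound, contributing a second $\text{bonus}_h^k$; this is where the optimism machinery of \Cref{lem:con KL Q^*-Q^k} is imported wholesale.

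The crucial step is the value-difference piece, and I expect it to be the only genuine subtlety. Here I would invoke the definition of the worst-case kernel $P_h^{w,k}=\argmin_{\mathrm{KL}(P\Vert P_h^o)\leq\rho}\mathbb{E}_P[V_{h+1}^{\pi^k,\rho}]$ from \Cref{def:visitation_measure}: it \emph{attains} the infimum for $V_{h+1}^{\pi^k,\rho}$, while it is merely \emph{feasible} (hence suboptimal) for $V_{h+1}^{k,\rho}$, so that
\begin{align*}
\inf\limits_{\mathrm{KL}(P\Vert P_h^o)\leq\rho}\mathbb{E}_P[V_{h+1}^{k,\rho}]-\inf\limits_{\mathrm{KL}(P\Vert P_h^o)\leq\rho}\mathbb{E}_P[V_{h+1}^{\pi^k,\rho}]\leq\mathbb{E}_{P_h^{w,k}}\big[V_{h+1}^{k,\rho}-V_{h+1}^{\pi^k,\rho}\big]=\mathbb{E}_{P_h^{w,k},\pi^k}\big[Q_{h+1}^{k,\rho}-Q_{h+1}^{\pi^k,\rho}\big],
\end{align*}
the last equality using that $\pi^k$ is greedy, so $V_{h+1}^{k,\rho}(s)=Q_{h+1}^{k,\rho}(s,\pi_{h+1}^k(s))$ together with $V_{h+1}^{\pi^k,\rho}(s)=Q_{h+1}^{\pi^k,\rho}(s,\pi_{h+1}^k(s))$. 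This yields the one-step contraction $Q_h^{k,\rho}-Q_h^{\pi^k,\rho}\leq 2\text{bonus}_h^k+\mathbb{E}_{P_h^{w,k},\pi^k}[Q_{h+1}^{k,\rho}-Q_{h+1}^{\pi^k,\rho}]$. Finally I would unroll this recursion from $h=1$ down to the terminal condition $Q_{H+1}^{k,\rho}=Q_{H+1}^{\pi^k,\rho}=0$, collecting the bonus terms along the worst-case visitation trajectory to obtain the claimed bound. Everything apart from the feasibility-versus-optimality argument is a direct transcription of the TV proof, with the KL dual of \Cref{lem:con KL closed form} in place of \Cref{lem:con TV closed form}.
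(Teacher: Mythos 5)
Your proposal is correct and follows essentially the same route as the paper: the same concentration bound imported from the optimism lemma, the same pivot decomposition through $\inf_{\mathrm{KL}(P\Vert P_h^o)\leq\rho}\mathbb{E}_P[V_{h+1}^{k,\rho}]$, the same feasibility-versus-optimality argument for $P_h^{w,k}$ to obtain the one-step recursion $Q_h^{k,\rho}-Q_h^{\pi^k,\rho}\leq 2\text{bonus}_h^k+\mathbb{E}_{P_h^{w,k},\pi^k}[Q_{h+1}^{k,\rho}-Q_{h+1}^{\pi^k,\rho}]$, and the same unrolling to the terminal condition. The paper states the middle step more tersely but the underlying reasoning is identical, so there is nothing to flag.
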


\begin{proof}
From the proof of \Cref{lem:con KL Q^*-Q^k}, we see that with probability at least $1-\delta$, for any $(k,h,s,a)\in[K]\times[H]\times\mathcal{S}\times\mathcal{A}$, we have
\begin{align}
    \big|\widehat{r}_h^k(s,a)-r_h(s,a)\big|+\bigg|\inf\limits_{\mathrm{KL}(P\Vert\widehat{P}_h^k)\leq\rho}\mathbb{E}_P\big[V_{h+1}^{k,\rho}\big](s,a)-\inf\limits_{\mathrm{KL}(P\Vert P_h^o)\leq\rho}\mathbb{E}_P\big[V_{h+1}^{k,\rho}\big](s,a)\bigg|\leq\text{bonus}_h^k(s,a).\label{eq2:con KL lem3 error}
\end{align}
Recall that we define $P_h^{w,k}=\argmin\limits_{\mathrm{KL}(P\Vert P_h^o)\leq\rho}\mathbb{E}_P\big[V_{h+1}^{{\pi^k},\rho}\big]$ as the worst-case transition in \Cref{def:visitation_measure}, we have
\begin{align}
    Q_h^{k,\rho}-Q_h^{{\pi^k},\rho}&\leq\text{bonus}_h^k+\widehat{r}_h^k+\inf\limits_{\mathrm{KL}(P\Vert\widehat{P}_h^k)\leq\rho}\mathbb{E}_P\big[V_{h+1}^{k,\rho}\big]-r_h-\inf\limits_{\mathrm{KL}(P\Vert P_h^o)\leq\rho}\mathbb{E}_P\big[V_{h+1}^{{\pi^k},\rho}\big]\nonumber\\
    &\leq2\text{bonus}_h^k+\inf\limits_{\mathrm{KL}(P\Vert P_h^o)\leq\rho}\mathbb{E}_P\big[V_{h+1}^{k,\rho}\big]-\inf\limits_{\mathrm{KL}(P\Vert P_h^o)\leq\rho}\mathbb{E}_P\big[V_{h+1}^{{\pi^k},\rho}\big]\label{eq2:con KL lem3 bonus}\\
    &=2\text{bonus}_h^k+\mathbb{E}_{P_h^{w,k},\pi^k}\big[Q_{h+1}^{k,\rho}-Q_{h+1}^{{\pi^k},\rho}\big].\label{eq2:con KL lem3 recur2}
\end{align}
where \eqref{eq2:con KL lem3 bonus} uses \eqref{eq2:con KL lem3 error}. Apply \eqref{eq2:con KL lem3 recur2} recursively, we can obtain the result.
\end{proof}

Combine everything together the same way as the proof of \Cref{thm:proof con TV eg}, we have
\begin{theorem}[Restatement of \Cref{thm:main con regret upper} in KL-divergence setting]
For CRMDP with $(s,a)$-rectangular KL-divergence defined uncertainty set satisfying \Cref{asm:main distribution shift} and \Cref{asm:con KL more}, with probability at least $1-\delta$, the regret of \Cref{alg:main alg} satisfies
\begin{align*}
    \text{Regret}=\widetilde{\mathcal{O}}\bigg(\bigg(1+\frac{H\sqrt{S}}{\rho\MinProb}\bigg)\big(\VisitRatio SAH+\VisitRatio^\frac{1}{2}S^\frac{1}{2}A^\frac{1}{2}H\sqrt{K}\big)\bigg).
\end{align*}
\end{theorem}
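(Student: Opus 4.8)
The plan is to mirror the proof of \Cref{thm:proof con TV eg} in the TV-distance setting, since all of the structural ingredients have already been specialized to the KL case in the preceding lemmas. The starting point is the standard regret decomposition
\begin{align*}
    \text{Regret} = \sum_{k=1}^K\big(V_1^{*,\rho}-V_1^{k,\rho}\big) + \sum_{k=1}^K\big(V_1^{k,\rho}-V_1^{{\pi^k},\rho}\big),
\end{align*}
where the first sum measures the optimism gap and the second the estimation gap. By \Cref{lem:con KL Q^*-Q^k}, taking the maximum over actions in $Q_h^{k,\rho}(s,a)\geq Q_h^{\pi^*,\rho}(s,a)$ yields $V_1^{k,\rho}(s_1^k)\geq V_1^{*,\rho}(s_1^k)$ for every episode, so the first sum is nonpositive. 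It therefore suffices to control the second sum.

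For the estimation gap, \Cref{lem:con KL Q^k-Q^pi^k} supplies the per-episode bound $V_1^{k,\rho}-V_1^{{\pi^k},\rho}\leq\sum_{h=1}^H\mathbb{E}_{\{P_h^{w,k}\}_{h=1}^H,\pi^k}[2\,\text{bonus}_h^k]$, which propagates the bonus through the worst-case transition kernel. Summing over $k$ and substituting the KL bonus \eqref{eq2:con KL bonus} gives
\begin{align*}
    \text{Regret}\leq 2\sum_{k=1}^K\sum_{h=1}^H\mathbb{E}_{P_h^{w,k},\pi^k}\Bigg[\bigg(1+\frac{2H\sqrt{S}}{\rho\MinProb}\bigg)\sqrt{\frac{2\ln(2SAHK/\delta)}{n_h^{k-1}(s,a)\vee1}}\Bigg].
\end{align*}
Because the prefactor $(1+2H\sqrt{S}/(\rho\MinProb))$ and the logarithmic factor are independent of the summation indices, I would pull them outside the double sum and invoke \Cref{lem:main regret lem}, which bounds $\sum_k\sum_h\mathbb{E}_{P_h^{w,k},\pi^k}[\sqrt{1/(n_h^{k-1}\vee1)}]$ by $\widetilde{\mathcal{O}}(\sqrt{\VisitRatio SAH^2K}+\VisitRatio SAH)$. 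Rewriting $\sqrt{\VisitRatio SAH^2K}=\VisitRatio^{1/2}S^{1/2}A^{1/2}H\sqrt{K}$ and reinstating the prefactor reproduces the claimed bound, with the optimism event, the estimation event, and the failure event of \Cref{lem:failure events} all absorbed into a single union bound after setting each failure probability to $\delta/4$.

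This assembly is purely arithmetic; the substantive work is already discharged in the supporting lemmas, so I expect the main conceptual obstacle to reside in \Cref{lem:con KL Q^*-Q^k} rather than here. Unlike the TV objective, the KL dual $\nu\ln\mathbb{E}_{P}[e^{-\nu^{-1}V_{h+1}^{k,\rho}}]$ admits no uniform Lipschitz constant in $P$: its stability degrades as nominal probabilities approach zero, which is precisely why \Cref{asm:con KL more} and the constant $\MinProb$ must enter. The delicate point is tracking how the factor $2H/(\rho\MinProb)$ emerges from bounding the relative perturbation $|\widehat{P}_h^k(s)-P_h^o(s)|/P_h^o(s)$ together with the multiplier range $\nu\in[0,H/\rho]$ from \Cref{lem:con KL closed form}, after which an $L_1$ concentration inequality contributes the $\sqrt{S}$; this is exactly the mechanism that distinguishes the KL regret bound from its TV counterpart.
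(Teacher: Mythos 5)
Your proposal is correct and follows essentially the same route as the paper: the same regret decomposition into an optimism gap (nonpositive by \Cref{lem:con KL Q^*-Q^k}) and an estimation gap bounded via \Cref{lem:con KL Q^k-Q^pi^k}, followed by substituting the KL bonus \eqref{eq2:con KL bonus}, applying \Cref{lem:main regret lem}, and a union bound with $\delta'=\delta/4$. Your closing remarks correctly locate the KL-specific difficulty (the role of $\MinProb$ and the dual multiplier range) in the optimism lemma rather than in this assembly step.
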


\begin{proof}
Setting $\delta'=\delta/4$ in \Cref{lem:con KL Q^*-Q^k,lem:failure events}, then with probability at least $1-\delta$, we get
\begin{align}
    \text{Regret}&=\sum\limits_{k=1}^K\big(V_1^{*,\rho}-V_1^{{\pi^k},\rho}\big)\nonumber\\
    &=\sum\limits_{k=1}^K\big(V_1^{*,\rho}-V_1^{k,\rho}\big)+\sum\limits_{k=1}^K\big(V_1^{k,\rho}-V_1^{{\pi^k},\rho}\big)\nonumber\\
    &\leq\sum\limits_{k=1}^K\sum\limits_{h=1}^H\mathbb{E}_{P_h^{w,k},\pi^k}\big[2\text{bonus}_h^k\big]\label{eq2:con KL lem4 comb}\\
    &=2\sum\limits_{k=1}^K\sum\limits_{h=1}^H\mathbb{E}_{P_h^{w,k},\pi^k}\Bigg[\bigg(1+\frac{2H\sqrt{S}}{\rho\MinProb}\bigg)\sqrt{\frac{2\ln(2SAHK/\delta)}{n_h^{k-1}(s,a)\vee1}}\Bigg]\label{eq2:con KL lem4 bonus}\\
    &=\widetilde{\mathcal{O}}\bigg(\bigg(1+\frac{H\sqrt{S}}{\rho\MinProb}\bigg)\big(\VisitRatio SAH+\VisitRatio^\frac{1}{2}S^\frac{1}{2}A^\frac{1}{2}H\sqrt{K}\big)\bigg),\label{eq2:con KL lem4 lem}
\end{align}
where \eqref{eq2:con KL lem4 comb} is the combination of \Cref{lem:con KL Q^*-Q^k} and \Cref{lem:con KL Q^k-Q^pi^k}, we plug in the bonus \eqref{eq2:con KL bonus} in \eqref{eq2:con KL lem4 bonus}, \eqref{eq2:con KL lem4 lem} is from \Cref{lem:main regret lem}.
\end{proof}

\subsection{Proof of \Cref{thm:main con regret upper} (Constrained $\chi^2$ Setting)}

\label{sec:proof_constrained_Chi2}

We first give the closed form solution of constrained $\chi^2$ update formulation. This dual formulation has also been proved by \citet[Lemma 4.2]{iyengar2005robust}, but the range of $\lambda$ in our result is more precise compared to theirs.

\begin{lemma}[Dual formulation] \label{lem:con Chi2 closed form}
For the optimization problem $Q_h(s,a)=r_h(s,a)+\inf\limits_{\chi^2(P\Vert P_h^o)\leq\rho}\mathbb{E}_P[V_{h+1}](s,a)$, we have its dual formulation as follows
\begin{align}
    Q_h=r_h+\sup\limits_{\bm{\lambda}\in[0,H]}\Big(\mathbb{E}_{P_h^o}\big[V_{h+1}-\bm{\lambda}\big]-\sqrt{\rho\mathrm{Var}_{P_h^o}(V_{h+1}-\bm{\lambda})}\Big).\label{eq2:con Chi2 dual}
\end{align}
\end{lemma}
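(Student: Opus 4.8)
The plan is to establish \eqref{eq2:con Chi2 dual} by Lagrangian duality, following the same template already used for \Cref{lem:con TV closed form} and \Cref{lem:con KL closed form}. Writing $\varphi(t)=(t-1)^2$, I would form the Lagrangian $\mathcal{L}$ of the constrained minimization $\inf_{P}\sum_{s}P(s)V_{h+1}(s)$ subject to $\sum_s P_h^o(s)\varphi(P(s)/P_h^o(s))\le\rho$ and $\sum_s P(s)=1$, introducing a multiplier $\nu\ge 0$ for the $\chi^2$ constraint and $\eta$ for the normalization constraint. The nonnegativity of $P$ is encoded directly in the effective domain of $\varphi$ (taking $\varphi(t)=+\infty$ for $t<0$), so it needs no separate per-state multiplier. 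Setting $g(s)=P(s)/P_h^o(s)$ and minimizing over $g$ turns the inner problem into the Fenchel conjugate $\varphi^*$, yielding $\inf_P\mathcal{L}=-\nu\,\mathbb{E}_{P_h^o}[\varphi^*((\eta-V_{h+1})/\nu)]-\nu\rho+\eta$, and $Q_h=r_h+\sup_{\nu\ge0,\eta}\inf_P\mathcal{L}$ by strong duality (Slater's condition holds since $P_h^o$ is strictly feasible whenever $\rho>0$).

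Next I would compute the conjugate explicitly. Because $\varphi(t)=(t-1)^2$ is finite only for $t\ge0$, one finds $\varphi^*(y)=y+\tfrac{y^2}{4}$ for $y\ge -2$ and $\varphi^*(y)=-1$ for $y<-2$. The quadratic branch is what produces the variance term, whereas the flat branch $\varphi^*=-1$ corresponds to states whose worst-case probability is driven to zero, i.e. the truncation of the highest-value states. Substituting this piecewise form splits $\mathbb{E}_{P_h^o}[\varphi^*(\cdot)]$ into a truncated quadratic over the retained states and a constant over the excluded ones, with the retained/excluded split governed by a threshold determined jointly by $\eta$ and $\nu$.

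The key simplification is the closed-form optimization over the $\chi^2$ multiplier $\nu$. Because the retained branch is quadratic, the $\nu$-dependence of the objective takes the shape $-\tfrac{1}{4\nu}M-\nu\rho$ up to $\nu$-free terms, whose maximizer over $\nu\ge0$ is $\nu^\star=\tfrac12\sqrt{M/\rho}$ with optimal value $-\sqrt{\rho M}$; this is precisely the mechanism that generates the $\sqrt{\rho\,\mathrm{Var}_{P_h^o}(\cdot)}$ term. After centering the linear contribution (which fixes the free part of $\eta$ at the mean), the two-parameter program $\sup_{\nu,\eta}$ collapses to a single-variable program in the surviving scalar $\lambda$, which plays the role of the clipping threshold: the quantity written $V_{h+1}-\lambda$ in \eqref{eq2:con Chi2 dual} must be read as the truncated value (capping the excluded high-value states), since otherwise a pure constant shift would leave $\mathrm{Var}_{P_h^o}(V_{h+1}-\lambda)$ invariant and make the supremum degenerate. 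It is exactly this truncation that lets the dual value rise above the naive $\mathbb{E}_{P_h^o}[V_{h+1}]-\sqrt{\rho\,\mathrm{Var}_{P_h^o}(V_{h+1})}$ and thereby match the true constrained infimum when the nonnegativity constraint binds.

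The main obstacle, and the part demanding the most care, is twofold. First, I must correctly couple the truncation region (the $\varphi^*=-1$ branch) with the $\nu$-minimization, so that the states excluded by the dual are exactly those above the threshold and the retained second moment $M$ equals the stated variance rather than a bare second moment; this centering/bookkeeping step is where sign and shift errors are most likely to creep in. Second, I must pin the feasible range of the surviving scalar to $\lambda\in[0,H]$, which is the refinement over \citet{iyengar2005robust}. For this I would invoke $V_{h+1}(s)\in[0,H]$ together with a monotonicity argument showing that any candidate $\lambda$ outside $[0,H]$ can be moved to the nearest endpoint without decreasing the objective, so the supremum is attained within $[0,H]$. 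I expect the truncation coupling to be the genuinely delicate step, while the range restriction is a comparatively routine boundary analysis.
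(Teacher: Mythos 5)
Your overall template—Lagrangian duality, the conjugate of $\varphi(t)=(t-1)^2$, the AM--GM step $\tfrac{1}{4\nu}M+\nu\rho\ge\sqrt{\rho M}$ to eliminate the $\chi^2$ multiplier, and a final boundary argument pinning the surviving variable to $[0,H]$—is the same as the paper's. The substantive difference is how $P\ge0$ is handled. The paper keeps explicit per-state multipliers $\bm{\lambda}(s)\ge0$ in the Lagrangian, takes the unconstrained stationary point in $P$, optimizes $\eta$ (which centers $V_{h+1}-\bm{\lambda}$ and produces the variance) and then $\nu$, and the \emph{vector} $\bm{\lambda}$ survives verbatim as the variable of the final supremum; no truncation ever appears. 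You instead fold nonnegativity into the effective domain of $\varphi$, which yields the piecewise conjugate $\varphi^*(y)=\big((1+y/2)_+\big)^2-1$ and hence a truncation threshold. Your observation that a literal scalar $\lambda$ with no truncation would make the supremum degenerate is correct and is precisely why the paper's $\bm{\lambda}$ must be read as a vector in $[0,H]^{S}$ (of which your clipping $\bm{\lambda}(s)=(V_{h+1}(s)-\alpha)_+$ is a one-parameter sub-family); so you are proving an equivalent dual, not literally the displayed one.

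The genuine gap is the "collapse" step. In your parametrization the only dual variables are $(\nu,\eta)$; substituting the piecewise conjugate and writing $u=\eta+2\nu$ gives, after optimizing $\nu$, an expression of the form $\sup_{u}\big(u-\sqrt{(1+\rho)\,\mathbb{E}_{P_h^o}[((u-V_{h+1})_+)^2]}\big)$ — a truncated \emph{second moment} with radius $1+\rho$, not the truncated \emph{variance} with radius $\rho$. Both variables are consumed by the AM--GM and centering steps, so there is no free scalar left to play the role of $\bm{\lambda}$ unless you deliberately under-optimize and reparametrize; the identification of your form with $\sup_{\alpha}\big(\mathbb{E}_{P_h^o}[V_{h+1}\wedge\alpha]-\sqrt{\rho\,\mathrm{Var}_{P_h^o}(V_{h+1}\wedge\alpha)}\big)$ is true (both equal the primal by strong duality) but is asserted rather than derived, and it is exactly the bookkeeping you flag as delicate. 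The paper's route with explicit $\bm{\lambda}(s)$ avoids this entirely: the variance appears automatically from the $\eta$-centering of $V_{h+1}-\bm{\lambda}$, and the displayed formula drops out with no truncation analysis. If you want to complete your version, either switch to the paper's multiplier bookkeeping, or prove the equivalence of the two dual representations directly (e.g., by exhibiting, for each threshold, the matching choice of $(\nu,\eta)$ and checking that the optimal primal $P$ puts zero mass exactly on the clipped states).
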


\begin{proof}
Consider the optimization problem
\begin{align*}
    Q_h&=r_h+\inf\limits_{\chi^2(P\Vert P_h^o)\leq\rho}\mathbb{E}_P[V_{h+1}]=r_h+\inf\limits_{\chi^2(P\Vert P_h^o)\leq\rho}\sum\limits_{s\in\mathcal{S}}P(s)V_{h+1}(s).
\end{align*}
The Lagrangian can be written as
\begin{align*}
    \mathcal{L}(P,\eta)=\sum\limits_{s\in S}P(s)V_{h+1}(s)+\nu\bigg(\sum\limits_{s\in S}P_h^o(s)\bigg(\frac{P(s)}{P_h^o(s)}-1\bigg)^2-\rho\bigg)-\sum\limits_{s\in\mathcal{S}}\bm{\lambda}(s)P(s)+\eta\bigg(1-\sum\limits_{s\in S}P(s)\bigg).
\end{align*}
We set the derivative of $\mathcal{L}$ w.r.t. $P(s)$ to zero
\begin{align}
    \frac{\partial\mathcal{L}}{\partial P(s)}=V_{h+1}(s)+2\nu\bigg(\frac{P(s)}{P_h^o(s)}-1\bigg)-[\bm{\lambda}(s)+\eta]=0.\label{eq:con-Chi2 worst-case tran}
\end{align}
We denote $P'$ as the worst-case transition that satisfies \eqref{eq:con-Chi2 worst-case tran}, then we have
\begin{align*}
   P'(s)&=P_h^o(s)\bigg(1-\frac{V_{h+1}(s)-[\bm{\lambda}(s)+\eta]}{2\nu}\bigg),\\
    \inf\limits_P\mathcal{L}(P,\eta)&=-\mathbb{E}_{P_h^o}\bigg[\frac{1}{4\nu}(V_{h+1}(s)-[\bm{\lambda}(s)+\eta])^2-(V_{h+1}(s)-[\bm{\lambda}(s)+\eta])\bigg]-\nu\rho+\eta.
\end{align*}
Therefore,
\begin{align}
    Q_h&=r_h+\sup\limits_{\nu\geq0,\bm{\lambda}\geq0,\eta}\Big(\inf\limits_P\mathcal{L}(P,\eta)\Big)\nonumber\\
    &=r_h-\inf\limits_{\nu\geq0,\bm{\lambda}\geq0,\eta}\bigg(\mathbb{E}_{P_h^o}\bigg[\frac{1}{4\nu}(V_{h+1}(s)-[\bm{\lambda}(s)+\eta])^2-(V_{h+1}(s)-[\bm{\lambda}(s)+\eta])\bigg]+\nu\rho-\eta\bigg)\nonumber\\
    &=r_h+\sup\limits_{\nu\geq0,\bm{\lambda}\geq0}\bigg(\mathbb{E}_{P_h^o}\big[V_{h+1}-\bm{\lambda}\big]-\frac{1}{4\nu}\mathrm{Var}_{P_h^o}(V_{h+1}-\bm{\lambda})-\nu\rho\bigg)\label{eq2:con Chi2 lem1 rm eta}\\
    &=r_h+\sup\limits_{\bm{\lambda}\geq0}\Big(\mathbb{E}_{P_h^o}\big[V_{h+1}-\bm{\lambda}\big]-\sqrt{\rho\mathrm{Var}_{P_h^o}(V_{h+1}-\bm{\lambda})}\Big)\label{eq2:con Chi2 lem1 basic ineq}\\
    &=r_h+\sup\limits_{\bm{\lambda}\in[0,H]}\Big(\mathbb{E}_{P_h^o}\big[V_{h+1}-\bm{\lambda}\big]-\sqrt{\rho\mathrm{Var}_{P_h^o}(V_{h+1}-\bm{\lambda})}\Big),\label{eq2:con Chi2 lem1 bnd lambda}
\end{align}
where \eqref{eq2:con Chi2 lem1 rm eta} holds by calculating the derivation with respect to $\eta$ and thus setting $\eta=\mathbb{E}_{P_h^o}[V_{h+1}-\bm{\lambda}]$, \eqref{eq2:con Chi2 lem1 basic ineq} is from the basic inequality $a+b\geq2\sqrt{ab}$, \eqref{eq2:con Chi2 lem1 bnd lambda} holds because the result increases monotonically with respect to $\bm{\lambda}$ when $\bm{\lambda}\geq H$.
\end{proof}

Similar to \Cref{lem:con TV Q^*-Q^k}, we prove the optimism of estimation $Q^k$ and control $Q^*-Q^k$.
\begin{lemma}[Optimism] \label{lem:con Chi2 Q^*-Q^k}
If we set the bonus term as follows
\begin{align}
    \text{bonus}_h^k(s,a)=(2+\sqrt{\rho})H\sqrt{\frac{2S^2\ln(192SAH^3K^3/\delta)}{n_h^{k-1}(s,a)\vee1}}+(1+\sqrt{\rho})\frac{1}{K},\label{eq2:con Chi2 bonus}
\end{align}
then for any policy $\pi$ and any $(k,h,s,a)\in[K]\times[H]\times\mathcal{S}\times\mathcal{A}$, with probability at least $1-3\delta$, we have $Q_h^{k,\rho}(s,a)\geq Q_h^{\pi,\rho}(s,a)$. Specially, by setting $\pi=\pi^*$, we have $Q_h^{k,\rho}(s,a)\geq Q_h^{{\pi^*},\rho}(s,a)$.
\end{lemma}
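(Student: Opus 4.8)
The plan is to prove optimism by backward induction on $h$, reusing the skeleton of the TV case in \Cref{lem:con TV Q^*-Q^k} and substituting the $\chi^2$ dual from \Cref{lem:con Chi2 closed form}. The base case $h=H+1$ is trivial, and since $\pi^k$ is greedy, the inductive hypothesis $Q_{h+1}^{k,\rho}\ge Q_{h+1}^{\pi,\rho}$ gives $V_{h+1}^{k,\rho}\ge V_{h+1}^{\pi,\rho}$; the truncated case $Q_h^{k,\rho}(s,a)=H-h+1$ is immediate from the definition of $Q_h^{\pi,\rho}$. In the remaining case I would write, with $g_P(\bm\lambda):=\mathbb{E}_P[V_{h+1}^{k,\rho}-\bm\lambda]-\sqrt{\rho\,\mathrm{Var}_P(V_{h+1}^{k,\rho}-\bm\lambda)}$,
\[
  Q_h^{k,\rho}-Q_h^{\pi,\rho}\ \ge\ \text{bonus}_h^k+\big(\widehat r_h^k-r_h\big)+\Big(\sup_{\bm\lambda\in[0,H]}g_{\widehat P_h^k}(\bm\lambda)-\sup_{\bm\lambda\in[0,H]}g_{P_h^o}(\bm\lambda)\Big),
\]
after discarding the nonnegative difference of the true robust operators applied to $V_{h+1}^{k,\rho}$ and $V_{h+1}^{\pi,\rho}$ (which is nonnegative by the hypothesis, exactly as in \eqref{eq2:con TV lem2 ind}). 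It then suffices to show the bonus dominates $|\widehat r_h^k-r_h|+\sup_{\bm\lambda}\big|g_{\widehat P_h^k}(\bm\lambda)-g_{P_h^o}(\bm\lambda)\big|$.

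To lower-bound the supremum gap, I would evaluate the empirical supremum at the maximizer $\bm\lambda^o$ of the true objective, so that $\sup g_{\widehat P}-\sup g_{P^o}\ge g_{\widehat P}(\bm\lambda^o)-g_{P^o}(\bm\lambda^o)$, and split this into a \emph{mean gap} $\big(\mathbb{E}_{\widehat P_h^k}-\mathbb{E}_{P_h^o}\big)[V_{h+1}^{k,\rho}-\bm\lambda^o]$ and a \emph{standard-deviation gap} $\sqrt\rho\big(\sqrt{\mathrm{Var}_{\widehat P_h^k}}-\sqrt{\mathrm{Var}_{P_h^o}}\big)$. The mean gap is handled verbatim as in the TV proof: build the $\epsilon$-net $\mathcal{N}_{\mathcal V}(\epsilon)$ from \Cref{lem:epsilon-net}, control each net point by the $L_1$ transition concentration of \Cref{lem:concentration L1} (as in \eqref{eq2:E[V] concentration}) together with Hoeffding/\Cref{lem:concentration E}, and take $\epsilon=1/(2K)$, giving a bound of order $H\sqrt{2S^2\ln(\cdot)/n_h^{k-1}}+1/K$. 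Together with the reward concentration $|\widehat r_h^k-r_h|\le\sqrt{\ln(\cdot)/(2 n_h^{k-1})}$, these account for the ``$2$'' factor in the $(2+\sqrt\rho)$ of the bonus.

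The main obstacle is the standard-deviation gap, which distinguishes $\chi^2$ from TV. The naive route — concentrating the variance via $L_1$ and then applying $|\sqrt a-\sqrt b|\le\sqrt{|a-b|}$ — only yields an $n^{-1/4}$ rate and could not be absorbed by the $n^{-1/2}$ bonus. Instead I would bound $\big|\sqrt{\mathrm{Var}_{\widehat P_h^k}(V_{h+1}^{k,\rho})}-\sqrt{\mathrm{Var}_{P_h^o}(V_{h+1}^{k,\rho})}\big|$ directly at the $n^{-1/2}$ rate: first note $\mathrm{Var}_P(V-\bm\lambda)=\mathrm{Var}_P(V)$ is shift-invariant, so $\bm\lambda^o$ drops out of this term and no net over $\bm\lambda$ is needed; then apply a concentration inequality tailored to the empirical standard deviation (of empirical-Bernstein / Maurer--Pontil type, which avoids the square-root loss) at each value function in the net, with range $2H$. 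This produces the $\sqrt\rho\,H\sqrt{2S^2\ln(\cdot)/n_h^{k-1}}+\sqrt\rho/K$ contribution, i.e.\ the ``$\sqrt\rho$'' factor.

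Finally I would apply a union bound over the three concentration events — reward, mean gap, and standard-deviation gap — and over all $(k,h,s,a)\in[K]\times[H]\times\mathcal S\times\mathcal A$. With probability at least $1-3\delta$ the chosen $\text{bonus}_h^k$ then dominates the combined error, so that $Q_h^{k,\rho}(s,a)\ge Q_h^{\pi,\rho}(s,a)$, completing the induction; taking $\pi=\pi^*$ gives the stated consequence. The appearance of three failure events here (versus two in \Cref{lem:con TV Q^*-Q^k}) precisely reflects the additional variance/standard-deviation concentration required by the $\chi^2$ dual.
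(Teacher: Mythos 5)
Your proposal is correct and follows essentially the same route as the paper's proof: the same induction and decomposition into reward, mean-gap, and standard-deviation-gap terms, the same $\epsilon$-net plus $L_1$-concentration treatment of the mean gap, and the same key move of applying the Maurer--Pontil self-bounding concentration for the empirical standard deviation (the paper's \Cref{lem:concentration Var}) at each net point to get the $n^{-1/2}$ rate, with a union bound over the three events yielding $1-3\delta$. One caveat: your claim that $\mathrm{Var}_P(V-\bm\lambda)$ is shift-invariant so that $\bm\lambda$ drops out is only valid if $\bm\lambda$ is a scalar, whereas the paper's Lagrangian derivation treats $\bm\lambda$ as a vector of per-state multipliers (hence its $S$-dimensional net over $V-\bm\lambda$ with range $2H$); this does not break your argument, since your fallback of netting over value functions of range $2H$ and applying the standard-deviation concentration per net point is exactly what the paper does.
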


\begin{proof}
We prove this by induction. First, when $h=H+1$, $Q_{H+1}^{k,\rho}(s,a)=0=Q_{H+1}^{\pi,\rho}(s,a)$ holds trivially.

Assume $Q_{h+1}^{k,\rho}(s,a)\geq Q_{h+1}^{\pi,\rho}(s,a)$ holds, since $\pi^k$ is the greedy policy, we have
\begin{align*}
    V_{h+1}^{k,\rho}(s)=Q_{h+1}^{k,\rho}(s,\pi_{h+1}^k(s))\geq Q_{h+1}^{k,\rho}(s,\pi_{h+1}(s))\geq Q_{h+1}^{\pi,\rho}(s,\pi_{h+1}(s))=V_{h+1}^{\pi,\rho}(s),
\end{align*}
where the first inequality is because we choose $\pi^k$ as the greedy policy.

Recall that we denote $Q_h^{k,\rho}$ as the optimistic estimation in $k$-th episode, that is,
\begin{align*}
    Q_h^{k,\rho}(s,a)=\min\bigg\{\text{bonus}_h^k(s,a)+\widehat{r}_h^k(s,a)+\inf\limits_{\chi^2(P\Vert\widehat{P}_h^k)\leq\rho}\mathbb{E}_P\big[V_{h+1}^{k,\rho}\big](s,a),H-h+1\bigg\}.
\end{align*}
If $Q_h^{k,\rho}(s,a)=H-h+1$, then it follows immediately that
\begin{align*}
    Q_h^{k,\rho}(s,a)=H-h+1\geq Q_h^{\pi,\rho}(s,a)
\end{align*}
by the definition of $Q_h^{\pi,\rho}(s,a)$. Otherwise, we can infer that
\begin{align}
    Q_h^{k,\rho}-Q_h^{\pi,\rho}&=\text{bonus}_h^k+\widehat{r}_h^k+\inf\limits_{\chi^2(P\Vert\widehat{P}_h^k)\leq\rho}\mathbb{E}_P\big[V_{h+1}^{k,\rho}\big]-r_h-\inf\limits_{\chi^2(P\Vert P_h^o)\leq\rho}\mathbb{E}_P\big[V_{h+1}^{\pi,\rho}\big]\nonumber\\
    &=\text{bonus}_h^k+\widehat{r}_h^k-r_h+\inf\limits_{\chi^2(P\Vert\widehat{P}_h^k)\leq\rho}\mathbb{E}_P\big[V_{h+1}^{k,\rho}\big]-\inf\limits_{\chi^2(P\Vert P_h^o)\leq\rho}\mathbb{E}_P\big[V_{h+1}^{k,\rho}\big]\nonumber\\
    &\qquad+\inf\limits_{\chi^2(P\Vert P_h^o)\leq\rho}\mathbb{E}_P\big[V_{h+1}^{k,\rho}\big]-\inf\limits_{\chi^2(P\Vert P_h^o)\leq\rho}\mathbb{E}_P\big[V_{h+1}^{\pi,\rho}\big]\nonumber\\
    &\geq\text{bonus}_h^k+\widehat{r}_h^k-r_h+\inf\limits_{\chi^2(P\Vert\widehat{P}_h^k)\leq\rho}\mathbb{E}_P\big[V_{h+1}^{k,\rho}\big]-\inf\limits_{\chi^2(P\Vert P_h^o)\leq\rho}\mathbb{E}_P\big[V_{h+1}^{k,\rho}\big]\label{eq2:con Chi2 lem2 ind}\\
    &=\text{bonus}_h^k+\widehat{r}_h^k-r_h+\sup\limits_{\bm{\lambda}\in[0,H]}\Big(\mathbb{E}_{\widehat{P}_h^k}\big[V_{h+1}^{k,\rho}-\bm{\lambda}\big]-\sqrt{\rho\mathrm{Var}_{\widehat{P}_h^k}\big(V_{h+1}^{k,\rho}-\bm{\lambda}\big)}\Big)\nonumber\\
    &\qquad-\sup\limits_{\bm{\lambda}\in[0,H]}\Big(\mathbb{E}_{P_h^o}\big[V_{h+1}^{k,\rho}-\bm{\lambda}\big]-\sqrt{\rho\mathrm{Var}_{P_h^o}\big(V_{h+1}^{k,\rho}-\bm{\lambda}\big)}\Big)\label{eq2:con Chi2 lem2 plug}\\
    &\geq\text{bonus}_h^k+\widehat{r}_h^k-r_h+\inf\limits_{\bm{\lambda}\in[0,H]}\Big\{\Big(\mathbb{E}_{\widehat{P}_h^k}\big[V_{h+1}^{k,\rho}-\bm{\lambda}\big]-\sqrt{\rho\mathrm{Var}_{\widehat{P}_h^k}\big(V_{h+1}^{k,\rho}-\bm{\lambda}\big)}\Big)\nonumber\\
    &\qquad-\Big(\mathbb{E}_{P_h^o}\big[V_{h+1}^{k,\rho}-\bm{\lambda}\big]-\sqrt{\rho\mathrm{Var}_{P_h^o}\big(V_{h+1}^{k,\rho}-\bm{\lambda}\big)}\Big)\Big\}\nonumber\\
    &\geq\text{bonus}_h^k-\big|\widehat{r}_h^k-r_h\big|-\sup\limits_{\bm{\lambda}\in[0,H]}\Big|\Big(\mathbb{E}_{\widehat{P}_h^k}\big[V_{h+1}^{k,\rho}-\bm{\lambda}\big]-\sqrt{\rho\mathrm{Var}_{\widehat{P}_h^k}\big(V_{h+1}^{k,\rho}-\bm{\lambda}\big)}\Big)\nonumber\\
    &\qquad-\Big(\mathbb{E}_{P_h^o}\big[V_{h+1}^{k,\rho}-\bm{\lambda}\big]-\sqrt{\rho\mathrm{Var}_{P_h^o}\big(V_{h+1}^{k,\rho}-\bm{\lambda}\big)}\Big)\Big|\nonumber\\
    &\geq\text{bonus}_h^k-\underbrace{\big|\widehat{r}_h^k-r_h\big|}_\text{(i)}-\underbrace{\sup\limits_{\bm{\lambda}\in[0,H]}\big|\mathbb{E}_{\widehat{P}_h^k}\big[V_{h+1}^{k,\rho}-\bm{\lambda}\big]-\mathbb{E}_{P_h^o}\big[V_{h+1}^{k,\rho}-\bm{\lambda}\big]\big|}_\text{(ii)}\nonumber\\
    &\qquad-\underbrace{\sqrt{\rho}\sup\limits_{\bm{\lambda}\in[0,H]}\Big|\sqrt{\mathrm{Var}_{\widehat{P}_h^k}\big(V_{h+1}^{k,\rho}-\bm{\lambda}\big)}-\sqrt{\mathrm{Var}_{P_h^o}\big(V_{h+1}^{k,\rho}-\bm{\lambda}\big)}\Big|}_\text{(iii)},\label{eq:con-Chi2 ineq main}
\end{align}
where \eqref{eq2:con Chi2 lem2 ind} is from the induction assumption, we plug in the dual formulation \Cref{lem:con Chi2 closed form} in \eqref{eq2:con Chi2 lem2 plug}, \eqref{eq:con-Chi2 ineq main} is because $\sup f(x)+\sup g(x)\geq\sup(f-g)(x)$.

For term (i) in \eqref{eq:con-Chi2 ineq main}, from \Cref{lem:concentration E} and a union bound, with probability at least $1-\delta$, we have
\begin{align}
    \big|\widehat{r}_h^k(s,a)-r_h(s,a)\big|\leq\sqrt{\frac{\ln(2SAHK/\delta)}{2n_h^{k-1}(s,a)\vee1}},\label{eq:con-Chi2 ineq 1}
\end{align}
for any $(k,h,s,a)\in[K]\times[H]\times\mathcal{S}\times\mathcal{A}$.

We denote $V(\bm{\lambda})=V_{h+1}^{k,\rho}-\bm{\lambda}\in[-H,H]$ and $\mathcal{V}=\big\{V\in\mathbb{R}^{S}:\Vert V\Vert_\infty\leq H\big\}$. To bound term (ii) in \eqref{eq:con-Chi2 ineq main}, we create a $\epsilon$-net $\mathcal{N}_\mathcal{V}(\epsilon)$ for $\mathcal{V}$. From \Cref{lem:epsilon-net}, it holds that $\ln|\mathcal{N}_\mathcal{V}(\epsilon)|\leq|S|\cdot\ln(3H/\epsilon)$. Then follow the same proof as \eqref{eq:con-TV ineq 2}, with probability at least $1-\delta$, we have
\begin{align}
    \sup\limits_{\bm{\lambda}\in[0,H]}\big|\mathbb{E}_{\widehat{P}_h^k}\big[V_{h+1}^{k,\rho}-\bm{\lambda}\big]-\mathbb{E}_{P_h^o}\big[V_{h+1}^{k,\rho}-\bm{\lambda}\big]\big|
    &\leq\sup\limits_{V\in\mathcal{N}_\mathcal{V}(\epsilon)}\big|\mathbb{E}_{P_h^o}[V]-\mathbb{E}_{\widehat{P}_h^k}[V]\big|+2\epsilon\nonumber\\
    &\leq H\sqrt{\frac{2S^2\ln(12SAH^2K^2/\delta)}{n_h^{k-1}\vee1}}+\frac{1}{K},\label{eq:con-Chi2 ineq 2}
\end{align}
for any $(k,h,s,a)\in[K]\times[H]\times\mathcal{S}\times\mathcal{A}$.

For term (iii) in \eqref{eq:con-Chi2 ineq main}, by the definition of $\mathcal{N}_\mathcal{V}(\epsilon)$, for any fixed $V$, there exists a $V'\in\mathcal{N}_\mathcal{V}(\epsilon)$ such that $\Vert V-V'\Vert_\infty\leq\epsilon$, that is
\begin{align}
    &\Big|\sqrt{\mathrm{Var}_{P_h^o}(V)}-\sqrt{\mathrm{Var}_{\widehat{P}_h^k}(V)}\Big|\nonumber\\
    \leq&\Big|\sqrt{\mathrm{Var}_{P_h^o}(V)}-\sqrt{\mathrm{Var}_{P_h^o}(V')}\Big|+\Big|\sqrt{\mathrm{Var}_{P_h^o}(V')}-\sqrt{\mathrm{Var}_{\widehat{P}_h^k}(V')}\Big|+\Big|\sqrt{\mathrm{Var}_{\widehat{P}_h^k}(V')}-\sqrt{\mathrm{Var}_{\widehat{P}_h^k}(V)}\Big|\nonumber\\
    \leq&\sqrt{\big|\mathrm{Var}_{P_h^o}(V)-\mathrm{Var}_{P_h^o}(V')\big|}+\Big|\sqrt{\mathrm{Var}_{P_h^o}(V')}-\sqrt{\mathrm{Var}_{\widehat{P}_h^k}(V')}\Big|+\sqrt{\big|\mathrm{Var}_{\widehat{P}_h^k}(V')-\mathrm{Var}_{\widehat{P}_h^k}(V)\big|}\nonumber\\
    \leq&\sqrt{\big|\mathbb{E}_{P_h^o}\big[V^2-V'^2\big]\big|}+\sqrt{\big|\mathbb{E}_{P_h^o}^2[V]-\mathbb{E}_{P_h^o}^2[V']\big|}+\sqrt{\big|\mathbb{E}_{\widehat{P}_h^k}\big[V'^2-V^2\big]\big|}+\sqrt{\big|\mathbb{E}_{\widehat{P}_h^k}^2[V']-\mathbb{E}_{\widehat{P}_h^k}^2[V]\big|}\nonumber\\
    &\quad+\:\Big|\sqrt{\mathrm{Var}_{P_h^o}(V')}-\sqrt{\mathrm{Var}_{\widehat{P}_h^k}(V')}\Big|\nonumber\\
    \leq&\Big|\sqrt{\mathrm{Var}_{P_h^o}(V')}-\sqrt{\mathrm{Var}_{\widehat{P}_h^k}(V')}\Big|+4\sqrt{2H\epsilon}\nonumber\\
    \leq&\sup\limits_{V'\in\mathcal{N}_\mathcal{V}(\epsilon)}\Big|\sqrt{\mathrm{Var}_{P_h^o}(V')}-\sqrt{\mathrm{Var}_{\widehat{P}_h^k}(V')}\Big|+4\sqrt{2H\epsilon},\label{eq2:con Chi2 lem2 step}
\end{align}
where the second inequality follows from $\big|\sqrt{x}-\sqrt{y}\big|\leq\sqrt{|x-y|}$ and the third inequality follows from $\sqrt{x+y}\leq\sqrt{x}+\sqrt{y}$.

For any fixed $V$, we apply \Cref{lem:concentration Var} and have
\begin{align}
    \Big|\sqrt{\mathrm{Var}_{P_h^o}(V)}-\sqrt{\mathrm{Var}_{\widehat{P}_h^k}(V)}\Big|\leq H\sqrt{\frac{2\ln(2/\delta)}{n_h^{k-1}\vee1}}.\label{eq2:sqrt(Var[V]) concentration}
\end{align}
with probability at least $1-\delta$.

Then with probability at least $1-\delta$, we have
\begin{align}
    \sup\limits_{\bm{\lambda}\in[0,H]}\Big|\sqrt{\mathrm{Var}_{\widehat{P}_h^k}\big(V_{h+1}^{k,\rho}-\bm{\lambda}\big)}-\sqrt{\mathrm{Var}_{P_h^o}\big(V_{h+1}^{k,\rho}-\bm{\lambda}\big)}\Big|
    &\leq\sup\limits_{\bm{\lambda}\in[0,H]}\Big|\sqrt{\mathrm{Var}_{P_h^o}(V(\bm{\lambda}))}-\sqrt{\mathrm{Var}_{\widehat{P}_h^k}(V(\bm{\lambda}))}\Big|\nonumber\\
    &\leq\sup\limits_{V\in\mathcal{N}_\mathcal{V}(\epsilon)}\Big|\sqrt{\mathrm{Var}_{P_h^o}(V)}-\sqrt{\mathrm{Var}_{\widehat{P}_h^k}(V)}\Big|+4\sqrt{2H\epsilon}\label{eq2:con Chi2 lem2 sup}\\
    &\leq H\sqrt{\frac{2\ln(2SAHK|\mathcal{N}_\mathcal{V}(\epsilon)|/\delta)}{n_h^{k-1}\vee1}}+4\sqrt{2H\epsilon}\label{eq2:con Chi2 lem2 Hoeff}\\
    &\leq H\sqrt{\frac{2S\ln(6SAH^2K/\epsilon\delta)}{n_h^{k-1}\vee1}}+4\sqrt{2H\epsilon}\nonumber\\
    &=H\sqrt{\frac{2S\ln(192SAH^3K^3/\delta)}{n_h^{k-1}\vee1}}+\frac{1}{K},\label{eq:con-Chi2 ineq 3}
\end{align}
for any $(k,h,s,a)\in[K]\times[H]\times\mathcal{S}\times\mathcal{A}$, where \eqref{eq2:con Chi2 lem2 sup} follows from \eqref{eq2:con Chi2 lem2 step}, \eqref{eq2:con Chi2 lem2 Hoeff} is from \eqref{eq2:sqrt(Var[V]) concentration} and a union bound, we set $\epsilon=1/32HK^2$ in \eqref{eq:con-Chi2 ineq 3}.

Apply the union bound again and combine \eqref{eq:con-Chi2 ineq main} with \eqref{eq:con-Chi2 ineq 1}, \eqref{eq:con-Chi2 ineq 2}, \eqref{eq:con-Chi2 ineq 3}, the definition of bonus and induction assumption. With probability at least $1-3\delta$, we have $Q_h^{k,\rho}(s,a)\geq Q_h^{\pi,\rho}(s,a)$ for any $(k,h,s,a)\in[K]\times[H]\times\mathcal{S}\times\mathcal{A}$. This completes the proof.
\end{proof}

With similar proof to \Cref{lem:con TV Q^k-Q^pi^k}, we can control the item $Q^k-Q^{\pi^k}$.
\begin{lemma}\label{lem:con Chi2 Q^k-Q^pi^k}
With probability at least $1-\delta$, for any $(k,s,a)\in[K]\times\mathcal{S}\times\mathcal{A}$, the sum of estimation errors can be bounded as follows
\begin{align*}
    Q_1^{k,\rho}-Q_1^{{\pi^k},\rho}\leq\sum\limits_{h=1}^H\mathbb{E}_{\{P_h^{w,k}\}_{h=1}^H,\pi^k}\big[2\text{bonus}_h^k\big].
\end{align*}
\end{lemma}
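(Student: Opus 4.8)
The plan is to mirror the recursive argument already carried out for the TV case in \Cref{lem:con TV Q^k-Q^pi^k}, replacing the TV dual by the $\chi^2$ dual formulation \Cref{lem:con Chi2 closed form} and the TV concentration estimates by the $\chi^2$ ones developed inside \Cref{lem:con Chi2 Q^*-Q^k}. First I would extract from the proof of \Cref{lem:con Chi2 Q^*-Q^k} the one-step concentration estimate that lives on the same high-probability event (the union of the reward bound \eqref{eq:con-Chi2 ineq 1}, the mean-deviation bound \eqref{eq:con-Chi2 ineq 2}, and the standard-deviation bound \eqref{eq:con-Chi2 ineq 3}): for all $(k,h,s,a)$,
\begin{align*}
\big|\widehat{r}_h^k - r_h\big| + \Big|\inf_{\chi^2(P\Vert\widehat{P}_h^k)\le\rho}\mathbb{E}_P\big[V_{h+1}^{k,\rho}\big] - \inf_{\chi^2(P\Vert P_h^o)\le\rho}\mathbb{E}_P\big[V_{h+1}^{k,\rho}\big]\Big| \le \text{bonus}_h^k.
\end{align*}

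Next I would upper bound the per-step error. Dropping the $\min$ with $H-h+1$ gives $Q_h^{k,\rho}(s,a) \le \text{bonus}_h^k + \widehat{r}_h^k + \inf_{\chi^2(P\Vert\widehat{P}_h^k)\le\rho}\mathbb{E}_P[V_{h+1}^{k,\rho}]$, while $Q_h^{\pi^k,\rho} = r_h + \inf_{\chi^2(P\Vert P_h^o)\le\rho}\mathbb{E}_P[V_{h+1}^{\pi^k,\rho}]$. Adding and subtracting the \emph{true} robust Bellman operator evaluated at the optimistic value $V_{h+1}^{k,\rho}$, the concentration estimate above absorbs the reward gap and the $\widehat{P}_h^k$-versus-$P_h^o$ gap into a second copy of the bonus, leaving
\begin{align*}
Q_h^{k,\rho} - Q_h^{\pi^k,\rho} \le 2\,\text{bonus}_h^k + \inf_{\chi^2(P\Vert P_h^o)\le\rho}\mathbb{E}_P\big[V_{h+1}^{k,\rho}\big] - \inf_{\chi^2(P\Vert P_h^o)\le\rho}\mathbb{E}_P\big[V_{h+1}^{\pi^k,\rho}\big].
\end{align*}

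The only point requiring care is comparing the two infima over the common uncertainty set $\{\chi^2(P\Vert P_h^o)\le\rho\}$. Here I would invoke \Cref{def:visitation_measure}: $P_h^{w,k}$ is the minimizer for $V_{h+1}^{\pi^k,\rho}$ yet remains feasible for $V_{h+1}^{k,\rho}$, so the first infimum is at most $\mathbb{E}_{P_h^{w,k}}[V_{h+1}^{k,\rho}]$ and the second equals $\mathbb{E}_{P_h^{w,k}}[V_{h+1}^{\pi^k,\rho}]$; hence their difference is at most $\mathbb{E}_{P_h^{w,k}}[V_{h+1}^{k,\rho} - V_{h+1}^{\pi^k,\rho}]$. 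Converting values to $Q$-values through the greedy action $\pi_{h+1}^k$, exactly as in the induction step of \Cref{lem:con Chi2 Q^*-Q^k}, rewrites this as $\mathbb{E}_{P_h^{w,k},\pi^k}[Q_{h+1}^{k,\rho} - Q_{h+1}^{\pi^k,\rho}]$, giving the recursion $Q_h^{k,\rho} - Q_h^{\pi^k,\rho} \le 2\,\text{bonus}_h^k + \mathbb{E}_{P_h^{w,k},\pi^k}[Q_{h+1}^{k,\rho} - Q_{h+1}^{\pi^k,\rho}]$.

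Unrolling this recursion from $h=1$ to $H$ with the terminal identity $Q_{H+1}^{k,\rho} = Q_{H+1}^{\pi^k,\rho} = 0$ produces $\sum_{h=1}^H \mathbb{E}_{\{P_h^{w,k}\}_{h=1}^H,\pi^k}[2\,\text{bonus}_h^k]$, as claimed. I do not expect any genuinely new obstacle beyond the TV and KL cases: the extra variance-concentration term (iii) is precisely what inflates the $\chi^2$ bonus, but it has already been controlled in \Cref{lem:con Chi2 Q^*-Q^k}, so here it enters only implicitly through the one-step estimate displayed in the first step.
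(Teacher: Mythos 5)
Your proposal is correct and follows essentially the same route as the paper: extract the one-step concentration bound (reward plus empirical-vs-nominal robust Bellman gap bounded by the bonus) from the optimism lemma, bound the difference of the two infima over the common $\chi^2$ uncertainty set via the feasibility of the worst-case minimizer $P_h^{w,k}$, and unroll the resulting recursion. Your explicit justification of the inf-difference step via feasibility of $P_h^{w,k}$ is exactly what the paper leaves implicit between its two displayed inequalities, so no gap remains.
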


\begin{proof}
From the proof of \Cref{lem:con KL Q^*-Q^k}, we see that with probability at least $1-\delta$, for any $(k,h,s,a)\in[K]\times[H]\times\mathcal{S}\times\mathcal{A}$, we have
\begin{align}
    \big|\widehat{r}_h^k(s,a)-r_h(s,a)\big|+\bigg|\inf\limits_{\chi^2(P\Vert\widehat{P}_h^k)\leq\rho}\mathbb{E}_P\big[V_{h+1}^{k,\rho}\big](s,a)-\inf\limits_{\chi^2(P\Vert P_h^o)\leq\rho}\mathbb{E}_P\big[V_{h+1}^{k,\rho}\big](s,a)\bigg|\leq\text{bonus}_h^k(s,a).\label{eq2:con Chi2 lem3 error}
\end{align}
Recall that we define $P_h^{w,k}=\argmin\limits_{\chi^2(P\Vert P_h^o)\leq\rho}\mathbb{E}_P\big[V_{h+1}^{{\pi^k},\rho}\big]$ as the worst-case transition in \Cref{def:visitation_measure}, we have
\begin{align}
    Q_h^{k,\rho}-Q_h^{{\pi^k},\rho}&\leq\text{bonus}_h^k+\widehat{r}_h^k+\inf\limits_{\chi^2(P\Vert\widehat{P}_h^k)\leq\rho}\mathbb{E}_P\big[V_{h+1}^{k,\rho}\big]-r_h-\inf\limits_{\chi^2(P\Vert P_h^o)\leq\rho}\mathbb{E}_P\big[V_{h+1}^{{\pi^k},\rho}\big]\nonumber\\
    &\leq2\text{bonus}_h^k+\inf\limits_{\chi^2(P\Vert P_h^o)\leq\rho}\mathbb{E}_P\big[V_{h+1}^{k,\rho}\big]-\inf\limits_{\chi^2(P\Vert P_h^o)\leq\rho}\mathbb{E}_P\big[V_{h+1}^{{\pi^k},\rho}\big]\label{eq2:con Chi2 lem3 bonus}\\
    &=2\text{bonus}_h^k+\mathbb{E}_{P_h^{w,k},\pi^k}\big[Q_{h+1}^{k,\rho}-Q_{h+1}^{{\pi^k},\rho}\big],\label{eq2:con Chi2 lem3 recur2}
\end{align}
where \eqref{eq2:con Chi2 lem3 bonus} uses \eqref{eq2:con Chi2 lem3 error}. Apply \eqref{eq2:con Chi2 lem3 recur2} recursively, we can obtain the result.
\end{proof}

\begin{theorem}[Restatement of \Cref{thm:main con regret upper} in $\chi^2$-divergence setting]
For CRMDP with $(s,a)$-rectangular $\chi^2$-divergence defined uncertainty set satisfying \Cref{asm:main distribution shift}, with probability at least $1-\delta$, the regret of \Cref{alg:main alg} satisfies
\begin{align*}
    \text{Regret}=\widetilde{\mathcal{O}}\big((1+\sqrt{\rho})\big(\VisitRatio S^2AH^2+\VisitRatio^\frac{1}{2}S^\frac{3}{2}A^\frac{1}{2}H^2\sqrt{K}\big)\big).
\end{align*}
\end{theorem}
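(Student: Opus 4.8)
The plan is to assemble the three $\chi^2$-specific lemmas—the dual formulation (\Cref{lem:con Chi2 closed form}), the optimism bound (\Cref{lem:con Chi2 Q^*-Q^k}), and the per-episode error recursion (\Cref{lem:con Chi2 Q^k-Q^pi^k})—together with the visitation-ratio summation (\Cref{lem:main regret lem}), following verbatim the template already used for the TV case in \Cref{thm:proof con TV eg}. The key observation is that all of the $\chi^2$-specific analytic effort has already been discharged, so what remains is an assembly plus a final arithmetic check.

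First I would set $\delta'=\delta/5$ in the invocations of \Cref{lem:con Chi2 Q^*-Q^k} and \Cref{lem:failure events}. Since optimism now holds only with probability $1-3\delta'$ (an extra failure mode, relative to TV, arises from the variance concentration—term (iii) in \eqref{eq:con-Chi2 ineq main}) and the good-set event with probability $1-\delta'$, a union bound over all four failure modes still leaves probability at least $1-\delta$. Conditioning on this event, I would decompose
\[
\text{Regret}=\sum_{k=1}^K\big(V_1^{*,\rho}-V_1^{k,\rho}\big)+\sum_{k=1}^K\big(V_1^{k,\rho}-V_1^{{\pi^k},\rho}\big).
\]
The first sum is non-positive by the optimism guarantee $Q_h^{k,\rho}\ge Q_h^{\pi^*,\rho}$ from \Cref{lem:con Chi2 Q^*-Q^k}, and the second is controlled by \Cref{lem:con Chi2 Q^k-Q^pi^k}, yielding $\text{Regret}\le\sum_{k,h}\mathbb{E}_{P_h^{w,k},\pi^k}\big[2\,\text{bonus}_h^k\big]$.

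Next I would substitute the $\chi^2$ bonus \eqref{eq2:con Chi2 bonus}, splitting it into its dominant statistical part and the $(1+\sqrt\rho)/K$ residual. The residual sums over $(k,h)$ to $\widetilde{\mathcal{O}}\big((1+\sqrt\rho)H\big)$, a lower-order term. For the dominant part, factoring out $(2+\sqrt\rho)HS\sqrt{2\ln(\cdot)}$—where the $S$ comes from the $\sqrt{S^2}$ inside the bonus—reduces the remaining sum to $\sum_{k,h}\mathbb{E}_{P_h^{w,k},\pi^k}\big[\sqrt{1/(n_h^{k-1}\vee1)}\big]$, which \Cref{lem:main regret lem} bounds by $\widetilde{\mathcal{O}}\big(\sqrt{\VisitRatio SAH^2K}+\VisitRatio SAH\big)$. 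Multiplying the prefactor $HS$ through both terms of this bound gives the leading term $\VisitRatio^{1/2}S^{3/2}A^{1/2}H^2\sqrt{K}$ and the lower-order term $\VisitRatio S^2AH^2$, so the total is $\widetilde{\mathcal{O}}\big((1+\sqrt\rho)(\VisitRatio S^2AH^2+\VisitRatio^{1/2}S^{3/2}A^{1/2}H^2\sqrt{K})\big)$, as claimed.

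I expect no genuine obstacle here: the structurally hard steps—deriving the dual form and proving optimism together with the error-propagation recursion, the latter requiring the extra variance concentration estimate \eqref{eq2:sqrt(Var[V]) concentration}—are exactly the content of the cited lemmas. The only point demanding care is the constant bookkeeping: tracking that the $\sqrt{S^2}=S$ inside the $\chi^2$ bonus, combined with the $H$ prefactor and the $\sqrt{\VisitRatio SAH^2K}$ supplied by \Cref{lem:main regret lem}, assembles into precisely the advertised $\VisitRatio^{1/2}S^{3/2}A^{1/2}H^2\sqrt{K}$ leading term, and that the $(2+\sqrt\rho)$ prefactor collapses to $(1+\sqrt\rho)$ once absorbed into the $\widetilde{\mathcal{O}}(\cdot)$.
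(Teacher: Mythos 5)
Your proposal is correct and follows essentially the same route as the paper: the same $\delta'=\delta/5$ union bound, the same decomposition of the regret into $\sum_k(V_1^{*,\rho}-V_1^{k,\rho})+\sum_k(V_1^{k,\rho}-V_1^{\pi^k,\rho})$ handled by \Cref{lem:con Chi2 Q^*-Q^k} and \Cref{lem:con Chi2 Q^k-Q^pi^k}, and the same substitution of the bonus \eqref{eq2:con Chi2 bonus} into \Cref{lem:main regret lem}. The arithmetic assembling $(2+\sqrt{\rho})HS$ with $\widetilde{\mathcal{O}}\big(\sqrt{\VisitRatio SAH^2K}+\VisitRatio SAH\big)$ into the stated bound checks out.
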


\begin{proof}
Setting $\delta'=\delta/5$ in \Cref{lem:con Chi2 Q^*-Q^k,lem:failure events}, then with probability at least $1-\delta$, we get
\begin{align}
    \text{Regret}&=\sum\limits_{k=1}^K\big(V_1^{*,\rho}-V_1^{{\pi^k},\rho}\big)=\sum\limits_{k=1}^K\big(V_1^{*,\rho}-V_1^{k,\rho}\big)+\sum\limits_{k=1}^K\big(V_1^{k,\rho}-V_1^{{\pi^k},\rho}\big)\nonumber\\
    &\leq\sum\limits_{k=1}^K\sum\limits_{h=1}^H\mathbb{E}_{P_h^{w,k},\pi^k}\big[2\text{bonus}_h^k\big]\label{eq2:con Chi2 lem4 comb}\\
    &=2\sum\limits_{k=1}^K\sum\limits_{h=1}^H\mathbb{E}_{P_h^{w,k},\pi^k}\Bigg[(2+\sqrt{\rho})H\sqrt{\frac{2S^2\ln(192SAH^3K^3/\delta)}{n_h^{k-1}(s,a)\vee1}}+(1+\sqrt{\rho})\frac{1}{K}\Bigg]\label{eq2:con Chi2 lem4 bonus}\\
    &=\widetilde{\mathcal{O}}\big((1+\sqrt{\rho})\big(\VisitRatio S^2AH^2+\VisitRatio^\frac{1}{2}S^\frac{3}{2}A^\frac{1}{2}H^2\sqrt{K}\big)\big),\label{eq2:con Chi2 lem4 lem}
\end{align}
where \eqref{eq2:con Chi2 lem4 comb} is the combination of \Cref{lem:con Chi2 Q^*-Q^k} and \Cref{lem:con Chi2 Q^k-Q^pi^k}, we plug in the bonus \eqref{eq2:con Chi2 bonus} in \eqref{eq2:con Chi2 lem4 bonus}, \eqref{eq2:con Chi2 lem4 lem} is from \Cref{lem:main regret lem}.
\end{proof}

\section{Proofs of Results in Regularized RMDPs}

\subsection{Proof of \Cref{thm:main reg regret upper} (Regularized TV Setting)}

\label{sec:proof_regularized_TV}

We first give the closed form solution of regularized TV update formulation. This dual formulation has also been proved by \citet{panaganti2024model}, but our formulation is more concise.

\begin{lemma}[Dual formulation] \label{lem:reg TV closed form}
For the optimization problem $Q_h(s,a)=r_h(s,a)+\inf\limits_{P\in\Delta(S)}\big(\mathbb{E}_P[V_{h+1}]+\beta\mathrm{TV}\big(P\Vert P_h^o\big)\big)(s,a)$, we have its dual formulation as follows
\begin{align}
    Q_h=r_h-\mathbb{E}_{P_h^o}\Big[\Big(\min\limits_{s\in\mathcal{S}}V_{h+1}(s)+\beta-V_{h+1}(s)\Big)_+\Big]+\Big(\min\limits_{s\in\mathcal{S}}V_{h+1}(s)+\beta\Big).\label{eq2:reg TV dual}
\end{align}
\end{lemma}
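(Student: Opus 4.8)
The plan is to mirror the Lagrangian computation used for the constrained TV case (\Cref{lem:con TV closed form}), exploiting the key observation that the regularizer weight $\beta$ plays exactly the role of the (now \emph{fixed}) dual multiplier $\nu$ that was attached to the divergence constraint there. Writing $\varphi(t)=|t-1|/2$ so that $\mathrm{TV}(P\Vert P_h^o)=\sum_{s}P_h^o(s)\varphi(P(s)/P_h^o(s))$, I would form the Lagrangian
\[
\mathcal{L}(P,\eta,\lambda)=\sum_{s}P(s)V_{h+1}(s)+\beta\sum_{s}P_h^o(s)\varphi\Big(\tfrac{P(s)}{P_h^o(s)}\Big)-\sum_{s}\lambda(s)P(s)+\eta\Big(1-\sum_{s}P(s)\Big),
\]
with $\lambda\ge 0$ enforcing $P\ge 0$ and $\eta\in\mathbb{R}$ enforcing normalization. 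Since the objective is convex in $P$ and $\Delta(\mathcal{S})$ has nonempty relative interior, Slater's condition yields strong duality, so $Q_h-r_h=\sup_{\eta,\lambda\ge0}\inf_P\mathcal{L}$.

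First I would compute the inner infimum. Substituting $g(s)=P(s)/P_h^o(s)$ converts $\inf_P$ into a separable supremum of a conjugate, giving $\inf_P\mathcal{L}=\eta-\beta\,\mathbb{E}_{P_h^o}[\varphi^*(((\lambda(s)+\eta)-V_{h+1}(s))/\beta)]$, where $\varphi^*$ is the TV conjugate recalled in \eqref{eq:TV literature}. Its piecewise form enforces the finiteness constraint $\lambda(s)+\eta-V_{h+1}(s)\le\beta/2$ and, on the finite branch, gives $\beta\varphi^*(\cdot)=((\lambda(s)+\eta)-V_{h+1}(s)+\beta/2)_+-\beta/2$. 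After the change of variable $\eta'=\eta+\beta/2$, the dual collapses to
\[
Q_h-r_h=\sup_{\substack{\lambda\ge0,\ \eta'\\ \lambda(s)+\eta'-V_{h+1}(s)\le\beta}}\Big(\eta'-\mathbb{E}_{P_h^o}\big[(\lambda(s)+\eta'-V_{h+1}(s))_+\big]\Big).
\]

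Next I would reduce this to a one-dimensional problem. Each summand is non-increasing in $\lambda(s)$, so the supremum is attained at $\lambda\equiv 0$; feasibility of $\lambda(s)=0$ then requires precisely $\eta'\le\min_{s}V_{h+1}(s)+\beta$. The remaining objective $F(\eta')=\eta'-\mathbb{E}_{P_h^o}[(\eta'-V_{h+1}(s))_+]$ has (sub)derivative $1-P_h^o(\{V_{h+1}<\eta'\})\ge0$, hence is non-decreasing, so its supremum over the feasible interval is achieved at the right endpoint $\eta'=\min_s V_{h+1}(s)+\beta$. Evaluating $F$ there reproduces exactly \eqref{eq2:reg TV dual}.

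The main obstacle I anticipate is the bookkeeping in the conjugate step—correctly tracking the $\pm\beta/2$ terms and the finiteness constraint so that the $(\cdot)_+$ emerges cleanly—together with the routine but necessary verification of strong duality. As an independent sanity check I would confirm the formula by the primal route: the minimizing $P$ moves all nominal mass off every state with $V_{h+1}(s)>\min_s V_{h+1}(s)+\beta$ onto an argmin state (each transferred unit costs $\beta$ in the regularizer while decreasing the expected value by strictly more than $\beta$), and a direct computation of the resulting objective recovers the same right-hand side, matching the dual value and hence certifying optimality.
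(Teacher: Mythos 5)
Your proposal is correct and follows essentially the same route as the paper's proof: the identical Lagrangian with $\varphi(t)=|t-1|/2$, the same TV conjugate from \eqref{eq:TV literature}, the same shift $\eta'=\eta+\beta/2$, elimination of $\lambda$ by monotonicity, and evaluation of the remaining one-dimensional problem at the feasibility boundary $\eta'=\min_s V_{h+1}(s)+\beta$ (your monotonicity argument for $F(\eta')$ is in fact stated more carefully than the paper's, which has a sign slip in its annotation but reaches the same endpoint). The added primal mass-transport sanity check is a nice extra but not needed.
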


\begin{proof}
Consider the optimization problem
\begin{align*}
    Q_h&=r_h+\inf\limits_{P\in\Delta(S)}\big(\mathbb{E}_P[V_{h+1}]+\beta\mathrm{TV}\big(P\Vert P_h^o\big)\big)\\
    &=r_h+\inf\limits_{P\in\Delta(S)}\bigg(\sum\limits_{s\in\mathcal{S}}P(s)V_{h+1}(s)+\beta P_h^o(s)\varphi\bigg(\frac{P(s)}{P_h^o(s)}\bigg)\bigg).
\end{align*}
where $\varphi(t)=|t-1|/2$, and then the Lagrangian can be written as
\begin{align*}
    \mathcal{L}(P,\eta)=\sum\limits_{s\in S}P(s)V_{h+1}(s)+\beta\sum\limits_{s\in S}P_h^o(s)\varphi\bigg(\frac{P(s)}{P_h^o(s)}\bigg)-\sum\limits_{s\in\mathcal{S}}\lambda(s)P(s)+\eta\bigg(1-\sum\limits_{s\in S}P(s)\bigg).
\end{align*}
We denote $g(s)=P(s)/P_h^o(s)$, then we have
\begin{align*}
    \inf\limits_P\mathcal{L}(P,\eta)&=-\sup\limits_g\sum\limits_{s\in\mathcal{S}}P_h^o(s)\big[g(s)\big[(\lambda(s)+\eta)-V_{h+1}(s)\big]-\beta\varphi(g(s))\big]+\eta\\
    &=-\beta\mathbb{E}_{P_h^o}\bigg[\varphi^*\bigg(\frac{(\lambda(s)+\eta)-V_{h+1}(s)}{\beta}\bigg)\bigg]+\eta,
\end{align*}
where the second equation is from the definition of dual function $\varphi^*(y)=\sup\limits_x(y^\top x-\varphi(x))$.

So we have
\begin{align}
    Q_h&=r_h+\sup\limits_{\lambda\geq0,\eta}(\inf\limits_P\mathcal{L}(P,\eta))\nonumber\\
    &=r_h-\inf\limits_{\lambda\geq0,\eta}\bigg(\beta\mathbb{E}_{P_h^o}\bigg[\varphi^*\bigg(\frac{(\lambda(s)+\eta)-V_{h+1}(s)}{\beta}\bigg)\bigg]-\eta\bigg)\nonumber\\
    &=r_h-\inf\limits_{\lambda\geq0,\eta,\frac{\lambda(s)+\eta-V_{h+1}(s)}{\beta}\leq\frac{1}{2}}\bigg(\beta\mathbb{E}_{P_h^o}\bigg[\max\bigg(\frac{(\lambda(s)+\eta)-V_{h+1}(s)}{\beta},-\frac{1}{2}\bigg)\bigg]-\eta\bigg)\label{eq2:reg TV lem1 phi def}\\
    &=r_h-\inf\limits_{\lambda\geq0,\eta,\frac{\lambda(s)+\eta-V_{h+1}(s)}{\beta}\leq\frac{1}{2}}\bigg(\mathbb{E}_{P_h^o}\bigg[\bigg((\lambda(s)+\eta)-V_{h+1}(s)+\frac{\beta}{2}\bigg)_+\bigg]-\eta-\frac{\beta}{2}\bigg)\nonumber\\
    &=r_h-\inf\limits_{\lambda\geq0,\eta',\lambda(s)+\eta'-V_{h+1}(s)\leq\beta}\big(\mathbb{E}_{P_h^o}\big[\big(\lambda(s)+\eta'-V_{h+1}(s)\big)_+\big]-\eta'\big)\label{eq2:reg TV lem1 def eta'}\\
    &=r_h-\inf\limits_{\eta'\leq V_{h+1}(s)+\beta}\big(\mathbb{E}_{P_h^o}\big[(\eta'-V_{h+1}(s))_+\big]-\eta'\big)\label{eq2:reg TV lem1 rm lambda}\\
    &=r_h-\mathbb{E}_{P_h^o}\Big[\big(\min\limits_{s\in\mathcal{S}}V_{h+1}(s)+\beta-V_{h+1}(s)\big)_+\Big]+\Big(\min\limits_{s\in\mathcal{S}}V_{h+1}(s)+\beta\Big),\label{eq2:reg TV lem1 rm eta'}
\end{align}
where \eqref{eq2:reg TV lem1 phi def} follows from the definition of $\varphi^*$ \eqref{eq:TV literature}, we redefine $\eta'=\eta+\frac{\beta}{2}$ in \eqref{eq2:reg TV lem1 def eta'}, \eqref{eq2:reg TV lem1 rm lambda} holds because the result increases monotonically with respect to $\lambda$, \eqref{eq2:reg TV lem1 rm eta'} holds because the result increases monotonically with respect to $\eta'$.
\end{proof}

Similar to \Cref{lem:con TV Q^*-Q^k}, we prove the optimism of estimation $Q^k$ and control $Q^*-Q^k$.
\begin{lemma}[Optimism] \label{lem:reg TV Q^*-Q^k}
If we set the bonus term as follows
\begin{align}
    \text{bonus}_h^k(s,a)=2H\sqrt{\frac{2S\ln(2SAHK/\delta)}{n_h^{k-1}(s,a)\vee1}},\label{eq2:reg TV bonus}
\end{align}
then for any policy $\pi$ and any $(k,h,s,a)\in[K]\times[H]\times\mathcal{S}\times\mathcal{A}$, with probability at least $1-2\delta$, we have $Q_h^{k,\beta}(s,a)\geq Q_h^{\pi,\beta}(s,a)$. Specially, by setting $\pi=\pi^*$, we have $Q_h^{k,\beta}(s,a)\geq Q_h^{{\pi^*},\beta}(s,a)$.
\end{lemma}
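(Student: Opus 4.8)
The plan is to mirror the structure of the constrained TV optimism proof in \Cref{lem:con TV Q^*-Q^k}, while exploiting the cleaner closed form from \Cref{lem:reg TV closed form} to avoid the $\epsilon$-net argument entirely. I would argue by backward induction on $h$. The base case $h=H+1$ is immediate since both $Q$-functions vanish. For the inductive step, assuming $Q_{h+1}^{k,\beta}\geq Q_{h+1}^{\pi,\beta}$, the greedy choice of $\pi^k$ gives $V_{h+1}^{k,\beta}(s)=\max_a Q_{h+1}^{k,\beta}(s,a)\geq Q_{h+1}^{\pi,\beta}(s,\pi_{h+1}(s))=V_{h+1}^{\pi,\beta}(s)$ pointwise. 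If the clipping $Q_h^{k,\beta}(s,a)=H-h+1$ is active, optimism holds trivially since $Q_h^{\pi,\beta}\leq H-h+1$; otherwise I expand $Q_h^{k,\beta}-Q_h^{\pi,\beta}$ into the bonus plus the reward error $\widehat r_h^k-r_h$ plus the difference of regularized robust Bellman estimators.

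Then I would add and subtract the true regularized robust Bellman term $\inf_{P\in\Delta(\cS)}(\mathbb{E}_P[V_{h+1}^{k,\beta}]+\beta\mathrm{TV}(P\Vert P_h^o))$ evaluated at the estimated value $V_{h+1}^{k,\beta}$ under the nominal kernel $P_h^o$. The part comparing $V_{h+1}^{k,\beta}$ with $V_{h+1}^{\pi,\beta}$ under the same nominal kernel is nonnegative by monotonicity of the regularized robust Bellman operator in $V$ together with the induction hypothesis, exactly as in \eqref{eq2:con TV lem2 ind}. What remains is the reward discrepancy $|\widehat r_h^k-r_h|$ and a transition discrepancy which, after substituting the dual form \eqref{eq2:reg TV dual}, equals $|\mathbb{E}_{P_h^o}[g^k]-\mathbb{E}_{\widehat P_h^k}[g^k]|$ for the single, $k$-dependent but otherwise fixed function $g^k(\cdot)=(\min_s V_{h+1}^{k,\beta}(s)+\beta-V_{h+1}^{k,\beta}(\cdot))_+$, since the additive constant $\min_s V_{h+1}^{k,\beta}(s)+\beta$ does not depend on the transition kernel and cancels.

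The key simplification, and the reason the bonus carries only $\sqrt{S}$ and no $1/K$ correction (unlike \eqref{eq2:con TV bonus}), is that this transition term contains no free dual variable to optimize over, so no $\epsilon$-net is required. I would use the identity $(\min_s V+\beta)-(\min_s V+\beta-V(\cdot))_+=\min(V(\cdot),\,\min_s V+\beta)$ to rewrite the term as $\mathbb{E}_P[\widetilde g^k]$ with $\widetilde g^k(\cdot)=\min(V_{h+1}^{k,\beta}(\cdot),\,\min_s V_{h+1}^{k,\beta}(s)+\beta)\in[0,H]$, which bounds the relevant function by $H$ rather than $\beta$. Applying \Cref{lem:concentration L1} (in the form \eqref{eq2:E[V] concentration}) to this bounded fixed function together with a union bound over all $(k,h,s,a)$ yields $|\mathbb{E}_{P_h^o}[\widetilde g^k]-\mathbb{E}_{\widehat P_h^k}[\widetilde g^k]|\leq H\sqrt{2S\ln(2SAHK/\delta)/(n_h^{k-1}\vee1)}$, while \Cref{lem:concentration E} with a union bound controls $|\widehat r_h^k-r_h|$ by the smaller quantity $\sqrt{\ln(2SAHK/\delta)/(2(n_h^{k-1}\vee1))}$.

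Summing these two contributions shows the combined error is dominated by $\text{bonus}_h^k$ in \eqref{eq2:reg TV bonus}, so that $Q_h^{k,\beta}-Q_h^{\pi,\beta}\geq0$, closing the induction; the two concentration events each hold with probability at least $1-\delta$, giving the claimed $1-2\delta$. Specializing $\pi=\pi^*$ then gives the final statement. I expect the main obstacle to be the algebraic reduction of the dual term to the expectation of a single $[0,H]$-bounded function, namely confirming that the $k$-dependent constant is kernel-independent and obtaining the correct factor $H$ (not $\beta$) via the $\min$ rewriting; once that reduction is in place, the rest is a direct, net-free adaptation of \Cref{lem:con TV Q^*-Q^k}.
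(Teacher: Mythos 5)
Your proposal is correct and follows essentially the same route as the paper's proof: backward induction, the $\inf f-\inf g\geq\inf(f-g)$ decomposition, substitution of the dual form from \Cref{lem:reg TV closed form}, cancellation of the kernel-independent constant $\min_s V_{h+1}^{k,\beta}(s)+\beta$, and a direct $L_1$-concentration bound on the single resulting function with no $\epsilon$-net. Your explicit rewriting of the dual term as $\mathbb{E}_P[\min(V_{h+1}^{k,\beta}(\cdot),\min_s V_{h+1}^{k,\beta}(s)+\beta)]$ to certify the $[0,H]$ bound is actually slightly more careful than the paper, which just invokes the same argument as \eqref{eq2:E[V] concentration} even though $(\min_s V+\beta-V(\cdot))_+$ is a priori only bounded by $\beta$; your version cleanly justifies the factor $H$ in the bonus.
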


\begin{proof}
We prove this by induction. First, when $h=H+1$, $Q_{H+1}^{k,\beta}(s,a)=0=Q_{H+1}^{\pi,\beta}(s,a)$ holds trivially.

Assume $Q_{h+1}^{k,\beta}(s,a)\geq Q_{h+1}^{\pi,\beta}(s,a)$ holds, since $\pi^k$ is the greedy policy, we have
\begin{align*}
    V_{h+1}^{k,\beta}(s)=Q_{h+1}^{k,\beta}(s,\pi_{h+1}^k(s))\geq Q_{h+1}^{k,\beta}(s,\pi_{h+1}(s))\geq Q_{h+1}^{\pi,\beta}(s,\pi_{h+1}(s))=V_{h+1}^{\pi,\beta}(s),
\end{align*}
where the first inequality is because we choose $\pi^k$ as the greedy policy.

Recall that we denote $Q_h^{k,\beta}$ as the optimistic estimation in $k$-th episode, that is,
\begin{align*}
    Q_h^{k,\beta}(s,a)=\min\bigg\{\text{bonus}_h^k(s,a)+\widehat{r}_h^k(s,a)+\inf\limits_{P\in\Delta(S)}\big(\mathbb{E}_P\big[V_{h+1}^{k,\beta}\big]+\beta\mathrm{TV}\big(P\Vert\widehat{P}_h^k\big)\big)(s,a),H-h+1\bigg\}.
\end{align*}
If $Q_h^{k,\beta}(s,a)=H-h+1$, then it follows immediately that
\begin{align*}
    Q_h^{k,\beta}(s,a)=H-h+1\geq Q_h^{\pi,\beta}(s,a)
\end{align*}
by the definition of $Q_h^{\pi,\beta}(s,a)$. Otherwise, we can infer that
\begin{align}
    Q_h^{k,\beta}-Q_h^{\pi,\beta}&=\text{bonus}_h^k+\widehat{r}_h^k+\inf\limits_{P\in\Delta(S)}\big(\mathbb{E}_P\big[V_{h+1}^{k,\beta}\big]+\beta\mathrm{TV}\big(P\Vert\widehat{P}_h^k\big)\big)-r_h-\inf\limits_{P\in\Delta(S)}\big(\mathbb{E}_P\big[V_{h+1}^{\pi,\beta}\big]+\beta\mathrm{TV}\big(P\Vert P_h^o\big)\big)\nonumber\\
    &=\text{bonus}_h^k+\widehat{r}_h^k-r_h+\inf\limits_{P\in\Delta(S)}\big(\mathbb{E}_P\big[V_{h+1}^{k,\beta}\big]+\beta\mathrm{TV}\big(P\Vert\widehat{P}_h^k\big)\big)-\inf\limits_{P\in\Delta(S)}\big(\mathbb{E}_P\big[V_{h+1}^{k,\beta}\big]+\beta\mathrm{TV}\big(P\Vert P_h^o\big)\big)\nonumber\\
    &\qquad+\inf\limits_{P\in\Delta(S)}\big(\mathbb{E}_P\big[V_{h+1}^{k,\beta}\big]+\beta\mathrm{TV}\big(P\Vert P_h^o\big)\big)-\inf\limits_{P\in\Delta(S)}\big(\mathbb{E}_P\big[V_{h+1}^{\pi,\beta}\big]+\beta\mathrm{TV}\big(P\Vert P_h^o\big)\big)\nonumber\\
    &\geq\text{bonus}_h^k+\widehat{r}_h^k-r_h+\inf\limits_{P\in\Delta(S)}\big(\mathbb{E}_P\big[V_{h+1}^{k,\beta}\big]-\mathbb{E}_P\big[V_{h+1}^{\pi,\beta}\big]\big)\nonumber\\
    &\qquad+\inf\limits_{P\in\Delta(S)}\big(\mathbb{E}_P\big[V_{h+1}^{k,\beta}\big]+\beta\mathrm{TV}\big(P\Vert\widehat{P}_h^k\big)\big)-\inf\limits_{P\in\Delta(S)}\big(\mathbb{E}_P\big[V_{h+1}^{k,\beta}\big]+\beta\mathrm{TV}\big(P\Vert P_h^o\big)\big)\label{eq2:reg TV lem2 inf}\\
    &\geq\text{bonus}_h^k+\widehat{r}_h^k-r_h+\inf\limits_{P\in\Delta(S)}\big(\mathbb{E}_P\big[V_{h+1}^{k,\beta}\big]+\beta\mathrm{TV}\big(P\Vert\widehat{P}_h^k\big)\big)-\inf\limits_{P\in\Delta(S)}\big(\mathbb{E}_P\big[V_{h+1}^{k,\beta}\big]+\beta\mathrm{TV}\big(P\Vert P_h^o\big)\big)\label{eq2:reg TV lem2 ind}\\
    &=\text{bonus}_h^k+\widehat{r}_h^k-r_h-\mathbb{E}_{P_h^o}\Big[\Big(\min\limits_{s\in\mathcal{S}}V_{h+1}^{k,\beta}(s)+\beta-V_{h+1}^{k,\beta}(s)\Big)_+\Big]+\Big(\min\limits_{s\in\mathcal{S}}V_{h+1}^{k,\beta}(s)+\beta\Big)\nonumber\\
    &\qquad+\mathbb{E}_{\widehat{P}_h^k}\Big[\Big(\min\limits_{s\in\mathcal{S}}V_{h+1}^{k,\beta}(s)+\beta-V_{h+1}^{k,\beta}(s)\Big)_+\Big]-\Big(\min\limits_{s\in\mathcal{S}}V_{h+1}^{k,\beta}(s)+\beta\Big)\label{eq2:reg TV lem2 plug}\\
    &=\text{bonus}_h^k+\widehat{r}_h^k-r_h-\mathbb{E}_{P_h^o}\Big[\Big(\min\limits_{s\in\mathcal{S}}V_{h+1}^{k,\beta}(s)+\beta-V_{h+1}^{k,\beta}(s)\Big)_+\Big]+\mathbb{E}_{\widehat{P}_h^k}\Big[\Big(\min\limits_{s\in\mathcal{S}}V_{h+1}^{k,\beta}(s)+\beta-V_{h+1}^{k,\beta}(s)\Big)_+\Big]\nonumber\\
    &\geq\text{bonus}_h^k-\underbrace{\big|\widehat{r}_h^k-r_h\big|}_\text{(i)}-\underbrace{\bigg|\mathbb{E}_{P_h^o}\Big[\Big(\min\limits_{s\in\mathcal{S}}V_{h+1}^{k,\beta}(s)+\beta-V_{h+1}^{k,\beta}(s)\Big)_+\Big]-\mathbb{E}_{\widehat{P}_h^k}\Big[\Big(\min\limits_{s\in\mathcal{S}}V_{h+1}^{k,\beta}(s)+\beta-V_{h+1}^{k,\beta}(s)\Big)_+\Big]\bigg|}_\text{(ii)},\label{eq:reg-TV ineq main}
\end{align}
where \eqref{eq2:reg TV lem2 inf} is from $\inf f(x)-\inf g(x)\geq\inf(f-g)(x)$, \eqref{eq2:reg TV lem2 ind} is from the induction assumption, we plug in the dual formulation \eqref{eq2:reg TV dual} in \eqref{eq2:reg TV lem2 plug}.

For term (i) in \eqref{eq:reg-TV ineq main}, from \Cref{lem:concentration E} and a union bound, with probability at least $1-\delta$, we have
\begin{align}
    \big|\widehat{r}_h^k(s,a)-r_h(s,a)\big|\leq\sqrt{\frac{\ln(2SAHK/\delta)}{2n_h^{k-1}(s,a)\vee1}}\label{eq:reg-TV ineq 1}
\end{align}
for any $(k,h,s,a)\in[K]\times[H]\times\mathcal{S}\times\mathcal{A}$.

For term (ii) in \eqref{eq:reg-TV ineq main}, follow the same proof as \eqref{eq2:E[V] concentration} and a union bound, with probability at least $1-\delta$, we have
\begin{align}
    \bigg|\mathbb{E}_{P_h^o}\Big[\Big(\min\limits_{s\in\mathcal{S}}V_{h+1}^{k,\beta}(s)+\beta-V_{h+1}^{k,\beta}(s)\Big)_+\Big]-\mathbb{E}_{\widehat{P}_h^k}\Big[\Big(\min\limits_{s\in\mathcal{S}}V_{h+1}^{k,\beta}(s)+\beta-V_{h+1}^{k,\beta}(s)\Big)_+\Big]\bigg|\leq H\sqrt{\frac{2S\ln(2SAHK/\delta)}{n_h^{k-1}\vee1}}\label{eq:reg-TV ineq 2}
\end{align}
for any $(k,h,s,a)\in[K]\times[H]\times\mathcal{S}\times\mathcal{A}$.

Apply the union bound again and combine \eqref{eq:reg-TV ineq main} with \eqref{eq:reg-TV ineq 1}, \eqref{eq:reg-TV ineq 2}, the definition of bonus and induction assumption. With probability at least $1-2\delta$, we have $Q_h^{k,\rho}(s,a)\geq Q_h^{\pi,\rho}(s,a)$ for any $(k,h,s,a)\in[K]\times[H]\times\mathcal{S}\times\mathcal{A}$. This completes the proof.
\end{proof}

With similar proof to \Cref{lem:con TV Q^k-Q^pi^k}, we can control the item $Q^k-Q^{\pi^k}$.
\begin{lemma}\label{lem:reg TV Q^k-Q^pi^k}
With probability at least $1-\delta$, for any $(k,s,a)\in[K]\times\mathcal{S}\times\mathcal{A}$, the sum of estimation errors can be bounded as follows
\begin{align*}
    Q_1^{k,\beta}-Q_1^{{\pi^k},\beta}\leq\sum\limits_{h=1}^H\mathbb{E}_{\{P_h^{w,k}\}_{h=1}^H,\pi^k}\big[2\text{bonus}_h^k\big].
\end{align*}
\end{lemma}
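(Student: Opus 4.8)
The plan is to mirror the proof of \Cref{lem:con TV Q^k-Q^pi^k} for the constrained TV case, adapting every step to the regularized TV formulation. The starting point is the one-step concentration bound already established inside the proof of the optimism lemma \Cref{lem:reg TV Q^*-Q^k}: combining \eqref{eq:reg-TV ineq 1} and \eqref{eq:reg-TV ineq 2} and noting that, by the dual form \eqref{eq2:reg TV dual}, the two regularized Bellman estimators built from $\widehat{P}_h^k$ and $P_h^o$ differ only through the term $\mathbb{E}_{\widehat{P}_h^k}[(\cdot)_+]$ versus $\mathbb{E}_{P_h^o}[(\cdot)_+]$, I obtain that with probability at least $1-\delta$, simultaneously for all $(k,h,s,a)$,
\begin{align*}
    \big|\widehat{r}_h^k-r_h\big|+\Big|\inf_{P\in\Delta(S)}\big(\mathbb{E}_P\big[V_{h+1}^{k,\beta}\big]+\beta\mathrm{TV}(P\Vert\widehat{P}_h^k)\big)-\inf_{P\in\Delta(S)}\big(\mathbb{E}_P\big[V_{h+1}^{k,\beta}\big]+\beta\mathrm{TV}(P\Vert P_h^o)\big)\Big|\leq\text{bonus}_h^k.
\end{align*}

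Next I would expand the difference $Q_h^{k,\beta}-Q_h^{\pi^k,\beta}$. Upper bounding $Q_h^{k,\beta}$ by its unclipped robust Bellman estimator plus bonus (the clipping to $H-h+1$ only makes the inequality easier) and writing $Q_h^{\pi^k,\beta}$ via the robust Bellman equation \eqref{eq:robust optimality equation-RRMDP}, I would insert the intermediate regularized infimum evaluated at the true kernel $P_h^o$ but with the estimated value $V_{h+1}^{k,\beta}$. This splits the difference into an empirical-versus-nominal discrepancy term, controlled by the displayed concentration bound and contributing the second $\text{bonus}_h^k$, plus a residual difference of two regularized infima over $\Delta(S)$ sharing the same center $P_h^o$ but carrying the value functions $V_{h+1}^{k,\beta}$ and $V_{h+1}^{\pi^k,\beta}$.

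For the residual term I would invoke the definition of the worst-case transition $P_h^{w,k}=\argmin_{P\in\Delta(S)}(\mathbb{E}_P[V_{h+1}^{\pi^k,\beta}]+\beta\mathrm{TV}(P\Vert P_h^o))$ from \Cref{def:visitation_measure}. Evaluating the first infimum at $P_h^{w,k}$ gives an upper bound, while the second infimum equals its value at $P_h^{w,k}$ by definition; the identical penalty terms $\beta\mathrm{TV}(P_h^{w,k}\Vert P_h^o)$ then cancel exactly, leaving $\mathbb{E}_{P_h^{w,k}}[V_{h+1}^{k,\beta}-V_{h+1}^{\pi^k,\beta}]=\mathbb{E}_{P_h^{w,k},\pi^k}[Q_{h+1}^{k,\beta}-Q_{h+1}^{\pi^k,\beta}]$, where the last equality uses that $\pi^k$ is greedy so that $V_{h+1}^{k,\beta}(s)=Q_{h+1}^{k,\beta}(s,\pi_{h+1}^k(s))$ and $V_{h+1}^{\pi^k,\beta}(s)=Q_{h+1}^{\pi^k,\beta}(s,\pi_{h+1}^k(s))$. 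This produces the recursion $Q_h^{k,\beta}-Q_h^{\pi^k,\beta}\leq 2\text{bonus}_h^k+\mathbb{E}_{P_h^{w,k},\pi^k}[Q_{h+1}^{k,\beta}-Q_{h+1}^{\pi^k,\beta}]$, which I would unroll from $h=1$ to $H$ using $Q_{H+1}\equiv 0$ to recover the claimed bound.

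Rather than a genuine obstacle, the residual-term step is actually the \emph{cleanest} part here: because the regularizer depends only on $P$ and $P_h^o$ and not on the value function, the penalties cancel identically at $P_h^{w,k}$, which is more direct than the constrained analogue. The only points demanding care are ensuring the concentration bound from \Cref{lem:reg TV Q^*-Q^k} is stated uniformly over all $(k,h,s,a)$ so the recursion holds on a single high-probability event, and verifying that under clipping the bound $Q_h^{k,\beta}\le \text{RB}_h^k+\text{bonus}_h^k$ still holds so the decomposition remains valid.
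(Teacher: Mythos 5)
Your proposal is correct and follows essentially the same route as the paper's proof: reuse the per-step concentration bound \eqref{eq2:reg TV lem3 error} from the optimism lemma, insert the intermediate infimum at $P_h^o$ with the estimated value function, bound the residual by evaluating at the worst-case kernel $P_h^{w,k}$ (where the identical penalty terms cancel), and unroll the resulting recursion. Your added remarks about the clipping only helping the inequality and the need for a single uniform high-probability event are both correct and consistent with how the paper handles these points.
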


\begin{proof}
From the proof of \Cref{lem:reg TV Q^*-Q^k}, we see that with probability at least $1-\delta$, for any $(k,h,s,a)\in[K]\times[H]\times\mathcal{S}\times\mathcal{A}$, we have
\begin{align}
    &\big|\widehat{r}_h^k(s,a)-r_h(s,a)\big|+\big|\inf\limits_{P\in\Delta(S)}\big(\mathbb{E}_P\big[V_{h+1}^{k,\beta}\big]+\beta\mathrm{TV}\big(P\Vert\widehat{P}_h^k\big)\big)(s,a)\nonumber\\
    &\qquad\qquad-\:\inf\limits_{P\in\Delta(S)}\big(\mathbb{E}_P\big[V_{h+1}^{k,\beta}\big]+\beta\mathrm{TV}\big(P\Vert P_h^o\big)\big)(s,a)\big|\leq\text{bonus}_h^k(s,a).\label{eq2:reg TV lem3 error}
\end{align}
Recall that we define $P_h^{w,k}=\argmin\limits_{P\in\Delta(S)}\big(\mathbb{E}_P\big[V_{h+1}^{{\pi^k},\beta}\big]+\beta\mathrm{TV}\big(P\Vert P_h^o\big)\big)$ as the worst-case transition in \Cref{def:visitation_measure}, we have
\begin{align}
    Q_h^{k,\beta}-Q_h^{{\pi^k},\beta}&\leq\text{bonus}_h^k+\widehat{r}_h^k+\inf\limits_{P\in\Delta(S)}\big(\mathbb{E}_P\big[V_{h+1}^{k,\beta}\big]+\beta\mathrm{TV}\big(P\Vert\widehat{P}_h^k\big)\big)-r_h-\inf\limits_{P\in\Delta(S)}\big(\mathbb{E}_P\big[V_{h+1}^{{\pi^k},\beta}\big]+\beta\mathrm{TV}\big(P\Vert P_h^o\big)\big)\nonumber\\
    &\leq2\text{bonus}_h^k+\inf\limits_{P\in\Delta(S)}\big(\mathbb{E}_P\big[V_{h+1}^{k,\beta}\big]+\beta\mathrm{TV}\big(P\Vert P_h^o\big)\big)-\inf\limits_{P\in\Delta(S)}\big(\mathbb{E}_P\big[V_{h+1}^{{\pi^k},\beta}\big]+\beta\mathrm{TV}\big(P\Vert P_h^o\big)\big)\label{eq2:reg TV lem3 bonus}\\
    &=2\text{bonus}_h^k+\mathbb{E}_{P_h^{w,k},\pi^k}\big[Q_{h+1}^{k,\beta}-Q_{h+1}^{{\pi^k},\beta}\big].\label{eq2:reg TV lem3 recur2}
\end{align}
where \eqref{eq2:reg TV lem3 bonus} uses \eqref{eq2:reg TV lem3 error}. Apply \eqref{eq2:reg TV lem3 recur2} recursively, we can obtain the result.
\end{proof}

\begin{theorem}[Restatement of \Cref{thm:main reg regret upper} in TV-distance setting]
For RRMDP with $(s,a)$-rectangular TV-distance defined regularization term satisfying \Cref{asm:main distribution shift}, with probability at least $1-\delta$, the regret of \Cref{alg:main alg} satisfies
\begin{align*}
    \text{Regret}=\widetilde{\mathcal{O}}\big(\VisitRatio S^\frac{3}{2}AH^2+\VisitRatio^\frac{1}{2}SA^\frac{1}{2}H^2\sqrt{K}\big).
\end{align*}
\end{theorem}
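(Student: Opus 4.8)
The plan is to follow exactly the same template as the CRMDP-TV proof in \Cref{thm:proof con TV eg}, since all of the supporting machinery for the regularized TV setting is already established. First I would set $\delta'=\delta/4$ in both \Cref{lem:reg TV Q^*-Q^k} (optimism) and \Cref{lem:failure events} (failure events), so that a single union bound guarantees all the required high-probability events hold simultaneously with probability at least $1-\delta$.

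Next I would apply the standard optimism-plus-estimation-error decomposition of the regret,
\begin{align*}
    \text{Regret}=\sum_{k=1}^K\big(V_1^{*,\beta}-V_1^{k,\beta}\big)+\sum_{k=1}^K\big(V_1^{k,\beta}-V_1^{{\pi^k},\beta}\big).
\end{align*}
The first sum is nonpositive by the optimism guarantee of \Cref{lem:reg TV Q^*-Q^k}, which yields $V_1^{k,\beta}\geq V_1^{*,\beta}$ for every episode $k$. The second sum is controlled by \Cref{lem:reg TV Q^k-Q^pi^k}, which bounds the per-episode estimation error by twice the expected bonus accumulated along the worst-case transition, giving
\begin{align*}
    \text{Regret}\leq\sum_{k=1}^K\sum_{h=1}^H\mathbb{E}_{P_h^{w,k},\pi^k}\big[2\,\text{bonus}_h^k\big].
\end{align*}

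I would then substitute the RRMDP-TV bonus $2H\sqrt{2S\ln(2SAHK/\delta)/(n_h^{k-1}(s,a)\vee1)}$ from \eqref{eq2:reg TV bonus} and invoke \Cref{lem:main regret lem}, which bounds the double sum $\sum_{k}\sum_{h}\mathbb{E}_{P_h^{w,k},\pi^k}\big[\sqrt{1/(n_h^{k-1}\vee1)}\big]$ by $\widetilde{\mathcal{O}}\big(\sqrt{\VisitRatio SAH^2K}+\VisitRatio SAH\big)$; note this is the single place where the $\VisitRatio$ dependence enters, through the change of measure from the nominal to the worst-case visitation. Multiplying the $\widetilde{\mathcal{O}}(H\sqrt{S})$ prefactor of the bonus through this bound produces $\widetilde{\mathcal{O}}\big(\VisitRatio S^{3/2}AH^2+\VisitRatio^{1/2}SA^{1/2}H^2\sqrt{K}\big)$, matching the claim.

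There is no genuine obstacle here: the only real work is the bookkeeping of plugging in the correct bonus and tracking the $S$, $A$, $H$ powers. The structurally interesting point—worth flagging in the proof—is that, unlike the constrained TV case, the regularized TV dual in \Cref{lem:reg TV closed form} is a single closed-form expression with no residual dual variable to optimize over, so no $\epsilon$-net over value functions is needed to establish the concentration bound in \Cref{lem:reg TV Q^*-Q^k}. This is precisely why the bonus carries only $\sqrt{S}$ rather than $S$ inside the square root, which accounts for the $O(\sqrt{S})$ improvement over \Cref{thm:main con regret upper} in the TV-distance setting.
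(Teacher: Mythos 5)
Your proposal is correct and follows essentially the same route as the paper's proof: the same $\delta'=\delta/4$ union bound, the same optimism-plus-estimation-error decomposition via \Cref{lem:reg TV Q^*-Q^k} and \Cref{lem:reg TV Q^k-Q^pi^k}, and the same final step of plugging the bonus \eqref{eq2:reg TV bonus} into \Cref{lem:main regret lem}. Your side remark about the closed-form dual eliminating the $\epsilon$-net (hence $\sqrt{S}$ rather than $S$ in the bonus) accurately explains the $O(\sqrt{S})$ gain over the constrained TV bound.
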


\begin{proof}
Setting $\delta'=\delta/4$ in \Cref{lem:reg TV Q^*-Q^k,lem:failure events}, then with probability at least $1-\delta$, we get
\begin{align}
    \text{Regret}&=\sum\limits_{k=1}^K\big(V_1^{*,\beta}-V_1^{{\pi^k},\beta}\big)=\sum\limits_{k=1}^K\big(V_1^{*,\beta}-V_1^{k,\beta}\big)+\sum\limits_{k=1}^K\big(V_1^{k,\beta}-V_1^{{\pi^k},\beta}\big)\nonumber\\
    &\leq\sum\limits_{k=1}^K\sum\limits_{h=1}^H\mathbb{E}_{P_h^{w,k},\pi^k}\big[2\text{bonus}_h^k\big]\label{eq2:reg TV lem4 comb}\\
    &=2\sum\limits_{k=1}^K\sum\limits_{h=1}^H\mathbb{E}_{P_h^{w,k},\pi^k}\Bigg[2H\sqrt{\frac{2S\ln(2SAHK/\delta)}{n_h^{k-1}(s,a)\vee1}}\Bigg]\label{eq2:reg TV lem4 bonus}\\
    &=\widetilde{\mathcal{O}}\big(\VisitRatio S^\frac{3}{2}AH^2+\VisitRatio^\frac{1}{2}SA^\frac{1}{2}H^2\sqrt{K}\big).\label{eq2:reg TV lem4 lem}
\end{align}
where \eqref{eq2:reg TV lem4 comb} is the combination of \Cref{lem:reg TV Q^*-Q^k} and \Cref{lem:reg TV Q^k-Q^pi^k}, we plug in the bonus \eqref{eq2:reg TV bonus} in \eqref{eq2:reg TV lem4 bonus}, \eqref{eq2:reg TV lem4 lem} is from \Cref{lem:main regret lem}.
\end{proof}

\subsection{Proof of \Cref{thm:main reg regret upper} (Regularized KL Setting)}

\label{sec:proof_regularized_KL}

We first give the closed form solution of regularized KL update formulation. This dual formulation has also been proved by \citet{panaganti2024model}, but our formulation is more concise. \citet{zhang2024soft} applied the equivalence between regularized RMDPs and risk-sensitive MDPs to obtain the same result, which is different from our proof.

\begin{lemma}[Dual formulation] \label{lem:reg KL closed form}
For the optimization problem $Q_h(s,a)=r_h(s,a)+\inf\limits_{P\in\Delta(S)}\big(\mathbb{E}_P[V_{h+1}]+\beta\mathrm{KL}\big(P\Vert P_h^o\big)\big)(s,a)$, we have its dual formulation as follows
\begin{align}
    Q_h=r_h-\beta\ln\mathbb{E}_{P_h^o}\big[e^{-\beta^{-1}V_{h+1}}\big].\label{eq2:reg KL dual}
\end{align}
\end{lemma}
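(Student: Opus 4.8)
The plan is to mirror the Lagrangian-duality argument used for the constrained KL case in \Cref{lem:con KL closed form}, specialized to the regularized objective where the KL term now sits inside the objective with the fixed coefficient $\beta$ rather than appearing as a constraint with multiplier $\nu$. First I would write the inner problem as
\begin{align*}
\inf_{P\in\Delta(S)}\Big(\sum_{s\in\mathcal{S}}P(s)V_{h+1}(s)+\beta\sum_{s\in\mathcal{S}}P(s)\ln\tfrac{P(s)}{P_h^o(s)}\Big),
\end{align*}
noting that the $P\ll P_h^o$ requirement forces $P(s)=0$ whenever $P_h^o(s)=0$, so the sum effectively ranges over the support of $P_h^o$. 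I would then form the Lagrangian $\mathcal{L}(P,\eta)$ with a multiplier $\eta$ for the normalization constraint $\sum_s P(s)=1$ and multipliers $\lambda(s)\ge 0$ for nonnegativity, exactly as in the constrained proof.

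Next I would set $\partial\mathcal{L}/\partial P(s)=0$, which yields the stationarity condition $V_{h+1}(s)+\beta\big(\ln(P(s)/P_h^o(s))+1\big)-\lambda(s)-\eta=0$, and hence the tilted (Gibbs) minimizer $P'(s)=P_h^o(s)\exp\big(\beta^{-1}(\lambda(s)+\eta)-\beta^{-1}V_{h+1}(s)-1\big)$. Since this expression is strictly positive on the support of $P_h^o$, complementary slackness forces $\lambda(s)=0$, so the nonnegativity constraint is non-binding; strict convexity of the entropy term guarantees this is the unique global minimizer. Imposing $\sum_s P'(s)=1$ pins down the multiplier via $e^{\beta^{-1}\eta-1}=1/\mathbb{E}_{P_h^o}[e^{-\beta^{-1}V_{h+1}}]$, i.e. $\eta=\beta-\beta\ln\mathbb{E}_{P_h^o}[e^{-\beta^{-1}V_{h+1}}]$.

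Finally I would substitute $P'$ back into the objective. Using the stationarity relation $V_{h+1}(s)+\beta\ln(P'(s)/P_h^o(s))=\eta-\beta$ for every $s$ in the support, the objective collapses to $\sum_s P'(s)(\eta-\beta)=\eta-\beta$, which equals $-\beta\ln\mathbb{E}_{P_h^o}[e^{-\beta^{-1}V_{h+1}}]$, giving \eqref{eq2:reg KL dual} after adding $r_h$. A cleaner, self-contained alternative I would note is the Gibbs variational identity: defining $P^*(s)\propto P_h^o(s)e^{-\beta^{-1}V_{h+1}(s)}$ and completing the square shows $\mathbb{E}_P[V_{h+1}]+\beta\mathrm{KL}(P\Vert P_h^o)=\beta\mathrm{KL}(P\Vert P^*)-\beta\ln\mathbb{E}_{P_h^o}[e^{-\beta^{-1}V_{h+1}}]$, after which nonnegativity of KL immediately yields the infimum with equality exactly at $P=P^*$.

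The main obstacle is not the algebra but the technical bookkeeping around the domain: justifying that the infimum is attained (via strict convexity together with compactness of the simplex restricted to the support of $P_h^o$), verifying strong duality so that the Lagrangian stationary point is the genuine minimizer, and carefully handling states with $P_h^o(s)=0$ so that the logarithm and the exponential tilt remain well defined. The completing-the-square route sidesteps the duality justification entirely, so I would likely present that as the rigorous core and keep the Lagrangian derivation only to explain how the minimizer $P^*$ is discovered.
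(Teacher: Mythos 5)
Your Lagrangian derivation is essentially the paper's own proof: same Lagrangian, same stationarity condition, same Gibbs-form minimizer $P'(s)=P_h^o(s)e^{-\beta^{-1}V_{h+1}(s)+\beta^{-1}[\lambda(s)+\eta]-1}$, and the same final value of $\eta=-\beta\big(\ln\mathbb{E}_{P_h^o}[e^{-\beta^{-1}V_{h+1}}]-1\big)$. The only cosmetic differences are that you eliminate $\lambda$ by complementary slackness (the paper argues monotonicity of the dual function in $\lambda$) and you pin down $\eta$ by primal feasibility $\sum_s P'(s)=1$ rather than by maximizing the dual function over $\eta$; these coincide here because strong duality holds. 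Your proposed ``rigorous core,'' the Gibbs variational identity $\mathbb{E}_P[V_{h+1}]+\beta\,\mathrm{KL}(P\Vert P_h^o)=\beta\,\mathrm{KL}(P\Vert P^*)-\beta\ln\mathbb{E}_{P_h^o}[e^{-\beta^{-1}V_{h+1}}]$ with $P^*(s)\propto P_h^o(s)e^{-\beta^{-1}V_{h+1}(s)}$, is a genuinely different and cleaner route than the paper takes: it is an exact algebraic identity, so nonnegativity of KL gives the infimum with attainment at $P=P^*$ without invoking strong duality, attainment of the inner infimum, or any care about the boundary of the simplex beyond the support convention $P\ll P_h^o$. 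The paper's Lagrangian route has the advantage of being uniform across all six divergence settings it treats, whereas your identity-based argument is specific to KL but fully self-contained. Either version establishes \eqref{eq2:reg KL dual}.
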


\begin{proof}
Consider the optimization problem
\begin{align*}
    Q_h&=r_h+\inf\limits_{P\in\Delta(S)}\big(\mathbb{E}_P[V_{h+1}]+\beta\mathrm{KL}\big(P\Vert P_h^o\big)\big)\\
    &=r_h+\inf\limits_{P\in\Delta(S)}\bigg(\sum\limits_{s\in\mathcal{S}}P(s)V_{h+1}(s)+\beta P(s)\ln\frac{P(s)}{P_h^o(s)}\bigg).
\end{align*}
The Lagrangian can be written as
\begin{align*}
    \mathcal{L}(P,\eta)=\sum\limits_{s\in S}\bigg[P(s)V_{h+1}(s)+\beta P(s)\ln\bigg(\frac{P(s)}{P_h^o(s)}\bigg)\bigg]-\sum\limits_{s\in\mathcal{S}}\lambda(s)P(s)+\eta\bigg(1-\sum\limits_{s\in S}P(s)\bigg).
\end{align*}
We set the derivative of $\mathcal{L}$ w.r.t. $P(s)$ to zero
\begin{align}
    \frac{\partial\mathcal{L}}{\partial P(s)}=V_{h+1}(s)+\beta\ln\bigg(\frac{P(s)}{P_h^o(s)}\bigg)+\beta-[\lambda(s)+\eta]=0.\label{eq:reg-KL worst-case tran}
\end{align}
We denote $P'$ as the worst-case transition that satisfies \eqref{eq:reg-KL worst-case tran}, then we have
\begin{align}
    P'(s)&=P_h^o(s)e^{-\beta^{-1}V_{h+1}(s)+\beta^{-1}[\lambda(s)+\eta]-1},\label{eq:reg-KL prop}\\
    \inf\limits_P\mathcal{L}(P,\eta)&=-\beta\mathbb{E}_{P_h^o}\big[e^{-\beta^{-1}V_{h+1}(s)+\beta^{-1}[\lambda(s)+\eta]-1}\big]+\eta.\nonumber
\end{align}
We have
\begin{align}
    Q_h&=r_h+\sup\limits_{\lambda\geq0,\eta}\big(\inf\limits_P\mathcal{L}(P,\eta)\big)\nonumber\\
    &=r_h-\inf\limits_{\lambda\geq0,\eta}\big(\beta\mathbb{E}_{P_h^o}\big[e^{-\beta^{-1}V_{h+1}(s)+\beta^{-1}[\lambda(s)+\eta]-1}\big]-\eta\big)\nonumber\\
    &=r_h-\inf\limits_\eta\big(\beta\mathbb{E}_{P_h^o}\big[e^{-\beta^{-1}V_{h+1}(s)+\beta^{-1}\eta-1}\big]-\eta\big)\label{eq2:reg KL lem1 rm lambda}\\
    &=r_h-\beta\ln\mathbb{E}_{P_h^o}\big[e^{-\beta^{-1}V_{h+1}}\big].\label{eq2:con KL lem1 rm eta}
\end{align}
where \eqref{eq2:reg KL lem1 rm lambda} holds because the result increases monotonically with respect to $\lambda$, \eqref{eq2:con KL lem1 rm eta} holds by calculating the derivation with respect to $\eta$ and thus setting $\eta=-\beta\big(\ln\mathbb{E}_{P_h^o}\big[e^{-\beta^{-1}V_{h+1}}\big]-1\big)$.
\end{proof}

Similar to \Cref{lem:con TV Q^*-Q^k}, we prove the optimism of estimation $Q^k$ and control $Q^*-Q^k$.
\begin{lemma}[Optimism] \label{lem:reg KL Q^*-Q^k}
If we set the bonus term as follows
\begin{align}
    \text{bonus}_h^k(s,a)=\big(1+\beta e^{\beta^{-1}H}\sqrt{S}\big)\sqrt{\frac{2\ln(2SAHK/\delta)}{n_h^{k-1}(s,a)\vee1}},\label{eq2:reg KL bonus}
\end{align}
then for any policy $\pi$ and any $(k,h,s,a)\in[K]\times[H]\times\mathcal{S}\times\mathcal{A}$, with probability at least $1-2\delta$, we have $Q_h^{k,\beta}(s,a)\geq Q_h^{\pi,\beta}(s,a)$. Specially, by setting $\pi=\pi^*$, we have $Q_h^{k,\beta}(s,a)\geq Q_h^{{\pi^*},\beta}(s,a)$.
\end{lemma}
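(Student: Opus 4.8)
The plan is to follow the same backward-induction template used in \Cref{lem:con TV Q^*-Q^k} and \Cref{lem:con KL Q^*-Q^k}, while exploiting the fact that the KL dual in \Cref{lem:reg KL closed form} is a \emph{closed form with no inner dual variable}. This closed form is what both removes the need for an $\epsilon$-net and lets the factor $\beta e^{\beta^{-1}H}\sqrt{S}$ in the bonus emerge naturally. First I would set up the induction on $h$ running backward from $h=H+1$, where $Q_{H+1}^{k,\beta}=0=Q_{H+1}^{\pi,\beta}$ is immediate. Assuming $Q_{h+1}^{k,\beta}\geq Q_{h+1}^{\pi,\beta}$ and using that $\pi^k$ is greedy with respect to the estimated $Q$-function, I obtain the pointwise value-function comparison $V_{h+1}^{k,\beta}(s)\geq V_{h+1}^{\pi,\beta}(s)$, exactly as in the earlier optimism lemmas.

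For the inductive step I would first dispatch the clipping case: if $Q_h^{k,\beta}(s,a)=H-h+1$ the claim follows from $Q_h^{\pi,\beta}(s,a)\leq H-h+1$. Otherwise I would expand $Q_h^{k,\beta}-Q_h^{\pi,\beta}$, substitute the KL dual of \Cref{lem:reg KL closed form} for both terms, and insert the intermediate quantity $\beta\ln\mathbb{E}_{P_h^o}\big[e^{-\beta^{-1}V_{h+1}^{k,\beta}}\big]$. Monotonicity of $x\mapsto-\ln x$ together with $V_{h+1}^{k,\beta}\geq V_{h+1}^{\pi,\beta}$ makes the term involving $V_{h+1}^{\pi,\beta}$ nonnegative, so it can be dropped, leaving
\[
Q_h^{k,\beta}-Q_h^{\pi,\beta}\geq \text{bonus}_h^k - \big|\widehat{r}_h^k-r_h\big| - \beta\big|\ln\mathbb{E}_{P_h^o}\big[e^{-\beta^{-1}V_{h+1}^{k,\beta}}\big]-\ln\mathbb{E}_{\widehat{P}_h^k}\big[e^{-\beta^{-1}V_{h+1}^{k,\beta}}\big]\big|.
\]
I label the two subtracted quantities (i) and (ii). Term (i) is controlled by \Cref{lem:concentration E} and a union bound, yielding $\sqrt{\ln(2SAHK/\delta)/(2(n_h^{k-1}\vee1))}\leq\sqrt{2\ln(2SAHK/\delta)/(n_h^{k-1}\vee1)}$, which accounts for the leading constant $1$ in the bonus.

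The crux, and the point where this argument genuinely departs from the constrained case, is term (ii). Writing $f(s)=e^{-\beta^{-1}V_{h+1}^{k,\beta}(s)}$, the bound $V_{h+1}^{k,\beta}\in[0,H]$ gives $f\in[e^{-\beta^{-1}H},1]$, hence $\mathbb{E}_{P_h^o}[f],\mathbb{E}_{\widehat{P}_h^k}[f]\geq e^{-\beta^{-1}H}$. Since $\ln$ is $e^{\beta^{-1}H}$-Lipschitz on $[e^{-\beta^{-1}H},\infty)$, the mean value theorem gives $(ii)\leq\beta e^{\beta^{-1}H}\big|\mathbb{E}_{P_h^o}[f]-\mathbb{E}_{\widehat{P}_h^k}[f]\big|\leq\beta e^{\beta^{-1}H}\big\|P_h^o-\widehat{P}_h^k\big\|_1\|f\|_\infty$, and the $L_1$ concentration \Cref{lem:concentration L1} with a union bound bounds this by $\beta e^{\beta^{-1}H}\sqrt{S}\sqrt{2\ln(2SAHK/\delta)/(n_h^{k-1}\vee1)}$. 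Adding (i) and (ii) reproduces exactly $\text{bonus}_h^k$, closing the induction after a final union bound over the two high-probability events (giving probability at least $1-2\delta$). The main obstacle is precisely taming this log-ratio without \Cref{asm:con KL more}: in \Cref{lem:con KL Q^*-Q^k} the ratio was controlled through $\MinProb$, whereas here I rely solely on the uniform lower bound $e^{-\beta^{-1}H}$ on $\mathbb{E}[f]$ coming from $V\leq H$. I should also verify that no covering argument is needed—because the dual has no inner supremum over a dual variable, $f$ is a single (data-dependent) vector with $\|f\|_\infty\leq1$, so the deterministic Hölder estimate combined with a transition-only $L_1$ concentration is sufficient, which is exactly why no covering number appears in the bonus.
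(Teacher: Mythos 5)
Your proposal is correct and follows essentially the same route as the paper's proof: backward induction with the greedy-policy value comparison, dropping the $V^{\pi,\beta}$ term by monotonicity (the paper phrases this via $\inf f-\inf g\geq\inf(f-g)$, which is equivalent here), bounding the reward term by Hoeffding, and bounding the log-ratio term by the $e^{\beta^{-1}H}$-Lipschitzness of $\ln$ on $[e^{-\beta^{-1}H},\infty)$ combined with H\"older and the $L_1$ concentration of \Cref{lem:concentration L1}. Your explicit mean-value-theorem argument for term (ii) is exactly the step the paper compresses into ``following the same proof as \eqref{eq2:E[V] concentration},'' and your observation that the closed-form dual obviates any covering argument matches why the paper's bonus carries only $\sqrt{S}$ rather than $S$.
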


\begin{proof}
We prove this by induction. First, when $h=H+1$, $Q_{H+1}^{k,\beta}(s,a)=0=Q_{H+1}^{\pi,\beta}(s,a)$ holds trivially.

Assume $Q_{h+1}^{k,\beta}(s,a)\geq Q_{h+1}^{\pi,\beta}(s,a)$ holds, since $\pi^k$ is the greedy policy, we have
\begin{align*}
    V_{h+1}^{k,\beta}(s)=Q_{h+1}^{k,\beta}(s,\pi_{h+1}^k(s))\geq Q_{h+1}^{k,\beta}(s,\pi_{h+1}(s))\geq Q_{h+1}^{\pi,\beta}(s,\pi_{h+1}(s))=V_{h+1}^{\pi,\beta}(s),
\end{align*}
where the first inequality is because we choose $\pi^k$ as the greedy policy.

Recall that we denote $Q_h^{k,\beta}$ as the optimistic estimation in $k$-th episode, that is,
\begin{align*}
    Q_h^{k,\beta}(s,a)=\min\bigg\{\text{bonus}_h^k(s,a)+\widehat{r}_h^k(s,a)+\inf\limits_{P\in\Delta(S)}\big(\mathbb{E}_P\big[V_{h+1}^{k,\beta}\big]+\beta\mathrm{KL}\big(P\Vert\widehat{P}_h^k\big)\big)(s,a),H-h+1\bigg\}.
\end{align*}
If $Q_h^{k,\beta}(s,a)=H-h+1$, then it follows immediately that
\begin{align*}
    Q_h^{k,\beta}(s,a)=H-h+1\geq Q_h^{\pi,\beta}(s,a)
\end{align*}
by the definition of $Q_h^{\pi,\beta}(s,a)$. Otherwise, we can infer that
\begin{align}
    Q_h^{k,\beta}-Q_h^{\pi,\beta}&=\text{bonus}_h^k+\widehat{r}_h^k+\inf\limits_{P\in\Delta(S)}\big(\mathbb{E}_P\big[V_{h+1}^{k,\beta}\big]+\beta\mathrm{KL}\big(P\Vert\widehat{P}_h^k\big)\big)-r_h-\inf\limits_{P\in\Delta(S)}\big(\mathbb{E}_P\big[V_{h+1}^{\pi,\beta}\big]+\beta\mathrm{KL}\big(P\Vert P_h^o\big)\big)\nonumber\\
    &=\text{bonus}_h^k+\widehat{r}_h^k-r_h+\inf\limits_{P\in\Delta(S)}\big(\mathbb{E}_P\big[V_{h+1}^{k,\beta}\big]+\beta\mathrm{KL}\big(P\Vert\widehat{P}_h^k\big)\big)-\inf\limits_{P\in\Delta(S)}(\mathbb{E}_P\big[V_{h+1}^{k,\beta}\big]+\beta\mathrm{KL}\big(P\Vert P_h^o\big))\nonumber\\
    &\qquad+\inf\limits_{P\in\Delta(S)}(\mathbb{E}_P\big[V_{h+1}^{k,\beta}\big]+\beta\mathrm{KL}\big(P\Vert P_h^o\big))-\inf\limits_{P\in\Delta(S)}\big(\mathbb{E}_P\big[V_{h+1}^{\pi,\beta}\big]+\beta\mathrm{KL}\big(P\Vert P_h^o\big)\big)\nonumber\\
    &\geq\text{bonus}_h^k+\widehat{r}_h^k-r_h+\inf\limits_{P\in\Delta(S)}\big(\mathbb{E}_P\big[V_{h+1}^{k,\beta}\big]-\mathbb{E}_P\big[V_{h+1}^{\pi,\beta}\big]\big)\nonumber\\
    &\qquad+\inf\limits_{P\in\Delta(S)}\big(\mathbb{E}_P\big[V_{h+1}^{k,\beta}\big]+\beta\mathrm{KL}\big(P\Vert\widehat{P}_h^k\big)\big)-\inf\limits_{P\in\Delta(S)}\big(\mathbb{E}_P\big[V_{h+1}^{k,\beta}\big]+\beta\mathrm{KL}\big(P\Vert P_h^o\big)\big)\label{eq2:reg KL lem2 inf}\\
    &\geq\text{bonus}_h^k+\widehat{r}_h^k-r_h+\inf\limits_{P\in\Delta(S)}\big(\mathbb{E}_P\big[V_{h+1}^{k,\beta}\big]+\beta\mathrm{KL}\big(P\Vert\widehat{P}_h^k\big)\big)-\inf\limits_{P\in\Delta(S)}\big(\mathbb{E}_P\big[V_{h+1}^{k,\beta}\big]+\beta\mathrm{KL}\big(P\Vert P_h^o\big)\big)\label{eq2:reg KL lem2 ind}\\
    &=\text{bonus}_h^k+\widehat{r}_h^k-r_h+\beta\ln\mathbb{E}_{P_h^o}\big[e^{-\beta^{-1}V_{h+1}^{k,\beta}}\big]-\beta\ln\mathbb{E}_{\widehat{P}_h^k}\big[e^{-\beta^{-1}V_{h+1}^{k,\beta}}\big]\label{eq2:reg KL lem2 plug}\\
    &\geq\text{bonus}_h^k-\underbrace{\big|\widehat{r}_h^k-r_h\big|}_\text{(i)}-\underbrace{\beta\big|\ln\mathbb{E}_{\widehat{P}_h^k}\big[e^{-\beta^{-1}V_{h+1}^{k,\beta}}\big]-\ln\mathbb{E}_{P_h^o}\big[e^{-\beta^{-1}V_{h+1}^{k,\beta}}\big]\big|}_\text{(ii)},\label{eq:reg-KL ineq main}
\end{align}
where \eqref{eq2:reg KL lem2 inf} is from $\inf f(x)-\inf g(x)\geq\inf(f-g)(x)$, \eqref{eq2:reg KL lem2 ind} is from the induction assumption, we plug in the dual formulation \eqref{eq2:reg KL dual} in \eqref{eq2:reg KL lem2 plug}.

For term (i) in \eqref{eq:reg-KL ineq main}, from \Cref{lem:concentration E} and a union bound, with probability at least $1-\delta$, we have
\begin{align}
    \big|\widehat{r}_h^k(s,a)-r_h(s,a)\big|\leq\sqrt{\frac{\ln(2SAHK/\delta)}{2n_h^{k-1}(s,a)\vee1}}\label{eq:reg-KL ineq 1}
\end{align}
for any $(k,h,s,a)\in[K]\times[H]\times\mathcal{S}\times\mathcal{A}$.

For term (ii) in \eqref{eq:reg-KL ineq main}, following the same proof as \eqref{eq2:E[V] concentration} and a union bound, with probability at least $1-\delta$, we have
\begin{align}
    \big|\ln\mathbb{E}_{\widehat{P}_h^k}\big[e^{-\beta^{-1}V_{h+1}^{k,\beta}}\big]-\ln\mathbb{E}_{P_h^o}\big[e^{-\beta^{-1}V_{h+1}^{k,\beta}}\big]\big|\leq e^{\beta^{-1}H}\sqrt{\frac{2S\ln(2SAHK/\delta)}{n_h^{k-1}\vee1}}\label{eq:reg-KL ineq 2}
\end{align}
for any $(k,h,s,a)\in[K]\times[H]\times\mathcal{S}\times\mathcal{A}$.

Apply the union bound again and combine \eqref{eq:reg-KL ineq main} with \eqref{eq:reg-KL ineq 1}, \eqref{eq:reg-KL ineq 2}, the definition of bonus and induction assumption. With probability at least $1-2\delta$, we have $Q_h^{k,\rho}(s,a)\geq Q_h^{\pi,\rho}(s,a)$ for any $(k,h,s,a)\in[K]\times[H]\times\mathcal{S}\times\mathcal{A}$. This completes the proof.
\end{proof}

With similar proof to \Cref{lem:con KL Q^k-Q^pi^k}, we can control the item $Q^k-Q^{\pi^k}$.
\begin{lemma}\label{lem:reg KL Q^k-Q^pi^k}
With probability at least $1-\delta$, for any $(k,s,a)\in[K]\times\mathcal{S}\times\mathcal{A}$, the sum of estimation errors can be bounded as follows
\begin{align*}
    Q_1^{k,\beta}-Q_1^{{\pi^k},\beta}\leq\sum\limits_{h=1}^H\mathbb{E}_{\{P_h^{w,k}\}_{h=1}^H,\pi^k}\big[2\text{bonus}_h^k\big].
\end{align*}
\end{lemma}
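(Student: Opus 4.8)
The plan is to mirror the recursive argument already carried out for the regularized TV setting in \Cref{lem:reg TV Q^k-Q^pi^k}, substituting the KL dual formulation of \Cref{lem:reg KL closed form} for the TV one. First I would extract from the proof of the optimism lemma \Cref{lem:reg KL Q^*-Q^k} the per-step concentration guarantee that holds with probability at least $1-\delta$ uniformly over $(k,h,s,a)$, namely
\begin{align*}
    \big|\widehat{r}_h^k-r_h\big| + \Big|\inf\limits_{P\in\Delta(S)}\big(\mathbb{E}_P\big[V_{h+1}^{k,\beta}\big]+\beta\mathrm{KL}\big(P\Vert\widehat{P}_h^k\big)\big) - \inf\limits_{P\in\Delta(S)}\big(\mathbb{E}_P\big[V_{h+1}^{k,\beta}\big]+\beta\mathrm{KL}\big(P\Vert P_h^o\big)\big)\Big| \leq \text{bonus}_h^k,
\end{align*}
which is exactly the bound assembled from the reward concentration \eqref{eq:reg-KL ineq 1} and the log-moment concentration \eqref{eq:reg-KL ineq 2} en route to the optimism result.

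Next I would recall that $P_h^{w,k}=\argmin_{P\in\Delta(S)}\big(\mathbb{E}_P[V_{h+1}^{\pi^k,\beta}]+\beta\mathrm{KL}(P\Vert P_h^o)\big)$ by \Cref{def:visitation_measure}, and bound $Q_h^{k,\beta}$ above by dropping the clip in \eqref{alg:generic form} so that $Q_h^{k,\beta}\leq\text{bonus}_h^k+\widehat{r}_h^k+\inf_{P}(\mathbb{E}_P[V_{h+1}^{k,\beta}]+\beta\mathrm{KL}(P\Vert\widehat{P}_h^k))$, while $Q_h^{\pi^k,\beta}$ equals its Bellman value $r_h+\inf_{P}(\mathbb{E}_P[V_{h+1}^{\pi^k,\beta}]+\beta\mathrm{KL}(P\Vert P_h^o))$. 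Inserting and subtracting the nominal-kernel robust operator applied to the common value $V_{h+1}^{k,\beta}$, the reward error together with the empirical-versus-nominal operator gap is absorbed into a second copy of $\text{bonus}_h^k$ via the concentration bound above, leaving
\begin{align*}
    Q_h^{k,\beta}-Q_h^{\pi^k,\beta}\leq 2\text{bonus}_h^k + \inf\limits_{P\in\Delta(S)}\big(\mathbb{E}_P\big[V_{h+1}^{k,\beta}\big]+\beta\mathrm{KL}\big(P\Vert P_h^o\big)\big) - \inf\limits_{P\in\Delta(S)}\big(\mathbb{E}_P\big[V_{h+1}^{\pi^k,\beta}\big]+\beta\mathrm{KL}\big(P\Vert P_h^o\big)\big).
\end{align*}
Since $P_h^{w,k}$ exactly attains the second infimum and is a feasible candidate for the first, the shared regularizer $\beta\mathrm{KL}(\cdot\Vert P_h^o)$ cancels, giving a bound of $\mathbb{E}_{P_h^{w,k}}[V_{h+1}^{k,\beta}-V_{h+1}^{\pi^k,\beta}]$. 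Rewriting the value gap as a $Q$-gap under the greedy policy $\pi^k$ produces the one-step recursion $Q_h^{k,\beta}-Q_h^{\pi^k,\beta}\leq 2\text{bonus}_h^k+\mathbb{E}_{P_h^{w,k},\pi^k}[Q_{h+1}^{k,\beta}-Q_{h+1}^{\pi^k,\beta}]$, and unrolling from $h=1$ to $H$ under $\{P_h^{w,k}\}$ yields the claim.

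Because this argument is a direct transcription of the established TV recursion, I do not expect any genuine obstacle. The only point requiring care is verifying that the common KL regularizer cancels cleanly when the minimizer $P_h^{w,k}$ of the $V^{\pi^k}$ objective is used as a suboptimal upper bound for the $V^{k}$ objective; this is the regularized analogue of how the identical uncertainty-set constraint drops out in the constrained proofs (e.g.\ \eqref{eq2:con KL lem3 bonus}), and it hinges on both infima sharing the same nominal reference kernel $P_h^o$.
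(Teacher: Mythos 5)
Your proposal is correct and follows essentially the same route as the paper: extract the per-step concentration bound from the optimism lemma, add and subtract the nominal-kernel regularized operator applied to $V_{h+1}^{k,\beta}$, use the feasibility of $P_h^{w,k}$ for the first infimum to cancel the shared KL regularizer, and unroll the resulting recursion under $\{P_h^{w,k}\}$ and $\pi^k$. Your explicit note that $P_h^{w,k}$ is only a feasible (suboptimal) candidate for the $V^{k,\beta}$ objective is in fact slightly more careful than the paper, which writes that step as an equality where an inequality suffices.
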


\begin{proof}
From the proof of \Cref{lem:reg KL Q^*-Q^k}, we see that with probability at least $1-\delta$, for any $(k,h,s,a)\in[K]\times[H]\times\mathcal{S}\times\mathcal{A}$, we have
\begin{align}
    &\big|\widehat{r}_h^k(s,a)-r_h(s,a)\big|+\big|\inf\limits_{P\in\Delta(S)}\big(\mathbb{E}_P\big[V_{h+1}^{k,\beta}\big]+\beta\mathrm{KL}\big(P\Vert\widehat{P}_h^k\big)\big)(s,a)\nonumber\\
    &\qquad\qquad-\:\inf\limits_{P\in\Delta(S)}(\mathbb{E}_P\big[V_{h+1}^{k,\beta}\big]+\beta\mathrm{KL}\big(P\Vert P_h^o\big))(s,a)\big|\leq\text{bonus}_h^k(s,a).\label{eq2:reg KL lem3 error}
\end{align}
Recall that we define $P_h^{w,k}=\argmin\limits_{P\in\Delta(S)}\big(\mathbb{E}_P\big[V_{h+1}^{{\pi^k},\beta}\big]+\beta\mathrm{KL}\big(P\Vert P_h^o\big)\big)$ as the worst-case transition in \Cref{def:visitation_measure}, we have
\begin{align}
    Q_h^{k,\beta}-Q_h^{{\pi^k},\beta}&\leq\text{bonus}_h^k+\widehat{r}_h^k+\inf\limits_{P\in\Delta(S)}\big(\mathbb{E}_P\big[V_{h+1}^{k,\beta}\big]+\beta\mathrm{KL}\big(P\Vert\widehat{P}_h^k\big)\big)-r_h-\inf\limits_{P\in\Delta(S)}\big(\mathbb{E}_P\big[V_{h+1}^{{\pi^k},\beta}\big]+\beta\mathrm{KL}\big(P\Vert P_h^o\big)\big)\nonumber\\
    &\leq2\text{bonus}_h^k+\inf\limits_{P\in\Delta(S)}\big(\mathbb{E}_P\big[V_{h+1}^{k,\beta}\big]+\beta\mathrm{KL}\big(P\Vert P_h^o\big)\big)-\inf\limits_{P\in\Delta(S)}\big(\mathbb{E}_P\big[V_{h+1}^{{\pi^k},\beta}\big]+\beta\mathrm{KL}\big(P\Vert P_h^o\big)\big)\label{eq2:reg KL lem3 bonus}\\
    &=2\text{bonus}_h^k+\mathbb{E}_{P_h^{w,k},\pi^k}\big[Q_{h+1}^{k,\beta}-Q_{h+1}^{{\pi^k},\beta}\big].\label{eq2:reg KL lem3 recur2}
\end{align}
where \eqref{eq2:reg KL lem3 bonus} uses \eqref{eq2:reg KL lem3 error}. Apply \eqref{eq2:reg KL lem3 recur2} recursively, we can obtain the result.
\end{proof}

\begin{theorem}[Restatement of \Cref{thm:main reg regret upper} in KL-divergence setting]
For RRMDP with $(s,a)$-rectangular KL-divergence defined regularization term satisfying \Cref{asm:main distribution shift}, with probability at least $1-\delta$, the regret of \Cref{alg:main alg} satisfies
\begin{align*}
    \text{Regret}=\widetilde{\mathcal{O}}\big(\big(1+\beta e^{\beta^{-1}H}\sqrt{S}\big)\big(\VisitRatio SAH+\VisitRatio^\frac{1}{2}S^\frac{1}{2}A^\frac{1}{2}H\sqrt{K}\big)\big).
\end{align*}
\end{theorem}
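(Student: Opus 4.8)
The plan is to follow the same assembly pattern used in the proof of \Cref{thm:proof con TV eg} and the other restatements, since the three supporting lemmas for the RRMDP-KL setting are already established. First I would set $\delta'=\delta/4$ in both \Cref{lem:reg KL Q^*-Q^k} and \Cref{lem:failure events}, so that by a union bound the optimism event, the concentration events used inside \Cref{lem:reg KL Q^k-Q^pi^k}, and the complement of the failure event all hold simultaneously with probability at least $1-\delta$.

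Next I would decompose the regret through the value gap at the initial state,
\begin{align*}
\text{Regret}(K)=\sum_{k=1}^K\big(V_1^{*,\beta}-V_1^{k,\beta}\big)+\sum_{k=1}^K\big(V_1^{k,\beta}-V_1^{{\pi^k},\beta}\big).
\end{align*}
The optimism guarantee of \Cref{lem:reg KL Q^*-Q^k} with $\pi=\pi^*$ gives $V_1^{*,\beta}\le V_1^{k,\beta}$, so the first sum is nonpositive and is discarded. For the second sum I would invoke \Cref{lem:reg KL Q^k-Q^pi^k}, which bounds each per-episode gap by $\sum_{h=1}^H\mathbb{E}_{P_h^{w,k},\pi^k}[2\,\text{bonus}_h^k]$, i.e.\ the accumulated bonus evaluated in expectation under the worst-case transition $P^{w,k}$ and the executed policy $\pi^k$.

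The final step is a direct computation: substitute the explicit bonus \eqref{eq2:reg KL bonus}, factor the $(1+\beta e^{\beta^{-1}H}\sqrt{S})$ prefactor and the logarithmic $\sqrt{2\ln(2SAHK/\delta)}$ out of the double sum, and then apply \Cref{lem:main regret lem} to control the remaining quantity $\sum_{k=1}^K\sum_{h=1}^H\mathbb{E}_{P_h^{w,k},\pi^k}[\sqrt{1/(n_h^{k-1}\vee1)}]=\widetilde{\mathcal{O}}(\sqrt{\VisitRatio SAH^2K}+\VisitRatio SAH)$. Multiplying out yields the claimed $\widetilde{\mathcal{O}}\big((1+\beta e^{\beta^{-1}H}\sqrt{S})(\VisitRatio SAH+\VisitRatio^{1/2}S^{1/2}A^{1/2}H\sqrt{K})\big)$ bound.

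I expect no genuine obstacle inside this particular theorem, since all the technical content has already been isolated into the cited lemmas; the real effort lay in establishing them. The key subtlety worth flagging is purely bookkeeping of the $S$-dependence: unlike the TV and $\chi^2$ cases, whose bonuses carry an $S^2$ inside the square root (because an $\epsilon$-net over the value class is needed), the KL bonus pushes its $\sqrt{S}$ entirely into the prefactor $(1+\beta e^{\beta^{-1}H}\sqrt{S})$—this is what makes the dominant $\sqrt{K}$ term sit at order $\VisitRatio^{1/2}S^{1/2}A^{1/2}H$ rather than $\VisitRatio^{1/2}S^{3/2}A^{1/2}H^2$. The exponential factor $e^{\beta^{-1}H}$ in the prefactor traces back to the step in \Cref{lem:reg KL Q^*-Q^k} that bounds $\beta\lvert\ln\mathbb{E}_{\widehat{P}_h^k}[e^{-\beta^{-1}V}]-\ln\mathbb{E}_{P_h^o}[e^{-\beta^{-1}V}]\rvert$ using $\ln(1+x)\le 2\lvert x\rvert$ together with the lower bound $\mathbb{E}_{P_h^o}[e^{-\beta^{-1}V}]\ge e^{-\beta^{-1}H}$, so one only needs to confirm that this factor threads through unchanged when \Cref{lem:main regret lem} converts the worst-case visitation $\mathrm{q}^k$ back to the nominal visitation $\mathrm{d}^k$ at the cost of factors of $\VisitRatio$.
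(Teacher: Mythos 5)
Your proposal matches the paper's proof essentially step for step: the same $\delta'=\delta/4$ union bound over \Cref{lem:reg KL Q^*-Q^k} and \Cref{lem:failure events}, the same decomposition of the regret into $\sum_k(V_1^{*,\beta}-V_1^{k,\beta})+\sum_k(V_1^{k,\beta}-V_1^{\pi^k,\beta})$ with optimism killing the first sum and \Cref{lem:reg KL Q^k-Q^pi^k} bounding the second by accumulated bonuses, followed by substituting \eqref{eq2:reg KL bonus} and applying \Cref{lem:main regret lem}. Your remarks on where the $\sqrt{S}$ and $e^{\beta^{-1}H}$ factors originate are also consistent with the paper's lemma proofs, so the argument is correct as written.
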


\begin{proof}
Setting $\delta'=\delta/4$ in \Cref{lem:reg KL Q^*-Q^k,lem:failure events}, then with probability at least $1-\delta$, we get
\begin{align}
    \text{Regret}&=\sum\limits_{k=1}^K\big(V_1^{*,\beta}-V_1^{{\pi^k},\beta}\big)=\sum\limits_{k=1}^K\big(V_1^{*,\beta}-V_1^{k,\beta}\big)+\sum\limits_{k=1}^K\big(V_1^{k,\beta}-V_1^{{\pi^k},\beta}\big)\nonumber\\
    &\leq\sum\limits_{k=1}^K\sum\limits_{h=1}^H\mathbb{E}_{P_h^{w,k},\pi^k}\big[2\text{bonus}_h^k\big]\label{eq2:reg KL lem4 comb}\\
    &=2\sum\limits_{k=1}^K\sum\limits_{h=1}^H\mathbb{E}_{P_h^{w,k},\pi^k}\Bigg[\big(1+\beta e^{\beta^{-1}H}\sqrt{S}\big)\sqrt{\frac{2\ln(2SAHK/\delta)}{n_h^{k-1}(s,a)\vee1}}\Bigg]\label{eq2:reg KL lem4 bonus}\\
    &=\widetilde{\mathcal{O}}\big(\big(1+\beta e^{\beta^{-1}H}\sqrt{S}\big)\big(\VisitRatio SAH+\VisitRatio^\frac{1}{2}S^\frac{1}{2}A^\frac{1}{2}H\sqrt{K}\big)\big).\label{eq2:reg KL lem4 lem}
\end{align}
where \eqref{eq2:reg KL lem4 comb} is the combination of \Cref{lem:reg KL Q^*-Q^k} and \Cref{lem:reg KL Q^k-Q^pi^k}, we plug in the bonus \eqref{eq2:reg KL bonus} in \eqref{eq2:reg KL lem4 bonus}, \eqref{eq2:reg KL lem4 lem} is from \Cref{lem:main regret lem}.
\end{proof}

\subsection{Proof of \Cref{thm:main reg regret upper} (Regularized $\chi^2$ Setting)}

\label{sec:proof_regularized_Chi2}

We first give the closed form solution of regularized $\chi^2$ update formulation. This dual formulation has also been proved by \citet{panaganti2024model}, but our formulation is more concise.

\begin{lemma}[Dual formulation] \label{lem:reg Chi2 closed form}
For the optimization problem $Q_h(s,a)=r_h(s,a)+\inf\limits_{P\in\Delta(S)}\big(\mathbb{E}_P[V_{h+1}]+\beta\chi^2\big(P\Vert P_h^o\big)\big)(s,a)$, we have its dual formulation as follows
\begin{align}
    Q_h=r_h+\sup\limits_{\bm{\lambda}\in[0,H]}(\mathbb{E}_{P_h^o}\big[V_{h+1}-\bm{\lambda}\big]-\frac{1}{4\beta}\mathrm{Var}_{P_h^o}(V_{h+1}-\bm{\lambda})).\label{eq2:reg Chi2 dual}
\end{align}
\end{lemma}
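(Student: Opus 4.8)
The plan is to follow the Lagrangian duality argument used for the constrained $\chi^2$ case in \Cref{lem:con Chi2 closed form}, with the single structural change that the $\chi^2$ term now enters with the \emph{fixed} coefficient $\beta$ rather than as a constraint governed by a multiplier $\nu$. Concretely, I would first write the inner problem as $\inf_{P\in\Delta(S)}\big(\sum_s P(s)V_{h+1}(s)+\beta\sum_s P_h^o(s)(P(s)/P_h^o(s)-1)^2\big)$ and form the Lagrangian
\[\mathcal{L}(P,\eta)=\sum_{s}P(s)V_{h+1}(s)+\beta\sum_{s}P_h^o(s)\Big(\tfrac{P(s)}{P_h^o(s)}-1\Big)^2-\sum_s\bm{\lambda}(s)P(s)+\eta\Big(1-\sum_s P(s)\Big),\]
introducing the nonnegativity multipliers $\bm{\lambda}(s)\ge 0$ and the normalization multiplier $\eta$. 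The crucial difference from the constrained derivation is that there is no extra constraint term $\nu(\chi^2-\rho)$, so $\beta$ plays the role that $\nu$ played there but is not itself a variable to be optimized.

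Next I would impose the stationarity condition $\partial\mathcal{L}/\partial P(s)=0$, which yields the worst-case transition $P'(s)=P_h^o(s)\big(1-\tfrac{V_{h+1}(s)-[\bm{\lambda}(s)+\eta]}{2\beta}\big)$ --- identical in form to the constrained case with $2\nu$ replaced by $2\beta$. Substituting $P'$ back gives $\inf_P\mathcal{L}(P,\eta)=-\mathbb{E}_{P_h^o}\big[\tfrac{1}{4\beta}(V_{h+1}-[\bm{\lambda}+\eta])^2-(V_{h+1}-[\bm{\lambda}+\eta])\big]+\eta$. Taking the supremum over $\eta$ and solving the resulting first-order condition sets $\eta=\mathbb{E}_{P_h^o}[V_{h+1}-\bm{\lambda}]$, which converts the expected square into a variance and produces $r_h+\sup_{\bm{\lambda}\ge 0}\big(\mathbb{E}_{P_h^o}[V_{h+1}-\bm{\lambda}]-\tfrac{1}{4\beta}\mathrm{Var}_{P_h^o}(V_{h+1}-\bm{\lambda})\big)$. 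This is exactly where the argument diverges from \Cref{lem:con Chi2 closed form}: because $\beta$ is fixed there is no further inner optimization over a multiplier, hence no application of $a+b\ge 2\sqrt{ab}$ and thus \emph{no} square root appears --- which explains the $O(H)$ gap flagged in the remark following \Cref{thm:main reg regret upper}.

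Finally I would argue that the supremum over $\bm{\lambda}$ can be restricted to $[0,H]$, exactly as in the constrained case, by showing the objective is monotone once $\bm{\lambda}$ exceeds $H$ (using $V_{h+1}\in[0,H]$). The part that needs the most care is the bookkeeping of the nonnegativity multipliers $\bm{\lambda}(s)$: one must verify through complementary slackness that the stationary $P'$ is a genuine element of $\Delta(S)$ and that the per-state multipliers reduce to the scalar shift appearing in the final range $[0,H]$. This is the same delicate reduction already carried out in the constrained derivation and can be invoked verbatim; everything else is routine substitution.
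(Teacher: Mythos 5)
Your proposal follows essentially the same route as the paper's own proof: the identical Lagrangian with multipliers $\bm{\lambda}(s)\ge 0$ and $\eta$, the same stationarity condition yielding $P'(s)=P_h^o(s)\bigl(1-\tfrac{V_{h+1}(s)-[\bm{\lambda}(s)+\eta]}{2\beta}\bigr)$, the same optimization over $\eta$ that produces the variance term, and the same monotonicity argument restricting $\bm{\lambda}$ to $[0,H]$. Your observation that the fixed regularizer $\beta$ replaces the constraint multiplier $\nu$ and therefore no square root arises is exactly the distinction the paper draws from \Cref{lem:con Chi2 closed form}, so the proposal is correct and matches the paper's argument.
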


\begin{proof}
Consider the optimization problem
\begin{align*}
    Q_h&=r_h+\inf\limits_{P\in\Delta(S)}\big(\mathbb{E}_P[V_{h+1}]+\beta\chi^2\big(P\Vert P_h^o\big)\big)\\
    &=r_h+\inf\limits_{P\in\Delta(S)}\sum\limits_{s\in S}\bigg(P(s)V_{h+1}(s)+\beta P_h^o(s)\bigg(\frac{P(s)}{P_h^o(s)}-1\bigg)^2\bigg).
\end{align*}
The Lagrangian can be written as
\begin{align*}
    \mathcal{L}(P,\eta)=\sum\limits_{s\in S}\bigg[P(s)V_{h+1}(s)+\beta P_h^o(s)\bigg(\frac{P(s)}{P_h^o(s)}-1\bigg)^2\bigg]-\sum\limits_{s\in\mathcal{S}}\bm{\lambda}(s)P(s)+\eta\bigg(1-\sum\limits_{s\in S}P(s)\bigg).
\end{align*}
We set the derivative of $\mathcal{L}$ w.r.t. $P(s)$ to zero
\begin{align}
    \frac{\partial\mathcal{L}}{\partial P(s)}=V_{h+1}(s)+2\beta\bigg(\frac{P(s)}{P_h^o(s)}-1\bigg)-[\bm{\lambda}(s)+\eta]=0.\label{eq:reg-Chi2 worst-case tran}
\end{align}
We denote $P'$ as the worst-case transition that satisfies \eqref{eq:reg-Chi2 worst-case tran}, then we have
\begin{align}
    P'(s)&=P_h^o(s)\bigg(1-\frac{V_{h+1}(s)-[\bm{\lambda}(s)+\eta]}{2\beta}\bigg),\label{eq:reg-Chi2 prop}\\
    \inf\limits_P\mathcal{L}(P,\eta)&=-\mathbb{E}_{P_h^o}\bigg[\frac{1}{4\beta}\big(V_{h+1}(s)-[\bm{\lambda}(s)+\eta]\big)^2-\big(V_{h+1}(s)-[\bm{\lambda}(s)+\eta]\big)\bigg]+\eta.\nonumber
\end{align}
We have
\begin{align}
    Q_h&=r_h+\sup\limits_{\bm{\lambda}\geq0,\eta}(\inf\limits_P\mathcal{L}(P,\eta))\nonumber\\
    &=r_h-\inf\limits_{\bm{\lambda}\geq0,\eta}\bigg(\mathbb{E}_{P_h^o}\bigg[\frac{1}{4\beta}\big(V_{h+1}(s)-[\bm{\lambda}(s)+\eta]\big)^2-\big(V_{h+1}(s)-[\bm{\lambda}(s)+\eta]\big)\bigg]-\eta\bigg)\nonumber\\
    &=r_h+\sup\limits_{\bm{\lambda}\geq0}\bigg(\mathbb{E}_{P_h^o}\big[V_{h+1}-\bm{\lambda}\big]-\frac{1}{4\beta}\mathrm{Var}_{P_h^o}(V_{h+1}-\bm{\lambda})\bigg)\label{eq2:reg Chi2 lem1 rm eta}\\
    &=r_h+\sup\limits_{\bm{\lambda}\in[0,H]}\bigg(\mathbb{E}_{P_h^o}\big[V_{h+1}-\bm{\lambda}\big]-\frac{1}{4\beta}\mathrm{Var}_{P_h^o}(V_{h+1}-\bm{\lambda})\bigg),\label{eq2:reg Chi2 lem1 bnd lambda}
\end{align}
where \eqref{eq2:reg Chi2 lem1 rm eta} holds by calculating the derivation with respect to $\eta$ and thus setting $\eta=\mathbb{E}_{P_h^o}[V_{h+1}-\bm{\lambda}]$, \eqref{eq2:reg Chi2 lem1 bnd lambda} holds because the result increases monotonically with respect to $\bm{\lambda}$ when $\bm{\lambda}\geq H$.
\end{proof}

Similar to \Cref{lem:con TV Q^*-Q^k}, we prove the optimism of estimation $Q^k$ and control $Q^*-Q^k$.
\begin{lemma}[Optimism] \label{lem:reg Chi2 Q^*-Q^k}
If we set the bonus term as follows
\begin{align}
    \text{bonus}_h^k(s,a)=\bigg(2+\frac{3H}{4\beta}\bigg)H\sqrt{\frac{2S^2\ln(48SAH^3K^2/\delta)}{n_h^{k-1}(s,a)\vee1}}+\bigg(1+\frac{1}{4\beta}\bigg)\frac{1}{K},\label{eq2:reg Chi2 bonus}
\end{align}
then for any policy $\pi$ and any $(k,h,s,a)\in[K]\times[H]\times\mathcal{S}\times\mathcal{A}$, with probability at least $1-3\delta$, we have $Q_h^{k,\beta}(s,a)\geq Q_h^{\pi,\beta}(s,a)$. Specially, by setting $\pi=\pi^*$, we have $Q_h^{k,\beta}(s,a)\geq Q_h^{{\pi^*},\beta}(s,a)$.
\end{lemma}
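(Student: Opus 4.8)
The plan is to mirror the backward-induction argument used for the earlier optimism lemmas, in particular \Cref{lem:con Chi2 Q^*-Q^k}, adapting it to the regularized dual in \Cref{lem:reg Chi2 closed form}. I would induct on $h$ from $H+1$ down to $1$. The base case $Q_{H+1}^{k,\beta}=0=Q_{H+1}^{\pi,\beta}$ is immediate, and under the inductive hypothesis $Q_{h+1}^{k,\beta}\geq Q_{h+1}^{\pi,\beta}$ the greedy choice of $\pi^k$ yields $V_{h+1}^{k,\beta}\geq V_{h+1}^{\pi,\beta}$ pointwise, exactly as in the displayed inequality preceding \eqref{eq:con-Chi2 ineq main}. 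If the truncation $Q_h^{k,\beta}(s,a)=H-h+1$ is active, optimism holds trivially since $Q_h^{\pi,\beta}\leq H-h+1$; otherwise I would add and subtract the inner infimum evaluated at $V_{h+1}^{k,\beta}$ under the true kernel $P_h^o$. The regularization term $\beta\chi^2(P\|P_h^o)$ is common to both infima, so the $V^{k}$-versus-$V^\pi$ difference is handled by monotonicity of $\inf_{P}(\mathbb{E}_P[\,\cdot\,]+\beta\chi^2(P\|P_h^o))$ together with the inductive hypothesis, and what remains is the estimation gap between $\widehat{P}_h^k$ and $P_h^o$ evaluated at $V_{h+1}^{k,\beta}$. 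Plugging in the dual form \eqref{eq2:reg Chi2 dual} and using $\sup f-\sup g\geq\inf(f-g)$ reduces matters to controlling $\text{bonus}_h^k-\text{(i)}-\text{(ii)}-\text{(iii)}$, where (i) $=|\widehat{r}_h^k-r_h|$, (ii) $=\sup_{\bm{\lambda}\in[0,H]}|\mathbb{E}_{\widehat{P}_h^k}[V_{h+1}^{k,\beta}-\bm{\lambda}]-\mathbb{E}_{P_h^o}[V_{h+1}^{k,\beta}-\bm{\lambda}]|$, and (iii) $=\tfrac{1}{4\beta}\sup_{\bm{\lambda}\in[0,H]}|\mathrm{Var}_{\widehat{P}_h^k}(V_{h+1}^{k,\beta}-\bm{\lambda})-\mathrm{Var}_{P_h^o}(V_{h+1}^{k,\beta}-\bm{\lambda})|$.

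For (i) and (ii) I would reuse the bounds already established in the constrained $\chi^2$ proof essentially verbatim: \Cref{lem:concentration E} gives (i) $\leq\sqrt{\ln(2SAHK/\delta)/(2(n_h^{k-1}\vee1))}$, and the $\epsilon$-net argument leading to \eqref{eq:con-Chi2 ineq 2} gives (ii) $\leq H\sqrt{2S^2\ln(\cdot)/(n_h^{k-1}\vee1)}+1/K$ after a union bound over the $(s,a,h,k)$ tuples and the net $\mathcal{N}_{\mathcal{V}}(\epsilon)$ from \Cref{lem:epsilon-net}.

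The main obstacle is term (iii): unlike the constrained $\chi^2$ dual, here the variance enters \emph{linearly} rather than under a square root, so I cannot directly quote the square-root concentration \Cref{lem:concentration Var} as was done in \eqref{eq:con-Chi2 ineq 3}. The plan is to linearize. For any fixed $V$ with $\|V\|_\infty\leq H$ one has $\mathrm{Var}(V)\leq H^2$, hence $\sqrt{\mathrm{Var}_{P_h^o}(V)}+\sqrt{\mathrm{Var}_{\widehat{P}_h^k}(V)}\leq 2H$, and the factorization $|a^2-b^2|=|a-b|\,|a+b|$ gives $|\mathrm{Var}_{\widehat{P}_h^k}(V)-\mathrm{Var}_{P_h^o}(V)|\leq 2H\,\bigl|\sqrt{\mathrm{Var}_{\widehat{P}_h^k}(V)}-\sqrt{\mathrm{Var}_{P_h^o}(V)}\bigr|$, to which \Cref{lem:concentration Var} applies and yields a per-point bound of order $H^2\sqrt{\ln(\cdot)/(n_h^{k-1}\vee1)}$. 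For the net discretization of $\bm{\lambda}$ I would control the perturbation of the variance itself via $|\mathrm{Var}(V)-\mathrm{Var}(V')|\leq 4H\|V-V'\|_\infty$ (writing $\mathrm{Var}=\mathbb{E}[V^2]-\mathbb{E}[V]^2$ and bounding each piece by $2H\epsilon$), so the net approximation contributes only $O(H\epsilon)$, absorbed into the $1/K$ term after choosing $\epsilon$ polynomially small in $1/K$.

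Finally I would collect the three high-probability events, apply a union bound so that (i), (ii), (iii) hold simultaneously with probability at least $1-3\delta$, and verify by inspection that the stated bonus $\bigl(2+\tfrac{3H}{4\beta}\bigr)H\sqrt{2S^2\ln(48SAH^3K^2/\delta)/(n_h^{k-1}\vee1)}+\bigl(1+\tfrac{1}{4\beta}\bigr)/K$ dominates $\text{(i)}+\text{(ii)}+\text{(iii)}$: the coefficient $2H$ covers the mean contribution (ii) with room to spare, while $\tfrac{3H^2}{4\beta}$ covers the $\tfrac{1}{4\beta}\cdot 2H^2$ arising from the linearized variance term (iii), and the additive $\bigl(1+\tfrac{1}{4\beta}\bigr)/K$ absorbs the two net-discretization residuals. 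This makes $Q_h^{k,\beta}(s,a)-Q_h^{\pi,\beta}(s,a)\geq 0$, closing the induction; specializing $\pi=\pi^{*}$ gives the stated consequence.
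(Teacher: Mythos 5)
Your proposal is correct and follows the paper's proof almost step for step: the same backward induction, the same handling of the truncation case, the same use of $\inf f-\inf g\geq\inf(f-g)$ and the dual form from \Cref{lem:reg Chi2 closed form}, the same three-term decomposition into (i), (ii), (iii), and the same $\epsilon$-net plus union bound. The one place you genuinely diverge is term (iii): the paper bounds $|\mathrm{Var}_{\widehat{P}_h^k}(V)-\mathrm{Var}_{P_h^o}(V)|$ by expanding $\mathrm{Var}=\mathbb{E}[V^2]-\mathbb{E}^2[V]$ and applying the $L_1$ concentration of \Cref{lem:concentration L1} to the $V^2$ and $V$ pieces separately, which yields the $3H^2\sqrt{2S\ln(\cdot)/n}$ per-point bound that the $\tfrac{3H}{4\beta}$ coefficient in the bonus is calibrated to; you instead factor $|a^2-b^2|\leq 2H|a-b|$ and invoke the square-root-of-variance concentration (\Cref{lem:concentration Var}), as the paper itself does only in the \emph{constrained} $\chi^2$ proof. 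Both routes give a per-point bound of order $H^2\sqrt{\ln(\cdot)/n}$ and are absorbed by the stated bonus after the net union bound, so your linearization is a perfectly serviceable (and marginally tighter in its leading constant) substitute; just note that with the shifted range $V(\bm{\lambda})\in[-H,H]$ the constant from \Cref{lem:concentration Var} should be tracked carefully so that the resulting coefficient still sits below $\tfrac{3H^2}{4\beta}\sqrt{2S^2\ln(\cdot)/n}$, which it does once the extra $\sqrt{S}$ from the net is accounted for.
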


\begin{proof}
We prove this by induction. First, when $h=H+1$, $Q_{H+1}^{k,\beta}(s,a)=0=Q_{H+1}^{\pi,\beta}(s,a)$ holds trivially.

Assume $Q_{h+1}^{k,\beta}(s,a)\geq Q_{h+1}^{\pi,\beta}(s,a)$ holds, since $\pi^k$ is the greedy policy, we have
\begin{align*}
    V_{h+1}^{k,\beta}(s)=Q_{h+1}^{k,\beta}(s,\pi_{h+1}^k(s))\geq Q_{h+1}^{k,\beta}(s,\pi_{h+1}(s))\geq Q_{h+1}^{\pi,\beta}(s,\pi_{h+1}(s))=V_{h+1}^{\pi,\beta}(s),
\end{align*}
where the first inequality is because we choose $\pi^k$ as the greedy policy.

Recall that we denote $Q_h^{k,\beta}$ as the optimistic estimation in $k$-th episode, that is,
\begin{align*}
    Q_h^{k,\beta}(s,a)=\min\bigg\{\text{bonus}_h^k(s,a)+\widehat{r}_h^k(s,a)+\inf\limits_{P\in\Delta(S)}\big(\mathbb{E}_P\big[V_{h+1}^{k,\beta}\big]+\beta\chi^2\big(P\Vert\widehat{P}_h^k\big)\big)(s,a),H-h+1\bigg\}.
\end{align*}
If $Q_h^{k,\beta}(s,a)=H-h+1$, then it follows immediately that
\begin{align*}
    Q_h^{k,\beta}(s,a)=H-h+1\geq Q_h^{\pi,\beta}(s,a)
\end{align*}
by the definition of $Q_h^{\pi,\beta}(s,a)$. Otherwise, we can infer that
\begin{align}
    Q_h^{k,\beta}-Q_h^{\pi,\beta}&=\text{bonus}_h^k+\widehat{r}_h^k+\inf\limits_{P\in\Delta(S)}\big(\mathbb{E}_P\big[V_{h+1}^{k,\beta}\big]+\beta\chi^2\big(P\Vert\widehat{P}_h^k\big)\big)-r_h-\inf\limits_{P\in\Delta(S)}\big(\mathbb{E}_P\big[V_{h+1}^{\pi,\beta}\big]+\beta\chi^2\big(P\Vert P_h^o\big)\big)\nonumber\\
    &=\text{bonus}_h^k+\widehat{r}_h^k-r_h+\inf\limits_{P\in\Delta(S)}\big(\mathbb{E}_P\big[V_{h+1}^{k,\beta}\big]+\beta\chi^2\big(P\Vert\widehat{P}_h^k\big)\big)-\inf\limits_{P\in\Delta(S)}\big(\mathbb{E}_P\big[V_{h+1}^{k,\beta}\big]+\beta\chi^2\big(P\Vert P_h^o\big)\big)\nonumber\\
    &\qquad+\inf\limits_{P\in\Delta(S)}\big(\mathbb{E}_P\big[V_{h+1}^{k,\beta}\big]+\beta\chi^2\big(P\Vert P_h^o\big)\big)-\inf\limits_{P\in\Delta(S)}\big(\mathbb{E}_P\big[V_{h+1}^{\pi,\beta}\big]+\beta\chi^2\big(P\Vert P_h^o\big)\big)\nonumber\\
    &\geq\text{bonus}_h^k+\widehat{r}_h^k-r_h+\inf\limits_{P\in\Delta(S)}\big(\mathbb{E}_P\big[V_{h+1}^{k,\beta}\big]-\mathbb{E}_P\big[V_{h+1}^{\pi,\beta}\big]\big)\nonumber\\
    &\qquad+\inf\limits_{P\in\Delta(S)}\big(\mathbb{E}_P\big[V_{h+1}^{k,\beta}\big]+\beta\chi^2\big(P\Vert\widehat{P}_h^k\big)\big)-\inf\limits_{P\in\Delta(S)}\big(\mathbb{E}_P\big[V_{h+1}^{k,\beta}\big]+\beta\chi^2\big(P\Vert P_h^o\big)\big)\label{eq2:reg Chi2 lem2 inf}\\
    &\geq\text{bonus}_h^k+\widehat{r}_h^k-r_h+\inf\limits_{P\in\Delta(S)}\big(\mathbb{E}_P\big[V_{h+1}^{k,\beta}\big]+\beta\chi^2\big(P\Vert\widehat{P}_h^k\big)\big)-\inf\limits_{P\in\Delta(S)}\big(\mathbb{E}_P\big[V_{h+1}^{k,\beta}\big]+\beta\chi^2\big(P\Vert P_h^o\big)\big)\label{eq2:reg Chi2 lem2 ind}\\
    &=\text{bonus}_h^k+\widehat{r}_h^k-r_h+\sup\limits_{\bm{\lambda}\in[0,H]}(\mathbb{E}_{\widehat{P}_h^k}\big[V_{h+1}^{k,\beta}-\bm{\lambda}\big]-\frac{1}{4\beta}\mathrm{Var}_{\widehat{P}_h^k}\big(V_{h+1}^{k,\beta}-\bm{\lambda}\big))\nonumber\\
    &\qquad-\sup\limits_{\bm{\lambda}\in[0,H]}\big(\mathbb{E}_{P_h^o}\big[V_{h+1}^{k,\beta}-\bm{\lambda}\big]-\frac{1}{4\beta}\mathrm{Var}_{P_h^o}\big(V_{h+1}^{k,\beta}-\bm{\lambda}\big)\big)\label{eq2:reg Chi2 lem2 plug}\\
    &\geq\text{bonus}_h^k+\widehat{r}_h^k-r_h+\inf\limits_{\bm{\lambda}\in[0,H]}\big\{(\mathbb{E}_{\widehat{P}_h^k}\big[V_{h+1}^{k,\beta}-\bm{\lambda}\big]-\frac{1}{4\beta}\mathrm{Var}_{\widehat{P}_h^k}\big(V_{h+1}^{k,\beta}-\bm{\lambda}\big))\nonumber\\
    &\qquad-\big(\mathbb{E}_{P_h^o}\big[V_{h+1}^{k,\beta}-\bm{\lambda}\big]-\frac{1}{4\beta}\mathrm{Var}_{P_h^o}\big(V_{h+1}^{k,\beta}-\bm{\lambda}\big)\big)\big\}\nonumber\\
    &\geq\text{bonus}_h^k-\big|\widehat{r}_h^k-r_h\big|-\sup\limits_{\bm{\lambda}\in[0,H]}\big|(\mathbb{E}_{\widehat{P}_h^k}\big[V_{h+1}^{k,\beta}-\bm{\lambda}\big]-\frac{1}{4\beta}\mathrm{Var}_{\widehat{P}_h^k}\big(V_{h+1}^{k,\beta}-\bm{\lambda}\big))\nonumber\\
    &\qquad-\big(\mathbb{E}_{P_h^o}\big[V_{h+1}^{k,\beta}-\bm{\lambda}\big]-\frac{1}{4\beta}\mathrm{Var}_{P_h^o}\big(V_{h+1}^{k,\beta}-\bm{\lambda}\big)\big)\big|\nonumber\\
    &\geq\text{bonus}_h^k-\underbrace{\big|\widehat{r}_h^k-r_h\big|}_\text{(i)}-\underbrace{\sup\limits_{\bm{\lambda}\in[0,H]}\big|\mathbb{E}_{\widehat{P}_h^k}\big[V_{h+1}^{k,\beta}-\bm{\lambda}\big]-\mathbb{E}_{P_h^o}\big[V_{h+1}^{k,\beta}-\bm{\lambda}\big]\big|}_\text{(ii)}\nonumber\\
    &\qquad-\underbrace{\frac{1}{4\beta}\sup\limits_{\bm{\lambda}\in[0,H]}\big|\mathrm{Var}_{\widehat{P}_h^k}\big(V_{h+1}^{k,\beta}-\bm{\lambda}\big)-\mathrm{Var}_{P_h^o}\big(V_{h+1}^{k,\beta}-\bm{\lambda}\big)\big|}_\text{(iii)},\label{eq:reg-Chi2 ineq main}
\end{align}
where \eqref{eq2:reg Chi2 lem2 inf} is from $\inf f(x)-\inf g(x)\geq\inf(f-g)(x)$, \eqref{eq2:reg Chi2 lem2 ind} is from the induction assumption, we plug in the dual formulation \eqref{eq2:reg Chi2 dual} in \eqref{eq2:reg Chi2 lem2 plug}.

For term (i) in \eqref{eq:reg-Chi2 ineq main}, from \Cref{lem:concentration E} and a union bound, with probability at least $1-\delta$, we have
\begin{align}
    \big|\widehat{r}_h^k(s,a)-r_h(s,a)\big|\leq\sqrt{\frac{\ln(2SAHK/\delta)}{2n_h^{k-1}(s,a)\vee1}},\label{eq:reg-Chi2 ineq 1}
\end{align}
for any $(k,h,s,a)\in[K]\times[H]\times\mathcal{S}\times\mathcal{A}$.

To bound term (ii) in \eqref{eq:reg-Chi2 ineq main}, following the same discussion as \eqref{eq:con-Chi2 ineq 2}, with probability at least $1-\delta$, we have
\begin{align}
    \sup\limits_{\bm{\lambda}\in[0,H]}\big|\mathbb{E}_{\widehat{P}_h^k}\big[V_{h+1}^{k,\rho}-\bm{\lambda}\big]-\mathbb{E}_{P_h^o}\big[V_{h+1}^{k,\rho}-\bm{\lambda}\big]\big|
    &\leq\sup\limits_{V\in\mathcal{N}_\mathcal{V}(\epsilon)}\big|\mathbb{E}_{P_h^o}[V]-\mathbb{E}_{\widehat{P}_h^k}[V]\big|+2\epsilon\nonumber\\
    &\leq H\sqrt{\frac{2S^2\ln(12SAH^2K^2/\delta)}{n_h^{k-1}\vee1}}+\frac{1}{K},\label{eq:reg-Chi2 ineq 2}
\end{align}
for any $(k,h,s,a)\in[K]\times[H]\times\mathcal{S}\times\mathcal{A}$.

We denote $V(\bm{\lambda})=V_{h+1}^{k,\rho}-\bm{\lambda}\in[-H,H]$ and $\mathcal{V}=\big\{V\in\mathbb{R}^{S}:\Vert V\Vert_\infty\leq H\big\}$. To bound term (iii) in \eqref{eq:reg-Chi2 ineq main}, we create a $\epsilon$-net $\mathcal{N}_\mathcal{V}(\epsilon)$ for $\mathcal{V}$. From \Cref{lem:epsilon-net}, it holds that $\ln|\mathcal{N}_\mathcal{V}(\epsilon)|\leq|S|\cdot\ln(3H/\epsilon)$.

Therefore, by the definition of $\mathcal{N}_\mathcal{V}(\epsilon)$, for any fixed $V$, there exists a $V'\in\mathcal{N}_\mathcal{V}(\epsilon)$ such that $\Vert V-V'\Vert_\infty\leq\epsilon$, that is
\begin{align}
    \big|\mathrm{Var}_{P_h^o}(V)-\mathrm{Var}_{\widehat{P}_h^k}(V)\big|
    &\leq\big|\mathrm{Var}_{P_h^o}(V)-\mathrm{Var}_{P_h^o}(V')\big|+\big|\mathrm{Var}_{P_h^o}(V')-\mathrm{Var}_{\widehat{P}_h^k}(V')\big|+\big|\mathrm{Var}_{\widehat{P}_h^k}(V')-\mathrm{Var}_{\widehat{P}_h^k}(V)\big|\nonumber\\
    &\leq\big|\mathbb{E}_{P_h^o}\big[V^2-V'^2\big]\big|+\big|\mathbb{E}_{P_h^o}^2[V]-\mathbb{E}_{P_h^o}^2[V']\big|+\big|\mathbb{E}_{\widehat{P}_h^k}\big[V'^2-V^2\big]\big|+\big|\mathbb{E}_{\widehat{P}_h^k}^2[V']-\mathbb{E}_{\widehat{P}_h^k}^2[V]\big|\nonumber\\
    &\quad+\:\big|\mathrm{Var}_{P_h^o}(V')-\mathrm{Var}_{\widehat{P}_h^k}(V')\big|\nonumber\\
    &\leq\big|\mathrm{Var}_{P_h^o}(V')-\mathrm{Var}_{\widehat{P}_h^k}(V')\big|+8H\epsilon\nonumber\\
    &\leq\sup\limits_{V'\in N_\mathcal{V}}\big|\mathrm{Var}_{P_h^o}(V')-\mathrm{Var}_{\widehat{P}_h^k}(V')\big|+8H\epsilon.\label{eq2:reg Chi2 lem2 step}
\end{align}
For any fixed $V$, following the same analysis as \eqref{eq2:E[V] concentration}, we have
\begin{align}
    \big|\mathrm{Var}_{P_h^o}(V)-\mathrm{Var}_{\widehat{P}_h^k}(V)\big|&=\big|\big(\mathbb{E}_{P_h^o}[V^2]-\mathbb{E}_{P_h^o}^2[V]\big)-\big(\mathbb{E}_{\widehat{P}_h^k}[V^2]-\mathbb{E}_{\widehat{P}_h^k}^2[V]\big)\big|\nonumber\\
    &\leq\big|\mathbb{E}_{P_h^o}[V^2]-\mathbb{E}_{\widehat{P}_h^k}[V^2]\big|+\big|\mathbb{E}_{P_h^o}^2[V]-\mathbb{E}_{\widehat{P}_h^k}^2[V]\big|\nonumber\\
    &\leq H^2\sqrt{\frac{2S\ln(2/\delta)}{n_h^{k-1}\vee1}}+\big(\mathbb{E}_{P_h^o}[V]+\mathbb{E}_{\widehat{P}_h^k}[V]\big)\cdot\big|\mathbb{E}_{P_h^o}[V]-\mathbb{E}_{\widehat{P}_h^k}[V]\big|\nonumber\\
    &\leq H^2\sqrt{\frac{2S\ln(2/\delta)}{n_h^{k-1}\vee1}}+2H^2\sqrt{\frac{2S\ln(2/\delta)}{n_h^{k-1}\vee1}}\nonumber\\
    &\leq3H^2\sqrt{\frac{2S\ln(2/\delta)}{n_h^{k-1}\vee1}}\label{eq2:Var[V] concentration}
\end{align}
with probability at least $1-\delta$.

Then with probability at least $1-\delta$, we have
\begin{align}
    \sup\limits_{\bm{\lambda}\in[0,H]}\big|\mathrm{Var}_{\widehat{P}_h^k}\big(V_{h+1}^{k,\beta}-\bm{\lambda}\big)-\mathrm{Var}_{P_h^o}\big(V_{h+1}^{k,\beta}-\bm{\lambda}\big)\big|&\leq\sup\limits_{\bm{\lambda}\in[0,H]}\big|\mathrm{Var}_{P_h^o}(V(\bm{\lambda}))-\mathrm{Var}_{\widehat{P}_h^k}(V(\bm{\lambda}))\big|\nonumber\\
    &\leq\sup\limits_{V\in\mathcal{N}_\mathcal{V}(\epsilon)}\big|\mathrm{Var}_{P_h^o}(V)-\mathrm{Var}_{\widehat{P}_h^k}(V)\big|+8H\epsilon\label{eq2:reg Chi2 lem2 sup}\\
    &\leq3H^2\sqrt{\frac{2S\ln(2SAHK|\mathcal{N}_\mathcal{V}|/\delta)}{n_h^{k-1}\vee1}}+8H\epsilon\label{eq2:reg Chi2 lem2 Hoeff}\\
    &\leq3H^2\sqrt{\frac{2S^2\ln(6SAH^2K/\epsilon\delta)}{n_h^{k-1}\vee1}}+8H\epsilon\nonumber\\
    &=3H^2\sqrt{\frac{2S^2\ln(48SAH^3K^2/\delta)}{n_h^{k-1}\vee1}}+\frac{1}{K},\label{eq:reg-Chi2 ineq 3}
\end{align}
for any $(k,h,s,a)\in[K]\times[H]\times\mathcal{S}\times\mathcal{A}$, where \eqref{eq2:reg Chi2 lem2 sup} follows from \eqref{eq2:reg Chi2 lem2 step}, \eqref{eq2:reg Chi2 lem2 Hoeff} is from \eqref{eq2:Var[V] concentration} and a union bound, we set $\epsilon=\frac{1}{8HK}$ in \eqref{eq:reg-Chi2 ineq 3}.

Apply the union bound again and combine \eqref{eq:reg-Chi2 ineq main} with \eqref{eq:reg-Chi2 ineq 1}, \eqref{eq:reg-Chi2 ineq 2}, \eqref{eq:reg-Chi2 ineq 3}, the definition of bonus and induction assumption. With probability at least $1-3\delta$, we have $Q_h^{k,\rho}(s,a)\geq Q_h^{\pi,\rho}(s,a)$ for any $(k,h,s,a)\in[K]\times[H]\times\mathcal{S}\times\mathcal{A}$. This completes the proof.
\end{proof}

With similar proof to \Cref{lem:con TV Q^k-Q^pi^k}, we can control the item $Q^k-Q^{\pi^k}$.
\begin{lemma}\label{lem:reg Chi2 Q^k-Q^pi^k}
With probability at least $1-\delta$, for any $(k,s,a)\in[K]\times\mathcal{S}\times\mathcal{A}$, the sum of estimation errors can be bounded as follows
\begin{align*}
    Q_1^{k,\beta}-Q_1^{{\pi^k},\beta}\leq\sum\limits_{h=1}^H\mathbb{E}_{\{P_h^{w,k}\}_{h=1}^H,\pi^k}\big[2\text{bonus}_h^k\big].
\end{align*}
\end{lemma}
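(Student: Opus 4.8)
The plan is to follow the same template used for the analogous lemmas in the other divergence settings, most directly Lemma~\ref{lem:reg KL Q^k-Q^pi^k} and Lemma~\ref{lem:con Chi2 Q^k-Q^pi^k}, adapting it to the regularized $\chi^2$ dual. First I would extract from the proof of the optimism lemma (Lemma~\ref{lem:reg Chi2 Q^*-Q^k}) the intermediate high-probability event: with probability at least $1-\delta$, for every $(k,h,s,a)$,
\begin{align*}
    &\big|\widehat{r}_h^k(s,a)-r_h(s,a)\big| \\
    &\quad+\bigg|\inf_{P\in\Delta(S)}\!\big(\mathbb{E}_P[V_{h+1}^{k,\beta}]+\beta\chi^2(P\Vert\widehat{P}_h^k)\big)(s,a)-\inf_{P\in\Delta(S)}\!\big(\mathbb{E}_P[V_{h+1}^{k,\beta}]+\beta\chi^2(P\Vert P_h^o)\big)(s,a)\bigg|\le\text{bonus}_h^k(s,a),
\end{align*}
which is exactly the combination of the reward concentration bound and terms (ii),(iii) controlled via the dual formulation in Lemma~\ref{lem:reg Chi2 closed form}. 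This inequality is the workhorse: it certifies that replacing the empirical regularized robust Bellman operator by the true one costs at most one extra bonus.

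Next I would write the one-step recursion for $Q_h^{k,\beta}-Q_h^{{\pi^k},\beta}$. Dropping the truncation at $H-h+1$ (which only decreases the estimate) and expanding both $Q$-functions through the regularized robust Bellman equation~\eqref{eq:robust optimality equation-RRMDP}, I would add and subtract the term $\inf_{P\in\Delta(S)}(\mathbb{E}_P[V_{h+1}^{k,\beta}]+\beta\chi^2(P\Vert P_h^o))$, so that the empirical-vs-true discrepancy is absorbed by the displayed inequality into a second bonus, giving
\begin{align*}
    Q_h^{k,\beta}-Q_h^{{\pi^k},\beta}\le 2\,\text{bonus}_h^k+\inf_{P\in\Delta(S)}\!\big(\mathbb{E}_P[V_{h+1}^{k,\beta}]+\beta\chi^2(P\Vert P_h^o)\big)-\inf_{P\in\Delta(S)}\!\big(\mathbb{E}_P[V_{h+1}^{{\pi^k},\beta}]+\beta\chi^2(P\Vert P_h^o)\big).
\end{align*}
The key structural step is then to invoke the definition of the worst-case transition $P_h^{w,k}=\argmin_{P\in\Delta(S)}(\mathbb{E}_P[V_{h+1}^{{\pi^k},\beta}]+\beta\chi^2(P\Vert P_h^o))$ from Definition~\ref{def:visitation_measure}: the second infimum is attained at $P_h^{w,k}$, while the first infimum is upper bounded by its value at that same $P_h^{w,k}$. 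The regularization terms $\beta\chi^2(P_h^{w,k}\Vert P_h^o)$ cancel, leaving $\mathbb{E}_{P_h^{w,k}}[V_{h+1}^{k,\beta}-V_{h+1}^{{\pi^k},\beta}]$, and since $\pi^k$ is the greedy policy this equals $\mathbb{E}_{P_h^{w,k},\pi^k}[Q_{h+1}^{k,\beta}-Q_{h+1}^{{\pi^k},\beta}]$.

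Unrolling this recursion from $h=1$ down the horizon, with the terminal $Q_{H+1}$-difference vanishing, yields $Q_1^{k,\beta}-Q_1^{{\pi^k},\beta}\le\sum_{h=1}^H\mathbb{E}_{\{P_h^{w,k}\}_{h=1}^H,\pi^k}[2\,\text{bonus}_h^k]$, which is the claim. I do not expect a genuine obstacle here, since every ingredient is already in place; the only point requiring care is that the worst-case transition for the regularized problem carries the $\beta\chi^2$ penalty inside the argmin, so I must cancel the regularizer term correctly when passing from the infimum difference to the expectation under $P_h^{w,k}$—precisely as in Lemma~\ref{lem:reg KL Q^k-Q^pi^k}, rather than as in the constrained case where the penalty is absent and the worst-case transition instead lies on the boundary of the uncertainty set.
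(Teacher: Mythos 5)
Your proposal is correct and follows essentially the same route as the paper's proof: it invokes the high-probability error bound from the optimism lemma to absorb the empirical-vs-true Bellman discrepancy into a second bonus, upper bounds the difference of the two true regularized infima by evaluating both at the worst-case transition $P_h^{w,k}$ (where the $\beta\chi^2$ penalties cancel), and unrolls the resulting recursion. The only cosmetic difference is that you explicitly note the resulting one-step bound is an inequality rather than the equality the paper writes, which is if anything a slight improvement in precision.
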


\begin{proof}
From the proof of \Cref{lem:reg TV Q^*-Q^k}, we see that with probability at least $1-\delta$, for any $(k,h,s,a)\in[K]\times[H]\times\mathcal{S}\times\mathcal{A}$, we have
\begin{align}
    \big|\widehat{r}_h^k(s,a)-r_h(s,a)\big|+\big|\inf\limits_{P\in\Delta(S)}\big(\mathbb{E}_P\big[V_{h+1}^{k,\beta}\big]+\beta\chi^2\big(P\Vert\widehat{P}_h^k\big)\big)(s,a)\\
    -\inf\limits_{P\in\Delta(S)}\big(\mathbb{E}_P\big[V_{h+1}^{k,\beta}\big]+\beta\chi^2\big(P\Vert P_h^o\big)\big)(s,a)\big|\leq\text{bonus}_h^k(s,a).\label{eq2:reg Chi2 lem3 error}
\end{align}
Recall that we define $P_h^{w,k}=\argmin\limits_{P\in\Delta(S)}\big(\mathbb{E}_P\big[V_{h+1}^{{\pi^k},\beta}\big]+\beta\chi^2\big(P\Vert P_h^o\big)\big)$ as the worst-case transition in \Cref{def:visitation_measure}, we have
\begin{align}
    Q_h^{k,\beta}-Q_h^{{\pi^k},\beta}&\leq\text{bonus}_h^k+\widehat{r}_h^k+\inf\limits_{P\in\Delta(S)}\big(\mathbb{E}_P\big[V_{h+1}^{k,\beta}\big]+\beta\chi^2\big(P\Vert\widehat{P}_h^k\big)\big)-r_h-\inf\limits_{P\in\Delta(S)}\big(\mathbb{E}_P\big[V_{h+1}^{{\pi^k},\beta}\big]+\beta\chi^2\big(P\Vert P_h^o\big)\big)\nonumber\\
    &\leq2\text{bonus}_h^k+\inf\limits_{P\in\Delta(S)}\big(\mathbb{E}_P\big[V_{h+1}^{k,\beta}\big]+\beta\chi^2\big(P\Vert P_h^o\big)\big)-\inf\limits_{P\in\Delta(S)}\big(\mathbb{E}_P\big[V_{h+1}^{{\pi^k},\beta}\big]+\beta\chi^2\big(P\Vert P_h^o\big)\big)\label{eq2:reg Chi2 lem3 bonus}\\
    &=2\text{bonus}_h^k+\mathbb{E}_{P_h^{w,k},\pi^k}\big[Q_{h+1}^{k,\beta}-Q_{h+1}^{{\pi^k},\beta}\big].\label{eq2:reg Chi2 lem3 recur2}
\end{align}
where \eqref{eq2:reg Chi2 lem3 bonus} uses \eqref{eq2:reg Chi2 lem3 error}. Apply \eqref{eq2:reg Chi2 lem3 recur2} recursively, we can obtain the result.
\end{proof}

\begin{theorem}[Restatement of \Cref{thm:main reg regret upper} in $\chi^2$-divergence setting]
For RRMDP with $(s,a)$-rectangular $\chi^2$-divergence defined regularization term satisfying \Cref{asm:main distribution shift}, with probability at least $1-\delta$, the regret of \Cref{alg:main alg} satisfies
\begin{align*}
    \text{Regret}=\widetilde{\mathcal{O}}\bigg(\bigg(1+\frac{H}{\beta}\bigg)\big(\VisitRatio S^2AH^2+\VisitRatio^\frac{1}{2}S^\frac{3}{2}A^\frac{1}{2}H^2\sqrt{K}\big)\bigg).
\end{align*}
\end{theorem}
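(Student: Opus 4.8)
The plan is to mirror the assembly template used for the constrained TV, KL, and $\chi^2$ theorems, now invoking the three regularized-$\chi^2$ preparatory results: the dual formulation \Cref{lem:reg Chi2 closed form}, the optimism lemma \Cref{lem:reg Chi2 Q^*-Q^k}, and the error-propagation lemma \Cref{lem:reg Chi2 Q^k-Q^pi^k}, together with the visitation-ratio lemma \Cref{lem:main regret lem}. First I would split each per-episode regret term through the estimated value function as
\begin{align*}
V_1^{*,\beta}(s_1^k)-V_1^{\pi^k,\beta}(s_1^k)=\big(V_1^{*,\beta}(s_1^k)-V_1^{k,\beta}(s_1^k)\big)+\big(V_1^{k,\beta}(s_1^k)-V_1^{\pi^k,\beta}(s_1^k)\big),
\end{align*}
and handle the two brackets separately. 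Setting $\delta'=\delta/4$ and taking a union bound over the events of \Cref{lem:reg Chi2 Q^*-Q^k} and \Cref{lem:failure events}, the optimism lemma gives $V_1^{k,\beta}(s_1^k)\geq V_1^{*,\beta}(s_1^k)$, so the first bracket is nonpositive, while \Cref{lem:reg Chi2 Q^k-Q^pi^k} bounds the summed second brackets by $\sum_{k,h}\mathbb{E}_{P_h^{w,k},\pi^k}[2\,\text{bonus}_h^k]$.

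Next I would substitute the explicit bonus \eqref{eq2:reg Chi2 bonus}, whose dominant factor is $(2+\tfrac{3H}{4\beta})H\sqrt{2S^2\ln(\cdots)}$, and pull the $S$, $H$, and $(1+\tfrac{H}{\beta})$ factors out of the double sum. The remaining quantity $\sum_{k,h}\mathbb{E}_{P_h^{w,k},\pi^k}[(n_h^{k-1}\vee1)^{-1/2}]$ is exactly what \Cref{lem:main regret lem} controls, yielding $\widetilde{\mathcal{O}}(\sqrt{\VisitRatio SAH^2K}+\VisitRatio SAH)$. Multiplying through, the $S\cdot H\cdot\sqrt{\VisitRatio SAH^2K}$ contribution collapses to $\VisitRatio^{1/2}S^{3/2}A^{1/2}H^2\sqrt{K}$, the $S\cdot H\cdot\VisitRatio SAH$ contribution to $\VisitRatio S^2AH^2$, and the $(1+\tfrac{H}{\beta})$ prefactor survives, giving exactly the claimed bound; the lower-order $1/K$ terms in the bonus sum to $\widetilde{\mathcal{O}}(1+1/\beta)$ and are absorbed.

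The genuinely substantive work sits upstream in \Cref{lem:reg Chi2 Q^*-Q^k}, specifically in controlling term (iii), the variance deviation $\sup_{\bm{\lambda}}|\mathrm{Var}_{\widehat{P}_h^k}(V_{h+1}^{k,\beta}-\bm{\lambda})-\mathrm{Var}_{P_h^o}(V_{h+1}^{k,\beta}-\bm{\lambda})|$. Unlike the constrained $\chi^2$ dual, which carries $\sqrt{\rho\,\mathrm{Var}}$ and therefore only needs a $|\sqrt{\cdot}-\sqrt{\cdot}|$ bound scaling like $H\sqrt{S/n}$, the regularized dual of \Cref{lem:reg Chi2 closed form} carries the variance \emph{linearly} with coefficient $\tfrac{1}{4\beta}$, which forces a direct variance-deviation bound scaling like $H^2\sqrt{S/n}$ (obtained via the $\epsilon$-net reduction together with the concentration \eqref{eq2:Var[V] concentration}); this is precisely the source of the extra $O(H)$ factor relative to the constrained case. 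Once that concentration estimate is secured inside the optimism lemma, the theorem itself is a routine substitution, so I expect no real obstacle at the level of this proof — the only care needed is in tracking the powers of $S$ and $H$ correctly when combining the bonus coefficient with \Cref{lem:main regret lem}.
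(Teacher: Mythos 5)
Your proposal matches the paper's proof essentially step for step: the same decomposition through $V_1^{k,\beta}$, optimism making the first bracket nonpositive, \Cref{lem:reg Chi2 Q^k-Q^pi^k} reducing the second to a sum of bonuses, and \Cref{lem:main regret lem} finishing the calculation, with your diagnosis of the extra $O(H)$ factor (linear variance in the dual of \Cref{lem:reg Chi2 closed form} versus the square root in the constrained case) also being exactly the paper's explanation. The only discrepancy is the union-bound constant — the paper sets $\delta'=\delta/5$ rather than $\delta/4$ because the $\chi^2$ optimism lemma unions three concentration events and so holds with probability $1-3\delta$ — which is immaterial to the stated $\widetilde{\mathcal{O}}$ bound.
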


\begin{proof}
Setting $\delta'=\delta/5$ in \Cref{lem:reg Chi2 Q^*-Q^k,lem:failure events}, then with probability at least $1-\delta$, we get
\begin{align}
    \text{Regret}&=\sum\limits_{k=1}^K\big(V_1^{*,\beta}-V_1^{{\pi^k},\beta}\big)=\sum\limits_{k=1}^K\big(V_1^{*,\beta}-V_1^{k,\beta}\big)+\sum\limits_{k=1}^K\big(V_1^{k,\beta}-V_1^{{\pi^k},\beta}\big)\nonumber\\
    &\leq\sum\limits_{k=1}^K\sum\limits_{h=1}^H\mathbb{E}_{P_h^{w,k},\pi^k}\big[2\text{bonus}_h^k\big]\label{eq2:reg Chi2 lem4 comb}\\
    &=2\sum\limits_{k=1}^K\sum\limits_{h=1}^H\mathbb{E}_{P_h^{w,k},\pi^k}\Bigg[\bigg(2+\frac{3H}{4\beta}\bigg)H\sqrt{\frac{2S^2\ln(48SAH^3K^2/\delta)}{n_h^{k-1}(s,a)\vee1}}+\bigg(1+\frac{1}{4\beta}\bigg)\frac{1}{K}\Bigg]\label{eq2:reg Chi2 lem4 bonus}\\
    &=\widetilde{\mathcal{O}}\bigg(\bigg(1+\frac{H}{\beta}\bigg)\big(\VisitRatio S^2AH^2+\VisitRatio^\frac{1}{2}S^\frac{3}{2}A^\frac{1}{2}H^2\sqrt{K}\big)\bigg),\label{eq2:reg Chi2 lem4 lem}
\end{align}
where \eqref{eq2:reg Chi2 lem4 comb} is the combination of \Cref{lem:reg Chi2 Q^*-Q^k} and \Cref{lem:reg Chi2 Q^k-Q^pi^k}, we plug in the bonus \eqref{eq2:reg Chi2 bonus} in \eqref{eq2:reg Chi2 lem4 bonus}, \eqref{eq2:reg Chi2 lem4 lem} is from \Cref{lem:main regret lem}.
\end{proof}

\section{Proofs of Results in Lower Bounds}

\subsection{Proof of Lower Bounds on the Regret}

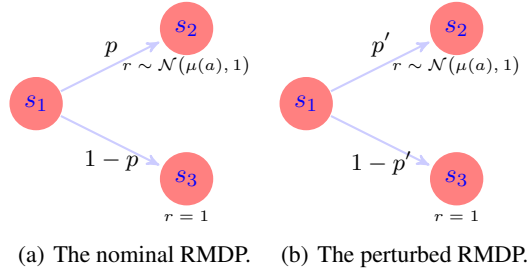
\begin{figure*}[ht]
    \centering
    \subfigure[The nominal RMDP.]
    {
        \begin{tikzpicture}[->,>=stealth',shorten >=1pt,auto,node distance=2.9cm,thick]
            \tikzstyle{every state}=[fill=red!50,draw=none,text=blue,minimum size=0.6cm]
            \node[state] (s1) {$s_1$};
            \node[state] (s2) [shift={(2cm,1cm)}] {$s_2$};
            \node (r2) [shift={(2cm,0.5cm)}] {\tiny$r\sim\mathcal{N}\big(\mu(a),1\big)$};
            \node[state] (s3) [shift={(2cm,-1cm)}] {$s_3$};
            \node (r3) [shift={(2cm,-1.5cm)}] {\tiny$r=1$};
        
            \path (s1) edge[draw=blue!20] node[above] {\small$p$} (s2);
            \path (s1) edge[draw=blue!20] node[below] {\small$1-p$} (s3);
        \end{tikzpicture}
    }
    \subfigure[The perturbed RMDP.]
    {
        \begin{tikzpicture}[->,>=stealth',shorten >=1pt,auto,node distance=2.9cm,thick]
            \tikzstyle{every state}=[fill=red!50,draw=none,text=blue,minimum size=0.6cm]
            \node[state] (s1) {$s_1$};
            \node[state] (s2) [shift={(2cm,1cm)}] {$s_2$};
            \node (r2) [shift={(2cm,0.5cm)}] {\tiny$r\sim\mathcal{N}\big(\mu(a),1\big)$};
            \node[state] (s3) [shift={(2cm,-1cm)}] {$s_3$};
            \node (r3) [shift={(2cm,-1.5cm)}] {\tiny$r=1$};
        
            \path (s1) edge[draw=blue!20] node[above] {\small$p'$} (s2);
            \path (s1) edge[draw=blue!20] node[below] {\small$1-p'$} (s3);
        \end{tikzpicture}
    }
    \caption{Constructions of the nominal environment and the worst-case environment in the proofs of regret lower bounds, the value on each arrow represents the transition probability.}\label{fig:regret lower bound}
\end{figure*}
\begin{theorem}[Combination of \Cref{thm:main con regret lower} and \Cref{thm:main reg regret lower}]\label{thm:proof regret lower}
For CRMDPs with TV, KL and $\chi^2$ divergence defined uncertainty sets and RRMDPs with TV, KL and $\chi^2$ divergence defined regularization terms, for any learning algorithm $\xi$, there exists a RMDP $\mathcal{M}$ satisfying \Cref{asm:main distribution shift}, such that $\text{Regret}_\mathcal{M}(\xi,K)=\Omega\big(\VisitRatio^\frac{1}{2}\sqrt{K}\big)$.
\end{theorem}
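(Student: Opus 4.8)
The plan is to reduce online RMDP learning to a disguised two-armed bandit in which reward information is only revealed on a \emph{low-probability} transition, while the regret is charged on a \emph{high-probability} transition; the mismatch between these two rates is exactly what produces the $\sqrt{\VisitRatio}$ factor. I work with the two-state family in \Cref{fig:regret lower bound}: from the initial state $s_1$ the nominal kernel routes the agent to the informative state $s_2$ with a small probability $p$ and to the reward state $s_3$ with probability $1-p$; only at $s_2$ does the reward depend on the action, through a Gaussian $\mathcal{N}(\mu(a),1)$, whereas $s_3$ yields deterministic reward $1$. I fix the uncertainty level (or regularizer) so that for every policy the worst-case kernel inflates the visitation of the lower-value state $s_2$ from $p$ up to a constant $p'=\Theta(1)$; for the TV ball this is $p'=p+\rho$, and for the KL, $\chi^2$, and regularized variants the corresponding parameter is chosen to produce the same $p'$. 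By construction the nominal and worst-case visitation measures of $s_2$ are $p$ and $p'$, so the instance satisfies \Cref{asm:main distribution shift} with $\VisitRatio=p'/p$, and I take $p'$ constant and $p=p'/\VisitRatio$ small to realize any prescribed $\VisitRatio$.

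First I would evaluate the robust value functions. Since $\mu(a)\le 1$, the adversary always increases $P(s_2)$, so the robust value at $s_1$ under policy $\pi$ equals $p'\,\mu(\pi(s_2))+(1-p')\cdot 1$, and the per-episode regret collapses to $V_1^{*,\rho}-V_1^{\pi^k,\rho}=p'\bigl(\mu^\star-\mu(a^k)\bigr)$, where $a^k=\pi^k(s_2)$ and $\mu^\star=\max_a\mu(a)$. Thus the total regret is $p'\sum_{k=1}^K\bigl(\mu^\star-\mu(a^k)\bigr)$, exactly the weighted regret of a bandit that plays $a^k$ in episode $k$. The decisive feature is that the agent interacts with the \emph{nominal} environment, so it receives a reward sample of arm $a^k$ only when the episode actually passes through $s_2$, an event of probability $p$; hence after $K$ episodes the expected number of informative observations is only $\approx pK$.

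Next I would run a standard two-point (Le Cam) argument. I take two instances differing only in which of two arms carries the elevated mean, $\mu=\tfrac12+\Delta$ versus $\mu=\tfrac12$, so that the optimal action flips and each misidentification costs $p'\Delta$ of regret. Using the divergence-decomposition identity for adaptively collected data, the KL divergence between the two induced distributions over the full $K$-episode observation sequence is controlled by the expected number of informative pulls times the per-sample Gaussian KL, i.e. $\mathrm{KL}\lesssim pK\Delta^2$, because only the visited-$s_2$ steps separate the instances. Choosing $\Delta=\Theta(1/\sqrt{pK})$ keeps this KL below a constant, so by Pinsker the instances cannot be reliably distinguished and some instance forces the algorithm to play the wrong arm on a constant fraction of episodes. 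The resulting regret is $\Omega(p'\Delta K)=\Omega\!\bigl(p'\sqrt{K/p}\bigr)=\Omega\!\bigl(\sqrt{p'}\,\sqrt{\VisitRatio}\,\sqrt{K}\bigr)=\Omega\!\bigl(\sqrt{\VisitRatio}\,\sqrt{K}\bigr)$ once $p'=\Theta(1)$, which is the claimed bound; the same two-point instance applies verbatim to all three divergences and to both the constrained and regularized formulations, only the translation between the divergence parameter and $p'$ changing.

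The main obstacle is the information-theoretic bookkeeping in the third step: the regret is charged under the perturbed dynamics (weight $p'$) while information is gathered under the nominal dynamics (rate $p$), and it is precisely this asymmetry that manufactures the $\sqrt{\VisitRatio}=\sqrt{p'/p}$ factor. I must therefore bound the $K$-episode KL carefully---via the chain rule over episodes and conditioning on whether $s_2$ is reached---so that only the $\approx pK$ genuinely informative observations contribute, while respecting the adaptivity of the policy sequence $\{\pi^k\}$. A secondary technical check is confirming that for the KL and $\chi^2$ balls, and their regularized analogues, the worst-case kernel indeed saturates at the prescribed $p'$ and that the divergence parameter can be taken $\Theta(1)$ with $p$ arbitrarily small, so that $\VisitRatio$ ranges over all admissible values while $p'$ stays constant.
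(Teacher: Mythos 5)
Your proposal follows essentially the same route as the paper: the identical three-state hard instance with nominal visitation $p$ to the informative state and worst-case visitation $p'=\widetilde p$, the same key asymmetry (regret charged at rate $p'$, information collected at rate $p$), and the same divergence-decomposition-over-episodes argument with $\Delta\asymp 1/\sqrt{pK}$ yielding $\Omega(\sqrt{p'/p}\sqrt{K})=\Omega(\VisitRatio^{1/2}\sqrt{K})$. The only differences are immaterial for the stated bound --- you use a two-arm Le Cam/Pinsker argument where the paper uses $|\mathcal{A}|$ arms with the Bretagnolle--Huber inequality, and the ``secondary technical check'' you defer for the regularized settings is real but routine: there the value at $s_1$ is not linear in $\mu(a)$ with slope $p'$, so the per-episode regret is only $\ge\zeta\,\widetilde p\,\Delta$ for a divergence-dependent constant $\zeta$, which the paper verifies case by case.
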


\begin{proof}
The proof here follows the high level idea in \citet[Theorem 15.2]{lattimore2020bandit}. We consider two RMDPs $\mathcal{M}_1$ and $\mathcal{M}_2$, as illustrated in \Cref{fig:regret lower bound}, where $H=2$, $\mathcal{S}=\{s_1,s_2,s_3\}$ and $\mathcal{A}=\{a_1,a_2,\cdots,a_{|\mathcal{A}|}\}$. The only difference between these two RMDPs is the mean reward at $s_2$. $s_1$ is always the initial state, and can transit to $s_2$ with fixed probability $p$ and $s_3$ with fixed probability $1-p$ regardless of the action. We assume that $K>|\mathcal{A}|$ and $p\geq|\mathcal{A}|/K$ to facilitate the constructions, this bound becomes loose as $K$ grows large.

The agent receives a reward drawn from a unit normal distribution, $r\sim\mathcal{N}(\mu(a),1)$, at state $s_2$, and a fixed reward $r=1$ at state $s_3$, where $\mu(a)\in[0,1)$. This choice of $\mu(a)$ ensures that the robust value function satisfies $V^{\pi,\rho}(s_2)<V^{\pi,\rho}(s_3)$, implying that the transition probability from $s_1$ to $s_2$ will not decrease in the worst-case environment compared to the nominal environment. In $\mathcal{M}_1$, the mean reward vector at $s_2$ is given by $\mu_1=(\Delta,0,\cdots,0)\in\mathbb{R}^{|\mathcal{A}|}$ and $\Delta<\frac{1}{2}$ is a constant to be specified.

We introduce some notations that will be useful in the following discussion. The agent follows a learning algorithm $\xi$ and, in the $k$-th episode, selects a policy $\pi^k$ to interact with the environment and collect a trajectory $(o^k,a^k,r^k)$, where $o^k$ denotes the state to which the agent transitions from $s_1$ after taking an arbitrary action; $a^k$ denotes the action the agent takes at step $h=2$, i.e., at state $s_2$ or $s_3$; and $r^k$ denotes the corresponding reward received at $s_2$ or $s_3$, all in episode $k$. The joint distribution over the trajectories collected across all $K$ episodes, $\{(o^k, a^k, r^k)\}_{k=1}^K$, induces a distribution denoted by $\mathbb{P}_i^o$.

The random variables corresponding to $o^k,a^k,r^k$ are denoted by $O^k,A^k,R^k$, respectively. We use $\mathbb{E}_i^o$ to denote the expectation under $\mathbb{P}_i^o$. As similarly noted by \citet{auer2002nonstochastic}, it suffices to consider deterministic strategies without loss of generality. Let $S_j(K)$ denote the number of episodes in which the agent selects action $a_j$ if visiting state $s_2$ according to the learned policy
\begin{align}
S_j(K)&=\sum_{k=1}^K\mathds{1}\big\{\pi^k(s_2)=a_j\big\},\label{eq2:def S}    
\end{align}
and let $T_j(K)$ denote the number of episodes in which action $a_j$ is actually taken at state $s_2$,
\begin{align}
T_j(K)&=\sum_{k=1}^K\mathds{1}\big\{O^k=s_2,A^k=a_j\big\}.\label{eq2:def T}
\end{align}
We set $c=\argmin\limits_{j>1}\mathbb{E}_1^o[T_j(n)]$ as the least chosen action (excluding $a_1$) under environment $\mathcal{M}_1$. Since the expected number of times the agent transitions from $s_1$ to $s_2$ over $K$ episodes is $Kp$, and given the selection of $c$, it follows that
\begin{align}
    \mathbb{E}_1^o[T_c(n)]\leq\frac{Kp}{|\mathcal{A}|-1}.\label{eq2:lower step c res}
\end{align}
We now define the mean reward at $s_2$ in environment $\mathcal{M}_2$ as
\begin{align*}
    \mu_2=(\Delta,0,\cdots,0,\underset{c\text{-th}}{2\Delta},0,\cdots,0).
\end{align*}
That is, $\mu_2(a_j)=\mu_1(a_j)$ for all $j\neq c$ and $\mu_2(a_c)=2\Delta$. Clearly, the optimal policy at state $s_2$ selects action $a_1$ in $\mathcal{M}_1$ and action $a_c$ in $\mathcal{M}_2$.

Recall the definition of $\VisitRatio$ in \Cref{asm:main distribution shift} and the visitation measure notations introduced in \Cref{def:visitation_measure}. Since the visitation measure to $s_1$ is always $1$, and the visitation measure to $s_3$, which is equivalently the transition probability from $s_1$ to $s_3$, does not increase in the worst-case environment compared to the nominal one, the maximum visitation measure ratio can only be attained at state $s_2$. In the nominal environment, the visitation measure to $s_2$ is $\mathrm{d}^\pi(s_2)=p$ for any policy $\pi$. In the worst-case environment of $\mathcal{M}_1$, let $\widetilde{p}=\max\limits_\pi\mathrm{q}^\pi(s_2)$ denote the maximum visitation measure to $s_2$, that is, the largest transition probability from $s_1$ to $s_2$ across all policies, under both the CRMDP and RRMDP settings. In what follows, we show that this value $\widetilde{p}$ remains unchanged in $\mathcal{M}_2$.

To proceed, we present the following lemma, which provides a lower bound on the sum of the total regret in environments $\mathcal{M}_1$ and $\mathcal{M}_2$.
\begin{lemma}\label{lem:lower main regret sum}
For TV, KL and $\chi^2$ divergence defined CRMDP and RRMDP settings, there exists a corresponding constant $\zeta$ such that
\begin{align}
    &\EE\big[\mathrm{Regret}_{\mathcal{M}_1}(\xi,K)+\mathrm{Regret}_{\mathcal{M}_2}(\xi,K)\big]\geq\zeta\cdot\widetilde{p}\cdot\exp\big(-\mathrm{KL}\big(\mathbb{P}_1^o,\mathbb{P}_2^o\big)\big)K\Delta,\label{eq2:lower step reg}\\
    &\qquad\text{where}\quad\zeta=\begin{cases}
        \frac{1}{4}&\text{for CRMDP settings},\\
        \frac{1}{4}&\text{for RRMDP-TV setting},\\
        \frac{e^{-\beta^{-1}}}{4(1+\beta^{-1})}&\text{for RRMDP-KL setting},\\
        \frac{1}{16}&\text{for RRMDP-$\chi^2$ setting}.
    \end{cases}\nonumber
\end{align}
\end{lemma}

We present the following lemma to simplify the expression for $\mathrm{KL}\big(\mathbb{P}_1^o,\mathbb{P}_2^o\big)$.
\begin{lemma}\label{lem:KL P_1^o P_2^o}
For the distributions $\mathbb{P}_1^o$ and $\mathbb{P}_2^o$, the following property holds:
\begin{align}
    \mathrm{KL}\big(\mathbb{P}_1^o,\mathbb{P}_2^o\big)=\sum\limits_{j=1}^{|\mathcal{A}|}\mathbb{E}_1^o[T_j(K)]\cdot\mathrm{KL}\big(P_{r_{\mathcal{M}_1}(s_2,a_j)},P_{r_{\mathcal{M}_2}(s_2,a_j)}\big),\label{eq2:lower step main}
\end{align}
where $T_j(K)$ is defined in \eqref{eq2:def T}.
\end{lemma}
From the construction of reward function, we can simplify \eqref{eq2:lower step main} as
\begin{align}
    \mathrm{KL}\big(\mathbb{P}_1^o,\mathbb{P}_2^o\big)&=\sum\limits_{j=1}^{|\mathcal{A}|}\mathbb{E}_1^o[T_j(K)]\cdot\mathrm{KL}(r_{\mathcal{M}_1}(s_2,a_j),r_{\mathcal{M}_2}(s_2,a_j))\nonumber\\
    &=\mathbb{E}_1^o[T_c(K)]\cdot\mathrm{KL}(\mathcal{N}(0,1),\mathcal{N}(2\Delta,1))\nonumber\\
    &=2\Delta^2\cdot\mathbb{E}_1^o[T_c(K)]\label{eq2:lower step normal KL}\\
    &\leq\frac{2\Delta^2Kp}{|\mathcal{A}|-1},\label{eq2:lower step KL c}
\end{align}
where \eqref{eq2:lower step normal KL} follows from \Cref{lem:normal KL}, and \eqref{eq2:lower step KL c} applies the result of \eqref{eq2:lower step c res}.

We now return to bounding \eqref{eq2:lower step reg}. By plugging in \eqref{eq2:lower step KL c} and setting $\Delta=\sqrt{(|\mathcal{A}|-1)/(4Kp)}$, which satisfies $\Delta<1/2$ due to the given range of $K$ and $p$, we obtain
\begin{align}
    2\max\big\{\EE\big[\mathrm{Regret}_{\mathcal{M}_1}(\xi,K)\big],\EE\big[\mathrm{Regret}_{\mathcal{M}_2}(\xi,K)\big]\big\}&\geq
    \EE\big[\mathrm{Regret}_{\mathcal{M}_1}(\xi,K)+\mathrm{Regret}_{\mathcal{M}_2}(\xi,K)\big]\nonumber\\
    &\geq\zeta\cdot\widetilde{p}\cdot\exp\big(-\mathrm{KL}\big(\mathbb{P}_1^o,\mathbb{P}_2^o\big)\big)K\Delta\nonumber\\
    &\geq\zeta\cdot\widetilde{p}\cdot\exp\Big(-\frac{2\Delta^2Kp}{|\mathcal{A}|-1}\Big)K\Delta\nonumber\\
    &\geq\Omega\bigg(\frac{e^{-\frac{1}{2}}\sqrt{\widetilde{p}}}{2}\sqrt{\VisitRatio(|\mathcal{A}|-1)K}\bigg)\label{eq2:lower for hard}\\
    &=\Omega\big(\VisitRatio^\frac{1}{2}\sqrt{K}\big).\nonumber
\end{align}
This finishes the proof.
\end{proof}

\subsection{Proof of \Cref{lem:CRMDP_hard_instances}}

To establish the results for hard instances, we instantiate the constructed example in \Cref{thm:proof regret lower} by appropriately selecting the nominal transition $p$ and the uncertainty set radius $\rho$. We assume that $K\geq2^{2|\mathcal{A}|+1}|\mathcal{A}|$ throughout the proof.

\paragraph{For TV-distance.}
We set $p=\frac{1}{2^{2|\mathcal{A}|+1}-2}\geq|\mathcal{A}|/K$ and $\rho=\frac{1}{2}$. First, we solve the worst-case transition in the first step explicitly. Follow the definition of TV-distance, we have
\begin{align}
    \widetilde{p}&=\argmin\limits_{p'}p'V^\pi(s_2)+(1-p')\label{lem:proof con reg TV}\\
    &\text{s.t.}\quad p'-p\leq\rho.\nonumber
\end{align}
Since \eqref{lem:proof con reg TV} decrease monotonically with $\widetilde{p}$, the solution of this optimization problem is $\widetilde{p}=p+\rho=\frac{2^{2|\mathcal{A}|}}{2^{2|\mathcal{A}|+1}-2}$. Therefore, $\VisitRatio=\widetilde{p}/p=2^{2|\mathcal{A}|}$.

From \eqref{eq2:lower for hard}, we see that the expected regret for constructed environments $\mathcal{M}_1$ and $\mathcal{M}_2$ satisfies
\begin{align*}
    \max\big\{\mathbb{E}\big[\mathrm{Regret}_{\mathcal{M}_1}(\xi,K)\big],\mathbb{E}\big[\mathrm{Regret}_{\mathcal{M}_2}(\xi,K)\big]\big\}\geq\Omega\bigg(\frac{e^{-\frac{1}{2}}\sqrt{\widetilde{p}}}{2}\sqrt{\VisitRatio(|\mathcal{A}|-1)K}\bigg)=\Omega\big(2^{|\mathcal{A}|}\sqrt{K}\big).
\end{align*}
This finishes the proof.

\paragraph{For $\chi^2$-divergence.}
We set $p=\frac{1}{2^{2|\mathcal{A}|+1}}\geq|\mathcal{A}|/K$ and $\rho=\frac{(2^{2|\mathcal{A}|}-1)^2}{2^{2|\mathcal{A}|+1}-1}$. First, we solve the worst-case transition in the first step explicitly. Follow the definition of $\chi^2$-divergence, we have
\begin{align}
    \widetilde{p}&=\argmin\limits_{p'}p'V^\pi(s_2)+(1-p')\label{lem:proof con reg Chi2}\\
    &\text{s.t.}\quad p\bigg(\frac{p'}{p}-1\bigg)^2+(1-p)\bigg(\frac{1-p'}{1-p}-1\bigg)^2\leq\rho.\nonumber
\end{align}
Since \eqref{lem:proof con reg Chi2} decrease monotonically with $\widetilde{p}$, the solution of this optimization problem is $\widetilde{p}=p+\sqrt{\rho p(1-p)}=\frac{1}{2}$. Therefore, $\VisitRatio=\widetilde{p}/p=2^{2|\mathcal{A}|}$.

From \eqref{eq2:lower for hard}, we see that the expected regret for constructed environments $\mathcal{M}_1$ and $\mathcal{M}_2$ satisfies
\begin{align*}
    \max\big\{\mathbb{E}\big[\mathrm{Regret}_{\mathcal{M}_1}(\xi,K)\big],\mathbb{E}\big[\mathrm{Regret}_{\mathcal{M}_2}(\xi,K)\big]\big\}\geq\Omega\bigg(\frac{e^{-\frac{1}{2}}\sqrt{\widetilde{p}}}{2}\sqrt{\VisitRatio(|\mathcal{A}|-1)K}\bigg)=\Omega\big(2^{|\mathcal{A}|}\sqrt{K}\big).
\end{align*}
This finishes the proof.

\paragraph{For KL-divergence.}
We set $p=\frac{1}{2^{2|\mathcal{A}|+1}}\geq|\mathcal{A}|/K$ and $\rho=\frac{(2^{2|\mathcal{A}|}-1)^2}{2^{2|\mathcal{A}|+1}-1}$. First, we solve the worst-case transition in the first step explicitly. Follow the definition of KL-divergence, we have
\begin{align*}
    \widetilde{p}&=\argmin\limits_{p'}p'V^\pi(s_2)+(1-p')\\
    &\text{s.t.}\quad p'\log\bigg(\frac{p'}{p}\bigg)+(1-p)\log\bigg(\frac{1-p'}{1-p}\bigg)\leq\rho.
\end{align*}
Though it is difficult to obtain a closed-form solution for this optimization problem, \Cref{lem:KL Chi2} implies that its optimal value is no less than that of the previous $\chi^2$-based optimization problem. That is, $\widetilde{p}\geq p+\sqrt{\rho p(1-p)}=\frac{1}{2}$. Therefore, $\VisitRatio=\widetilde{p}/p\geq2^{2|\mathcal{A}|}$.

From \eqref{eq2:lower for hard}, we see that the expected regret for constructed environments $\mathcal{M}_1$ and $\mathcal{M}_2$ satisfies
\begin{align*}
    \max\big\{\mathbb{E}\big[\mathrm{Regret}_{\mathcal{M}_1}(\xi,K)\big],\mathbb{E}\big[\mathrm{Regret}_{\mathcal{M}_2}(\xi,K)\big]\big\}\geq\Omega\bigg(\frac{e^{-\frac{1}{2}}\sqrt{\widetilde{p}}}{2}\sqrt{\VisitRatio(|\mathcal{A}|-1)K}\bigg)=\Omega\big(2^{|\mathcal{A}|}\sqrt{K}\big).
\end{align*}
This finishes the proof.

\section{Proofs of Supporting Lemmas in the Proofs of Lower Bounds}

\subsection{Proof of \Cref{lem:lower main regret sum}}

We provide the proofs for the three CRMDP settings in \Cref{sec:proof regret sum con}, as they share a similar underlying structure. The proofs for the RRMDP-TV, RRMDP-KL, and RRMDP-$\chi^2$ settings are given in \Cref{sec:proof regret sum reg-TV}, \Cref{sec:proof regret sum reg-KL}, and \Cref{sec:proof regret sum reg-Chi2}, respectively.

\subsubsection{The CRMDP Settings}\label{sec:proof regret sum con}
First, we consider the optimization problem for the worst-case transition from state $s_1$ to states $s_2$ and $s_3$, given by
\begin{align*}
    P^w&=\argmin\limits_{p':\mathrm{D}(P^w\Vert P^o)\leq\rho}\big[p'\cdot V^{\pi,\rho}(s_2)+(1-p')\cdot V^{\pi,\rho}(s_3)\big]\\
    &=\argmin\limits_{p':\mathrm{D}(P^w\Vert P^o)\leq\rho}\big[p'\cdot V^{\pi,\rho}(s_2)+(1-p')\big],\\
    &\text{where}\quad P^w=(p',1-p')\quad\text{and}\quad P^o=(p,1-p),
\end{align*}
where the second equality holds because $V^{\pi,\rho}(s_2)<1$ and $V^{\pi,\rho}(s_3)=1$ by construction. As a result, the objective function $p'\cdot V^{\pi,\rho}(s_2)+(1-p')$ decreases monotonically with respect to $p'$. Therefore, the worst-case transition probability is characterized by $\widetilde{p}=\sup\{p':\mathrm{D}(P^w\Vert P^o)\leq\rho\}$, where $\mathrm{D}$ denotes an $f$-divergence (TV, KL or $\chi^2$). This implies that $\widetilde{p}$ remains invariant across different policies $\pi$ and environments $\mathcal{M}_1$ and $\mathcal{M}_2$.

For environment $\mathcal{M}_1$, the total regret incurred by selecting a suboptimal policy $\pi^k(s_2)\neq a_1$ in each episode can be bounded as
\begin{align}
    \EE\big[\mathrm{Regret}_{\mathcal{M}_1}(\xi,K)\big]&=\sum\limits_{k=1}^K\mathbb{E}_1^o\big[V^{\pi^*,\rho}(s_1)-V^{\pi^k,\rho}(s_1)\big]\nonumber\\
    &=\sum_{k=1}^K\mathbb{E}_1^o\big[\big(\widetilde{p}\cdot V^{\pi^*,\rho}(s_2)+(1-\widetilde{p})\cdot V^{\pi^*,\rho}(s_3)\big)-\big(\widetilde{p}\cdot V^{\pi^k,\rho}(s_2)+(1-\widetilde{p})\cdot V^{\pi^k,\rho}(s_3)\big)\big]\label{eq2:lem regret sum con V def}\\
    &=\sum_{k=1}^K\mathbb{E}_1^o\big[\widetilde{p}\cdot\big(V^{\pi^*,\rho}(s_2)-V^{\pi^k,\rho}(s_2)\big)\big]\nonumber\\
    &=\sum_{k=1}^K\mathbb{E}_1^o\big[\widetilde{p}\cdot\mathds{1}\{\pi^k(s_2)\neq a_1\}\Delta\big]\label{eq2:lem regret sum con reward}\\
    &=\widetilde{p}\cdot\Delta\cdot\mathbb{E}_1^o\bigg[\sum_{k=1}^K\mathds{1}\{\pi^k(s_2)\neq a_1\}\bigg]\nonumber\\
    &=\widetilde{p}\cdot\Delta\cdot\mathbb{E}_1^o[K-S_1(K)]\label{eq2:lem regret sum con S def}\\
    &\geq\frac{\widetilde{p}K\Delta}{2}\mathbb{P}_1^o\Big(S_1(K)\leq\frac{K}{2}\Big),\label{eq2:lem regret sum con Markov}
\end{align}
where \eqref{eq2:lem regret sum con V def} follows from the definition of the robust value function under the CRMDP setting, using $\widetilde{p}$ as the worst-case transition probability, \eqref{eq2:lem regret sum con reward} relies on the construction of the reward function at state $s=s_2$, \eqref{eq2:lem regret sum con S def} uses the definition of $S_j(K)$ as given in \eqref{eq2:def S}, and \eqref{eq2:lem regret sum con Markov} follows from Markov's inequality.

For environment $\mathcal{M}_2$, following the same analysis, the total regret incurred by selecting a suboptimal policy $\pi^k(s_2)\neq a_c$ in each episode can be bounded as
\begin{align}
    \EE\big[\mathrm{Regret}_{\mathcal{M}_2}(\xi,K)\big]&=\sum\limits_{k=1}^K\mathbb{E}_2^o\big[V^{\pi^*,\rho}(s_1)-V^{\pi^k,\rho}(s_1)\big]\nonumber\\
    &=\sum_{k=1}^K\mathbb{E}_2^o\big[\widetilde{p}\cdot\big(V^{\pi^*,\rho}(s_2)-V^{\pi^k,\rho}(s_2)\big)\big]\nonumber\\
    &=\sum_{k=1}^K\mathbb{E}_2^o\Big[\widetilde{p}\cdot\Big(\mathds{1}\{\pi^k(s_2)=a_1\}\Delta+\sum\limits_{j>1,j\neq c}\mathds{1}\{\pi^k(s_2)=a_j\}\Delta\Big)\Big]\nonumber\\
    &\geq\sum_{k=1}^K\mathbb{E}_2^o\big[\widetilde{p}\cdot\big(\mathds{1}\{\pi^k(s_2)=a_1\}\Delta\big)\big]\nonumber\\
    &=\widetilde{p}\cdot\Delta\cdot\mathbb{E}_2^o\bigg[\sum_{k=1}^K\mathds{1}\{\pi^k(s_2)=a_1\}\bigg]\nonumber\\
    &=\widetilde{p}\cdot\Delta\cdot\mathbb{E}_2^o[S_1(K)]\nonumber\\
    &\geq\frac{\widetilde{p}K\Delta}{2}\mathbb{P}_2^o\Big(S_1(K)>\frac{K}{2}\Big).\label{eq2:lem regret sum con Markov2}
\end{align}
By combining \eqref{eq2:lem regret sum con Markov} and \eqref{eq2:lem regret sum con Markov2} with \Cref{lem:Bretagnolle-Huber}, we obtain
\begin{align*}
    \EE\big[\mathrm{Regret}_{\mathcal{M}_1}(\xi,K)+\mathrm{Regret}_{\mathcal{M}_2}(\xi,K)\big]&\geq\frac{\widetilde{p}K\Delta}{2}\Big(\mathbb{P}_1^o\Big(S_1(K)\leq\frac{K}{2}\Big)+\mathbb{P}_1^o\Big(S_2(K)>\frac{K}{2}\Big)\Big)\\
    &\geq\frac{\widetilde{p}K\Delta}{4}\exp\big(-\mathrm{KL}\big(\mathbb{P}_1^o,\mathbb{P}_2^o\big)\big).
\end{align*}
We complete the proof by setting $\zeta=\frac{1}{4}$.

\subsubsection{The RRMDP-TV Setting}\label{sec:proof regret sum reg-TV}
For the RRMDP-TV setting, the robust value function at state $s_1$, denoted by $V^{\pi,\beta}(s_1)$, is defined as
\begin{align*}
    V^{\pi,\beta}(s_1)&=\inf\limits_{p'}\big\{p'\cdot V^{\pi,\beta}(s_2)+(1-p')\cdot V^{\pi,\beta}(s_3)+\beta|p'-p|\big\}\\
    &=\inf\limits_{p'}\big\{p'\cdot V^{\pi,\beta}(s_2)+(1-p')+\beta(p'-p)\big\},
\end{align*}
where the second equality holds because $V^{\pi,\beta}(s_2)<1$ and $V^{\pi,\beta}(s_3)=1$ by construction. Given that $V_1^{\pi,\beta}(s_2)$ takes value in $\{0,\Delta,2\Delta\}$, we set $\beta\in[0,1-2\Delta]$, so that $V_2^{\pi,\beta}(s_2)-1+\beta\leq0$. Therefore, the term $\big[V_2^{\pi,\beta}(s_2)-1+\beta\big]p'$ decreases monotonically with $p'$, and the infimum is attained at $\widetilde{p}=1$. Consequently, we obtain
\begin{align*}
    V^{\pi,\beta}(s_1)=V^{\pi,\beta}(s_2)+\beta(1-p).
\end{align*}
For environment $\mathcal{M}_1$, the total regret incurred by selecting a suboptimal policy $\pi^k(s_2)\neq a_1$ in each episode can be bounded as
\begin{align}
\EE\big[\mathrm{Regret}_{\mathcal{M}_1}(\xi,K)\big]&=\sum\limits_{k=1}^K\mathbb{E}_1^o\big[V^{\pi^*,\beta}(s_1)-V^{\pi^k,\beta}(s_1)\big]\nonumber\\
    &=\sum\limits_{k=1}^K\mathbb{E}_1^o\big[\mathds{1}\{\pi^k(s_2)\neq a_1\}\cdot\big((\Delta+\beta(1-p))-\beta(1-p)\big)\big]\nonumber\\
    &=\widetilde{p}\cdot\Delta\cdot\mathbb{E}_1^o\bigg[\sum_{k=1}^K\mathds{1}\{\pi^k(s_2)\neq a_1\}\bigg]\label{eq2:lem regret sum reg-TV p^tilde}\\
    &=\widetilde{p}\cdot\Delta\cdot\mathbb{E}_1^o[K-S_1(K)]\label{eq2:lem regret sum reg-TV S def}\\
    &\geq\frac{\widetilde{p}K\Delta}{2}\mathbb{P}_1^o\Big(S_1(K)\leq\frac{K}{2}\Big),\label{eq2:lem regret sum reg-TV Markov}
\end{align}
where \eqref{eq2:lem regret sum reg-TV p^tilde} holds because $\widetilde{p}=1$ as previously derived, \eqref{eq2:lem regret sum reg-TV S def} uses the definition of $S_j(K)$ as given in \eqref{eq2:def S}, and \eqref{eq2:lem regret sum reg-TV Markov} follows from Markov's inequality.

For environment $\mathcal{M}_2$, following the same analysis, the total regret incurred by selecting a suboptimal policy $\pi^k(s_2)\neq a_c$ in each episode can be bounded as
\begin{align}
    \EE\big[\mathrm{Regret}_{\mathcal{M}_2}(\xi,K)\big]&=\sum\limits_{k=1}^K\mathbb{E}_2^o\big[V^{\pi^*,\beta}(s_1)-V^{\pi^k,\beta}(s_1)\big]\nonumber\\
    &=\sum\limits_{k=1}^K\mathbb{E}_2^o\Big[\mathds{1}\{\pi^k(s_2)=a_1\}\cdot\big((2\Delta+\beta(1-p))-(\Delta+\beta(1-p))\big)\nonumber\\
    &\qquad+\:\sum\limits_{j>1,j\neq c}\mathds{1}\{\pi^k(s_2)=a_j\}\cdot\big((2\Delta+\beta(1-p))-\beta(1-p)\big)\Big]\nonumber\\
    &\geq\widetilde{p}\cdot\Delta\cdot\mathbb{E}_2^o\bigg[\sum_{k=1}^K\mathds{1}\{\pi^k(s_2)=a_1\}\bigg]\nonumber\\
    &=\widetilde{p}\cdot\Delta\cdot\mathbb{E}_2^o[S_1(K)]\nonumber\\
    &\geq\frac{\widetilde{p}K\Delta}{2}\mathbb{P}_2^o\Big(S_1(K)>\frac{K}{2}\Big).\label{eq2:lem regret sum reg-TV Markov2}
\end{align}
By combining \eqref{eq2:lem regret sum con Markov} and \eqref{eq2:lem regret sum con Markov2} with \Cref{lem:Bretagnolle-Huber}, we obtain
\begin{align*}
    \EE\big[\mathrm{Regret}_{\mathcal{M}_1}(\xi,K)+\mathrm{Regret}_{\mathcal{M}_2}(\xi,K)\big]&\geq\frac{\widetilde{p}K\Delta}{2}\Big(\mathbb{P}_1^o\Big(S_1(K)\leq\frac{K}{2}\Big)+\mathbb{P}_2^o\Big(S_1(K)>\frac{K}{2}\Big)\Big)\\
    &\geq\frac{\widetilde{p}K\Delta}{4}\exp\big(-\mathrm{KL}\big(\mathbb{P}_1^o,\mathbb{P}_2^o\big)\big).
\end{align*}
We complete the proof by setting $\zeta=\frac{1}{4}$.

\subsubsection{The RRMDP-KL Setting}\label{sec:proof regret sum reg-KL}
For the RRMDP-KL setting, the robust value function at state $s_1$, denoted by $V^{\pi,\beta}(s_1)$, is defined as
\begin{align}
    V^{\pi,\beta}(s_1)&=\inf\limits_{p'}\bigg\{p'\cdot V^{\pi,\beta}(s_2)+(1-p')\cdot V^{\pi,\beta}(s_3)+\beta\bigg[p'\log\bigg(\frac{p'}{p}\bigg)+(1-p')\log\bigg(\frac{1-p'}{1-p}\bigg)\bigg]\bigg\}.\label{eq2:lem regret sum reg-KL op}
\end{align}
Using the construction that $V^{\pi,\beta}(s_3)=1$ and setting the derivative of the objective function in \eqref{eq2:lem regret sum reg-KL op} with respect to $p'$ to zero, we obtain
\begin{align}
    p'=\frac{pe^{-\beta^{-1}V^{\pi,\beta}(s_2)}}{pe^{-\beta^{-1}V^{\pi,\beta}(s_2)}+(1-p)e^{-\beta^{-1}}}.\label{eq2:lem regret sum reg-KL p'}
\end{align}
The worst-case transition probability $\widetilde{p}$ is defined as the maximum value of $p'$ over all policies $\pi$. Since \eqref{eq2:lem regret sum reg-KL p'} is monotonically decreasing in $V^{\pi,\beta}(s_2)$, we set $V^{\pi,\beta}(s_2)=0$ in \eqref{eq2:lem regret sum reg-KL p'} to obtain
\begin{align}
    \widetilde{p}=\frac{p}{p+(1-p)e^{-\beta^{-1}}}.\label{eq2:lower KL lem p^tilde}
\end{align}
Finally, substituting \eqref{eq2:lem regret sum reg-KL p'} back into \eqref{eq2:lem regret sum reg-KL op}, we derive the expression for the robust value function $V^{\pi,\beta}(s_1)$ as
\begin{align}
    V^{\pi,\beta}(s_1)=-\beta\log\Big(pe^{-\beta^{-1}V^{\pi,\beta}(s_2)}+(1-p)e^{-\beta^{-1}}\Big).\label{eq2:lem regret sum reg-KL V}
\end{align}
For the environment $\mathcal{M}_1$, if $\pi^k(s_2)\neq a_1$, then the episodic regret can be bounded by
\begin{align}
    V^{\pi^*,\beta}(s_1)-V^{\pi^k,\beta}(s_1)&=-\beta\log\Big(pe^{-\beta^{-1}\Delta}+(1-p)e^{-\beta^{-1}}\Big)+\beta\log\Big(p+(1-p)e^{-\beta^{-1}}\Big)\nonumber\\
    &=-\beta\log\bigg(\frac{pe^{-\beta^{-1}\Delta}+(1-p)e^{-\beta^{-1}}}{p+(1-p)e^{-\beta^{-1}}}\bigg)\nonumber\\
    &=-\beta\log\bigg(1+\frac{p(e^{-\beta^{-1}\Delta}-1)}{p+(1-p)e^{-\beta^{-1}}}\bigg)\nonumber\\
    &\geq\beta\bigg(\frac{p}{p+(1-p)e^{-\beta^{-1}}}\bigg)(1-e^{-\beta^{-1}\Delta})\label{eq2:lower KL lem step log(1+x)}\\
    &\geq\beta\bigg(\frac{p}{p+(1-p)e^{-\beta^{-1}}}\bigg)\bigg(\frac{\beta^{-1}\Delta}{1+\beta^{-1}\Delta}\bigg)\label{eq2:lower KL lem step 1-e^{-x}}\\
    &\geq\widetilde{p}\bigg(\frac{1}{1+\beta^{-1}}\bigg)\Delta,\label{eq2:lower KL lem step Delta}
\end{align}
where \eqref{eq2:lower KL lem step log(1+x)} is because $\log(1+x)\leq x$ for $x\geq-1$, \eqref{eq2:lower KL lem step 1-e^{-x}} is because $1-e^{-x}\geq\frac{x}{1+x}$ for $x\geq0$, \eqref{eq2:lower KL lem step Delta} plugs in \eqref{eq2:lower KL lem p^tilde} and uses the selection that $\Delta\leq\frac{1}{2}$.

Summing \eqref{eq2:lower KL lem step Delta} over all $K$ episodes, we obtain
\begin{align}
    \EE\big[\mathrm{Regret}_{\mathcal{M}_1}(\xi,K)\big]&=\sum\limits_{k=1}^K\mathbb{E}_1^o\big[V^{\pi^*,\beta}(s_1)-V^{\pi^k,\beta}(s_1)\big]\nonumber\\
    &=\sum\limits_{k=1}^K\mathbb{E}_1^o\bigg[\mathds{1}\{\pi^k(s_2)\neq a_1\}\cdot\widetilde{p}\bigg(\frac{1}{1+\beta^{-1}}\bigg)\Delta\bigg]\nonumber\\
    &=\frac{\widetilde{p}\Delta}{1+\beta^{-1}}\cdot\mathbb{E}_1^o\bigg[\sum_{k=1}^K\mathds{1}\{\pi^k(s_2)\neq a_1\}\bigg]\nonumber\\
    &=\frac{\widetilde{p}\Delta}{1+\beta^{-1}}\cdot\mathbb{E}_1^o[K-S_1(K)]\label{eq2:lem regret sum reg-KL S def}\\
    &\geq\frac{\widetilde{p}K\Delta}{2(1+\beta^{-1})}\mathbb{P}_1^o\Big(S_1(K)\leq\frac{K}{2}\Big),\label{eq2:lem regret sum reg-KL Markov}
\end{align}
where \eqref{eq2:lem regret sum reg-KL S def} uses the definition of $S_j(K)$ as given in \eqref{eq2:def S}, and \eqref{eq2:lem regret sum reg-KL Markov} follows from Markov's inequality.

For the environment $\mathcal{M}_2$, if $\pi^k(s_2)=a_1$, then the episodic regret can be bounded by
\begin{align}
    V^{\pi^*,\beta}(s_1)-V^{\pi^k,\beta}(s_1)&=-\beta\log\Big(pe^{-2\beta^{-1}\Delta}+(1-p)e^{-\beta^{-1}}\Big)+\beta\log\Big(pe^{-\beta^{-1}\Delta}+(1-p)e^{-\beta^{-1}}\Big)\nonumber\\
    &=-\beta\log\bigg(\frac{pe^{-2\beta^{-1}\Delta}+(1-p)e^{-\beta^{-1}}}{pe^{-\beta^{-1}\Delta}+(1-p)e^{-\beta^{-1}}}\bigg)\nonumber\\
    &=-\beta\log\bigg(1+\frac{p(e^{-2\beta^{-1}\Delta}-e^{-\beta^{-1}\Delta})}{pe^{-\beta^{-1}\Delta}+(1-p)e^{-\beta^{-1}}}\bigg)\nonumber\\
    &\geq\beta\bigg(\frac{p}{pe^{-\beta^{-1}\Delta}+(1-p)e^{-\beta^{-1}}}\bigg)(1-e^{-\beta^{-1}\Delta})e^{-\beta^{-1}\Delta}\nonumber\\
    &\geq\beta e^{-\beta^{-1}}\bigg(\frac{p}{p+(1-p)e^{-\beta^{-1}}}\bigg)\bigg(\frac{\beta^{-1}\Delta}{1+\beta^{-1}\Delta}\bigg)\label{eq2:lower KL lem step some}\\
    &\geq\widetilde{p}\bigg(\frac{e^{-\beta^{-1}}}{1+\beta^{-1}}\bigg)\Delta,\label{eq2:lower KL lem step Delta2}
\end{align}
where \eqref{eq2:lower KL lem step some} uses $e^{-\beta^{-1}}\leq e^{-\beta^{-1}\Delta}\leq1$, \eqref{eq2:lower KL lem step Delta2} plugs in \eqref{eq2:lower KL lem p^tilde} and uses the selection that $\Delta\leq\frac{1}{2}$.

Summing \eqref{eq2:lower KL lem step Delta2} over all $K$ episodes, following the same analysis, we obtain
\begin{align}
    \EE\big[\mathrm{Regret}_{\mathcal{M}_2}(\xi,K)\big]&=\sum\limits_{k=1}^K\mathbb{E}_2^o\big[V^{\pi^*,\beta}(s_1)-V^{\pi^k,\beta}(s_1)\big]\nonumber\\
    &\geq\sum\limits_{k=1}^K\mathbb{E}_2^o\bigg[\mathds{1}\{\pi^k(s_2)=a_1\}\cdot\widetilde{p}\bigg(\frac{e^{-\beta^{-1}}}{1+\beta^{-1}}\bigg)\Delta\bigg]\nonumber\\
    &=\frac{e^{-\beta^{-1}}\widetilde{p}\Delta}{1+\beta^{-1}}\cdot\mathbb{E}_2^o\bigg[\sum_{k=1}^K\mathds{1}\{\pi^k(s_2)=a_1\}\bigg]\nonumber\\
    &=\frac{e^{-\beta^{-1}}\widetilde{p}\Delta}{1+\beta^{-1}}\cdot\mathbb{E}_2^o[S_1(K)]\nonumber\\
    &\geq\frac{e^{-\beta^{-1}}\widetilde{p}K\Delta}{2(1+\beta^{-1})}\mathbb{P}_2^o\Big(S_1(K)>\frac{K}{2}\Big).\label{eq2:lem regret sum reg-KL Markov2}
\end{align}
By combining \eqref{eq2:lem regret sum reg-KL Markov} and \eqref{eq2:lem regret sum reg-KL Markov2} with \Cref{lem:Bretagnolle-Huber}, we obtain
\begin{align*}
    \EE\big[\mathrm{Regret}_{\mathcal{M}_1}(\xi,K)+\mathrm{Regret}_{\mathcal{M}_2}(\xi,K)\big]&\geq\frac{e^{-\beta^{-1}}\widetilde{p}K\Delta}{2(1+\beta^{-1})}\Big(\mathbb{P}_1^o\Big(S_1(K)\leq\frac{K}{2}\Big)+\mathbb{P}_2^o\Big(S_1(K)>\frac{K}{2}\Big)\Big)\\
    &\geq\frac{e^{-\beta^{-1}}\widetilde{p}K\Delta}{4(1+\beta^{-1})}\exp\big(-\mathrm{KL}\big(\mathbb{P}_1^o,\mathbb{P}_2^o\big)\big).
\end{align*}
We complete the proof by setting $\zeta=\frac{e^{-\beta^{-1}}}{4(1+\beta^{-1})}$.

\subsubsection{The RRMDP-$\chi^2$ Setting}\label{sec:proof regret sum reg-Chi2}

For the RRMDP-$\chi^2$ setting, the robust value function at state $s_1$, denoted by $V^{\pi,\beta}(s_1)$, is defined as
\begin{align}
    V^{\pi,\beta}(s_1)=\inf\limits_{p'}\bigg\{p'\cdot V^{\pi,\beta}(s_2)+(1-p')\cdot V^{\pi,\beta}(s_3)+\beta\bigg[p\bigg(\frac{p'}{p}-1\bigg)^2+(1-p)\bigg(\frac{1-p'}{1-p}-1\bigg)^2\bigg]\bigg\}.\label{eq2:lem regret sum reg-Chi2 op}
\end{align}
Using the construction that $V^{\pi,\beta}(s_3)=1$ and setting the derivative of the objective function in \eqref{eq2:lem regret sum reg-Chi2 op} with respect to $p'$ to zero, we obtain
\begin{align}
    p'=p+\frac{p(1-p)\big(1-V^{\pi,\beta}(s_2)\big)}{2\beta}.\label{eq2:lem regret sum reg-Chi2 p'}
\end{align}
The worst-case transition probability $\widetilde{p}$ is defined as the maximum value of $p'$ over all policies $\pi$. Since \eqref{eq2:lem regret sum reg-Chi2 p'} is monotonically decreasing in $V^{\pi,\beta}(s_2)$, we set $V^{\pi,\beta}(s_2)=0$ in \eqref{eq2:lem regret sum reg-Chi2 p'} to obtain
\begin{align}
    \widetilde{p}=p+\frac{p(1-p)}{2\beta}.\label{eq2:lower Chi2 lem p^tilde}
\end{align}
Finally, substituting \eqref{eq2:lem regret sum reg-Chi2 p'} back into \eqref{eq2:lem regret sum reg-Chi2 op}, we derive the expression for the robust value function $V^{\pi,\beta}(s_1)$ as
\begin{align}
    V^{\pi,\beta}(s_1)=pV^{\pi,\beta}(s_2)+(1-p)-\frac{p(1-p)\big(1-V^{\pi,\beta}(s_2)\big)^2}{4\beta}.\label{eq2:lem regret sum reg-Chi2 V}
\end{align}
For the environment $\mathcal{M}_1$, if $\pi^k(s_2)\neq a_1$, then the episodic regret can be bounded by
\begin{align}
    V^{\pi^*,\beta}(s_1)-V^{\pi,\beta}(s_1)&=\bigg(p\Delta+(1-p)-\frac{p(1-p)\big(1-\Delta\big)^2}{4\beta}\bigg)-\bigg((1-p)-\frac{p(1-p)}{4\beta}\bigg)\nonumber\\
    &=p\Delta+\frac{p(1-p)}{4\beta}(2\Delta-\Delta^2)\nonumber\\
    &\geq\frac{1}{2}p\Delta+\frac{p(1-p)}{4\beta}\Delta\label{eq2:lower Chi2 lem step Delta}\\
    &\geq\frac{1}{2}\widetilde{p}\Delta,\label{eq2:lower Chi2 lem step plug}
\end{align}
where \eqref{eq2:lower Chi2 lem step Delta} uses the selection that $\Delta\leq\frac{1}{2}$ and \eqref{eq2:lower Chi2 lem step plug} plugs in \eqref{eq2:lower Chi2 lem p^tilde}.

Summing \eqref{eq2:lower Chi2 lem step plug} over all $K$ episodes, we obtain
\begin{align}
    \EE\big[\mathrm{Regret}_{\mathcal{M}_1}(\xi,K)\big]&=\sum\limits_{k=1}^K\mathbb{E}_1^o\big[V^{\pi^*,\beta}(s_1)-V^{\pi^k,\beta}(s_1)\big]\nonumber\\
    &=\sum\limits_{k=1}^K\mathbb{E}_1^o\bigg[\mathds{1}\{\pi^k(s_2)\neq a_1\}\cdot\frac{\widetilde{p}}{2}\Delta\bigg]\nonumber\\
    &=\frac{\widetilde{p}\Delta}{2}\cdot\mathbb{E}_1^o\bigg[\sum_{k=1}^K\mathds{1}\{\pi^k(s_2)\neq a_1\}\bigg]\nonumber\\
    &=\frac{\widetilde{p}\Delta}{2}\cdot\mathbb{E}_1^o[K-S_1(K)]\label{eq2:lem regret sum reg-Chi2 S def}\\
    &\geq\frac{\widetilde{p}K\Delta}{4}\mathbb{P}_1^o\Big(S_1(K)\leq\frac{K}{2}\Big),\label{eq2:lem regret sum reg-Chi2 Markov}
\end{align}
where \eqref{eq2:lem regret sum reg-Chi2 S def} uses the definition of $S_j(K)$ as given in \eqref{eq2:def S}, and \eqref{eq2:lem regret sum reg-Chi2 Markov} follows from Markov's inequality.

For the environment $\mathcal{M}_2$, if $\pi^k(s_2)=a_1$, then the episodic regret can be bounded by
\begin{align}
    V^{\pi^*,\beta}(s_1)-V^{\pi,\beta}(s_1)&\geq\bigg(2p\Delta+(1-p)-\frac{p(1-p)\big(1-2\Delta\big)^2}{4\beta}\bigg)-\bigg(p\Delta+(1-p)-\frac{p(1-p)\big(1-\Delta\big)^2}{4\beta}\bigg)\nonumber\\
    &=p\Delta+\frac{p(1-p)}{4\beta}(2\Delta-3\Delta^2)\nonumber\\
    &=p\Delta+\frac{p(1-p)}{4\beta}\Delta(2-3\Delta)\nonumber\\
    &\geq\frac{1}{4}p\Delta+\frac{p(1-p)}{8\beta}\Delta\label{eq2:lower Chi2 lem step Delta2}\\
    &\geq\frac{1}{4}\widetilde{p}\Delta,\label{eq2:lower Chi2 lem step plug2}
\end{align}
where \eqref{eq2:lower Chi2 lem step Delta2} uses the selection that $\Delta\leq\frac{1}{2}$ and \eqref{eq2:lower Chi2 lem step plug2} plugs in \eqref{eq2:lower Chi2 lem p^tilde}.

Summing \eqref{eq2:lower Chi2 lem step plug2} over all $K$ episodes, following the same analysis, we obtain
\begin{align}
    \EE\big[\mathrm{Regret}_{\mathcal{M}_2}(\xi,K)\big]&=\sum\limits_{k=1}^K\mathbb{E}_2^o\big[V^{\pi^*,\beta}(s_1)-V^{\pi^k,\beta}(s_1)\big]\nonumber\\
    &\geq\sum\limits_{k=1}^K\mathbb{E}_2^o\bigg[\mathds{1}\{\pi^k(s_2)=a_1\}\cdot\frac{\widetilde{p}}{4}\Delta\bigg]\nonumber\\
    &=\frac{\widetilde{p}\Delta}{4}\cdot\mathbb{E}_2^o\bigg[\sum_{k=1}^K\mathds{1}\{\pi^k(s_2)=a_1\}\bigg]\nonumber\\
    &=\frac{\widetilde{p}\Delta}{4}\cdot\mathbb{E}_2^o[S_1(K)]\nonumber\\
    &\geq\frac{\widetilde{p}K\Delta}{8}\mathbb{P}_2^o\Big(S_1(K)>\frac{K}{2}\Big).\label{eq2:lem regret sum reg-Chi2 Markov2}
\end{align}
By combining \eqref{eq2:lem regret sum reg-Chi2 Markov} and \eqref{eq2:lem regret sum reg-Chi2 Markov2} with \Cref{lem:Bretagnolle-Huber}, we obtain
\begin{align*}
    \EE\big[\mathrm{Regret}_{\mathcal{M}_1}(\xi,K)+\mathrm{Regret}_{\mathcal{M}_2}(\xi,K)\big]&\geq\frac{\widetilde{p}K\Delta}{8}\Big(\mathbb{P}_1^o\Big(S_1(K)\leq\frac{K}{2}\Big)+\mathbb{P}_2^o\Big(S_1(K)>\frac{K}{2}\Big)\Big)\\
    &\geq\frac{\widetilde{p}K\Delta}{16}\exp\big(-\mathrm{KL}\big(\mathbb{P}_1^o,\mathbb{P}_2^o\big)\big).
\end{align*}
We complete the proof by setting $\zeta=\frac{1}{16}$.

\subsection{Proof of \Cref{lem:KL P_1^o P_2^o}}
In order to prove this, we borrow the same technology as \citet[Lemma 15.1]{lattimore2020bandit}. First, by the definition of KL-divergence, we have that
\begin{align}
    \mathrm{KL}\big(\mathbb{P}_1^o,\mathbb{P}_2^o\big)=\mathbb{E}_1^o\bigg[\log\bigg(\frac{\mathrm{d}\mathbb{P}_1^o}{\mathrm{d}\mathbb{P}_2^o}\bigg)\bigg].\label{eq2:lower step main pre}
\end{align}
Following the notation defined in \Cref{thm:proof regret lower}, we calculate the Radon-Nikodym derivative of $\mathbb{P}_1^o$ as follows
\begin{align*}
    p_1^o(o^1,a^1,r^1,\cdots,o^K,a^K,r^K)=\prod\limits_{k=1}^K\mathrm{Pr}(o^k\vert s_1)\cdot\pi^k(a^k\vert o^1,a^1,r^1,\cdots,o^{k-1},a^{k-1},r^{k-1},o^k)\cdot \mathrm{Pr}\big(r_{\mathcal{M}_1}(o^k,a^k)=r^k\big),
\end{align*}
The density of $\mathbb{P}_2$ is exactly identical except that $r_{\mathcal{M}_1}$ is replaced by $r_{\mathcal{M}_2}$, which gives rise to
\begin{align}
    \log\bigg(\frac{\mathrm{d}\mathbb{P}_1^o}{\mathrm{d}\mathbb{P}_2^o}(o^1,a^1,r^1,\cdots,o^K,a^K,r^K)\bigg)&=\sum\limits_{k=1}^K\log\frac{\mathrm{Pr}\big(r_{\mathcal{M}_1}(o^k,a^k)=r^k\big)}{\mathrm{Pr}\big(r_{\mathcal{M}_2}(o^k,a^k)=r^k\big)}\nonumber\\
    &=\sum\limits_{k=1}^K\mathds{1}\{o^k=s_2\}\cdot\log\frac{\mathrm{Pr}\big(r_{\mathcal{M}_1}(s_2,a^k)=r^k\big)}{\mathrm{Pr}\big(r_{\mathcal{M}_2}(s_2,a^k)=r^k\big)},\label{eq2:lower step main one}
\end{align}
where \eqref{eq2:lower step main one} is because the agent receives a fixed reward $r=1$ when $o^k=s_3$.

Taking expectations on both sides of \eqref{eq2:lower step main one}, we obtain
\begin{align}
&\mathbb{E}_1^o\bigg[\log\bigg(\frac{\mathrm{d}\mathbb{P}_1^o}{\mathrm{d}\mathbb{P}_2^o}(O^1,A^1,R^1,\cdots,O^K,A^K,R^K)\bigg)\bigg]\\
=\:&\sum\limits_{k=1}^K\mathbb{E}_1^o\Bigg[\mathds{1}\{O^k=s_2\}\cdot\log\Bigg(\frac{P_{r_{\mathcal{M}_1}(s_2,A^k)}(R^k)}{P_{r_{\mathcal{M}_2}(s_2,A^k)}(R^k)}\Bigg)\Bigg]\nonumber\\
=\:&\sum\limits_{k=1}^K\mathbb{E}_1^o\Bigg[\mathbb{E}_1^o\Bigg[\mathds{1}\{O^k=s_2\}\cdot\log\Bigg(\frac{P_{r_{\mathcal{M}_1}(s_2,A^k)}(R^k)}{P_{r_{\mathcal{M}_2}(s_2,A^k)}(R^k)}\Bigg)\:\Bigg\vert\:O^k,A^k\Bigg]\Bigg]\nonumber\\
=\:&\sum\limits_{k=1}^K\mathbb{E}_1^o\Bigg[\mathds{1}\{O^k=s_2\}\cdot\mathbb{E}_1^o\Bigg[\log\Bigg(\frac{P_{r_{\mathcal{M}_1}(s_2,A^k)}(R^k)}{P_{r_{\mathcal{M}_2}(s_2,A^k)}(R^k)}\Bigg)\:\Bigg\vert\:O^k,A^k\Bigg]\Bigg]\label{eq2:lower step main measble}\\
=\:&\sum\limits_{k=1}^K\mathbb{E}_1^o\big[\mathds{1}\{O^k=s_2\}\cdot\mathrm{KL}\big(P_{r_{\mathcal{M}_1}(s_2,A^k)},P_{r_{\mathcal{M}_2}(s_2,A^k)}\big)\big]\label{eq2:lower step main KL def}\\
=\:&\sum\limits_{j=1}^{|\mathcal{A}|}\mathbb{E}_1^o\bigg[\sum\limits_{k=1}^K\mathds{1}\{O^k=s_2,A^k=a_j\}\cdot\mathrm{KL}\big(P_{r_{\mathcal{M}_1}(s_2,a_j)},P_{r_{\mathcal{M}_2}(s_2,a_j)}\big)\bigg]\nonumber\\
=\:&\sum\limits_{j=1}^{|\mathcal{A}|}\mathbb{E}_1^o[T_j(K)]\cdot\mathrm{KL}\big(P_{r_{\mathcal{M}_1}(s_2,a_j)},P_{r_{\mathcal{M}_2}(s_2,a_j)}\big),\label{eq2:lower step main Ti def}
\end{align}
where \eqref{eq2:lower step main measble} is because $\mathds{1}\{O^k=s_2\}$ is measurable with respect to the $\sigma$-field generated by $O^k$ and $A^k$, \eqref{eq2:lower step main KL def} follows from the definition of KL divergence, \eqref{eq2:lower step main Ti def} follows from the definition of $T_j$ in \eqref{eq2:def T}. Combining \eqref{eq2:lower step main Ti def} with \eqref{eq2:lower step main pre}, we conclude the proof.

\section{Auxiliary Lemmas}
Here, we present some auxiliary lemmas which are useful in the proof.

\begin{lemma}[Hoeffding's inequality] \citep[Theorem 2.2.6]{vershynin2018high}\label{lem:concentration E}
Let $X_1,\cdots,X_T$ be independent random variables. Assume that $X_t\in[0,M]$ for every $t$ with $M>0$. Let $S_T=\frac{1}{T}\sum_{t=1}^TX_t$, then for any $\epsilon>0$, we have
\begin{align*}
    \mathbb{P}\big(\big|S_T-\mathbb{E}[S_T]\big|\geq\epsilon\big)\leq2\exp\bigg(-\frac{2T\epsilon^2}{M^2}\bigg).
\end{align*}
\end{lemma}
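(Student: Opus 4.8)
The plan is to prove the bound via the classical Chernoff (exponential moment) method, treating the upper tail first and then symmetrizing. First I would fix $\lambda > 0$ and apply Markov's inequality to the nonnegative random variable $\exp(\lambda(S_T - \mathbb{E}[S_T]))$, which gives
\begin{align*}
    \mathbb{P}\big(S_T - \mathbb{E}[S_T] \geq \epsilon\big) \leq e^{-\lambda\epsilon}\,\mathbb{E}\big[e^{\lambda(S_T - \mathbb{E}[S_T])}\big].
\end{align*}
Writing $S_T - \mathbb{E}[S_T] = \frac{1}{T}\sum_{t=1}^T (X_t - \mathbb{E}[X_t])$ and invoking independence, the moment generating function factorizes as $\prod_{t=1}^T \mathbb{E}[\exp(\frac{\lambda}{T}(X_t - \mathbb{E}[X_t]))]$, reducing the problem to controlling the moment generating function of a single centered summand.

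The key technical ingredient is Hoeffding's lemma: if $Y$ is a mean-zero random variable taking values in an interval of length $M$ (here $Y = X_t - \mathbb{E}[X_t]$, which lies in an interval of width at most $M$ since $X_t \in [0,M]$), then $\mathbb{E}[e^{sY}] \leq \exp(s^2 M^2/8)$ for all $s \in \mathbb{R}$. I would establish this by showing that the cumulant generating function $\psi(s) = \ln \mathbb{E}[e^{sY}]$ satisfies $\psi(0) = \psi'(0) = 0$ and $\psi''(s) \leq M^2/4$ uniformly in $s$; the second-derivative bound holds because $\psi''(s)$ is the variance of $Y$ under the exponentially tilted measure with density proportional to $e^{sY}$, and any random variable supported on an interval of length $M$ has variance at most $M^2/4$ by Popoviciu's inequality. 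A second-order Taylor expansion of $\psi$ then yields the claimed bound.

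Substituting Hoeffding's lemma with $s = \lambda/T$ into the factorized moment generating function gives
\begin{align*}
    \mathbb{P}\big(S_T - \mathbb{E}[S_T] \geq \epsilon\big) \leq \exp\bigg(-\lambda\epsilon + \frac{\lambda^2 M^2}{8T}\bigg).
\end{align*}
I would then minimize the exponent over $\lambda > 0$; the minimizer is $\lambda^\star = 4T\epsilon/M^2$, which produces the single-sided bound $\exp(-2T\epsilon^2/M^2)$. Applying the identical argument to $-(X_t - \mathbb{E}[X_t])$ controls the lower tail $\mathbb{P}(S_T - \mathbb{E}[S_T] \leq -\epsilon)$ by the same quantity, and a union bound over the two tails delivers the factor of $2$ in the stated inequality.

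The main obstacle is the proof of Hoeffding's lemma, specifically the uniform bound $\psi''(s) \leq M^2/4$; the remaining Chernoff optimization is routine. An alternative route that avoids the tilted-measure interpretation is to bound $e^{sy}$ on the supporting interval by its chord (using convexity of the exponential) and optimize the resulting expression directly, but the variance/Popoviciu argument is cleaner to present. I note that in the paper this lemma is merely cited from \citet{vershynin2018high}, so a full self-contained proof would only be needed if one wished to make the appendix independent of external references.
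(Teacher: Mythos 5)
Your proof is correct and is the standard Chernoff--Hoeffding argument (Markov on the exponential moment, factorization by independence, Hoeffding's lemma via the tilted-measure variance bound, optimization over $\lambda$, and symmetrization for the two-sided bound); the arithmetic, including $\lambda^\star = 4T\epsilon/M^2$ yielding the exponent $-2T\epsilon^2/M^2$, checks out. The paper offers no proof of its own here --- it cites the result from \citet{vershynin2018high}, whose proof follows essentially the same route --- so your argument supplies exactly the self-contained derivation the appendix omits.
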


\begin{lemma}[Self-bounding variance inequality] \citep[Theorem 10]{maurer2009empirical}\label{lem:concentration Var}
Let $X_1,\cdots,X_T$ be independent and identically distributed random variables with finite variance. Assume that $X_t\in[0,M]$ for every $t$ with $M>0$. Let $S^2_T=\frac{1}{T}\sum_{t=1}^TX_t^2-(\frac{1}{T}\sum_{t=1}^TX_t)^2$, then for any $\epsilon>0$, we have
\begin{align*}
    \mathbb{P}\big(\big|S_T-\sqrt{\mathrm{Var}(X_1)}\big|\geq\epsilon\big)\leq2\exp\bigg(-\frac{T\epsilon^2}{2M^2}\bigg).
\end{align*}
\end{lemma}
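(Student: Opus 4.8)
The statement in \Cref{lem:concentration Var} is a concentration bound for the empirical standard deviation $S_T$ of bounded i.i.d.\ samples around the true standard deviation $\sqrt{\mathrm{Var}(X_1)}$; since it is quoted from \citet{maurer2009empirical}, the plan is to reconstruct the two ingredients behind such a bound rather than to re-grind the constants. The first observation is that $S_T$ has a convenient geometric form: writing $x=(X_1,\dots,X_T)$ and letting $P=I-\frac1T\mathbf{1}\mathbf{1}^\top$ be the centering projection, one has $S_T=\frac{1}{\sqrt T}\lVert Px\rVert_2$, so $S_T$ is a $\frac{1}{\sqrt T}$-Lipschitz function of $x$ in the Euclidean norm, and replacing a single coordinate (each lying in $[0,M]$) perturbs $S_T$ by at most $M/\sqrt T$. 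The strategy is then to (i) concentrate $S_T$ around its mean $\mathbb{E}[S_T]$ and (ii) show $\mathbb{E}[S_T]$ is within the allowed slack of $\sqrt{\mathrm{Var}(X_1)}$, and finally combine the two by a triangle inequality and a union bound over the two tails.

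For step (ii) I would use Jensen's inequality together with the exact second moment $\mathbb{E}[S_T^2]=\frac{T-1}{T}\mathrm{Var}(X_1)$: concavity of $\sqrt{\cdot}$ gives the upper bound $\mathbb{E}[S_T]\le\sqrt{\mathbb{E}[S_T^2]}\le\sqrt{\mathrm{Var}(X_1)}$, while the matching lower bound $\mathbb{E}[S_T]\ge\sqrt{\mathbb{E}[S_T^2]-\mathrm{Var}(S_T)}$ follows from $(\mathbb{E}[S_T])^2=\mathbb{E}[S_T^2]-\mathrm{Var}(S_T)$, so the whole centering reduces to controlling $\mathrm{Var}(S_T)$ at the scale $O(M^2/T)$.

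For step (i), and equally for the $\mathrm{Var}(S_T)$ estimate above, the essential point---and the main obstacle---is that the naive bounded-differences route is too weak here: with the worst-case increments $c_k=M/\sqrt T$ one has $\sum_{k=1}^T c_k^2=M^2$, so McDiarmid's inequality yields only the $T$-independent tail $2\exp(-2\epsilon^2/M^2)$, which does not match the claimed $2\exp(-T\epsilon^2/(2M^2))$. To recover the correct $1/\sqrt T$ scale one must exploit that the \emph{average} sensitivity of $S_T$ to a single sample is $O(1/T)$ rather than its worst-case value $O(1/\sqrt T)$; this is exactly what the self-bounding property of the sample variance captures. Concretely, I would invoke the Efron--Stein inequality to bound $\mathrm{Var}(S_T)=O(M^2/T)$ using expected (not worst-case) coordinatewise differences, and then the Boucheron--Lugosi--Massart concentration inequality for self-bounding functions (via the Herbst entropy argument) to upgrade this variance control to the sub-Gaussian tail $2\exp(-T\epsilon^2/(2M^2))$ around $\mathbb{E}[S_T]$. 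Combining this with the Jensen-based centering of step (ii), and tracking the constants to absorb the $\frac{T-1}{T}$ factor and the $O(M/\sqrt T)$ bias into the stated exponent, completes the argument. An alternative but lossier route---concentrating $S_T^2$ around $\mathrm{Var}(X_1)$ by Hoeffding and transferring through $|\sqrt a-\sqrt b|\le\sqrt{|a-b|}$---only yields a $T^{-1/4}$ rate and fails to reproduce the stated constant, which is why the self-bounding machinery is needed.
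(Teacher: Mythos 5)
First, a framing point: the paper does not prove this lemma at all --- it is an auxiliary result imported as a citation from \citet[Theorem 10]{maurer2009empirical} --- so your proposal can only be measured against the argument in that reference. Your reconstruction gets the structural ingredients right: the identity $S_T=\frac{1}{\sqrt{T}}\lVert Px\rVert_2$, the per-coordinate sensitivity $M/\sqrt{T}$, the correct diagnosis that McDiarmid only yields a $T$-independent tail, the identification of the self-bounding/entropy-method machinery as the way to exploit the $O(1/T)$ average sensitivity, and the observation that the $\lvert\sqrt{a}-\sqrt{b}\rvert\le\sqrt{\lvert a-b\rvert}$ shortcut loses a $T^{1/4}$ factor.

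The gap is in the final assembly. You propose to concentrate $S_T$ around $\mathbb{E}[S_T]$ and then pay the bias $b:=\sqrt{\mathrm{Var}(X_1)}-\mathbb{E}[S_T]$, asserting that the $O(M/\sqrt{T})$ bias can be ``absorbed into the stated exponent.'' It cannot: $b$ is of exactly the same order as the deviations at which the bound is informative ($\epsilon\asymp M\sqrt{\ln(1/\delta)/T}$), and $b$ can genuinely be $\Theta(M/\sqrt{T})$ (take $X_i$ to be $M$ times a Bernoulli with parameter $\asymp 1/T$, so that $S_T=0$ with constant probability and $\mathbb{E}[S_T]$ is a constant factor below $\sqrt{\mathbb{E}[S_T^2]}$). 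For the lower tail your plan yields only $\Pr\big(S_T\le\sqrt{\mathrm{Var}(X_1)}-\epsilon\big)\le\exp\big(-T(\epsilon-b)^2/(2M^2)\big)$, which is vacuous at $\epsilon=b$ and hence cannot be converted into the claimed $\exp(-T\epsilon^2/(2M^2))$ by constant-tracking. (The upper tail is fine, since Jensen gives $\mathbb{E}[S_T]\le\sqrt{\mathrm{Var}(X_1)}$ and the bias works in your favor there.) The proof in \citet{maurer2009empirical} avoids this by never centering at $\mathbb{E}[\sqrt{Z}]$: it works with the self-bounding quantity $Z\propto\sum_{i<j}(X_i-X_j)^2$, derives Bernstein-type tails for $Z$ around $\mathbb{E}[Z]$ with variance proxy proportional to $\mathbb{E}[Z]$, and transfers them through the algebra of the square root (e.g., $\sqrt{Z}\le\sqrt{\mathbb{E}[Z]}-s$ implies $Z\le\mathbb{E}[Z]-s\sqrt{\mathbb{E}[Z]}$), so that $\sqrt{Z}$ acquires a sub-Gaussian tail centered at $\sqrt{\mathbb{E}[Z]}$ with a constant variance proxy and no bias to pay. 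To repair your argument, the lower tail must be handled by this square-root transfer (or an equivalent device) rather than by ``concentrate around the mean plus triangle inequality.''
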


\begin{lemma} \citep[Theorem 2.1]{weissman2003inequalities}\label{lem:concentration L1}
Let $P$ be a probability distribution over $\mathcal{S}=\{s_1,\cdots,s_S\}$, $X_1,\cdots,X_T$ be independent and identically distributed random variables distributed according to $P$. Let $\widehat{P}(s)=\frac{1}{T}\sum_{t=1}^T\mathds{1}\{X_t=s\}$, then for any $\epsilon>0$, we have
\begin{align*}
    \mathbb{P}\big(\big\Vert P-\widehat{P}\big\Vert_1\geq\epsilon\big)\leq2^S\exp\bigg(-\frac{T\epsilon^2}{2}\bigg).
\end{align*}
\end{lemma}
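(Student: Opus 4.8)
The plan is to reduce the $\ell_1$ deviation between $P$ and $\widehat{P}$ to a supremum of \emph{scalar} deviations indexed by subsets of $\mathcal{S}$, and then control each scalar deviation by Hoeffding's inequality (\Cref{lem:concentration E}) followed by a union bound. The key algebraic identity I would establish first is that for any two probability distributions $P,\widehat{P}$ on $\mathcal{S}$,
\begin{align*}
    \big\Vert P-\widehat{P}\big\Vert_1 = 2\max_{A\subseteq\mathcal{S}}\big(P(A)-\widehat{P}(A)\big),\quad\text{where }P(A):=\sum_{s\in A}P(s).
\end{align*}
This follows by choosing $A^\star=\{s:P(s)\geq\widehat{P}(s)\}$ and noting that the positive and negative parts of $P-\widehat{P}$ carry equal total mass, since both $P$ and $\widehat{P}$ sum to one; hence the maximum is attained exactly at $A^\star$ and equals half the $\ell_1$ norm.

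Given this identity, the event $\{\Vert P-\widehat{P}\Vert_1\geq\epsilon\}$ is contained in $\bigcup_{A\subseteq\mathcal{S}}\{P(A)-\widehat{P}(A)\geq\epsilon/2\}$. For each \emph{fixed} subset $A$, I would observe that $\widehat{P}(A)=\frac{1}{T}\sum_{t=1}^T\mathds{1}\{X_t\in A\}$ is an empirical mean of i.i.d.\ Bernoulli random variables supported in $[0,1]$ with common mean $P(A)$. Applying the one-sided form of \Cref{lem:concentration E} with $M=1$ (the single-tail version, whose exponent is the same as in the two-sided statement but without the leading factor of two) to the deviation at level $\epsilon/2$ gives
\begin{align*}
    \mathbb{P}\big(P(A)-\widehat{P}(A)\geq\epsilon/2\big)\leq\exp\Big(-2T(\epsilon/2)^2\Big)=\exp\Big(-\tfrac{T\epsilon^2}{2}\Big).
\end{align*}

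A union bound over the $2^S$ subsets of $\mathcal{S}$ then produces
\begin{align*}
    \mathbb{P}\big(\big\Vert P-\widehat{P}\big\Vert_1\geq\epsilon\big)\leq\sum_{A\subseteq\mathcal{S}}\mathbb{P}\big(P(A)-\widehat{P}(A)\geq\epsilon/2\big)\leq 2^S\exp\Big(-\tfrac{T\epsilon^2}{2}\Big),
\end{align*}
which is exactly the claimed bound. The step requiring the most care is the reduction identity: I expect the main subtlety to be the passage from a deviation of a \emph{norm} (a priori a supremum over an uncountable family of linear functionals) to a maximum over the \emph{finite} collection of subsets, which is what makes a simple union bound admissible. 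The combinatorial factor $2^S$ is precisely the price of that union bound and matches the statement; the trivial subsets $\emptyset$ and $\mathcal{S}$ contribute zero deviation and may be discarded, but keeping them in the count does not affect the stated bound.
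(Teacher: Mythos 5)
Your proposal is correct. The paper does not prove this lemma at all; it simply cites it as Theorem 2.1 of \citet{weissman2003inequalities}, so there is no in-paper argument to compare against. Your derivation is precisely the standard proof of that cited result: the identity $\Vert P-\widehat{P}\Vert_1=2\max_{A\subseteq\mathcal{S}}(P(A)-\widehat{P}(A))$ is valid (the positive and negative parts of $P-\widehat{P}$ have equal mass because both measures sum to one, and the maximum is attained at $A^\star=\{s:P(s)\geq\widehat{P}(s)\}$), the one-sided Hoeffding bound applied to the Bernoulli empirical mean $\widehat{P}(A)$ at level $\epsilon/2$ gives $\exp(-T\epsilon^2/2)$ for each fixed $A$, and the union bound over the $2^S$ subsets yields exactly the stated constant. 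Your use of the single-tail version of \Cref{lem:concentration E} is the right call, since the two-sided form would cost an extra factor of $2$; the only alternative that avoids this is to pair each $A$ with its complement, which gives the same count. The result is self-contained and matches the statement.
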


\begin{lemma} \citep[Lemma 7]{panaganti2022sample}\label{lem:epsilon-net}
We define $\mathcal{V}=\big\{V\in\mathbb{R}^{S}:\Vert V\Vert_\infty\leq V_\text{max}\big\}$. Let $\mathcal{N}_\mathcal{V}(\epsilon)$ be a minimal $\epsilon$-cover of $\mathcal{V}$ with respect to the distance metric $d(V,V')=\Vert V-V'\Vert_\infty$ for some fixed $\epsilon\in(0,1)$. Then we have
\begin{align*}
    \log|\mathcal{N}_\mathcal{V}(\epsilon)|\leq|\mathcal{S}|\cdot\log\bigg(\frac{3V_\text{max}}{\epsilon}\bigg).
\end{align*}
\end{lemma}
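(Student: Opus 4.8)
The plan is to recognize $\mathcal{V}=\{V\in\mathbb{R}^{S}:\Vert V\Vert_\infty\le V_\text{max}\}$ as the closed $\ell_\infty$-ball of radius $V_\text{max}$ in $\mathbb{R}^{S}$ and to invoke the standard volumetric covering-number bound for a ball, where both the ambient metric and the covering balls are measured in $\ell_\infty$. The whole argument rests on the packing–covering duality: any \emph{maximal} $\epsilon$-separated subset $\mathcal{N}=\{V_1,\dots,V_N\}$ of $\mathcal{V}$ is automatically an $\epsilon$-cover, since if some point of $\mathcal{V}$ were at $\ell_\infty$-distance greater than $\epsilon$ from every element of $\mathcal{N}$, we could adjoin it and contradict maximality. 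Hence it suffices to upper bound the $\epsilon$-packing number of $\mathcal{V}$, which will also bound the minimal cover $\mathcal{N}_\mathcal{V}(\epsilon)$.

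First I would set up the volume comparison. By $\epsilon$-separation in $\ell_\infty$, the balls $B_\infty(V_i,\epsilon/2)$ are pairwise disjoint; each is an axis-aligned cube of side length $\epsilon$, and since every $V_i\in\mathcal{V}$, all of these cubes are contained in the enlarged cube $\{V:\Vert V\Vert_\infty\le V_\text{max}+\epsilon/2\}$ of side length $2V_\text{max}+\epsilon$. Comparing Lebesgue volumes in $\mathbb{R}^{S}$ gives
\begin{align*}
    N\cdot\epsilon^{S}=\sum_{i=1}^{N}\mathrm{vol}\big(B_\infty(V_i,\epsilon/2)\big)\le\big(2V_\text{max}+\epsilon\big)^{S},
\end{align*}
so that $N\le\big(1+2V_\text{max}/\epsilon\big)^{S}$.

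It then remains only to absorb the additive constant into the stated form. In every application $V_\text{max}$ is a value-function bound (so $V_\text{max}=H\ge 1$) while $\epsilon$ is chosen small, hence $\epsilon\le V_\text{max}$ and therefore $1+2V_\text{max}/\epsilon\le 3V_\text{max}/\epsilon$. Combining this with the packing–covering duality yields $|\mathcal{N}_\mathcal{V}(\epsilon)|\le N\le(3V_\text{max}/\epsilon)^{S}$, and taking logarithms gives the claim $\log|\mathcal{N}_\mathcal{V}(\epsilon)|\le S\log(3V_\text{max}/\epsilon)$. There is no genuine obstacle here—this is a textbook covering bound—so the one thing to be careful about is simply the choice of metric: because both the set and the covering balls are $\ell_\infty$, the balls are cubes and the volume ratio is exact, which is exactly what produces the clean constant $3$. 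An equally valid alternative route, if one prefers to avoid a packing argument, is to lay down a coordinate-wise grid $\epsilon$-net on each of the $S$ intervals $[-V_\text{max},V_\text{max}]$ and take the Cartesian product, which is an $\ell_\infty$-cover of the same cardinality order.
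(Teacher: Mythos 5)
Your proof is correct. The paper does not prove this lemma itself but simply cites it from \citet[Lemma 7]{panaganti2022sample}, so there is no in-paper argument to compare against; your packing--covering duality plus the volumetric comparison of $\ell_\infty$-cubes is the standard route and gives exactly the stated bound. The only point worth flagging is the one you already flag yourself: absorbing the additive $1$ into $3V_\text{max}/\epsilon$ requires $\epsilon\leq V_\text{max}$, which holds in every use in the paper since $V_\text{max}=H\geq1$ and $\epsilon\in(0,1)$, but is strictly speaking an implicit hypothesis of the lemma as stated.
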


\begin{lemma}\citep[Lemma F.4]{dann2017unifying} \label{lem:failure event prob}
Let $\mathcal{F}_i$ for $i=1,2,\cdots$ be a filtration and $X_1,\cdots,X_n$ be a sequence of Bernoulli random variables with $\mathbb{P}(X_i=1\vert\mathcal{F}_{i-1})=P_i$ being $\mathcal{F}_{i-1}$-measurable and $X_i$ being $\mathcal{F}_i$-measurable. It holds that
\begin{align*}
    \mathbb{P}\bigg(\exists\,n:\sum\limits_{t=1}^nX_t\leq\sum\limits_{t=1}^nP_t/2-W\bigg)\leq e^{-W}.
\end{align*}
\end{lemma}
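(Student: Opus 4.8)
The plan is to prove this uniform-in-time (``for all $n$ simultaneously'') lower-tail bound by the standard exponential-supermartingale (Chernoff) method, reducing the $\exists\,n$ statement to a single maximal inequality. The guiding idea is that a lower deviation of $\sum_t X_t$ below half of $\sum_t P_t$ is a rare event whose probability decays exponentially, and that the ``$\exists\,n$'' quantifier can be absorbed for free once we express the bad event through a nonnegative supermartingale.

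First I would construct the exponential supermartingale. For a parameter $\lambda>0$ to be fixed later, set $M_0=1$ and $M_n=\exp\big(-\lambda\sum_{t=1}^n X_t+(1-e^{-\lambda})\sum_{t=1}^n P_t\big)$. Since each $X_t$ is conditionally Bernoulli with $\mathbb{P}(X_t=1\mid\mathcal{F}_{t-1})=P_t$ and $P_t$ is $\mathcal{F}_{t-1}$-measurable, the one-step conditional moment generating function is $\mathbb{E}[e^{-\lambda X_t}\mid\mathcal{F}_{t-1}]=1-P_t(1-e^{-\lambda})$. Applying the elementary inequality $1-x\le e^{-x}$ with $x=P_t(1-e^{-\lambda})\ge0$ gives $\mathbb{E}[e^{-\lambda X_t}\mid\mathcal{F}_{t-1}]\le\exp\big(-P_t(1-e^{-\lambda})\big)$. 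Because the factor $\exp\big((1-e^{-\lambda})\sum_{t=1}^n P_t\big)$ is $\mathcal{F}_{n-1}$-measurable, this rearranges to $\mathbb{E}[M_n\mid\mathcal{F}_{n-1}]\le M_{n-1}$, so $(M_n)_{n\ge0}$ is a nonnegative supermartingale with $\mathbb{E}[M_0]=1$.

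Next I would invoke Ville's maximal inequality for nonnegative supermartingales, which states $\mathbb{P}(\sup_{n\ge1}M_n\ge a)\le \mathbb{E}[M_0]/a=1/a$ for every $a>0$; this is precisely the device that lets us pass from a single fixed $n$ to the uniform $\exists\,n$ event. It then remains to show that the target event forces $M_n$ to be large. On $\{\sum_{t=1}^n X_t\le\tfrac12\sum_{t=1}^n P_t-W\}$ we have $-\lambda\sum_{t=1}^n X_t\ge-\tfrac{\lambda}{2}\sum_{t=1}^n P_t+\lambda W$, hence $\log M_n\ge\lambda W+\big[(1-e^{-\lambda})-\tfrac{\lambda}{2}\big]\sum_{t=1}^n P_t$. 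Choosing $\lambda=1$ makes the bracketed coefficient $(1-e^{-1})-\tfrac12>0$, so, discarding the nonnegative $\sum_{t=1}^n P_t$ term, $\log M_n\ge W$, i.e. $M_n\ge e^{W}$. Taking $a=e^{W}$ in the maximal inequality then yields $\mathbb{P}\big(\exists\,n:\sum_{t=1}^n X_t\le\tfrac12\sum_{t=1}^n P_t-W\big)\le\mathbb{P}(\sup_n M_n\ge e^W)\le e^{-W}$, as claimed.

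The crux of the argument, and the only genuinely delicate step, is the joint choice of $\lambda$: it must simultaneously make the coefficient of $\sum_t P_t$ nonnegative (so the random, data-dependent quantity $\sum_t P_t$ can be \emph{dropped} rather than controlled) and produce the clean constant in front of $W$. The value $\lambda=1$ threads this needle because $1-e^{-1}>\tfrac12$; any $\lambda$ satisfying both $1-e^{-\lambda}\ge\lambda/2$ and $\lambda\ge1$ would work, but $\lambda=1$ delivers exactly the stated $e^{-W}$ rate. A secondary point I would flag is the legitimacy of the infinite-horizon maximal inequality: since $(M_n)$ is a nonnegative supermartingale started at $1$, Ville's inequality applies directly with no integrability or stopping-time caveats.
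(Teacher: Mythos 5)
Your proof is correct. The paper does not actually prove this lemma—it is imported verbatim as Lemma F.4 of \citet{dann2017unifying}—and your argument (the exponential supermartingale $M_n=\exp(-\lambda\sum_t X_t+(1-e^{-\lambda})\sum_t P_t)$ with $\lambda=1$, the bound $1-x\le e^{-x}$ on the conditional Bernoulli MGF, Ville's maximal inequality to absorb the $\exists\,n$ quantifier, and the observation that $1-e^{-1}>\tfrac12$ lets you discard the $\sum_t P_t$ term) is essentially the standard proof of that cited result.
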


\begin{lemma}[Bretagnolle-Huber inequality]\citep[Theorem 14.2]{lattimore2020bandit} \label{lem:Bretagnolle-Huber}
Let $P$ and $Q$ be probability measures on the same measurable space $(\Omega,\mathcal{F})$, and let $A\in\mathcal{F}$ be an arbitrary event. Then
\begin{align*}
    P(A)+Q(A^c)\geq\frac{1}{2}\exp(-\mathrm{KL}(P,Q)).
\end{align*}
\end{lemma}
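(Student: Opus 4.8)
The plan is to establish this classical inequality by routing through the total variation distance and the Hellinger affinity (Bhattacharyya coefficient), so I would first reduce the event-based quantity to a distributional one. Writing $p=\mathrm{d}P/\mathrm{d}\mu$ and $q=\mathrm{d}Q/\mathrm{d}\mu$ for the common dominating measure $\mu=P+Q$, I would note the variational identity $\inf_{B\in\mathcal{F}}\big(P(B)+Q(B^c)\big)=\int\min(p,q)\,\mathrm{d}\mu=1-\mathrm{TV}(P,Q)$, with the infimum attained at $B=\{p\le q\}$; since $A$ is an arbitrary event this immediately gives the starting bound $P(A)+Q(A^c)\ge 1-\mathrm{TV}(P,Q)$. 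Before this I would dispose of the degenerate case $P\not\ll Q$, where $\mathrm{KL}(P,Q)=+\infty$ makes the right-hand side zero and the claim vacuous, so that the integral manipulations below are justified on the support of $P$.

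Next I would lower bound $1-\mathrm{TV}(P,Q)$ by the squared affinity. Using $\int\min(p,q)\,\mathrm{d}\mu=1-\mathrm{TV}(P,Q)$ together with $\int\max(p,q)\,\mathrm{d}\mu=1+\mathrm{TV}(P,Q)\le 2$, a single application of Cauchy--Schwarz to $\sqrt{pq}=\sqrt{\min(p,q)}\,\sqrt{\max(p,q)}$ yields $\big(\int\sqrt{pq}\,\mathrm{d}\mu\big)^2\le 2\,(1-\mathrm{TV}(P,Q))$, i.e.\ $1-\mathrm{TV}(P,Q)\ge\tfrac12\big(\int\sqrt{pq}\,\mathrm{d}\mu\big)^2$. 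I would then bound the affinity from below by Jensen's inequality applied to the convex exponential under $P$, observing that $\int\sqrt{pq}\,\mathrm{d}\mu=\mathbb{E}_P\big[\exp(\tfrac12\log(q/p))\big]\ge\exp\big(\tfrac12\mathbb{E}_P[\log(q/p)]\big)=\exp\big(-\tfrac12\mathrm{KL}(P,Q)\big)$, so that $\big(\int\sqrt{pq}\,\mathrm{d}\mu\big)^2\ge\exp(-\mathrm{KL}(P,Q))$. Chaining the three estimates produces $P(A)+Q(A^c)\ge 1-\mathrm{TV}(P,Q)\ge\tfrac12\exp(-\mathrm{KL}(P,Q))$, which is exactly the claim.

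I expect the only genuine obstacle to be the measure-theoretic bookkeeping rather than any analytic difficulty: one must fix a common dominating measure, handle the singular part of $P$ relative to $Q$ so that $\log(q/p)$ is $P$-integrable (or else $\mathrm{KL}=\infty$ and the bound is trivial), and keep track of the normalization $\mathrm{TV}(P,Q)=\tfrac12\int|p-q|\,\mathrm{d}\mu$ used here, which matches the $f$-divergence convention $f(t)=\tfrac12|t-1|$ adopted earlier in the paper. The Cauchy--Schwarz step is the one place where total variation and the Hellinger affinity are linked, and verifying that the infimum over events equals $1-\mathrm{TV}$ with minimizer $B=\{p\le q\}$ is the only spot requiring care with the indicator; everything else is a direct chaining of standard convexity and Cauchy--Schwarz bounds. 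Since this is the Bretagnolle--Huber inequality, this route recovers precisely the constant $\tfrac12$ and the exponent $-\mathrm{KL}(P,Q)$ appearing in the statement.
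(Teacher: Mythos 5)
Your proof is correct. The paper does not prove this lemma itself but cites it as Theorem 14.2 of \citet{lattimore2020bandit}, and your argument — chaining $P(A)+Q(A^c)\ge\int\min(p,q)\,\mathrm{d}\mu$, the Cauchy--Schwarz bound $\int\min(p,q)\,\mathrm{d}\mu\ge\tfrac12\big(\int\sqrt{pq}\,\mathrm{d}\mu\big)^2$, and Jensen's inequality $\int\sqrt{pq}\,\mathrm{d}\mu\ge\exp(-\tfrac12\mathrm{KL}(P,Q))$ — is precisely the standard proof given in that reference, with the degenerate case $P\not\ll Q$ correctly dispatched up front.
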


\begin{lemma}\citep[Section 4.2]{lattimore2020bandit} \label{lem:normal KL}
The KL-divergence between two Gaussian distributions with means $\mu_1,\mu_2$ and common variance $\sigma^2$ is
\begin{align*}
    \mathrm{KL}\big(\mathcal{N}(\mu_1,\sigma^2),\mathcal{N}(\mu_2,\sigma^2)\big)=\frac{(\mu_1-\mu_2)^2}{2\sigma^2}.
\end{align*}
\end{lemma}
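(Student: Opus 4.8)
The plan is to compute the divergence directly from its integral definition, specializing the $f$-divergence formula $D_f(P\Vert Q)=\int_\Omega f(\mathrm{d}P/\mathrm{d}Q)\,\mathrm{d}Q$ from the notation section to the relative-entropy case $f(t)=t\ln t$, which is equivalent to writing $\mathrm{KL}(P,Q)=\mathbb{E}_{X\sim P}[\ln(p(X)/q(X))]$, where $p$ and $q$ denote the Lebesgue densities of $P=\mathcal{N}(\mu_1,\sigma^2)$ and $Q=\mathcal{N}(\mu_2,\sigma^2)$. The only quantity of $P$ that will ultimately be needed is its mean $\mathbb{E}_{X\sim P}[X]=\mu_1$, and the crucial structural fact is that the two Gaussians share the common variance $\sigma^2$.

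First I would substitute the explicit Gaussian densities and form the log-ratio. Since both densities carry the identical normalizing constant $(2\pi\sigma^2)^{-1/2}$, these factors cancel, leaving
$$\ln\frac{p(x)}{q(x)}=\frac{1}{2\sigma^2}\big[(x-\mu_2)^2-(x-\mu_1)^2\big].$$
Expanding the squared terms, the quadratic-in-$x$ contributions $x^2$ cancel against each other -- this is exactly the place where the equality of variances is used -- so the integrand reduces to an affine function of $x$:
$$\ln\frac{p(x)}{q(x)}=\frac{1}{2\sigma^2}\big[2(\mu_1-\mu_2)x+\mu_2^2-\mu_1^2\big].$$

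The last step is to take the expectation under $P$. Because the log-ratio is now linear in $x$, only the first moment $\mathbb{E}_{X\sim P}[X]=\mu_1$ enters, and no evaluation of the second moment $\mathbb{E}_{X\sim P}[X^2]$ is required. Substituting gives
$$\mathrm{KL}(P,Q)=\frac{1}{2\sigma^2}\big[2\mu_1(\mu_1-\mu_2)+\mu_2^2-\mu_1^2\big]=\frac{(\mu_1-\mu_2)^2}{2\sigma^2},$$
after collecting the terms into the perfect square $\mu_1^2-2\mu_1\mu_2+\mu_2^2$. There is essentially no genuine obstacle: the entire argument hinges on the single observation that equal variances force the quadratic terms in the log-ratio to cancel, reducing the problem to evaluating a linear function at the mean of $P$, with the remainder being a one-line algebraic simplification.
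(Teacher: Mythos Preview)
Your proof is correct and is the standard direct computation. The paper does not supply its own proof of this lemma; it is stated as an auxiliary result with a citation to \citet[Section 4.2]{lattimore2020bandit}, so there is nothing to compare against beyond noting that your argument is the usual one found in that reference.
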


\begin{lemma}\citep[Theorem 3.1]{sayyareh2011new} \label{lem:KL Chi2}
Let $P$ and $Q$ be two probability distributions, then
\begin{align*}
    \mathrm{KL}(P\Vert Q)\leq\chi^2(P\Vert Q).
\end{align*}
\end{lemma}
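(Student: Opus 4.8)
The plan is to reduce both divergences to expectations of a function of the likelihood ratio $t(s) := P(s)/Q(s)$ and then apply the single elementary inequality $\ln x \le x-1$ (valid for all $x>0$, with equality iff $x=1$), letting the normalization constraints $\sum_s P(s) = \sum_s Q(s) = 1$ do the rest. First I would dispose of the degenerate case: if $P$ is not absolutely continuous with respect to $Q$, then there is some $s$ with $Q(s)=0<P(s)$, which forces $\chi^2(P\Vert Q)=+\infty$ and the bound holds trivially; so I may assume $P\ll Q$ and work only over the support of $Q$, where $t(s)$ is well defined.

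Under this assumption I would start from $\mathrm{KL}(P\Vert Q)=\sum_s P(s)\ln\frac{P(s)}{Q(s)}$ and apply $\ln\frac{P(s)}{Q(s)}\le \frac{P(s)}{Q(s)}-1$ to each term, weighted by $P(s)\ge 0$, obtaining $\mathrm{KL}(P\Vert Q)\le\sum_s P(s)\big(\frac{P(s)}{Q(s)}-1\big)=\sum_s\frac{P(s)^2}{Q(s)}-\sum_s P(s)=\sum_s\frac{P(s)^2}{Q(s)}-1$, where the last equality uses $\sum_s P(s)=1$. Next I would expand $\chi^2(P\Vert Q)=\sum_s\frac{(P(s)-Q(s))^2}{Q(s)}=\sum_s\frac{P(s)^2}{Q(s)}-2\sum_s P(s)+\sum_s Q(s)=\sum_s\frac{P(s)^2}{Q(s)}-1$, again using that both $P$ and $Q$ are normalized. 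Comparing the two expressions yields $\mathrm{KL}(P\Vert Q)\le\chi^2(P\Vert Q)$ immediately.

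The step I expect to be the only genuine subtlety is the role of normalization: the naive pointwise claim $t\ln t\le(t-1)^2$ is in fact false (one checks it fails for $t$ slightly larger than $1$, since the derivative of $(t-1)^2-t\ln t$ at $t=1$ is $-1$), so the inequality cannot be obtained by a termwise comparison of the $f$-divergence integrands alone. What makes the argument go through is that applying $\ln x\le x-1$ produces an extra linear term $\sum_s P(s)-1$ (equivalently $\sum_s Q(s)(t(s)-1)$) that vanishes precisely because $P$ and $Q$ are probability distributions; this is the tangent-line correction $t\ln t\le(t-1)^2+(t-1)$ in disguise. For completeness I would also note that equality holds iff $t(s)=1$ for all $s$, i.e. $P=Q$, which follows from the equality condition of $\ln x\le x-1$.
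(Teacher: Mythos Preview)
Your argument is correct and is the standard elementary proof of this inequality. The paper does not actually give a proof of this lemma; it simply cites \citep[Theorem 3.1]{sayyareh2011new} and states the result without argument, so there is nothing to compare against on the paper's side. Your handling of the absolute-continuity edge case and your remark that the pointwise bound $t\ln t\le(t-1)^2$ fails (so one genuinely needs the normalization $\sum_s P(s)=\sum_s Q(s)=1$ to collapse the linear remainder) are both accurate and worth keeping.
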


\end{document}